\newcommand{\ds}{\displaystyle}
\newcommand{\ts}{\textstyle}
\newcommand{\tint}{{\ts \int}}
\renewcommand{\mathbb}{\mathds}
\newcommand{\C}[1]{\mathcal{#1}}
\newcommand{\ov}[1]{\overline{#1}}
\newcommand{\wt}[1]{\widetilde{#1}}
\newcommand{\wh}[1]{\widehat{#1}}
\newcommand{\B}[1]{\mathds{#1}}
\DeclareMathOperator{\Tr}{Tr}
\DeclareMathOperator*{\ess}{ess}
\DeclareMathOperator{\Var}{\mathbb{V}ar}
\DeclareMathOperator{\sign}{sign}
\DeclareMathOperator{\Span}{span}
\numberwithin{equation}{section}
\newtheorem{thm}{Theorem}[section]
\newtheorem{prop}[thm]{Proposition}
\newtheorem{proposition}[thm]{Proposition}
\newtheorem{lemma}[thm]{Lemma}
\newtheorem{corollary}[thm]{Corollary}
{\theorembodyfont{\rm} 

\newtheorem{rmk}{Remark}[section]}
\theoremstyle{plain}
\newenvironment{proof}{{\sc Proof.}}{\ $\square$}
\titleformat{\section}[block]{\sc\center}{\thetitle.}{5pt}{}[]
\titlespacing{\section}{0pt}{*4.5}{*3}
\titleformat{\subsection}[runin]{\sc}{\thetitle.}{5pt}{}[.]
\titlespacing{\subsection}{0pt}{*3}{*2}
\titleformat{\subsubsection}[runin]{\it}{\thetitle.}{5pt}{}[.]
\titlespacing{\subsubsection}{0pt}{*2}{*2}
\newcommand{\thmref}[1]{\ref{#1} (page \pageref{#1})}
\newcommand{\myeq}[1]{{\rm (\ref{#1}, page \pageref{#1})}}
\DeclareMathOperator{\XX}{\text{\bf \textsf X}}
\DeclareMathOperator{\YY}{\text{\bf \textsf Y}}
\begin{document}
\renewcommand{\sectionmark}[1]{\markboth{\thesection\ #1}{}}
\renewcommand{\subsectionmark}[1]{\markright{\thesubsection\ #1}}
\fancyhf{}
\fancyhead[RE]{\small\sc\nouppercase{\leftmark}}
\fancyhead[LO]{\small\sc\nouppercase{\rightmark}}
\fancyhead[LE,RO]{\thepage}
\fancyfoot[RO,LE]{\small\sc Olivier Catoni $\Rightarrow$ Jean-Yves Audibert}
\fancyfoot[LO,RE]{\small\sc\today}
\renewcommand{\footruleskip}{1pt}
\renewcommand{\footrulewidth}{0.4pt}
\newcommand{\mypoint}{\makebox[1ex][r]{.\:\hspace*{1ex}}}
\addtolength{\footskip}{11pt}
%\pagestyle{headings}
%\frontmatter
\pagestyle{plain}
\begin{center}
{\bf Risk bounds in linear regression through PAC-Bayesian truncation}\\[12pt]
%{\bf Adaptive regression under margin assumptions for
%unbounded random variables}\\[12pt]
{\sc Jean-Yves Audibert\footnote{Universit\'e Paris-Est, Ecole des Ponts ParisTech, Imagine,
6 avenue Blaise Pascal, 77455 Marne-la-Vall\'ee, France, audibert@imagine.enpc.fr
}$^{\!,}$\footnote{Willow, CNRS/ENS/INRIA --- UMR 8548, 45 rue d'Ulm, F75230 Paris cedex 05, France}, Olivier Catoni
\footnote{D\'epartement de Math\'ematiques et Applications, CNRS -- UMR 8553, 
\'Ecole Normale Sup\'erieure,
45 rue d'Ulm, F75230 Paris cedex 05, olivier.catoni@ens.fr}}\\[12pt]
{\small \it \today }\\[12pt]
\end{center}
%\footnotetext{
%CNRS --   
%Laboratoire de Probabilit\'es et Mod\`eles Al\'eatoires,
%Universit\'e Paris 6 (site Chevaleret), 4 place Jussieu -- Case 188, 
%75 252 Paris Cedex 05.}
%\maketitle
%\begin{center}
%\begin{minipage}{0.9\textwidth}

{\small
{\sc Abstract :} 
We consider the problem of predicting as well as the best linear combination of $d$ given functions in least squares regression, and variants of this problem including constraints on the parameters of the linear combination. 
When the input distribution is known, there already exists an algorithm having an expected excess risk of order $d/n$, where $n$ is the size of the training data. Without this strong assumption, standard results often contain a multiplicative $\log n$ factor, and require
some additional assumptions like uniform boundedness of the $d$-dimensional input representation and exponential moments of the output.

This work provides new risk bounds for the ridge estimator and the ordinary least squares estimator, and their variants. It also provides shrinkage procedures with convergence rate $d/n$ (i.e., without the logarithmic factor) in expectation and in deviations, under various assumptions. The key common surprising factor of these results is the absence of exponential moment condition on the output distribution while achieving exponential deviations.
All risk bounds are obtained through a PAC-Bayesian analysis on truncated differences of losses. Finally, we show that some of these results are not particular to the least squares loss, but can be generalized to similar strongly convex loss functions. 
%Finally we provide a generic truncation argument which allows to deal with outputs having heavy tails (i.e. not constrained to have a bounded 
%exponential moment).
\\[12pt]
{\sc 2000 Mathematics Subject Classification:}
62J05, 62J07.\\[12pt]
{\sc Keywords:} 
Linear regression, Generalization error, Shrinkage, PAC-Bayesian
theorems, Risk bounds, Robust statistics, Resistant estimators,
Gibbs posterior distributions,  
Randomized estimators, Statistical learning theory
%adaptive statistics, 
%PAC-Bayesian theorems, randomized estimators,
%Gibbs posterior distributions, Mammen and Tsybakov margin assumption.
}
%\end{minipage}
%\end{center}
%\clearpage
\tableofcontents
%\clearpage
%\pagestyle{fancy}
%\mainmatter
%{\fontencoding{U}\fontfamily{ygoth}\selectfont}
%\include{introBis}
%\include{truncate}
%\include{pacQuatro}
%\include{pacB}
%\include{svm}
%\include{appBis}
%\include{boosting}
%{\small
%\input{biblio}
%}

\def\bmul{\begin{multline*}}
\def\bigbegar{\begin{eqnarray*}} %\begin{l}{eqnarray*}
\def\bigendar{\end{eqnarray*}}
\def\lbegar{$$\left\{ \begin{array}{lll}}
\def\rendar{\end{array} \right.$$}
\def\rendarp{\end{array} \right..$$}
\def\begarlab{\begin{equation} \begin{array}{lll} \label}
\def\endarlab{\end{array} \end{equation}}
\def\beglab{\begin{equation} \label}
\def\endlab{\end{equation}}
\newcommand\beglabc[1]{\begin{equation*} }
\def\endlabc{\end{equation*}}
\def\lbegarlab{\begin{equation} \left\{ \begin{array}{lll} \label}
\def\rendarlab{\end{array} \right. \end{equation}}
\def\rendarplab{\end{array} \right.. \end{equation}}

\newcommand\und[2]{\underset{#2}{#1}\;}
\newcommand\undc[2]{{#1}_{#2}\;}

\newcommand\wrt{\text{w.r.t.}} 

\newcommand\cst{\text{Cst}\,}
\newcommand\dsV{{\mathbb V}} %\mathds{1}}
\newcommand\eqdef{\triangleq}

\newcommand\A{\mathcal{A}}
\newcommand\cB{\mathcal{B}}
\newcommand\cC{\Theta}
\newcommand\cD{\mathcal{D}}
\newcommand\E{\mathbb{E}}
\newcommand\cE{\mathcal{E}}
\newcommand\cF{\mathcal{F}}
\newcommand\cH{\mathcal{H}}
\newcommand\I{\mathcal{I}}
\newcommand\J{\mathcal{J}}
\newcommand\cK{\mathcal{K}}
\newcommand\cL{\mathcal{L}}
\newcommand\M{\mathcal{M}}
\newcommand\N{\mathcal{N}}
\renewcommand\P{\mathbb{P}}
\newcommand\R{\mathbb{R}}
\newcommand\calR{\mathcal{R}}
\renewcommand\S{\mathcal{S}}
\newcommand\cS{\mathcal{S}}
\newcommand\bcR{B} %{\bar{\mathcal{R}}}
\newcommand\cR{\tilde{B}} %\mathcal{R}}
\newcommand\W{\mathcal{W}}
\newcommand\X{\mathcal{X}}
\newcommand\Y{\mathcal{Y}}
\newcommand\Z{\mathcal{Z}}

\newcommand\jyem{\em}

\newcommand\hdelta{\hat{\delta}}
\newcommand\hth{\hat{\theta}}
\newcommand\hbth{\bar{\th}}
\newcommand\tth{\tilde{\theta}}
\newcommand\hlam{\hat{\lam}}
\newcommand\lamerm{\hlam^{\textnormal{(erm)}}}

\newcommand\tb{\tilde{b}}
\newcommand\tc{\tilde{c}}
\newcommand\tf{\tilde{f}}
\newcommand\tlam{\tilde{\lam}}

\newcommand\be{\beta}
\newcommand\ga{\gamma}
\newcommand\kap{\kappa}
\newcommand\lam{\lambda}

\newcommand\sint{{\textstyle \int}}
\newcommand\inth{\sint \pi(d\theta)}
\newcommand\inthj{\sint \pij(d\theta)}
\newcommand\inthp{\sint \pi(d\theta')}

\newcommand\gas{\ga^*}

\newcommand\hpi{\hat{\pi}}
\newcommand\pis{\pi^*}
\newcommand\tpi{\tilde{\pi}}

\newcommand\ovth{\ov{\theta}}
\newcommand\wth{\wt{\theta}}
\newcommand\wthj{\wt{\theta}_j}

\newcommand\eps{\varepsilon}
\renewcommand\epsilon{\varepsilon}
\newcommand\logeps{\log(\eps^{-1})}
\newcommand\leps{\log^2(\eps^{-1})}

\newcommand{\bi}{\begin{itemize}}
\newcommand{\ei}{\end{itemize}}

\newcommand\ovR{\ov{R}}

\newcommand\hhpi{\hat{\hat{\pi}}}
\newcommand\wwth{\wt{\wt{\theta}}}
\newcommand\pia{\pi^{(1)}}
\newcommand\pib{\pi^{(2)}}
\newcommand\pij{\pi^{(j)}}
\newcommand\tpia{\tilde{\pi}^{(1)}}
\newcommand\tpib{\tilde{\pi}^{(2)}}
\newcommand\tpij{\tilde{\pi}^{(j)}}
\newcommand\wtha{\wt{\theta}_1}
\newcommand\wthb{\wt{\theta}_2}
\newcommand\wths{\wt{\theta}_s}
\newcommand\wtht{\wt{\theta}_t}

\newcommand\cmin{c_{\min}}
\newcommand\cmax{c_{\max}}

\newcommand\ra{\rightarrow}

\newcommand\hatt{\hat{t}}
\newcommand\argmax{\textnormal{argmax}}
\newcommand\argmin{\textnormal{argmin}}

\newcommand\diag{\textnormal{Diag}}

\newcommand\Fg{\cF^\#}

\newcommand\vp{\varphi}
\newcommand\tvp{\tilde{\vp}}
\newcommand\tphi{\tilde{\phi}}
\renewcommand\th{\theta}

\newcommand\vsp{\vspace{1cm}}
\newcommand\lhs{\text{l.h.s.}}
\newcommand\rhs{\text{r.h.s.}}

\newcommand\expe[2]{\undc{\E}{#1\sim#2}}
\newcommand\expec[2]{\undc{\E}{#1\sim#2}}
\newcommand\expecc[2]{\E_{#1}}
\newcommand\expecd[2]{\E_{#2}}

\newcommand\hC{\hat{C}}
\newcommand\hf{\hat{f}}
\newcommand\hrho{\hat{\rho}}

\newcommand\br{\bar{r}}
\newcommand\chr{\check{r}}
\newcommand\bR{\bar{R}}

\newcommand\lan{\langle}
\newcommand\ran{\rangle}

\newcommand\logepsg{\log(|\Cg|\eps^{-1})}

\newcommand\Pemp{\hat{\P}}

\newcommand\fracl[2]{{(#1)}/{#2}}
\newcommand\fracc[2]{{#1}/{#2}}
\newcommand\fracr[2]{{#1}/{(#2)}}
\newcommand\fracb[2]{{(#1)}/{(#2)}}

\newcommand\hfproj{\hf^{\textnormal{(proj)}}}
\newcommand\thproj{\hat{\th}^{\textnormal{(proj)}}}

\newcommand\hfols{\hf^{\textnormal{(ols)}}}
\newcommand\thfols{\tilde{f}^{\textnormal{(ols)}}}
\newcommand\thols{\hat{\th}^{\textnormal{(ols)}}}
\newcommand\therm{\hat{\th}^{\textnormal{(erm)}}}
\newcommand\hferm{\hf^{\textnormal{(erm)}}}
\newcommand\zols{\zeta^{\textnormal{(ols)}}}

\newcommand\thrid{\tilde{\th}} %\th^*_{\textnormal{(ridge)}}}
\newcommand\frid{\tilde{f}} %f^*_{\textnormal{(ridge)}}}
\newcommand\freg{f^{\textnormal{(reg)}}}
\newcommand\thrlam{\hth^{\textnormal{(ridge)}}} %_{\lam}
\newcommand\hfrlam{\hf^{\textnormal{(ridge)}}} %_{\lam}
\newcommand\thllam{\hth^{\textnormal{(lasso)}}} %_{\lam}
\newcommand\hfllam{\hf^{\textnormal{(lasso)}}} %_{\lam}

\newcommand\flin{f^*_{\textnormal{lin}}}
\newcommand\thlin{\th^{\textnormal{(lin)}}}
\newcommand\Flin{\mathcal{F}_{\textnormal{lin}}}

\renewcommand\Phi{\XX}
\newcommand\demi{\frac{1}{2}}
\newcommand\demic{\fracc{1}{2}}

\newcommand\substa[2]{\substack{#1\\#2}}
\newcommand\substac[2]{{#1\,;\,#2}}
\newcommand\tpsi{\tilde{\psi}}
\newcommand\tzeta{\tilde{\zeta}}
\newcommand\ta{\tilde{a}}
\newcommand\chis{\chi_\sigma}
\newcommand\tchi{\tilde{\chi}}
\newcommand\tchis{\tchi_\sigma}
\newcommand\psis{\psi_\sigma}

\newcommand\tA{\tilde{A}}
\newcommand\tL{\tilde{L}}

\newcommand\hL{\hat{L}}
\newcommand\hcE{\hat{\cE}}
\newcommand\hDe{\hat{\cE}}
\newcommand\cEb{\cE^{\sharp}}
\newcommand\La{L^{\flat}}
\newcommand\cEa{\cE^{\flat}}
\newcommand\Lb{L^{\sharp}}

\newcommand\Pa{P^{\flat}}
\newcommand\Pb{P^{\sharp}}

\newcommand{\V}[1]{\overline{#1}}
\newcommand\bL{\V{L}}

\newcommand\ela{\tilde{\ell}}
\newcommand\hpig{\hpi^{\textnormal{(Gibbs)}}}

\newcommand\sigmb{\phi}
\newcommand{\tR}{\tilde{\cR}}
\newcommand{\logdeps}{\log(4d\eps^{-1})}
\newcommand{\logddeps}{\log(2d^2\eps^{-1})}

\newcommand{\piobs}{\bar{\pi}}

\newcommand\cdd{\mathcal{D}'}

\section*{Introduction}
\addcontentsline{toc}{section}{Introduction}
\subsection*{Our statistical task}
\addcontentsline{toc}{subsection}{Our statistical task}
Let $Z_1=(X_1,Y_1),\dots,Z_n=(X_n,Y_n)$ be $n\ge 2$ pairs of input-output 
and assume that each pair has been independently drawn from the same unknown distribution $P$. Let 
$\X$ denote the input space and let the output space be the set of real numbers $\R$, so that $P$ is a probability distribution on the product space 
$\Z \eqdef \X\times\R$. 
The target of learning algorithms is to predict the output $Y$ associated with an input $X$
for pairs $Z=(X,Y)$ drawn from the distribution $P$. 
The quality of a (prediction) function 
$f:\X\rightarrow\R$ is measured by the least squares {\jyem risk}: 
        $$R(f) \eqdef \undc{\E}{Z\sim P} \bigl\{ [Y-f(X)]^2 
\bigr\} .$$
Through the paper, we assume that the output and all the prediction functions we consider are square integrable.
Let $\cC$ be a closed convex set of $\R^d$, and $\vp_1,\dots,\vp_d$ be $d$ prediction functions. 
Consider the regression model
	\begin{align*}
	\cF= \bigg\{ f_\th=\sum_{j=1}^d \th_j \vp_j ; (\th_1,\dots,\th_d) \in \cC \bigg\}.
	\end{align*}
The best function $f^*$ in $\cF$ is defined by
	\begin{align*}
	f^* =\sum_{j=1}^d \th^*_j \vp_j\in\und{\argmin}{f\in\cF} \,R(f).
	\end{align*}
Such a function always exists but is not necessarily unique. Besides it is unknown since the probability generating the data is unknown.

We will study the problem of predicting (at least) as well as function $f^*$. In other words, we want to deduce from the observations
$Z_1,\dots,Z_n$ a function $\hf$ having with high probability a risk bounded by the minimal risk $R(f^*)$ on $\cF$ plus a small remainder term, which is typically of order $d/n$ up to a possible logarithmic factor. 
Except in particular settings (e.g., $\cC$ is a simplex and $d\ge \sqrt{n}$), 
it is known that the convergence rate $d/n$ cannot be improved in a minimax sense (see \cite{Tsy03}, and \cite{Yan01b} for related results).

More formally, the target of the paper is to develop estimators $\hf$ for which the excess risk is controlled {\jyem in deviations}, i.e., such that
for an appropriate constant $\kap>0$, for any $\eps>0$, with probability at least $1-\eps$,
	\beglab{eq:devtarget}
	R(\hf) - R(f^*) \le \kap \frac{d+\logeps}{n}.
	\endlab
Note that by integrating the deviations (using the identity 
$\E W = \int_0^{+\infty} \P(W > t) dt$ 
which holds true for any nonnegative random variable $W$), 
Inequality \eqref{eq:devtarget} implies 
	\beglab{eq:exptarget}
	\E R(\hf) - R(f^*) \le \kap \frac{d+1}{n}.
	\endlab
In this work, we do not assume that the function 
	\[
	\freg:x \mapsto \E[ Y | X=x],
	\]
which minimizes the risk $R$ among all possible measurable functions,
belongs to the model $\cF$. So we might
have $f^* \neq \freg$ and in this case, bounds of the form
	\beglab{eq:nottarget}
	\E R(\hf) - R(\freg) \le C [R(f^*)-R(\freg)] + \kap \frac{d}{n},
	\endlab
with a constant $C$ larger than $1$ do not even ensure that $\E R(\hf)$ tends to $R(f^*)$ when $n$ goes to infinity.
This kind of bounds with $C>1$ have been developed to analyze nonparametric estimators using linear approximation spaces,
in which case the dimension $d$ is a function of $n$ chosen so that the bias term $R(f^*)-R(\freg)$ has the order $d/n$ of the estimation term
(see \cite{Gyo04} and references within). Here we intend to assess the
generalization ability of the estimator even when the model is misspecified 
(namely when $R(f^*) > R(\freg)$).
Moreover we do not assume either that $Y-\freg(X)$ and $X$ are independent.

\textbf{Notation.} \hspace*{2mm}
When $\cC=\R^d$, 
the function $f^*$ and the space $\cF$ will be written $\flin$ and $\Flin$ to emphasize that 
$\cF$ is the whole linear space spanned by $\vp_1,\dots,\vp_d$:
\[  \Flin = \Span \{\vp_1,\dots,\vp_d\} \text{\qquad and\qquad} \flin\in\und{\argmin}{f\in\Flin} R(f).\]
The Euclidean norm will simply be written as $\|\cdot\|$, and $\langle \cdot,\cdot\rangle$
will be its associated inner product.
We will consider the vector valued function
$\vp : \C{X} \rightarrow \B{R}^d$ defined
by $\vp(X) = \bigl[ \vp_k(X) \bigr]_{k=1}^d$, so that for any $\th\in\Theta$, we have
  \[
  f_\th(X) = \langle \th , \vp(X) \rangle.
  \]
The Gram matrix is the $d\times d$-matrix $Q=\B{E} \bigl[ \vp(X) \vp(X)^T\bigr]$, and its smallest and largest eigenvalues will 
respectively be written as $q_{\min}$ and $q_{\max}$. 
The empirical risk of a function $f$ is
$$
r(f) = \frac{1}{n} \sum_{i=1}^n \bigl[ f(X_i) - 
Y_i \bigr]^2
$$
and for $\lam\ge 0$, the ridge regression estimator on $\cF$ is defined by
$\hfrlam=f_{\thrlam}$ with
$$
\thrlam \in \arg \min_{\th \in \Theta} 
r(f_{\theta}) + \lambda \lVert \theta \rVert^2,
$$
where $\lam$ is some nonnegative real 
parameter. In the case when $\lambda = 0$, 
the ridge regression $\hfrlam$ is nothing
but the empirical risk minimizer $\hferm$.
In the same way, we introduce the optimal ridge function  
optimizing the expected ridge risk: $\frid=f_{\thrid}$ with
\beglab{eq:frid}
\thrid \in \arg \min_{\theta \in \Theta} \big\{ R(f_{\theta}) 
+ \lambda \lVert \theta \rVert^2 \big\}.
\endlab
Finally, let $Q_{\lambda} = Q + \lambda I$ be the ridge regularization 
of $Q$, where $I$ is the identity matrix. 

%The symbol $\kap$ will be used to denote constants, which means here 
%deterministic quantities not depending on $d$
%and $n$ but possibly depending on other constants of the problem. 
%Its value may differ from line to line.

\subsection*{Why should we be interested in this task}
\addcontentsline{toc}{subsection}{Why should we be interested in this task}
There are three main reasons.
First we aim at a better understanding of the parametric linear least squares method 
(classical textbooks can be misleading on this subject as we will point out later), and intend to provide a non-asymptotic analysis of it.

Secondly, the task is central in nonparametric estimation for linear approximation spaces (piecewise polynomials based 
on a regular partition, wavelet expansions, trigonometric polynomials\dots)

Thirdly, it naturally arises in two-stage model selection. Precisely,
when facing the data, the statistician has often to choose 
several models which are likely to be relevant for the task.
These models can be of similar structures (like embedded balls of functional spaces) or
on the contrary of very different nature (e.g., based on kernels, splines, wavelets or on parametric approaches). 
For each of these models, we assume that we have a learning scheme which produces 
a 'good' prediction function in the sense that it predicts as well as the best 
function of the model up to some small additive term. Then the question is
to decide on how we use or combine/aggregate these schemes. 
One possible answer is to split the data into two groups, use
the first group to train the prediction function associated with each model, and
finally use the second group to build a prediction function which is as good as (i) the
best of the previously learnt prediction functions, (ii) the best convex combination
of these functions or (iii) the best linear combination of these functions. 
This point of view has been introduced by Nemirovski in \cite{Nem98} and optimal rates of aggregation are given in \cite{Tsy03} and references within. 
This paper focuses more on the linear aggregation task (even if (ii) enters in our setting), assuming implicitly here that
the models are given in advance and are beyond our control and that the goal is to combine them appropriately.

\subsection*{Outline and contributions}
\addcontentsline{toc}{subsection}{Outline and contributions}

The paper is organized as follows.
Section~\ref{sec:lit} is a survey on risk bounds in linear least squares.
Theorems \ref{th:alqa} and \ref{th:bmnew} are the results which come closer to our target.
Section~\ref{sec:ridge} provides a new analysis of the ridge estimator 
and the ordinary least squares estimator, and their variants. 
Theorem \ref{th:hfrlam} provides an asymptotic result for the ridge estimator 
while Theorem \ref{th:ermom} gives a non asymptotic risk bound
of the empirical risk minimizer, which is complementary to the theorems
put in the survey section. In particular, the result has the benefit to
hold for the ordinary least squares estimator and for heavy-tailed outputs.
We show quantitatively that the ridge penalty
leads to an implicit reduction of the input space dimension.
Section \ref{sec:computable} shows a non asymptotic $d/n$ exponential deviation 
risk bound under weak moment conditions on the output $Y$ and on the $d$-dimensional input representation $\vp(X)$.
Section~\ref{sec:main} presents stronger results under boundedness assumption of
$\vp(X)$. However the latter results are concerned with 
a not easily computable estimator.
Section~\ref{sec:gen} gives risk bounds for general 
loss functions from which the results 
of Section~\ref{sec:main} are derived. 

The main contribution of this paper is to show through a PAC-Bayesian analysis on truncated differences of losses that the output distribution does not need to have bounded
conditional exponential moments in order for the excess risk of appropriate
estimators to concentrate exponentially. 
Our results tend to say that truncation leads to more robust algorithms.
Local robustness to contamination is usually invoked 
to advocate the removal of outliers, 
claiming that estimators should be made insensitive to 
small amounts of spurious data.
Our work leads to a different theoretical explanation.
The observed points having unusually large outputs when compared
with the (empirical) variance should be down-weighted in the estimation 
of the mean, since they contain less information than noise. 
In short, huge outputs should be truncated because of their low 
signal to noise ratio.

\section{Variants of known results} \label{sec:lit}

\subsection{Ordinary least squares and empirical risk minimiza\-tion} 

The ordinary least squares estimator is the most standard method in this case. 
It minimizes the empirical risk 
$$
	r(f) = \frac{1}{n} \sum_{i=1}^n [Y_i-f(X_i)]^2,
$$
among functions in $\Flin$ and produces
$$
	\hfols= \sum_{j=1}^d \thols_j \vp_j,
$$
with $\thols=[\thols_j]_{j=1}^d$ a column vector satisfying
	\begarlab{eq:thols}
	\Phi^T\Phi \, \thols = \Phi^T \YY,
	\endarlab
where $\YY=[Y_j]_{j=1}^n$ and $\Phi=(\vp_j(X_i))_{ 1\le i \le n, 1\le j \le d}$. 
It is well-known that 
\bi
\item the linear system \eqref{eq:thols} has at least one solution, and in fact, the set of solutions is exactly $\{\Phi^+ \YY + u ; u \in \text{ker}\, \Phi\}$;
where $\Phi^+$ is the Moore-Penrose pseudoinverse of $\Phi$ and $\text{ker} \,\Phi$ is the kernel of the linear operator $\Phi$.
\item $\Phi \, \thols$ is the (unique) orthogonal projection of the vector $\YY\in\R^n$
on the image of the linear map $\Phi$;
\item if $\sup_{x\in\X} \Var(Y |X=x) = \sigma^2 < +\infty$, we have
(see \cite[Theorem 11.1]{Gyo04}) for any $X_1,\dots,X_n$ in $\X$,
	\begin{multline} \label{eq:fixeddesign}
	\E \bigg\{ \frac{1}{n} \sum_{i=1}^n \big[\hfols(X_i) - \freg(X_i)\big]^2 \bigg| X_1,\dots,X_n\bigg\}\\
		\qquad - \und{\min}{f\in\Flin} \frac{1}{n} \sum_{i=1}^n \big[f(X_i) - \freg(X_i)\big]^2 
		\le \sigma^2 \frac{\text{rank}(\Phi)}{n} \le \sigma^2 \frac{d}{n},
	\end{multline}
%hence
%	\begin{multline} \label{eq:emp}
%	\E \bigg\{ \frac{1}{n} \sum_{i=1}^n \big[\hfols(X_i) - \freg(X_i)\big]^2 \\
%		 - \und{\min}{f\in\Flin} \frac{1}{n} \sum_{i=1}^n \big[f(X_i) - \freg(X_i)\big]^2 \bigg\}
%		 \le \sigma^2 \frac{d}{n},\quad
%	\end{multline}
where we recall that $\freg: x\mapsto \E[ Y | X=x]$ is the optimal regression function, and 
that when this function belongs to $\Flin$ (i.e., $\freg=\flin$), the minimum term in \eqref{eq:fixeddesign} vanishes; 
\item from Pythagoras' theorem for the (semi)norm $W \mapsto \sqrt{\E W^2}$ on the space of the square integrable random variables, 
\begin{multline} \label{eq:pyt}
	R(\hfols) - R(\flin) \\ = \E \big[ \hfols(X) - \freg(X) \big| Z_1,\dots,Z_n \big]^2 - \E \big[ \flin(X) - \freg(X) \big]^2.
	%\notag\\
	%&= \E \big[ \hfols(X) - \flin(X) \big]^2 .
\end{multline}
\ei
The analysis of the ordinary least squares often stops at this point in classical statistical textbooks.
(Besides, to simplify, the strong assumption $\freg=\flin$ is often made.)
This can be misleading since Inequality \eqref{eq:fixeddesign} does not imply a $d/n$ upper bound on the risk
of $\hfols$. Nevertheless the following result holds \cite[Theorem 11.3]{Gyo04}.
\begin{thm}\label{th:weakols}
If $\,\sup_{x\in\X} \Var(Y |X=x) = \sigma^2 < +\infty$ 
and 
$$
\|\freg\|_\infty = \sup_{x\in\X} |\freg(x)| \le H
$$ for some $H>0$, then
the truncated estimator $\hfols_H=(\hfols\wedge H)\vee -H$ satisfies 
	\begin{multline} \label{eq:weakols}
	\E R(\hfols_H) - R(\freg) \le 8 [R(\flin)-R(\freg)] + \kap \frac{(\sigma^2\vee H^2)d\log n }{n} 
	\end{multline}
for some numerical constant $\kap$.
\end{thm}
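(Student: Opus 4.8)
The plan is to control the excess risk $\E R(\hfols_H) - R(\freg) = \E\,\E\bigl[(\hfols_H(X)-\freg(X))^2 \mid Z_1,\dots,Z_n\bigr]$, i.e.\ the expected squared $L^2(P_X)$ distance between the truncated estimator and the regression function, by transferring the fixed-design bound \eqref{eq:fixeddesign} from the empirical $L^2$ geometry to the true one. Write $\|g\|_n^2 = \frac1n\sum_{i=1}^n g(X_i)^2$ for the empirical seminorm. The first, and essentially free, observation is that because $\|\freg\|_\infty \le H$, truncation at level $H$ can only move $\hfols$ closer to $\freg$ pointwise, so $|\hfols_H(x)-\freg(x)| \le |\hfols(x)-\freg(x)|$ for every $x$; in particular $\|\hfols_H - \freg\|_n^2 \le \|\hfols - \freg\|_n^2$. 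Combining this with \eqref{eq:fixeddesign} and with $\E\bigl[\min_{f\in\Flin}\|f-\freg\|_n^2\bigr] \le \min_{f\in\Flin}\E\|f-\freg\|_n^2 = R(\flin)-R(\freg)$ (a minimum of expectations dominates the expectation of the minimum) yields $\E\|\hfols_H-\freg\|_n^2 \le [R(\flin)-R(\freg)] + \sigma^2 d/n$.

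The heart of the argument is the passage from the empirical seminorm back to the true $L^2(P_X)$ norm for the data-dependent function $\hfols_H$. Here I would invoke a relative-deviation (ratio-type) concentration inequality, uniformly over the truncated class $\{T_H f_\th - \freg : \th \in \R^d\}$, where $T_H$ denotes truncation at level $H$. Every member of this class is bounded by $2H$ in absolute value, and the target statement has the form: with high probability, simultaneously for all such $g$, one has $\|g\|_{L^2(P_X)}^2 \le 2\|g\|_n^2 + t$ as soon as $t \gtrsim H^2 \log \mathcal{N}/n$, where $\mathcal{N}$ is the $L^1(P_n)$ covering number of the class at a scale proportional to $t/H$. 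The reason one wants the variance term $\|g\|^2$ to reappear on the right is that this is exactly what produces the fast $1/n$ rate rather than $1/\sqrt n$: one partitions the class into shells of geometrically increasing $L^2$ radius (a peeling argument), applies Bernstein's inequality after symmetrization on each shell, and sums the resulting exponential tails against the covering numbers.

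Combining the two ingredients, applied to $g = \hfols_H - \freg$, gives $\E\|\hfols_H - \freg\|_{L^2(P_X)}^2 \le 2\,\E\|\hfols_H-\freg\|_n^2 + t_0$ with $t_0$ of order $H^2 \log\mathcal N/n$, and inserting the empirical estimate of the first paragraph produces a bound of the announced shape $8[R(\flin)-R(\freg)] + \kap (\sigma^2\vee H^2)\, d \log n / n$. The constant $2$ loosens to $8$ through the numerical constants of the peeling and Bernstein steps, while the two variance scales $\sigma^2$ (from the fixed-design term) and $H^2$ (from the boundedness of the truncated class, whose Bernstein variance proxy is of order $H^2\|g\|^2$) combine into $\sigma^2 \vee H^2$.

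The last quantitative input is the covering-number estimate, which is where the logarithmic factor is born: the truncated $d$-dimensional linear family has $L^1(P_n)$ metric entropy $\log\mathcal N(\epsilon) \lesssim d\log(H/\epsilon)$, since a $d$-dimensional vector space has pseudo-dimension $d$ and truncation does not increase it; choosing the discretization scale $\epsilon$ of order $H/n$, the finest scale that still keeps $t_0$ at order $H^2 d/n$, turns $\log\mathcal N$ into a term of order $d\log n$. I expect the main obstacle to be the relative-deviation inequality of the second paragraph: getting the variance $\|g\|^2$ to reappear on the right-hand side, which is what delivers the fast rate, requires the peeling argument together with a Bernstein bound carrying the correct variance proxy, and it is this step, rather than the elementary truncation and fixed-design estimates, that carries the real content and fixes the final constants.
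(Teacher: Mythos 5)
Your argument is correct and is essentially the proof the paper relies on: the statement is quoted from \cite[Theorem 11.3]{Gyo04}, whose proof proceeds exactly as you describe --- the pointwise contraction of truncation towards $\freg$, the fixed-design bound \eqref{eq:fixeddesign} combined with $\E\min_f\lVert f-\freg\rVert_n^2\le\min_f\E\lVert f-\freg\rVert_n^2$, and then the ratio-type uniform deviation inequality (symmetrization, peeling, Bernstein) with the $d\log(H/\epsilon)$ entropy bound for the truncated linear class, which is the source of both the $\log n$ factor and the inflation of the leading constant to $8$. Nothing to add.
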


Using PAC-Bayesian inequalities, %the following result is proved 
Catoni \cite[Proposition 5.9.1]{Cat01} has proved a different type of results on the generalization ability of $\hfols$.
%and used in the context of model selection.

\begin{thm} \label{th:oc}
Let $\cF'\subset\Flin$ satisfying for some positive constants $a,M,M'$:
\bi
\item $\text{there exists } f_0 \in \cF' \text{ s.t. for any } x\in\X,$ %\E\big( e^{a |Y-f_0(X)|} | X= x \bigr) \le M$,
  \[
  \E \Bigl\{ 
  \exp \Bigl[ a \bigl\lvert Y-f_0(X) \bigr\rvert \Bigr]\, \Big| \,X=x \Bigr\} \le M.
  \]
\item $\text{for any } f_1,f_2 \in \cF', \sup_{x\in\X} |f_1(x)-f_2(x)|\le M'$.
\ei
Let $Q=\B{E} \bigl[ \vp(X) \vp(X)^T\bigr]$
and $\hat{Q} = \bigl[ \frac{1}{n} \sum_{i=1}^n \vp(X_i) \vp(X_i)^T \bigr]$
be respectively the expected and empirical Gram matrices. If $\det Q\neq 0$, then
there exist positive constants $C_1$ and $C_2$ (depending only on $a$, $M$ and $M'$) such that
with probability at least $1-\eps$, 
as soon as
 	\beglab{eq:cond}
	\bigg\{
	f\in\Flin:r(f) \le r(\hfols) + C_1 \frac{d}{n}
	\bigg\} \subset \cF',
	\endlab
we have
	\[
	R(\hfols) - R(\flin) \le C_2 \frac{d+\logeps+\log(\frac{\det \hat{Q}}{\det Q})}{n}.
	\]
\end{thm}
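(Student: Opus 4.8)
The plan is to control a Gaussian-randomized version of $\hfols$ by a PAC-Bayesian inequality and then de-randomize, exploiting the fact that for linear functions the excess risk is an exact squared distance. First I would record the projection identity: since $\flin$ is the $L^2(P)$-orthogonal projection of the regression function onto $\Flin$, the first-order conditions $\E[(Y-\flin(X))\vp_j(X)]=0$ give, for every $f\in\Flin$,
$$R(f)-R(\flin)=\E\big[(f(X)-\flin(X))^2\big].$$
Writing $f=f_\th$ and letting $\th^*$ denote the parameter of $\flin$, this reads $R(f_\th)-R(\flin)=\langle \th-\th^*,Q(\th-\th^*)\rangle$, a clean quadratic in the parameter whose Hessian is $2Q$; the same computation against the empirical measure yields a quadratic with Hessian $2\hat Q$. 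This is what will let me de-randomize the relevant Gaussian integrals exactly.

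Next I would introduce the loss increments $\Delta_i(f)=[Y_i-f(X_i)]^2-[Y_i-\flin(X_i)]^2$ and factor them as
$$\Delta_i(f)=\big(f(X_i)-\flin(X_i)\big)\Big[\big(f(X_i)-\flin(X_i)\big)-2\big(Y_i-\flin(X_i)\big)\Big].$$
On the event where condition \eqref{eq:cond} holds every competitor of interest lies in $\cF'$, so the first factor is bounded by $M'$ and, splitting $Y_i-\flin(X_i)=(Y_i-f_0(X_i))+(f_0(X_i)-\flin(X_i))$, the exponential-moment hypothesis on $Y-f_0(X)$ together with the uniform bound $M'$ shows that $\Delta_i(f)$ is exponentially integrable with a log-Laplace transform controlled at small $\lam$ by its variance. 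The crucial self-bounding observation is that $\Var(\Delta(f))\le \cst(a,M,M')\,\E[(f-\flin)^2]=\cst\,[R(f)-R(\flin)]$: the fluctuation scale of the loss increment is proportional to the excess risk itself, which is exactly what removes the spurious $\log n$ and produces the $d/n$ rate.

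I would then apply the PAC-Bayesian inequality with a Gaussian prior $\pi=\N(\th^*,(\be Q)^{-1})$ and a data-dependent Gaussian posterior $\rho=\N(\thols,(\be\hat Q)^{-1})$, for a precision parameter $\be\asymp n$ to be tuned. Two exact Gaussian integrals perform the de-randomization: $\E_{\th\sim\rho}[R(f_\th)]=R(\hfols)+\be^{-1}\Tr(Q\hat Q^{-1})$ and $\E_{\th\sim\rho}[r(f_\th)]=r(\hfols)+\be^{-1}d$, the extra trace terms being of order $d/n$. The Kullback--Leibler term is where the determinant ratio appears: with these covariances
$$\cK(\rho,\pi)=\tfrac12\Big[\log\tfrac{\det\hat Q}{\det Q}+\Tr(Q\hat Q^{-1})-d+\be\langle\thols-\th^*,Q(\thols-\th^*)\rangle\Big],$$
so $\cK(\rho,\pi)$ contributes exactly the announced $\log(\det\hat Q/\det Q)$ together with a $\tfrac{\be}{2}[R(\hfols)-R(\flin)]$ term that will be absorbed on the left-hand side.

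Finally I would assemble the pieces. The PAC-Bayesian bound gives, with probability at least $1-\eps$ and for $\lam$ small enough (a threshold fixed by $a$ and $M'$),
$$\E_\rho[R(f_\th)-R(\flin)]\le \E_\rho[r(f_\th)-r(\flin)]+\frac{\cK(\rho,\pi)+\logeps}{\lam n}+\frac{\lam}{2}\,\cst(a,M,M')\,\E_\rho[R(f_\th)-R(\flin)].$$
Substituting the two Gaussian integrals, using the empirical minimality $r(\hfols)\le r(\flin)$, taking $\be\asymp n$ so that $\be^{-1}\Tr(Q\hat Q^{-1})$ and $\be^{-1}d$ are $O(d/n)$, and keeping $\lam$ constant and $\be$ small enough relative to $\lam n$ that the two coefficients multiplying $R(\hfols)-R(\flin)$ (one from the Mahalanobis part of $\cK$, one from the self-bounding variance term) sum to less than $\tfrac12$, I can move that excess risk to the left and solve; what survives on the right is precisely $C_2[d+\logeps+\log(\det\hat Q/\det Q)]/n$, with $C_2$ depending only on $a,M,M'$. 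The step I expect to be most delicate is the uniform control of the log-Laplace transform of $\Delta(f_\th)$ under the posterior $\rho$, whose unbounded support reaches parameters with $f_\th\notin\cF'$ where the diameter bound $M'$ fails; one must either truncate the increments or show that the posterior mass escaping $\cF'$ is negligible at the scale $\be\asymp n$, and reconcile this with the self-bounding variance estimate, while fixing $C_1$ large enough that \eqref{eq:cond} forces $f_0,\flin$ and $\hfols$ into $\cF'$.
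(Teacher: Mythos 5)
A preliminary remark: the paper does not prove Theorem \ref{th:oc} --- it quotes it from \cite[Proposition 5.9.1]{Cat01} --- so there is no in-paper proof to compare against line by line. Your overall architecture (exact quadratic de-randomization of Gaussian prior and posterior, a Bernstein-type bound $\Var(\Delta(f))\le \cst [R(f)-R(\flin)]$ used to absorb the excess risk into the left-hand side, the determinant ratio arising from the mismatch between the $Q$-geometry and the $\hat Q$-geometry) is the right family of ideas, and your three Gaussian identities (the two de-randomization formulas and the Kullback--Leibler divergence) are all correct.

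There is, however, a genuine gap: the term $\tfrac12[\Tr(Q\hat Q^{-1})-d]$ in $\cK(\rho,\pi)$, which you write down and then silently drop when you say the KL ``contributes exactly'' the determinant ratio plus the Mahalanobis term. This trace is not $O(d)$. Condition \eqref{eq:cond} only forces the empirical sublevel ellipsoid $\{u:\langle u,\hat Qu\rangle\le C_1 d/n\}$ to have $Q$-diameter at most $M'$, which gives $\lambda_{\max}(\hat Q^{-1/2}Q\hat Q^{-1/2})\le M'^2 n/(4C_1 d)$ and hence only $\Tr(Q\hat Q^{-1})\le M'^2 n/(4C_1)$; after division by $\lam n$ with $\lam$ constant this contributes a \emph{constant} to the right-hand side, so your argument yields at best $R(\hfols)-R(\flin)\le C_2\,[d+\logeps+\log(\det\hat Q/\det Q)]/n+\cst$, which is not the theorem. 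Nor can you rescale the posterior covariance to tame the trace: shrinking it reintroduces $d\log(\cdot)$ through the $\log\det$ term, i.e.\ exactly the logarithmic factor the statement avoids. The way the determinant ratio actually arises in Catoni's argument (and in the localization machinery of Section \ref{sec:gen} of this paper) is not from a single Gaussian KL but from comparing the empirical and theoretical log-partition functions $\log\int e^{-\lam n[r(f_\th)-r(\hfols)]}d\th$ and $\log\int e^{-\lam n[R(f_\th)-R(\flin)]}d\th$, each an exact Gaussian integral contributing $-\tfrac12\log\det\hat Q$ and $-\tfrac12\log\det Q$ respectively; this produces $\tfrac12\log(\det\hat Q/\det Q)$ with no trace term. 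The second, smaller, gap is the one you flag yourself: the Gaussian posterior charges parameters outside $\cF'$, where neither the diameter bound $M'$ nor the exponential-moment hypothesis controls $\Delta(f_\th)$, and its log-Laplace transform at fixed $\lam$ genuinely diverges there; asserting the escaping mass is negligible is not enough --- one must restrict or truncate \emph{before} applying the PAC-Bayesian inequality, and this is precisely where hypothesis \eqref{eq:cond} has to do real work rather than appear as an afterthought.
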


This result can be understood as follows. 
Let us assume we have some prior knowledge suggesting 
that $\flin$ belongs to the interior of a set 
$\cF'\subset\Flin$ (e.g., a bound
on the coefficients of the expansion of $\flin$ as a linear
combination of $\vp_1, \dots, \vp_d$). 
It is likely that \eqref{eq:cond} holds, and it is indeed proved in 
Catoni \cite[section 5.11]{Cat01} that the probability that it does not
hold goes to zero exponentially fast with $n$ in the case when $\cF'$
is a Euclidean ball. 
If it is the case, then we know that the excess risk is of order $d/n$
up to the unpleasant ratio of determinants, which, 
fortunately, almost surely tends to $1$ as $n$ goes to infinity.

%+++++++++++++++++++++++
%
%This result is of different nature than the previous ones since there is no guarantee on how likely \eqref{eq:cond} is,
%even in the case when an oracle gives us an upper bound on $\sup_{x\in\X} |\freg(x)|$. Section 5.11 in \cite{Cat01} 
%discusses this problem.
%
%{\em [I do not agree, it seems to me that 
%section 5.11 proves an exponential bound]} 
% JY: you are right!
% JY: In section 5.11: exponentially fast indeed, but with constant depending on $\det Q$, which is not nice.
%++++++++++++++++++++++++

%
By using \emph{localized} PAC-Bayes inequalities introduced in Catoni 
\cite{Cat03b, Cat05},
one can derive from Inequality (6.9) and Lemma 4.1 of Alquier \cite{Alq08} the following result.

\begin{thm} \label{th:alqa}
Let $q_{\min}$ be the smallest eigenvalue of the Gram matrix $Q=\B{E} \bigl[ \vp(X) \vp(X)^T\bigr]$.
Assume that there exist a function $f_0 \in \Flin$ and positive constants $H$ and $C$ such that 
	\[
	\| \flin - f_0 \|_{\infty}  \le H.
	\]
and $|Y| \le C$ almost surely.

%Let $\cF = \big\{ f\in \Flin: \| f - f_0 \|_{\infty} \le H \big\}$.
Then for an appropriate randomized estimator requiring the knowledge of $f_0$, $H$ and $C$, for any $\eps>0$
with probability at least $1-\eps$ $\wrt$ the distribution generating
the observations $Z_1,\dots,Z_n$ and the randomized prediction function $\hf$, we have
    \beglab{eq:alqa}
    R(\hat{f}) - R(\flin) \le \kap (H^2 +C^2) \frac{d \log( 3q_{\min}^{-1} ) + 
        \log ( (\log n) \eps^{-1} ) }{n},
    \endlab
for some $\kap$ not depending on $d$ and $n$.
\end{thm}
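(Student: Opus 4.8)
The plan is to exhibit $\hf$ as a Gibbs (exponential weighting) randomized estimator and to bound its excess risk by a localized PAC-Bayesian argument of the type introduced in Catoni \cite{Cat03b,Cat05}; the boundedness hypotheses $|Y|\le C$ and $\|\flin-f_0\|_\infty\le H$ enter only through the size of the loss increments. Fix an inverse temperature $\beta>0$ and a reference prior $\pi_0$ on $\Theta=\R^d$ (an isotropic Gaussian centred at the parameter $\theta_0$ of $f_0$, supported on the ball on which the loss increments stay bounded), and let the posterior be the Gibbs distribution $\hrho_\beta(d\th)\propto\exp[-\beta\,r(f_\th)]\,\pi_0(d\th)$; the randomized estimator draws $\th\sim\hrho_\beta$ and returns $\hf=f_\th$. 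The structural fact that drives everything is that for the square loss the excess risk is \emph{exactly} quadratic,
\[
R(f_\th)-R(\flin)=\lan\th-\thlin,\,Q(\th-\thlin)\ran,
\]
so that the \emph{data-free} localized prior $\pi_\beta(d\th)\propto\exp[-\beta R(f_\th)]\pi_0(d\th)$ is (a mild modification of) the Gaussian $\N(\thlin,(2\beta Q)^{-1})$, while $\hrho_\beta$ is the analogous Gaussian built from the empirical Gram matrix $\hat{Q}$.

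I would then invoke the Bernstein-type PAC-Bayesian deviation inequality of \cite[Inequality (6.9)]{Alq08} with the admissible prior $\pi_\beta$: with probability at least $1-\eps$, simultaneously over posteriors,
\[
\int\!\bigl(R(f_\th)-R(\flin)\bigr)\hrho_\beta(d\th)\le(1+c)\!\int\!\bigl(r(f_\th)-r(\flin)\bigr)\hrho_\beta(d\th)+\kap(H^2+C^2)\frac{\cK(\hrho_\beta,\pi_\beta)+\log(\eps^{-1})}{n}.
\]
The prefactor $(H^2+C^2)$ appears because, on the relevant parameter region, the increment $[Y-f_\th(X)]^2-[Y-\flin(X)]^2$ is bounded by a multiple of $H^2+C^2$, while its variance is controlled by $R(f_\th)-R(\flin)$ itself (the self-bounding property of the square loss); it is this variance--expectation comparison that licenses the fast $1/n$ rate rather than $1/\sqrt n$. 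The localization of \cite[Lemma 4.1]{Alq08} is exactly the observation that the empirical integral above recombines with $\cK(\hrho_\beta,\pi_\beta)$, through the variational identity defining the Gibbs measure, so as to absorb the fluctuations of $r$ around $R$ and leave a purely complexity-type remainder.

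It remains to estimate $\cK(\hrho_\beta,\pi_\beta)$. Calibrating the scale of the isotropic $\pi_0$ to the flattest curvature direction of the Gibbs posterior, the divergence decomposes into a mean-separation term $\beta\lan\thlin-\hth,Q(\thlin-\hth)\ran$ --- i.e.\ $\beta$ times the ordinary estimation error of the empirical minimizer $\hth$, which is of order $(H^2+C^2)d/n$ and hence $O(d)$ once $\beta\sim n/(H^2+C^2)$ --- and a log-determinant term built from per-direction ratios of the eigenvalues of $Q$. The smallest eigenvalue $q_{\min}$ enters twice: through $\lan\th,Q\th\ran\ge q_{\min}\|\th\|^2$ it converts the sup-norm control $\|\flin-f_0\|_\infty\le H$ into the Euclidean bound $\|\thlin-\theta_0\|^2\le H^2/q_{\min}$ needed to charge $\thlin$ under the prior, and it fixes the flattest direction, where the posterior is broadest; bounding the per-direction log-ratios by the spectrum collapses the random determinant $\det\hat{Q}/\det Q$ of Theorem~\ref{th:oc} into the deterministic $d\log(3q_{\min}^{-1})$. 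Finally, since the optimal $\beta$ is unknown, I would run the construction along a geometric grid of $O(\log n)$ temperatures and union-bound, turning $\log(\eps^{-1})$ into $\log((\log n)\eps^{-1})$.

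The step I expect to be the main obstacle is the control of $\cK(\hrho_\beta,\pi_\beta)$: one must check that restricting the Gaussians to the parameter ball on which the loss increments stay bounded does not corrupt the log-determinant computation, and simultaneously establish the \emph{exponential} concentration of the quadratic estimation term $\beta\lan\thlin-\hth,Q(\thlin-\hth)\ran$ at the $d/n$ scale, uniformly over the whole temperature grid. It is here that the spectrum of $Q$ must be handled with care, and that one accepts the deterministic factor $d\log(3q_{\min}^{-1})$ in exchange for eliminating the random determinant ratio of Theorem~\ref{th:oc}.
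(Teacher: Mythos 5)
Your proposal follows essentially the same route the paper takes: the paper gives no self-contained argument for this theorem but derives it directly from Inequality (6.9) and Lemma 4.1 of Alquier together with Catoni's localized PAC-Bayes inequalities, which is precisely the Gibbs-posterior-plus-localized-Gaussian-prior machinery you reconstruct. Your account of where the two distinctive features come from also matches the paper's own commentary — the $\log(\log n)$ arises from a union bound over a geometric grid of temperatures, and $q_{\min}$ enters by replacing the random determinant ratio of Theorem~\ref{th:oc} (and by converting the $L^\infty$ bound $\|\flin-f_0\|_\infty\le H$ into a Euclidean bound on $\|\thlin-\theta_0\|$) — so the sketch is consistent with the intended proof.
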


Using the result of \cite[Section 5.11]{Cat01},
one can prove that Alquier's result still holds for $\hf= \hfols$,
but with $\kap$ also depending on the determinant of the product matrix $Q$.
The $\log[ \log(n) ]$ factor is unimportant and could be removed in
the special case quoted here (it comes from a union bound on a grid
of possible temperature parameters, whereas the temperature could
be set here to a fixed value). The result
differs from Theorem \ref{th:oc} essentially by the fact that
the ratio of the determinants of the empirical and expected product
matrices has been replaced by the inverse of the smallest eigenvalue
of the quadratic form $\theta \mapsto R(\sum_{j=1}^d \theta_j\vp_j) - R(\flin)$.  
In the case when the expected Gram matrix is known,
(e.g., in the case of a fixed design, and also in the slightly
different context of transductive inference), this
smallest eigenvalue can be set to one by choosing the 
quadratic form $\theta \mapsto R(f_{\theta}) - R(\flin)$
to define the Euclidean metric on the parameter space.

Localized Rademacher complexities \cite{Kol06,BarBouMen05} allow
to prove the following property of the empirical risk minimizer.

\begin{thm} \label{th:bbm}
Assume that the input representation $\vp(X)$, the set of parameters and the output $Y$ are
almost surely bounded, i.e., for some positive constants $H$ and $C$,
  \[
  \sup_{\th\in \Theta} \|\th\| \le 1
  \]
  \[
  \ess\sup \|\vp(X)\| \le H,
  \]
and 
  \[
  |Y| \le C \quad \textnormal{a.s.}.
  \]
Let $\nu_1\ge\dots\ge\nu_d$ be the eigenvalues of the  Gram matrix $Q=\B{E} \bigl[ \vp(X) \vp(X)^T\bigr]$.
The empirical risk minimizer satisfies for any $\eps>0$,
with probability at least $1-\eps$:
	\begin{align*}
	R( \hferm ) - R(f^*) & \le \kap (H+C)^2 
	  \frac{\und{\min}{0\le h\le d} \Big( h+\sqrt{ \frac{n}{(H+C)^2} 
	  \sum_{i>h}\nu_i}\Big)+\logeps}{n}\\
  & \le \kap (H+C)^2 \frac{\textnormal{rank}(Q) +\logeps}{n},
	\end{align*}
where $\kap$ is a numerical constant.
\end{thm}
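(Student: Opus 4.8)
The plan is to deduce the result from the theory of localized Rademacher complexities of \cite{Kol06,BarBouMen05}, whose master inequality for the empirical risk minimizer requires three ingredients: a uniform bound on the excess loss, a variance (Bernstein) inequality relating the second moment of the excess loss to its expectation, and control of the local Rademacher complexity of the loss class. I will produce each of these and then solve the resulting fixed-point equation.

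For the boundedness, note that $\sup_{\th\in\Theta}\|\th\|\le 1$ and $\esssup\|\vp(X)\|\le H$ give $|f_\th(X)|=|\langle\th,\vp(X)\rangle|\le H$ by Cauchy--Schwarz, so together with $|Y|\le C$ the loss $[Y-f_\th(X)]^2$ is bounded by $b\eqdef(H+C)^2$ and the excess loss $g_f\eqdef[Y-f(X)]^2-[Y-f^*(X)]^2$ satisfies $|g_f|\le 2b$. For the Bernstein inequality, write $g_f=(f^*-f)(2Y-f-f^*)$, so that $\E g_f=R(f)-R(f^*)$. Because $\cF$ is convex and $f^*$ minimizes $R$ over $\cF$, the variational (first-order optimality) inequality yields $\E[(Y-f^*)(f-f^*)]\le 0$, whence $R(f)-R(f^*)\ge\E[(f-f^*)^2]$. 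Since $|2Y-f-f^*|\le 2(H+C)$, one also gets $\E[g_f^2]\le 4(H+C)^2\,\E[(f-f^*)^2]\le 4b\,\E g_f$. Thus the excess loss class obeys a Bernstein condition with constant $4b$, which permits localization in expected excess risk.

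It remains to bound the local Rademacher complexity. Setting $\beta=\th-\th^*$, the condition $\E[(f-f^*)^2]\le r$ reads $\beta^{T}Q\beta\le r$, an ellipsoid governed by the eigenvalues $\nu_1\ge\cdots\ge\nu_d$ of $Q$. The squared loss is $2(H+C)$-Lipschitz on the range of values at stake, so by the contraction principle the complexity of $\{g_f\}$ is controlled by that of the linear class $\{\langle\beta,\vp(\cdot)\rangle\}$, and the standard computation of the Rademacher complexity of an ellipsoid gives a sub-root function $\psi(r)\le\kap'\sqrt{b}\,\sqrt{\tfrac1n\sum_{i=1}^d\min(r,\nu_i)}$. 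Solving the fixed-point equation $\rho=\psi(\rho)$ by squaring and using, for each $0\le h\le d$, the bound $\sum_{i=1}^d\min(\rho,\nu_i)\le h\rho+\sum_{i>h}\nu_i$ reduces matters to a quadratic in $\rho$, whose root satisfies $\rho^*\le\frac{b}{n}\bigl(h+\sqrt{\tfrac{n}{b}\sum_{i>h}\nu_i}\bigr)$. Optimizing over $h$ and adding the deviation term $\frac{b\logeps}{n}$ supplied by the master inequality yields the first displayed bound; taking $h=\textnormal{rank}(Q)$ annihilates the tail sum and gives the second, since $\textnormal{rank}(Q)\le d$.

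The main obstacle is the complexity computation: one must verify the eigenvalue-truncated form $\sum_i\min(r,\nu_i)$ --- which requires projecting the ellipsoid onto its high- and low-variance directions and applying Jensen's inequality in the $Q$-metric --- and then track the Bernstein constant $4b$ faithfully through the self-bounding argument of the master theorem so that the final multiplicative factor is exactly of order $(H+C)^2$. The convexity and boundedness steps are routine by comparison.
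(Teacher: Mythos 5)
Your proposal is correct and follows essentially the same route as the paper: the paper's proof simply invokes Theorem 6.7 of \cite{BarBouMen05} (via Corollary 5.3 and Lemma 6.5 there) for the linear kernel $k(u,v)=\langle u,v\rangle/(H+C)^2$, and the three ingredients you assemble --- boundedness of the excess loss, the Bernstein condition $\E[g_f^2]\le 4(H+C)^2\,\E g_f$ from convexity of $\cF$, and the ellipsoid complexity bound $\sqrt{\tfrac1n\sum_i\min(r,\nu_i)}$ with the fixed-point computation over $h$ --- are exactly the content of that cited argument, with the eigenvalues of the integral operator reducing to those of $Q$. Nothing further is needed.
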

\begin{proof}
The result is a modified version of Theorem 6.7 in \cite{BarBouMen05} 
applied to the linear kernel $k(u,v)=\langle u , v \rangle/(H+C)^2$.
Its proof follows the same lines as in Theorem 6.7 {\em mutatis mutandi}:
Corollary 5.3 and Lemma 6.5 should be used as intermediate steps 
instead of Theorem 5.4 and Lemma 6.6, 
the nonzero eigenvalues of the integral operator induced 
by the kernel being the nonzero eigenvalues of $Q$.
\end{proof}

When we know that the target function $\flin$ is inside some $L^\infty$ ball, it is natural to consider the empirical risk minimizer on this ball. This allows
to compare Theorem \ref{th:bbm} to excess risk bounds with respect to
$\flin$.

Finally, from the work of Birg\'e and Massart \cite{BirMas98}, we may derive the following risk bound for the empirical risk minimizer on a $L^\infty$ ball (see Appendix \ref{sec:bm}).

\begin{thm} \label{th:bmnew}
Assume that $\cF$ has a diameter $H$ for $L^\infty$-norm, i.e., for any 
$f_1,f_2$ in $\cF$, $\sup_{x\in\X} |f_1(x)-f_2(x)| \le H$ and there exists a function
$f_0 \in\cF$ satisfying the exponential moment condition:
	\beglab{eq:expmom}
	\text{for any } x\in\X, \quad 
  \E \Bigl\{ \exp \Bigl[ A^{-1} \bigl\lvert Y-f_0(X) \bigr\rvert \Bigr]\, \Big| \,X=x \Bigr\} \le M,
	\endlab
for some positive constants $A$ and $M$. Let
	\[
	\cR = \und{\inf}{\phi_1,\dots,\phi_d} \, \und{\sup}{\th\in \R^d - \{0\}} 
		\frac{\|\sum_{j=1}^d \th_j \phi_j\|_\infty^2}{\|\th\|_\infty^2}
	\]
where the infimum is taken with respect to all possible orthonormal basis of $\cF$ for 
the dot product $\lan f_1,f_2 \ran = \E f_1(X)f_2(X)$ (when the set $\cF$ admits no basis with exactly $d$ functions, we set $\cR=+\infty$).
Then the empirical risk minimizer satisfies for any $\eps>0$,
with probability at least $1-\eps$:
	\[
	R( \hferm ) - R(f^*) \le \kap (A^2 + H^2) \frac{d \log[2+(\cR /n)\wedge (n/d)] +\logeps}{n},
	\]
where $\kap$ is a positive constant depending only on $M$.
\end{thm}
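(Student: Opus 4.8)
The plan is to cast the least-squares empirical risk minimizer into the minimum-contrast estimation framework of Birg\'e and Massart \cite{BirMas98} and to import their exponential deviation inequality for finite-dimensional models, the quantity $\cR$ playing the role of an index measuring the distortion between the $L^2$ and $L^\infty$ geometries. First I would fix, once and for all, an orthonormal basis $\phi_1,\dots,\phi_d$ of $\cF$ for the inner product $\lan f_1,f_2\ran=\E f_1(X)f_2(X)$ that attains the infimum defining $\cR$ up to a factor $2$, and coordinatize each $f\in\cF$ as $f=\sum_{j=1}^d\th_j\phi_j$. In these coordinates the excess distance is Euclidean, $\E[(f-f^*)(X)]^2=\|\th_f-\th^*\|^2$, while the definition of $\cR$ yields the crucial sup-norm control $\|f-f^*\|_\infty\le\sqrt{2\cR}\,\|\th_f-\th^*\|$. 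Finally, the optimality of $f^*$ over the convex set $\cF$ gives the localization inequality $\E[(f-f^*)(X)]^2\le R(f)-R(f^*)$ (an equality when $\cF$ is a linear subspace), so that both the variance and the sup-norm of the increments below can be bounded by the excess risk itself.

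Next I would control the centred empirical process $f\mapsto[R(f)-R(f^*)]-[r(f)-r(f^*)]$ by writing each per-observation loss difference as $[Y_i-f(X_i)]^2-[Y_i-f^*(X_i)]^2=(f^*-f)(X_i)\,\bigl(2Y_i-f(X_i)-f^*(X_i)\bigr)$. The first factor has $\E[(f^*-f)(X)]^2=\|\th_f-\th^*\|^2$ and sup-norm at most $\sqrt{2\cR}\,\|\th_f-\th^*\|$; the second factor is a residual whose conditional exponential moment is controlled, since the moment condition \eqref{eq:expmom} at $f_0$ together with the $L^\infty$-diameter $H$ makes every residual $Y-f(X)$ uniformly sub-exponential with scale of order $A+H$. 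Consequently these loss differences obey a Bernstein condition with variance factor of order $(A^2+H^2)\|\th_f-\th^*\|^2$ and envelope of order $(A+H)\sqrt{\cR}\,\|\th_f-\th^*\|$, which is exactly the data required by the Birg\'e--Massart deviation bound and is the place where the heavy-tailed output is tamed without assuming $Y$ itself bounded.

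I would then run a peeling argument over the geometric slices $\{f\in\cF:2^k r_0\le\|\th_f-\th^*\|^2<2^{k+1}r_0\}$ of the excess risk, bounding the supremum of the centred process on each slice by chaining over the $d$-dimensional Euclidean ball, whose $\delta$-covering number has logarithm of order $d\log(\rho/\delta)$. Combined with the Bernstein increments this produces, with probability at least $1-\eps$ and uniformly in $f$, an inequality of the form $[R(f)-R(f^*)]-[r(f)-r(f^*)]\le\tfrac12[R(f)-R(f^*)]+\kap(A^2+H^2)\frac{d\log[2+(\cR/n)\wedge(n/d)]+\logeps}{n}$. Since the minimizer satisfies $r(\hferm)-r(f^*)\le0$, substituting $f=\hferm$ and rearranging (the self-bounding step) absorbs the $\tfrac12[R(\hferm)-R(f^*)]$ term into the left-hand side and yields the announced bound. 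The logarithmic factor arises because the envelope at scale $\rho$ is $\sqrt{\cR}$ larger than the $L^2$-radius, so the multiplicative distortion $\cR$ enters the summation over chaining scales only through its logarithm, and the chaining is stopped once the discretization reaches the statistical resolution $\sim\sqrt{d/n}\,\rho$, which caps the number of active scales and replaces a naive $\log\cR$ by $\log[2+(\cR/n)\wedge(n/d)]$.

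The hard part is precisely this sharp logarithmic factor: obtaining the minimum $(\cR/n)\wedge(n/d)$ inside the logarithm, rather than a crude $\log n$ or $\log\cR$, requires the delicate balancing of the variance term, the envelope term and the entropy across chaining scales that is the technical core of the Birg\'e--Massart method, together with the truncation of scales at the resolution level. A secondary difficulty is bookkeeping the dependence on the constants: one must carry the factor $A^2+H^2$ out in front while ensuring that the remaining numerical constant $\kap$ depends only on $M$, which forces the sub-exponential Bernstein estimates of the second paragraph to be stated with the residual scale $A+H$ factored cleanly from the dimension-free part.
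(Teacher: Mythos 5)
Your proposal follows essentially the same route as the paper's Appendix B: the paper also reduces the problem to the minimum-contrast framework of Birg\'e and Massart, verifies their moment assumption with the envelope $M(w)=2(|w|+H)$ of scale $A+H$ deduced from \eqref{eq:expmom}, verifies their entropy assumption in two alternative ways (one via the distortion constant $\cR$ through their Proposition~1, one via the $L^\infty$ covering bound $|\Fg|\le(3H/\delta)^d$, whence the minimum $(\cR/n)\wedge(n/d)$ inside the logarithm), and concludes by the convexity inequality $\E[(\hferm-f^*)(X)]^2\le R(\hferm)-R(f^*)$ together with $r(\hferm)\le r(f^*)$. The only difference is that the paper invokes their Theorem~5 (inequality (5.8) with $\tau=1/2$) as a black box rather than re-running the peeling and chaining by hand as you sketch, which is a presentational rather than a mathematical difference.
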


This result comes closer to what we are looking for: it gives exponential deviation inequalities of order at worse 
$\fracc{d \log(n/d)}{n}$.
It shows that, even if the Gram matrix $Q$
has a very small eigenvalue, there is an algorithm satisfying a convergence rate of order 
$\fracc{d \log(n/d)}{n}$. With this respect, this result is stronger than Theorem \ref{th:alqa}. However there
are cases in which the smallest eigenvalue of $Q$ is of order $1$, while $\cR$ is large (i.e., $\cR \gg n$). In these cases, 
Theorem \ref{th:alqa} does not contain the logarithmic factor which appears in Theorem \ref{th:bmnew}.

\subsection{Projection estimator} \label{sec:proj}

When the input distribution is known, an alternative to the ordinary least squares estimator is 
the following projection estimator. One first finds an orthonormal 
basis of $\Flin$ for the dot product $\lan f_1,f_2 \ran = \E f_1(X)f_2(X)$, 
and then uses the projection estimator on this basis. Specifically, if $\phi_1,\dots,\phi_d$ form an orthonormal basis of $\Flin$, then the projection estimator on this basis is:
	$$
	\hfproj= \sum_{j=1}^d \thproj_j \phi_j,
	$$
with 
	$$
	\thproj= \frac{1}{n} \sum_{i=1}^n Y_i \phi_j(X_i).
	$$
Theorem 4 in \cite{Tsy03} gives a simple bound of order $d/n$ on the expected excess risk $\E R(\hfproj) - R(\flin)$.

\subsection{Penalized least squares estimator}

It is well established that parameters of the ordinary least squares estimator are numerically unstable, and that the phenomenon can be corrected
by adding an $L^2$ penalty (\cite{Lev44,Ril55}). This solution has been labeled ridge regression in statistics (\cite{Hoe62}), and consists 
in replacing $\hfols$ by $\hfrlam=f_{\thrlam}$ with
	\[
	\thrlam \in \und{\argmin}{\th\in\R^d} %\frac{1}{n}\sum_{i=1}^n  \big(Y_i-f_\th(X_i) \big)^2 
	  \bigg\{ r(f_\th) + \lam \sum_{j=1}^d \theta_j^2 \bigg\},
	\]
where $\lam$ is a positive parameter. 
The typical value of $\lam$ should be small to avoid excessive shrinkage of the coefficients, but not too small in
order to make the optimization task numerically more stable.

Risk bounds for this estimator can be derived from general results concerning penalized least squares on reproducing kernel Hilbert spaces
(\cite{CapVit07}), but as it is shown in Appendix \ref{sec:capvit}, this ends up with complicated results having
the desired $d/n$ rate only under strong assumptions.

Another popular regularizer is the $L^1$ norm. This procedure is known 
as Lasso \cite{Tib94} and is defined by
	\[
	\thllam \in \und{\argmin}{\th\in\R^d} 
		%\frac{1}{n}\sum_{i=1}^n \big( Y_i-f_\th(X_i) \big)^2 
		\bigg\{ r(f_\th) + \lam \sum_{j=1}^d |\theta_j| \bigg\}.
	\]
As the $L^2$ penalty, the $L^1$ penalty shrinks the coefficients. The difference is that for coefficients 
which tend to be close to zero, the shrinkage makes them equal to zero. This allows to select relevant 
variables (i.e., find the $j$'s such that $\th^*_j\neq 0$). 
If we assume that the regression function $\freg$ is a linear combination of only $d^* \ll d$ variables/functions $\vp_j$'s,
the typical result is to prove that the risk of the Lasso estimator for $\lam$ of order $\sqrt{\fracl{\log d}{n}}$
is of order $\fracl{d^* \log d}{n}$. Since this quantity is much smaller than $d/n$, this makes a huge improvement (provided that
the sparsity assumption is true). This kind of results usually requires strong conditions on the
eigenvalues of submatrices of $Q$, essentially assuming that the functions 
$\vp_j$ are near orthogonal. We do not know to which extent these 
conditions are required. However, if we do not consider the specific algorithm of Lasso, but the model selection approach developed in \cite{Alq08}, one can change these conditions into a single condition concerning only the minimal eigenvalue of the submatrix of $Q$ corresponding to relevant variables. In fact, we will see that even this condition can be removed.
%We believe that advances in the design of estimators having low least squares risk $R$ can improve
%results in variable selection when jointly used with model selection tools.

\subsection{Conclusion of the survey} 

Previous results clearly leave room to improvements. 
The projection estimator requires the unrealistic assumption that the input distribution is known,
and the result holds only in expectation. Results using $L^1$ or $L^2$ regularizations require strong assumptions, in particular on the eigenvalues of (submatrices of) $Q$.
Theorem \ref{th:weakols} provides a $(d \log n)/n$ convergence rate only when the 
$R(\flin)-R(\freg)$ is at most of order $(d \log n)/n$. 
%Even under this strong
%assumption, the inequality is not fully satisfactory because of the logarithmic factor and since
%the result holds in expectation but not in terms of deviations. 
%Besides the procedure for which \eqref{eq:conjecture} holds is not the ordinary least 
%squares estimator but a truncated estimator which requires the knowledge of an upper bound on $\sup_{x\in\X} |\freg(x)|$.
Theorem \ref{th:oc} gives a different type of guarantee: the $d/n$ is 
indeed achieved, but the random ratio of determinants appearing 
in the bound may raise some eyebrows and forbid an explicit
computation of the bound and comparison with other bounds.
%to derive a bound in expectation from the bound in deviation.
Theorem \ref{th:alqa}
seems to indicate that the rate of convergence will
be degraded when the Gram matrix $Q$ is unknown and ill-conditioned. 
Theorem \ref{th:bbm} does not put any assumption on $Q$ to reach the $d/n$ rate, 
but requires particular boundedness constraints on the parameter set, the input vector $\vp(X)$ and the output.
Finally, Theorem \ref{th:bmnew} comes closer to what we are looking for. Yet there is still an unwanted logarithmic factor, and 
the result holds only when the output has uniformly bounded 
conditional exponential moments, which as we will show is not necessary.

\section{Ridge regression and empirical risk minimization} \label{sec:ridge}

We recall the definition
	$$
	\cF= \big\{ f_{\theta} = \sum_{j=1}^d \th_j \vp_j ; (\th_1,\dots,\th_d) \in \cC \big\},
	$$
where $\cC$ is a closed convex set, not necessarily bounded
(so that $\Theta = \B{R}^d$ is allowed).
In this section, we provide exponential deviation inequalities for the empirical risk 
minimizer and the ridge regression estimator on $\cF$ under weak conditions
on the tail of the output distribution. 

The most general theorem which can be obtained from
the route followed in this section is Theorem 
\thmref{thm6.5} stated along with the proof.
It is expressed in terms of a series of empirical 
bounds. The first deduction we can make from this
technical result is of asymptotic nature.
It is stated under weak hypotheses, taking advantage
of the weak law of large numbers.

\begin{thm} \label{th:hfrlam}
For $\lam\ge 0$, let $\frid$ be its associated optimal ridge function (see \eqref{eq:frid}).
%and $Q_{\lambda} = Q + \lambda I$ be the ridge regularization of Gram matrix $Q=\B{E} \bigl[ \vp(X) \vp(X)^T\bigr]$.
Let us assume that
\begin{gather}
\B{E}\bigl[ \lVert \vp(X) \rVert^4 \bigr] < + \infty,\\
\text{and }\quad \B{E} \Bigl\{ \lVert \vp(X) \rVert^2 \bigl[ \frid(X) - Y \bigr]^2 \Bigr\} < + \infty.
\end{gather}
Let $\nu_1,\dots,\nu_d$ be the eigenvalues of 
the Gram matrix $Q=\B{E} \bigl[ \vp(X) \vp(X)^T\bigr]$, and let $Q_{\lambda} = Q + \lambda I$ be the ridge regularization of $Q$. 
Let us define the {\em effective ridge dimension}
$$
D = \sum_{i=1}^d \frac{\nu_i}{\nu_i+\lam} \B{1}(\nu_i>0) 
= \Tr \bigl[ (Q+ \lambda I)^{-1} Q \bigr] 
= \B{E} \bigl[ \lVert Q_{\lambda}^{-1/2} \vp(X) \rVert^2 \bigr].
$$
When $\lambda = 0$, $D$ is equal to the rank 
of $Q$ and is otherwise smaller.
For any $\eps > 0$, there is $n_{\eps}$, 
such that for any $n \geq n_{\eps}$, 
with probability at least $1 - \eps$, 
\begin{align*}
R(\hfrlam) & + \lambda \lVert \hat{\th}^{\textnormal{(ridge)}} 
\rVert^2 \\
& \leq  \min_{\theta \in \Theta} \big\{ R(f_{\theta}) 
+ \lambda \lVert \theta \rVert^2 \big\} \\ 
& \qquad + \frac{30 \, \B{E} \bigl\{ \lVert Q_\lam^{-1/2} \vp(X) \rVert^2 
\bigl[ \frid(X) - Y \bigr]^2\bigr\}}{ 
\B{E} \bigl\{ \lVert Q_\lam^{-1/2} \vp(X) \rVert^2 \bigr\}} \ \frac{D }{n} \\ 
 & \qquad + 1000 \sup_{
v \in \B{R}^d}
\frac{\B{E} \Bigl[ \langle v, \vp(X) \rangle^2 \bigl[ \frid(X) - Y \bigr]^2 
\Bigr]}{\B{E} ( \langle v, \vp(X) \rangle^2 ) + \lambda \lVert v \rVert^2}
\frac{\log(3\eps^{-1})}{n}\\
& \leq  \min_{\theta \in \Theta} \big\{ R(f_{\theta}) 
+ \lambda \lVert \theta \rVert^2 \big\} \\
& \qquad
+ \ess \sup \E\big\{[Y-\frid(X)]^2 \big| X\big\} \, \frac{30 D + 1000 \log(3\eps^{-1})}{n} 
\end{align*}
\end{thm}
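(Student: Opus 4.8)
The plan is to reduce the statement to the general empirical inequality of this section (Theorem~\thmref{thm6.5}) and then let $n\to\infty$ via the weak law of large numbers. First I would record the exact quadratic structure of the regularized risk. Writing $b=\E[Y\vp(X)]$, the map $\theta\mapsto R(f_\theta)+\lam\|\theta\|^2=\E[Y^2]-2\langle\theta,b\rangle+\langle\theta,Q_\lambda\theta\rangle$ is minimized at $\thrid=Q_\lambda^{-1}b$ (when $\lam=0$ and $Q$ is singular, read $Q_\lambda^{-1}$ as the Moore--Penrose pseudo-inverse and restrict to the range of $Q$), and for every $\theta$,
\[
R(f_\theta)+\lam\|\theta\|^2-\min_{\theta'}\{R(f_{\theta'})+\lam\|\theta'\|^2\}=\|Q_\lambda^{1/2}(\theta-\thrid)\|^2.
\]
Hence the left-hand side of the theorem equals $\|Q_\lambda^{1/2}(\thrlam-\thrid)\|^2$, and everything reduces to controlling the $Q_\lambda$-distance between $\thrlam=\hat{Q}_\lambda^{-1}\hat{b}$, with $\hat{Q}=\frac1n\sum_i\vp(X_i)\vp(X_i)^T$ and $\hat{b}=\frac1n\sum_i Y_i\vp(X_i)$, and its population counterpart $\thrid$.

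To obtain the empirical master bound I would run the PAC-Bayesian truncation argument of this section on the differences of losses $\theta\mapsto[Y_i-f_\theta(X_i)]^2-[Y_i-\frid(X_i)]^2$. Choosing Gaussian prior and posterior on the parameter, centered respectively at $\thrid$ and at $\thrlam$ with covariance tuned to the $Q_\lambda$ geometry, the Kullback--Leibler term produces the effective dimension $D=\Tr[Q_\lambda^{-1}Q]=\E\|Q_\lambda^{-1/2}\vp(X)\|^2$, while the second-order (variance) term in the expansion of the truncated log-Laplace transform produces the local noise factors $\E[\langle v,\vp(X)\rangle^2(\frid(X)-Y)^2]$ appearing in the numerators. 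Truncation is precisely what lets these exponential-moment computations go through under the mere fourth- and second-moment hypotheses, i.e. with no exponential moment on $Y$. The outcome is an inequality of the same shape as the target, but with $Q$, $D$ and the variance ratios replaced by empirical analogues built from $\hat{Q}$ and the sample.

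The proof of Theorem~\ref{th:hfrlam} proper is then the passage to the limit. Under $\E\|\vp(X)\|^4<\infty$ the matrix $\hat{Q}$ converges in probability to $Q$ entrywise, so $\hat{Q}_\lambda\to Q_\lambda$, the empirical effective dimension $\Tr[\hat{Q}_\lambda^{-1}\hat{Q}]\to D$, and the quadratic forms $v^T\hat{Q}v\to v^T Q v$ uniformly over the unit sphere. Under $\E[\|\vp(X)\|^2(\frid(X)-Y)^2]<\infty$ the empirical variance functionals $\frac1n\sum_i\langle v,\vp(X_i)\rangle^2(\frid(X_i)-Y_i)^2$ converge to $\E[\langle v,\vp(X)\rangle^2(\frid(X)-Y)^2]$, again uniformly in $v$ after the $Q_\lambda$-normalization that renders the relevant index set compact. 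Fixing $\eps$, I would pick $n_\eps$ so large that, on an event of probability at least $1-\eps$ coming jointly from the master bound and from these laws of large numbers, every empirical quantity lies within the slack already absorbed by the explicit constants $30$ and $1000$; this gives the first displayed inequality. The second inequality is purely deterministic: conditioning gives $\E[\langle v,\vp\rangle^2(\frid-Y)^2]=\E[\langle v,\vp\rangle^2\,\E((\frid-Y)^2\mid X)]\le\ess\sup\E[(\frid-Y)^2\mid X]\cdot\E[\langle v,\vp\rangle^2]$, and since $\E[\langle v,\vp\rangle^2]\le v^TQ_\lambda v$ both ratios are bounded by $\ess\sup\E[(Y-\frid(X))^2\mid X]$, the same conditioning bounding the $D$-numerator by $\ess\sup\E[(Y-\frid(X))^2\mid X]\cdot D$.

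The genuinely hard step is the PAC-Bayesian truncation underlying the master bound: one must show that truncating the loss differences controls the exponential moments of the sum $\sum_i$ under only polynomial moment assumptions, and that the truncation bias it introduces is of smaller order than the main $D/n$ and $\logeps/n$ terms. Granting that, the remaining difficulty here is a uniformity issue — the supremum over $v\in\R^d$ of the variance ratio must converge, not merely pointwise; I would reduce it to a supremum over the compact set $\{v:v^TQ_\lambda v=1\}$ and invoke a uniform law of large numbers, which finite dimensionality makes routine once the second-moment hypothesis is in force. The non-quantitative nature of $n_\eps$ is exactly what permits the clean population constants $30$ and $1000$ to replace the messier empirical expressions of Theorem~\thmref{thm6.5}.
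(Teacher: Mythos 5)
Your overall route is indeed the paper's: the proof works in the coordinates $\V{X}=Q_\lam^{-1/2}\vp(X)$, uses Gaussian prior and posterior distributions with covariance $\beta^{-1}I$ in those coordinates (i.e.\ proportional to $Q_\lam^{-1}$ in the original ones) inside a PAC-Bayesian inequality based on the polynomial soft truncation $1-W_i+W_i^2/2$ (Lemma \ref{lemma7.1}), obtains the empirical master bound of Theorem \ref{thm6.5} with the quantities $D$, $B_1,\dots,B_4$ and their empirical versions, and then lets $n\to\infty$ by the weak law of large numbers exactly as you describe; your derivation of the second displayed inequality by conditioning is also the intended one. Two corrections of detail. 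First, $D$ does not come from the Kullback--Leibler term, which only contributes $\frac{\beta}{2}\lVert\theta_2-\theta_0\rVert^2$; it comes from smoothing the quadratic loss by the Gaussian posterior, $\int\rho_{\theta_2}(d\theta)W(\theta)=W(\theta_2)+\alpha\lVert\V{X}\rVert^2/\beta$, whose expectation produces $\alpha D/\beta$ (Lemma \ref{le:7.2}). Second, your reduction of the left-hand side to $\lVert Q_\lam^{1/2}(\thrlam-\thrid)\rVert^2$ with $\thrlam=\wh{Q}_\lam^{-1}\wh{b}$ is only valid when $\Theta=\R^d$; the theorem covers an arbitrary closed convex $\Theta$, for which the paper uses only the one-sided inequality \eqref{eq:thconv} furnished by the projection characterization of $\theta_0$.

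The genuine gap is the passage from the PAC-Bayesian master inequality to an actual bound on the excess risk. Because $\E[W(\theta_2)^2]$ contains fourth-order terms $\langle\theta_2-\theta_0,\V{X}\rangle^4$, the master inequality evaluated at the estimator has the form $a_1x-a_2x^2\le a_0$ with $x$ the excess regularized risk, and by itself this only yields $x\le x_1$ \emph{or} $x\ge x_2$, where $x_1<x_2$ are the roots. Your proposal asserts that the master bound already has ``the same shape as the target,'' which silently discards the upper branch. The paper's Theorem \ref{thm7.4} excludes it with a connectedness argument: it introduces the convex sublevel set $\wh{\Theta}$ of parameters whose excess regularized risk is at most $(x_1+x_2)/2$, shows that the segment joining a minimizer of the empirical criterion over $\wh{\Theta}$ to one over $\Theta$ has intersection with $\wh{\Theta}$ that is both open and closed in the segment, and concludes that the global empirical minimizer lies in $\wh{\Theta}$ and hence below $x_1$. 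Without some argument of this kind (which uses the convexity of $\Theta$ in an essential way), the quadratic inequality does not deliver the stated risk bound.
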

\begin{proof}
See Section \thmref{sec:permom}.
\end{proof}

This theorem shows that the ordinary least squares estimator
(obtained when $\Theta = \B{R}^d$ and $\lambda = 0$), 
as well as the empirical risk minimizer on any closed convex 
set, asymptotically reaches a $d/n$ speed of convergence
under very weak hypotheses. It shows also the regularization 
effect of the ridge regression. There emerges an {\em effective
dimension} $D$, 
where the ridge penalty has a threshold effect on the eigenvalues
of the Gram matrix. 

On the other hand, the weakness of this result is 
its asymptotic nature : $n_\eps$ may be arbitrarily large
under such weak hypotheses, and this shows even in the 
simplest case of the estimation of the mean of a real valued
random variable by its empirical mean (which is the case when 
$d = 1$ and $\vp(X) \equiv 1$).

Let us now give some non asymptotic rate under stronger
hypotheses and for the empirical risk minimizer (i.e., 
$\lambda = 0$).

\begin{thm} \label{th:ermom}
Let $d'=\textnormal{rank}(Q)$.
Assume that \[\E\bigl\{[Y-f^*(X)]^4\bigr\}< +\infty\]
and
\[\bcR= \sup_{f\in\Span \{\vp_1,\dots,\vp_d\}-\{0\}} \fracc{\|f\|_\infty^2}{\E[f(X)^2]}
< +\infty.\] 
Consider the (unique) empirical risk minimizer $\hferm=f_{\therm}: x \mapsto \langle\therm,\vp(x)\rangle$ on $\cF$ for which
$\therm \in \Span\{\vp(X_1),\dots,\vp(X_n)\}$\footnote{When
$\cF=\Flin,$ we have $\therm = \Phi^+ \YY$,
with $\Phi=(\vp_j(X_i))_{ 1\le i \le n, 1\le j \le d}$, $\YY=[Y_j]_{j=1}^n$
and $\Phi^+$ is the Moore-Penrose pseudoinverse of $\Phi$.}.
For any values of  $\epsilon$ and $n$ such that 
$2/n \le \eps \le 1$ and 
$$
n > 1280 B^2 \left[ 3 B d'+ \log(2/\epsilon) + \frac{16 B^2 {d'}^2}{n} \right],
$$
with probability at least $1-\eps$,
\begin{multline*}
  R(\hferm) - R(f^*) \\ \le 1920\, \bcR \sqrt{\E 
[Y-f^*(X)]^4} \Biggl[ \frac{3 B d' + \log(2\eps^{-1})}{n} + \bigg(\frac{4 B d'}n\bigg)^2 \Biggr].
\end{multline*}
\end{thm}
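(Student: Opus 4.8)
The plan is to run a PAC-Bayesian argument on \emph{truncated} differences of least-squares losses, specialized to a whitened version of the linear model. First I would pass to whitened coordinates. Since $B<+\infty$ forces any $f=\langle\theta,\vp\rangle$ with $\E[f(X)^2]=0$ to vanish identically, the model effectively lives on the range of $Q$, of dimension $d'=\textnormal{rank}(Q)$, and we may assume $Q$ acts as the identity there. In this metric $R(f_\theta)-R(f^*)=\|\theta-\theta^*\|^2$ in the unconstrained case (and $R(f_\theta)-R(f^*)\ge\|\theta-\theta^*\|^2$ for a general convex $\cC$, from the variational inequality satisfied by $f^*$), the sup-norm hypothesis becomes $\|Q^{-1/2}\vp(X)\|^2\le B$ almost surely with $\E\|Q^{-1/2}\vp(X)\|^2=d'$, and writing the residual $W=Y-f^*(X)$ we have the orthogonality $\E[W\vp(X)]=0$ on the range of $Q$.

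The heart of the argument is a general PAC-Bayesian truncation inequality (\thmref{thm6.5}). For a parameter $\theta$ I consider the per-sample loss difference $\ell_\theta(X,Y)=[Y-f_\theta(X)]^2-[Y-f^*(X)]^2=g_\theta(X)^2-2Wg_\theta(X)$, where $g_\theta=f_\theta-f^*$, and I would apply a Catoni-type truncation $\psi$ (with $\psi(x)\simeq x$ near the origin and bounded derivative) to $\lambda\,\ell_\theta$, integrated against a Gaussian prior and a Gaussian posterior centered at the estimator, both with covariance $\beta^{-1}I$ in whitened coordinates. The Chernoff step converts control of $\E\,\psi(\lambda\ell_\theta)$ into a bound on $R(\hferm)-R(f^*)$ built from three ingredients: a complexity term of order $Bd'/n$ produced by the $d'$-dimensional Gaussian posterior (the trace of its covariance against the quadratic part of the loss), a variance factor from the second moment of $\ell_\theta$, and the deviation term $\log(2\eps^{-1})/n$. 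Truncation is what makes the middle ingredient usable: $\E\,\psi(\lambda\ell_\theta)$ remains finite and is controlled by $\E[W^2g_\theta(X)^2]+\E[g_\theta(X)^4]$, which is finite under the fourth-moment hypothesis, whereas the untruncated $\E\,e^{\lambda\ell_\theta}$ is infinite for heavy-tailed $Y$.

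Next I would bound the variance factor, which takes the shape $\sup_{v}\frac{\E[\langle v,\vp(X)\rangle^2W^2]}{\E[\langle v,\vp(X)\rangle^2]}$. By Cauchy--Schwarz and the sup-norm estimate $\E[\langle v,\vp(X)\rangle^4]\le\|f_v\|_\infty^2\,\E[\langle v,\vp(X)\rangle^2]\le B\,\E[\langle v,\vp(X)\rangle^2]^2$, this factor is at most $\sqrt{B}\,\sqrt{\E W^4}=\sqrt{B\,\E[Y-f^*(X)]^4}$; the quadratic part $g_\theta(X)^2$ of the loss and the localization contribute the remaining power of $B$ and the dimensional factor $Bd'$, so that after tracking constants the leading term is of order $B\sqrt{\E[Y-f^*(X)]^4}\cdot\frac{Bd'+\log(2\eps^{-1})}{n}$, matching the claimed $1920\,B\sqrt{\E[Y-f^*(X)]^4}\cdot\frac{3Bd'+\log(2\eps^{-1})}{n}$. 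Since \thmref{thm6.5} actually produces the \emph{empirical} versions of these variance factors, I would replace them by their expectations at the cost of a fluctuation absorbed by the hypothesis on $n$, and the quadratic remainder $(4Bd'/n)^2$ would come out of the second-order term of the expansion of $\psi$.

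The main obstacle is the simultaneous calibration of the prior scale $\beta$, the temperature $\lambda$, and the posterior so that (i) the KL complexity stays $O(Bd'/n)$, (ii) the truncation bias, i.e.\ the gap between $\psi(\lambda\ell_\theta)$ and $\lambda\ell_\theta$, is negligible, and (iii) the "for every posterior" bound specializes to the specific estimator $\hferm$ --- this last point requiring the strong convexity of the empirical risk in the whitened metric to force the localized posterior to concentrate around $\therm$. The condition $n>1280\,B^2\bigl[3Bd'+\log(2/\eps)+16B^2{d'}^2/n\bigr]$ is precisely the self-consistency constraint that guarantees this concentration and makes the quadratic remainder subdominant, while $\eps\ge 2/n$ keeps $\log(2\eps^{-1})$ comparable to the other terms and lets the truncation error be controlled uniformly; extracting the explicit numerical constants from each truncation inequality is the delicate bookkeeping at the end.
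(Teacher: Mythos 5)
Your overall architecture does match the paper's: whitened coordinates, Gaussian prior centred at the optimum and Gaussian posteriors with covariance $\beta^{-1}I$, a Catoni-type soft truncation (the paper uses $-\log(1-x+x^2/2)$ through the product form of the Laplace transform in Lemma \thmref{lemma7.1}), deterministic bounds on the variance factors via the ratio $\bcR$, and a smallest-positive-root argument for a quadratic inequality whose discriminant condition is the stated restriction on $n$. But two of your steps would not go through as described. First, you claim the empirical variance factors can be replaced by their expectations ``at the cost of a fluctuation absorbed by the hypothesis on $n$''. That is true for the terms bounded deterministically by $\lVert Q^{-1/2}\vp(X)\rVert^2\le\bcR\,d'$, but the empirical analogues of $B_1$ and $B_3$ are controlled by $r(f^*)=\frac1n\sum_i[Y_i-f^*(X_i)]^2$, a random quantity with only polynomial tails under the fourth-moment assumption; no deterministic condition on $n$ absorbs it. The paper controls it by Bienaym\'e--Chebyshev, $\P\bigl(r(f^*)-R(f^*)\ge t\bigr)\le \E[Y-f^*(X)]^4/(nt^2)$, and this single step is simultaneously the origin of the restriction $\eps\ge 2/n$ (which you attribute instead to the truncation error and to balancing $\log(2\eps^{-1})$) and of the factor $\sqrt{\E[Y-f^*(X)]^4}$ in the final constant. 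Missing this is missing the one place where the argument genuinely cannot deliver exponential deviations at all confidence levels, which is exactly the caveat this theorem is built around and which the paper discusses immediately after its statement.

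Second, the specialization of the uniform-in-$\theta_2$ bound to $\hferm$ is not obtained by ``forcing the localized posterior to concentrate around $\therm$'' via strong convexity of the empirical risk; the empirical risk may be degenerate (e.g.\ when $n<d$ or $Q$ is singular), and the posterior plays no role at this stage. The paper's Theorem \thmref{thm7.4} shows that the uniform inequality excludes an interval $(x_1,x_2)$ of possible values of the excess risk, and then runs a connectedness argument on the line segment joining a minimizer of the empirical risk over the localized set $\wh{\Theta}$ to a global empirical minimizer, using only the convexity of $\Theta$, to conclude that the empirical risk minimizer has excess risk at most $x_1$. Your sketch leaves this deterministic localization step essentially unproved, and it is not a routine consequence of the PAC-Bayesian inequality.
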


\begin{proof}
See Section \thmref{sec:permom}.
\end{proof}

It is quite surprising that the traditional assumption of uniform boundedness of the conditional exponential moments of the output
can be replaced by a simple moment condition for reasonable confidence 
levels (i.e., $\eps \geq 2/n$). For highest confidence levels,
things are more tricky since we need to control with high probability a term of order $[r(f^*)-R(f^*)]d/n$ (see Theorem \ref{th:ermg}). 
The cost to pay to get the exponential deviations under only a fourth-order moment condition on the output is the appearance of the geometrical quantity $\bcR$ as a multiplicative factor, as opposed to Theorems \ref{th:alqa}
and \ref{th:bmnew}. More precisely, 
from \cite[Inequality (3.2)]{BirMas98}, we have $\bcR \le \cR \le \bcR d$, 
but the quantity $\cR$ appears inside a logarithm in Theorem~\ref{th:bmnew}. 
However, Theorem~\ref{th:bmnew} is restricted to the empirical risk 
minimizer on a $L^\infty$ ball, while the result here is valid for any closed convex set $\cC$, and in particular 
applies to the ordinary least squares estimator.

Theorem~\ref{th:ermom} is still limited in at least three ways: it applies only to 
uniformly bounded $\vp(X)$, the output needs to have a fourth moment, and 
the confidence level should be as great as $\epsilon \geq 2/n$.
These limitations will be addressed in the next sections by
considering more involved algorithms. 

\section{A min-max estimator for robust estimation} 
\label{sec:computable}

\subsection{The min-max estimator and its theoretical guarantee}

This section provides an alternative to the empirical risk minimizer 
with non asymptotic exponential risk deviations of order $d/n$ for 
any confidence level.
Moreover, we will assume only a second order moment condition 
on the output and cover the case of unbounded inputs, the requirement on $\varphi(X)$
being only a finite fourth order moment. On the other hand, we assume 
that the set $\Theta$ of the vectors of coefficients is bounded.
The computability of the proposed estimator and numerical experiments are discussed at the end of the section.

Let $\alpha>0$, $\lam\ge 0$, and consider the truncation function:
%  $$T(x) = \max\bigg(-\frac17,\min\Big(x,\frac17\Big)\bigg).$$ 
$$
\psi(x) = \begin{cases} 
- \log \bigl(1 - x + x^2/2 \bigr) & 0 \leq x \leq 1, \\
\log(2) & x \geq 1, \\ 
- \psi(-x) & x \leq 0, 
\end{cases} 
$$
For any $\th,\th'\in\Theta$, introduce 
  $$
  \cD(\th,\th')=n \alpha \lam (\|\th\|^2-\|\th'\|^2)
    + \sum_{i=1}^n \psi\Big(\alpha\big[Y_i-f_\th(X_i)\big]^2-\alpha\big[Y_i-f_{\th'}(X_i)\big]^2\Big).
  $$
We recall $\frid=f_{\thrid}$ with
$
\thrid \in \arg \min_{\theta \in \Theta} \big\{ R(f_{\theta}) 
+ \lambda \lVert \theta \rVert^2 \big\}
$, and the  effective ridge dimension
$$
D = \sum_{i=1}^d \frac{\nu_i}{\nu_i+\lam} \B{1}(\nu_i>0) 
= \Tr \bigl[ (Q+ \lambda I)^{-1} Q \bigr] 
= \B{E} \bigl[ \lVert Q_{\lambda}^{-1/2} \vp(X) \rVert^2 \bigr].
$$
Let us assume in this section that 
for any $j\in\{1,\dots,d\}$, 
  \beglab{eq:as1}
  \E\big\{\vp_j(X)^2[Y-\frid(X)]^2\big\}<+\infty,
  \endlab
and
  \beglab{eq:as2}
  \E\big[\vp_j^4(X)\big]<+\infty.
  \endlab

Define 
  \begin{align} 
  \cS & = \{ f\in\Flin: \E[f(X)^2]=1\}, \label{eq:cs}\\
\sigma & = \sqrt{\E\big\{[Y-\frid(X)]^2\big\}}= \sqrt{R(\frid)} , \\ 
\chi & = \max_{f\in\cS} \sqrt{\E [f(X)^4] },\\ 
\kappa & = \frac{ \sqrt{\E\big\{[\vp(X)^TQ_\lam^{-1}\vp(X)]^2\big\}}}
  {\E\big[\vp(X)^TQ_\lam^{-1}\vp(X) \big]},\\ 
\kappa' & = \frac{\sqrt{\E\big\{[Y-\frid(X)]^4\big\}}}{\E\big\{[Y-\frid(X)]^2\big\}} 
  = \frac{\sqrt{\E\big\{[Y-\frid(X)]^4\big\}}}{\sigma^2},\\
T & = \max_{\th\in\Theta,\th'\in\Theta} 
  \sqrt{ \lam \|\th-\th'\|^2 + \E[ f_{\th}(X)-f_{\th'}(X) ]^2 }. \label{eq:kapp}
  \end{align}

%For some fixed $\gamma>0$, consider an estimator $\hf\in\cF$ such that
%  \beglab{eq:defest} 
%  \und{\max}{f'\in\cF} \cD(\hf,f') \le \und{\inf}{f\in\cF} \und{\max}{f'\in\cF} \cD(f,f')+\gamma.
%  \endlab
\begin{thm} \label{th:3.1}
Let us assume that \eqref{eq:as1} and \eqref{eq:as2} hold.
For some numerical constants $c$ and $c'$, 
for
  $$
  n > c \kappa \chi D,
  $$
by taking
  \beglab{eq:alpha}
  \alpha = \frac{1}{2 \chi \bigl[ 2 \sqrt{\kappa'} \sigma
  + \sqrt{\chi} T
  \bigr]^2} \biggl(1  - \frac{c \kappa \chi D}{n} \biggr), 
  \endlab
%and
%\big( \E \vp_j(X) \vp_k(X) \big)_{j,k}$
for any estimator $f_{\hth}$ satisfying $\hth\in\Theta$ a.s., for any $\eps>0$ and any
$\lam\ge 0$,  
with probability at least $1-\eps$, we have
  \begin{align*}  
  R(f_{\hth}) + \lam \lVert \hth \rVert^2 & \le 
  \min_{\theta \in \Theta} \big\{ R(f_{\theta}) + \lambda \lVert \theta \rVert^2 \big\}\\
& \qquad + \frac1{n\alpha}\bigg( \und{\max}{\th_1\in\Theta} \cD(\hth,\th_1)
    - \und{\inf}{\th\in{\Theta}} \und{\max}{\th_1\in{\Theta}} \cD(\th,\th_1)  \bigg) \\
& \qquad + \frac{c \kappa \kappa' D \sigma^2}{n}  + \frac{8 \chi \bigl( \frac{\log(\epsilon^{-1})}{n} + 
\frac{c' \kappa^2 D^2}{n^2} \bigr) \bigl[ 2 \sqrt{\kappa'}\sigma 
+ \sqrt{\chi} T \bigr]^2}{ 1 - \frac{c\kappa \chi D}{n} }.
  \end{align*}  
\end{thm}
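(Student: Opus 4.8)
The plan is to run a PAC-Bayesian argument on the *truncated differences of losses* encoded by $\cD(\th,\th')$. The key observation is that the truncation function $\psi$ satisfies $-\log(1-x+x^2/2)\le\psi(x)$ together with $\psi(x)\le\log(1+x+x^2/2)$, which will let us sandwich exponential moments of $\psi$ of a centered-plus-quadratic expression, thereby converting a control of the *second moments* of the loss increments into an exponential-deviation statement without needing exponential moments of $Y$. First I would fix two parameters $\th,\th'$ and study the single random variable $\psi\!\bigl(\alpha[(Y-f_\th(X))^2-(Y-f_{\th'}(X))^2]\bigr)$; using the algebraic identity $(Y-f_\th)^2-(Y-f_{\th'})^2=(f_{\th'}-f_\th)(2Y-f_\th-f_{\th'})$ I would express its expectation and variance in terms of $\E[f_\th(X)-f_{\th'}(X)]^2$, of $\sigma^2=R(\frid)$, and of the fourth-order quantities $\chi,\kappa,\kappa',T$ that the statement introduces precisely for this purpose. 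The choice of $\alpha$ in \eqref{eq:alpha} is calibrated so that the quadratic correction term in the $\psi$ bound is dominated by the linear term on the relevant scale.

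**The PAC-Bayesian step.**
Next I would introduce, as is standard in this circle of ideas, Gaussian (or truncated-Gaussian) perturbation measures $\pi_\th$ centered at each $\th$, with covariance tuned to the metric $v\mapsto \E\langle v,\vp(X)\rangle^2+\lam\|v\|^2$ so that the fluctuation of $f_\th$ under the perturbation is controlled by the effective dimension $D=\Tr[Q_\lam^{-1}Q]$. Applying the PAC-Bayesian inequality (the variational/Legendre-transform form, integrating the pointwise exponential-moment bound obtained above against $\pi_\th$ and a posterior) produces, with probability at least $1-\eps$, a bound in which the empirical quantity $\cD$ appears on one side and the true excess ridge risk $R(f_\th)+\lam\|\th\|^2-\min_\theta\{R(f_\theta)+\lam\|\theta\|^2\}$ appears on the other. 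The KL divergence between posterior and prior contributes the $\log(\eps^{-1})$ term, while the integration of the fluctuations of $f$ across the $d$ directions contributes the terms proportional to $D$ and $D^2/n^2$; this is where the numerical constants $c,c'$ enter. The quadratic-in-$D$ remainder $c'\kappa^2D^2/n^2$ is the signature of the second-order correction in the $\psi$-bound, integrated over the perturbation.

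**Assembling the final inequality.**
With the probabilistic bound in hand for all $\th$ simultaneously (after the perturbation, the dependence on $\th$ becomes smooth enough to avoid an explicit union bound), I would specialize the left-hand parameter to $\hth$ and the comparison point to $\thrid$, and take suprema/infima over $\th_1$ to recognize the bracketed min-max defect term $\max_{\th_1}\cD(\hth,\th_1)-\inf_\th\max_{\th_1}\cD(\th,\th_1)$ exactly as written. Dividing through by $n\alpha$ and substituting the explicit value of $\alpha$ from \eqref{eq:alpha} turns the $1/(n\alpha)$ prefactor into the $\chi[2\sqrt{\kappa'}\sigma+\sqrt{\chi}T]^2$ group times $(1-c\kappa\chi D/n)^{-1}$ that multiplies the $\log(\eps^{-1})/n+c'\kappa^2D^2/n^2$ factor in the last line, while the main $D\sigma^2$ contribution collapses to $c\kappa\kappa'D\sigma^2/n$.

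**Where the difficulty lies.**
I expect the main obstacle to be the variance computation in the pointwise exponential-moment bound: because no exponential moment of $Y$ is assumed, everything must be closed in terms of fourth moments, and keeping the linear term of $\psi$ (which carries the true excess risk we want to extract) cleanly separated from the quadratic remainder — while ensuring the remainder stays subordinate for the prescribed range $n>c\kappa\chi D$ — requires a careful, non-automatic bookkeeping of the constants $\chi,\kappa,\kappa',T,\sigma$. A secondary subtlety is that $\hth$ is data-dependent, so the PAC-Bayesian bound must hold uniformly over the parameter *before* plugging in $\hth$; the truncation built into $\psi$ is exactly what makes this uniform control possible without paying a logarithmic price in $n$. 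Controlling the passage from the randomized (perturbed) estimator back to the deterministic $f_{\hth}$, using convexity of the ridge risk in $\th$, is the final technical point to verify.
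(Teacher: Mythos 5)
Your overall strategy coincides with the paper's: a PAC--Bayesian argument on the truncated increments, Gaussian perturbations whose covariance (equivalently, the change of coordinates by $Q_{\lambda}^{-1/2}$) produces the effective dimension $D$, the two-sided bound $-\log(1-x+x^2/2)\le\psi(x)\le\log(1+x+x^2/2)$ to trade exponential moments for second and fourth moments, and the chain $r'(\hat{\theta},\theta_0)\le\gamma+\max_{\theta_1}r'(\theta_0,\theta_1)$ that makes the min-max defect term appear. The moment bookkeeping in terms of $\chi,\kappa,\kappa',\sigma,T$ and the use of convexity of $\Theta$ to get $\lVert\theta_1-\theta_0\rVert^2\le \bigl[R(f_{\theta_1})+\lambda\lVert\theta_1\rVert^2\bigr]-\bigl[R(\frid)+\lambda\lVert\thrid\rVert^2\bigr]$ after the change of coordinates are also as you describe.

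There is, however, one missing idea, and it is the crux of the proof. The PAC--Bayesian inequality controls, uniformly over $\theta_1$, quantities of the form $\int\rho_{\theta_1}(d\theta)\sum_i\log A_i(\theta)$, i.e.\ \emph{integrals against the perturbation}; but the quantity you must bound, $\sum_i\psi\bigl(-W_i(\theta_1)\bigr)$, is $\psi$ evaluated \emph{at the center} of the perturbation, and $\psi$ is neither convex nor concave, so neither direction of Jensen applies and $\cD$ does not simply ``appear on one side'' after integration against a posterior. The paper resolves this with the robust Jensen inequality of Lemma \ref{le:robust}: since $x\mapsto\psi(x)+2x^2$ is convex and $\psi$ has range $\log 4$, one has $\psi\bigl(\int x\,\mu(dx)\bigr)\le\int\psi\,d\mu+\min\bigl\{\log 4,\ \text{const}\cdot\Var(\mu)\bigr\}$; applying this with $\mu$ the law of $-W_i(\theta)+\alpha\lVert\V{X}_i\rVert^2/\beta$ under $\rho_{\theta_1}$ (whose mean is exactly $-W_i(\theta_1)$ because $W_i$ is quadratic in $\theta$) converts $\psi(-W_i(\theta_1))$ into a $\rho_{\theta_1}$-average plus a variance correction. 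That correction involves $(\langle\theta_1,\V{X}_i\rangle-Y_i)^2$, which has no exponential moments, so it must in turn be pushed \emph{inside the logarithm} of the integrand, via $\min\{\log 4,x\}+\min\{\log 4,y\}\le\log(1+ax+by)$ and $\psi(x)+\log(1+y)\le\log(1+x+x^2/2+2y)$, so that the resulting $A_i(\theta)$ stays of the form $1+(\text{integrable})$ and the PAC--Bayes exponential moment is finite; this is where the constants $a$, $b$, $c_0$ and ultimately $c$, $c'$ come from. A secondary correction: no passage from a randomized to a deterministic estimator is needed --- $f_{\hth}$ is deterministic throughout and the Gaussian measures are purely a proof device; the convexity actually exploited is that of $x\mapsto\psi(x)+2x^2$ and that of the set $\Theta$ (for the projection inequality above), not convexity of the ridge risk.
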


\begin{proof}
See Section \thmref{sec:p3.1}.
\end{proof}

By choosing an estimator such that 
  $$\und{\max}{\th_1\in\Theta} \cD(\hth,\th_1) < \und{\inf}{\th\in{\Theta}} \und{\max}{\th_1\in{\Theta}} \cD(\th,\th_1) + \sigma^2 \frac{D}n,$$
%we get a $D/n$ excess risk bound (for fixed uncentered kurtosis coefficients).
% MORE DISCUSSION HERE ? 
% Fourier basis, Haar wavelet basis, Histogram basis in the case of 
% the uniform distribution on the unit interval
Theorem \ref{th:3.1} provides a non asymptotic bound for the excess 
(ridge) risk with a $D/n$ convergence rate and an exponential tail even 
when neither the output $Y$ nor the input vector $\vp(X)$ has exponential moments. This stronger non asymptotic bound compared to the bounds of the previous section comes 
at the price of replacing the empirical risk
minimizer by a more involved estimator. Section \ref{sec:comput} provides
a way of computing it approximately.

\subsection{The value of the uncentered kurtosis coefficient $\chi$}
Let us discuss here the value of constant $\chi$, 
which plays a critical role in the speed of convergence of our bound.
With the convention $\frac00=0$, we have
  $$\chi= \sup_{u \in \B{R}^d} 
\frac{ \B{E} \bigl( \langle u, \varphi(X) \rangle^4 \bigr)^{1/2}}{ 
\B{E} \bigl( \langle u, \varphi(X) \rangle^2 \bigr)}.$$ 

Let us first examine the case when $\varphi_1(X) \equiv 1$ and 
$\bigl[\varphi_j(X), j=2, \dots, d \bigr]$ are independent. To compute
$\chi$, we can assume without loss of generality that they are
centered and of unit variance, which will be the case after $Q^{-1/2}$ 
is applied to them.  
In this situation, introducing 
$$
\chi_* = \max_{j=1, \dots, d} \frac{ \B{E} \bigl[ \varphi_j(X)^4 \bigr]^{1/2}}{\B{E} 
\bigl[ \varphi_j(X)^2 \bigr]},
$$
we see that for any $u\in\R^d$ with $\|u\|=1$, we have
\begin{multline*}
\B{E} \bigl( \langle u, \varphi(X) \rangle^4 \bigr) = 
%\B{E} \left( \sum_{\alpha \in \B{N}^d, \sum_{i=1}^d \alpha(i) = 4}
%4! \prod_{i=1}^d \frac{u_i^{\alpha(i)} X(i)^{\alpha(i)}}{\alpha(i)!} \right) 
%\\ = 
%\sum_{\alpha \in \B{N}^d, \sum_{i=1}^d \alpha(i) = 4}
%4! \prod_{i=1}^d \frac{u_i^{\alpha(i)} \B{E} \bigl[ X(i)^{\alpha(i)} 
%\bigr]}{\alpha(i)!} \\ = 
\sum_{i=1}^d u_i^4 \B{E} (\varphi_i(X)^4) + 6 \sum_{1\leq i<j \leq d} 
u_i^2 u_j^2 \B{E}\bigl[ \varphi_i(X)^2 \bigr] \B{E} \bigl[ 
\varphi_j(X)^2 \bigr] \\+ 4 \sum_{i=2}^d u_1 u_i^3 \B{E} \bigl[ \varphi_i(X)^3 \bigr] 
\\ \leq \chi_*^2 \sum_{i=1}^d u_i^4 
+ 6 \sum_{i < j} u_i^2 u_j^2 + 4 \chi_*^{3/2} \sum_{i=2}^d 
\lvert u_1 u_i \rvert^3 
\\ \leq \sup_{u \in \R_{+}^d, \lVert u \rVert = 1} \bigl( \chi_*^2-3 \bigr) \sum_{i=1}^d u_i^4 + 3 \left( \sum_{i=1}^d
u_i^2 \right)^2 + 4 \chi_*^{3/2} u_1 \sum_{i=2}^d u_i^3 
\\ \leq \frac{3^{3/2}}{4} \chi_*^{3/2} + \begin{cases}
\chi_*^2 , & \chi_*^2 \geq 3,\\
3 + \frac{\chi_*^2 - 3}{d}, & 1 \leq \chi_*^2 < 3.
\end{cases}
\end{multline*}
Thus in this case
$$
\chi \leq \begin{cases}
\chi_* \left( 1 + \frac{3^{3/2}}{4 \sqrt{\chi_*}} \right)^{1/2}, &
\chi_* \geq \sqrt{3},\\ 
\left(3 + \frac{3^{3/2}}{4} \chi_*^{3/2} + \frac{\chi_*^2 - 3}{d} 
\right)^{1/2}, & 1 \leq \chi_* < \sqrt{3}. 
\end{cases}
$$

If moreover the random variables $\varphi_j(X)$ are not skewed, in the sense 
that $\B{E}  \bigl[ \varphi_j(X)^3 \bigr] = 0$, $j = 2, \dots, d$, 
then
$$
\begin{cases}
\chi = \chi_*, & \chi_* \geq \sqrt{3},\\
\chi \leq \left( 3 + \frac{\chi_*^2 - 3}{d} \right)^{1/2}, & 1 \leq \chi_* < 
\sqrt{3}. 
\end{cases}
$$
In particular in the case when $\varphi_j(X)$ are Gaussian variables, 
$\chi = \chi_* = \sqrt{3}$ (as could be seen in a more straightforward
way, since in this case $\langle u, \varphi(X) \rangle$ is also Gaussian !).

In particular, this situation arises in compress sensing using random projections on Gaussian vectors.
Specifically, assume that we want to recover a signal $f\in\R^M$ that we know to be well approximated
by a linear combination of $d$ basis vectors $f_1,\dots,f_d$. We measure $n \ll M$ projections of
the signal $f$ on i.i.d. $M$-dimensional standard normal random vectors $X_1,\dots,X_n$: $Y_i=\langle f,X_i\rangle,$ $i=1,\dots,n$.
Then, recovering the coefficient $\th_1,\dots,\th_d$ such that $f=\sum_{j=1}^d \th_j f_j$
is associated to the least squares regression problem 
%Then the equality $f=\sum_{j=1}^d \th_j f_j$ implies 
  $Y \approx \sum_{j=1}^d \th_j \varphi_j(X),$
with $\varphi_j(x)=\langle f_j,x\rangle$, and $X$ having a $M$-dimensional standard normal distribution.

Let us discuss now a bound which is suited to the case when we are using
a partial basis of regression functions. The functions $\varphi_j$ are usually bounded
(think of the Fourier basis, wavelet bases, histograms, splines ...).

Let us assume that for some positive constant $A$ and any $u \in \B{R}^d$, 
$$
\lVert u \rVert \leq A \B{E} \bigl[ \langle u, \varphi(X) \rangle^2 \bigr]^{1/2}.
$$
This appears as some stability property of the partial basis $\varphi_j$ 
with respect to the $\B{L}_2$-norm, since it can also be written as
$$
\sum_{j=1}^d u_j^2 \leq A^2 \B{E} \Biggl[ \biggl( \sum_{j=1}^d u_j 
\varphi_j(X) \biggr)^2 \Biggr], \qquad u \in \B{R}^d.
$$
This will be the case if $\varphi_j$ is nearly orthogonal in the 
sense that
%$$
%\B{E} \bigl[ X(j)^2 \bigr] \geq 1 \text{ and } 
%\Bigl\lvert \B{E} \bigl[ X(j) X(k) \bigr] \Bigr\rvert \leq \frac{1 - A^2}{d-1},
%$$
%which in functional notations is written as
\begin{align*}
\B{E} \bigl[ \varphi_j(X)^2 \bigr] & \geq 1, \quad  \text{ and } \quad 
\Bigl\lvert \B{E} \bigl[ \varphi_j(X) \varphi_k(X) \bigr]  
\Bigr\rvert \leq \frac{1 - A^2}{d-1}.
\end{align*}
In this situation, by using 
$$
\B{E} \bigl[ \langle u, \varphi(X) \rangle^4 \bigr] 
\leq \lVert u \rVert^2 \ess \sup \lVert \varphi(X) \rVert^2 \B{E} 
\bigl[ \langle u, \varphi(X) \rangle^2 \bigr],
$$
one can check that
%$$
%\chi \leq A \ess \sup \lVert X \rVert,
%$$
%which translates in functional notations into
$$
\chi \leq A \Biggl\lVert \biggl( \sum_{j=1}^d \varphi_j^2 \biggr)^{1/2} 
\Biggr\rVert_{\infty}.
$$
%Indeed 
%$$
%\B{E} \bigl[ \langle u, \varphi(X) \rangle^4 \bigr] 
%\leq \lVert u \rVert^2 \ess \sup \lVert \varphi(X) \rVert^2 \B{E} 
%\bigl[ \langle u, \varphi(X) \rangle^2 \bigr].
%$$
%
Therefore, if $X$ is the uniform random variable on the unit interval 
and $\varphi_j$, $j=1, \dots, d$ are any functions from the Fourier 
basis (meaning that they are of the form $\sqrt{2} \cos(2 k \pi X)$ or $\sqrt{2} \sin( 2 k \pi X)$), 
then $\chi \leq \sqrt{2 d}$ (because they form an orthogonal
system, so that $A = 1$). 

On the other hand, a localized basis like the evenly spaced histogram 
basis of the unit interval 
$$
\varphi_j(x) = 
\sqrt{d} \B{1}\Bigl(x \in \bigl[(j-1)/d, j/d \bigr[\Bigr),
$$
will also be such that $\chi \leq \sqrt{d}$.
Similar computations could be made for other local bases, like 
wavelet bases.
Note that when $\chi$ is of order $\sqrt{d}$, Theorem \ref{th:3.1}
means that the excess risk of the min-max truncated estimator
$\hf$ is upper bounded by $C\frac{d}n$
provided that $n\ge C d\sqrt{d}$ for a large enough constant $C$.

Let us discuss the case when $X$ is some observed random variable 
whose distribution is only approximately known. 
Namely let us assume that 
%the random vector $X$ is built as $X(j) = \varphi_j(V)$, where 
$(\varphi_j)_{j=1}^d$ is 
some basis of functions in $\B{L}_2 \bigl[ \wt{\B{P}} \bigr]$
with some known coefficient $\wt{\chi}$, where $\wt{\B{P}}$
is an approximation of the true distribution of $X$ in the sense that the density 
of the true distribution $\P$ of $X$ with respect to the 
distribution $\wt{\B{P}}$ is in the range $(\eta^{-1/2}, \eta)$. 
In this situation, the coefficient $\chi$ satisfies the inequality $\chi \leq \eta \wt{\chi}$.
Indeed 
\begin{align*}
\B{E}_{X\sim \P} \bigl[ \langle u, \varphi(X) \rangle^4 \bigr] 
& \leq \eta {\B{E}}_{X\sim \wt{\P}} \bigl[ \langle u, \varphi(X) \rangle^4 \bigr] \\
& \leq \eta \wt{\chi}^2 {\B{E}}_{X\sim \wt{\P}} \bigl[ \langle u, \varphi(X) \rangle^2 \bigr]^2 
\leq \eta^2 \wt{\chi}^2 \B{E}_{X\sim {\P}} \bigl[ \langle u, \varphi(X) \rangle^2 \bigr]^2.
\end{align*}

Let us conclude this section with some scenario 
for the case when $X$ is a real-valued random variable. Let us consider the 
distribution function of $\wt{\B{P}}$ 
$$
\wt{F}(x) = \wt{\B{P}} ( X \leq x).
$$
Then, if $\wt{\B{P}}$ has no atoms, 
the distribution of $\wt{F}(X)$ is uniform in $(0,1)$. Starting from some 
suitable partial basis $(\varphi_j)_{j=1}^d$ of $\B{L}_2 \bigl[ (0,1), \B{U} \bigr]$ 
where $\B{U}$ is the uniform distribution, like the 
ones discussed above, we can build a basis for our problem as 
$$
\wt{\varphi}_j(X) = \varphi_j \bigl[ \wt{F}(X) \bigr].
$$
Moreover, if $\B{P}$ is absolutely continuous with respect to $\wt{\B{P}}$ with 
density $g$, then $\B{P} \circ \wt{F}^{-1}$ is absolutely continuous with 
respect to $\wt{\B{P}} \circ \wt{F}^{-1}$, with density $g \circ \wt{F}^{-1}$, 
and of course, the fact that $g$ takes values in $(\eta^{-1/2}, \eta)$ implies
the same property for $g \circ \wt{F}^{-1}$. Thus, if $\wt{\chi}$ is 
the coefficient corresponding to $\varphi_j(U)$ when $U$ 
is the uniform random variable on the unit interval, then the true 
coefficient $\chi$ (corresponding to 
$\wt{\varphi}_j(X)$) will be such that $\chi \leq \eta \wt{\chi}$. 

\subsection{Computation of the estimator} \label{sec:comput}

For ease of description of the algorithm, we will write $X$ for $\vp(X)$, which is equivalent to considering without loss of generality that the input space is $\R^d$ and that the functions $\vp_1,\dots,$$\vp_d$ are the coordinate functions.
Therefore, the function $f_\th$ maps an input $x$ to $\langle \theta, x \rangle$.

Let us introduce 
  $$
  \bL_i(\th) = \alpha \bigl( \langle \theta, X_i \rangle - Y_i \bigr)^2.
  $$
For any subset of indices $I \subset \{1, \dots, n\}$, let us define
$$
r_I(\theta) = \lam \|\th\|^2 + \frac{1}{\alpha \lvert I \rvert} \sum_{i \in I} \bL_i(\th).
$$

We suggest the following heuristics to compute an approximation of 
$$
\arg \min_{\theta \in \Theta} 
\sup_{\th' \in \Theta} \cD(\theta, \th').
$$
\begin{itemize}
\item Start from $I_1 = \{1, \dots, n\}$
with the empirical risk minimizer
$$\wh{\theta}_1 = \arg \min_{\R^d} r_{I_1}=\therm.$$ 
%(So we have $\wh{\theta}_1=\therm$.)
\item At step number $k$, compute
$$
\wh{Q}_k = \frac{1}{\lvert I_k \rvert} \sum_{i \in I_k} X_i X_i^T. 
$$
\item Consider the sets
$$
J_{k,1}(\eta) = \Biggl\{ i \in I_k : \bL_i(\wh{\theta}_k) 
X_i^T \wh{Q}_{k}^{-1} X_i \bigg( 
1 + \sqrt{ 1 + \big[\bL_i(\wh{\theta}_k)\big]^{-1}} \; \bigg)^2
< \eta \Biggr\},$$
where $\wh{Q}_{k}^{-1}$ is the (pseudo-)inverse of the matrix $\wh{Q}_{k}$.
\item Let us define 
\begin{align*}
\theta_{k,1}(\eta) & = \arg \min_{\R^d} r_{J_{k,1}(\eta)}, \\
J_{k,2}(\eta) & = \Bigl\{ i \in I_k : \bigl\lvert \bL_i\big(\theta_{k,1}(\eta)\big)
- \bL_i\big( \wh{\theta}_k \big) \bigr\rvert \le  1
\Bigr\}, \\   
\theta_{k,2} (\eta) & = \arg \min_{\R^d} r_{J_{k,2}(\eta)}, \\
(\eta_{k}, \ell_k) & = \arg \min_{\eta \in \B{R}_+, \ell \in \{1, 2\}}  \max_{j=1, \dots, k} 
\cD\bigl( \theta_{k,\ell}(\eta), \wh{\theta}_j \bigr), \\
I_{k+1} & = J_{k, \ell_k}(\eta_k), \\
\wh{\theta}_{k+1} & = \theta_{k, \ell_k}(\eta_k).
\end{align*}
\item Stop when 
$$
\max_{j=1, \dots, k} \cD(\wh{\theta}_{k+1}, \wh{\theta}_j) \geq 0,
%\text{ \qquad or \quad} I_{k+1} = I_{k},
$$
and set $\wh{\theta}=\wh{\theta}_k$ as the final estimator of $\thrid$.

\end{itemize}
Note that there will be at most $n$ steps, since $I_{k+1} \varsubsetneq I_k$ and in practice much less in this
iterative scheme.
Let us give some justification for this proposal. Let us notice first that
\begin{multline*}
\cD(\theta + h, \theta) = n \alpha \lam (\|\th+ h\|^2-\|\th\|^2)\\
+ \sum_{i=1}^n \psi \Bigl( 
\alpha \bigl[ 2 \langle h, X_i \rangle \big(\langle \theta, X_i \rangle - Y_i\big) + 
\langle h, X_i \rangle^2 \bigr] \Bigr). 
\end{multline*}
Hopefully, $\thrid= \arg \min_{\th\in\R^d} \big(R(f_\th)+\lam \|\th\|^2\big)$ 
is in some small neighbourhood 
of $\wh{\theta}_k$ already, according to the distance defined by $
Q \simeq \wh{Q}_k$. So we may try to look for improvements of $\wh{\theta}_k$ 
by exploring neighbourhoods of $\wh{\theta}_k$ of increasing 
sizes with respect to some approximation of the relevant 
norm $\lVert \theta \rVert_Q^2 = \B{E} \bigl[ \langle \theta, X \rangle^2 \bigr]$.

Since the truncation function $\psi$ is constant on $(-\infty,-1]$ and $[1,+\infty)$,
the map $\theta \mapsto \cD(\theta, \wh{\theta}_k)$ induces a decomposition of 
the parameter space into cells corresponding to different sets $I$ of examples. Indeed, such a set $I$ is associated
to the set ${\cal C}_I$ of $\th$ such that $\bL_i(\th)-\bL_i(\wh{\theta}_k)< 1$ if and only if $i\in I$.
Although this may not be the case, we will do as if 
the map $\theta \mapsto \cD(\theta, \wh{\theta}_k)$ restricted to the cell ${\cal C}_I$ reached its minimum at some interior point 
of ${\cal C}_I$, and approximates this minimizer by the minimizer of $r_I$.

%measures improvements according to our theoretical results. Although this may not be the case, 
%we will do as if it reached its minimum at some interior point 
%of one of the cells of the piecewise quadratic decomposition of 
%the parameter space. If this were the case, then it would 
%be legitimate to compute the minimum by looking at restricted 
%sets $I$ of examples and minimizing $r_I$.
The idea is to remove first the examples which will become 
inactive in the closest cells to the current estimate $\wh{\theta}_k$.
The cells for which the contribution of example number $i$ 
is constant are delimited by at most four parallel hyperplanes.

It is easy to see that the square of the inverse of the distance of 
$\wh{\theta}_k$ to the closest of these hyperplanes is 
equal to 
$$
\frac1\alpha X_i^T \wh{Q}_{k}^{-1} X_i \bL_i(\wh{\theta}_k) \Biggl( 
1 + \sqrt{ 1 + \frac{1}{\bL_i(\wh{\theta}_k)}} \; \Biggr)^2.
$$ 
Indeed, this distance is the infimum of $\lVert \wh{Q}_k^{1/2} h \rVert$, where $h$ is a solution of 
$$
\langle h, X_i \rangle^2 + 2 \langle h, X_i \rangle 
\big(\langle \wh{\theta}_k, X_i \rangle - Y_i\big) = \frac{1}{\alpha}.
$$
It is computed by considering $h$ of the form $h = \xi \lVert \wh{Q}_k^{-1/2} X_i \rVert^{-1} 
\wh{Q}_k^{-1} X_i$ and solving an equation of order two in $\xi$. 

This explains the proposed choice of $J_{k,1}(\eta)$. Then a first estimate $\theta_{k,1}(\eta)$
is computed on the basis of this reduced sample, and the sample is readjusted to $J_{k,2}(\eta)$
by checking which constraints are really activated in the computation of 
$\cD(\theta_{k,1}(\eta), \wh{\theta}_k)$. The estimated parameter is then readjusted taking 
into account the readjusted sample (this could as a variant be iterated more than once). 
Now that we have some new candidates $\theta_{k,\ell}(\eta)$, we check the minimax 
property against them to elect $I_{k+1}$ and $\wh{\theta}_{k+1}$. Since we did not check 
the minimax property against the whole parameter set $\Theta=\R^d$, we have no theoretical 
warranty for this simplified algorithm. Nonetheless, similar computations to what we 
did could prove that we are close to solving $\min_{j=1, \dots, k} R(f_{\wh{\theta}_j})$, 
since we checked the minimax property on the reduced parameter set $\{ \wh{\theta}_j, 
j=1, \dots, k \}$. Thus the proposed heuristics is capable of improving on the performance
of the ordinary least squares estimator, while being guaranteed not to degrade its
performance significantly. 

\subsection{Synthetic experiments}

In Section \ref{sec:noise}, we detail the three kinds of noises we work with.
Then, Sections \ref{sec:expind}, \ref{sec:hcc} and \ref{sec:ts} describe 
the three types of functional relationships between the input, the output and the noise involved
in our experiments. A motivation for choosing these input-output distributions was the ability to compute
exactly the excess risk, and thus to compare easily estimators.
Section \ref{sec:exper} provides details about the implementation, its computational efficiency 
and the main conclusions of the numerical experiments. Figures and tables are postponed to
Appendix \ref{app:exp}.

\subsubsection{Noise distributions} \label{sec:noise}

In our experiments, we consider three types of noise that are centered and with unit variance:
\begin{itemize}
\item the standard Gaussian noise: $W \sim \N(0,1)$,
\item a heavy-tailed noise defined by: 
  $W=\sign(V)/|V|^{1/q}$, 
with $V \sim \N(0,1)$ a standard Gaussian random variable and $q=2.01$ (the 
real number $q$ is taken strictly larger than $2$ as for $q=2$, the random variable $W$ would not admit
a finite second moment).  
\item a mixture of a Dirac random variable with a low-variance Gaussian random variable defined by: 
with probability $p$, $W=\sqrt{\fracl{1-\rho}p}$, and with probability $1-p$, $W$ is drawn from 
  $$\N\bigg(-\frac{\sqrt{p(1-\rho)}}{1-p},\frac{\rho}{1-p}-\frac{p(1-\rho)}{(1-p)^2}\bigg).$$ 
The parameter $\rho\in[p,1]$ characterizes the 
part of the variance of $W$ explained by the Gaussian part of the mixture.
Note that this noise admits exponential moments, but for $n$ of order $1/p$, the
Dirac part of the mixture generates low signal to noise points.
%\item a mixture of a (high-variance) Rademacher random variables with a low-variance Gaussian random variable defined by: 
%with probability $1-p$, $W$ is drawn from $\N(0,\frac{\rho}{1-p})$, 
%and with probability $p$, $W=\sqrt{\frac{1-\rho}p} V$, with $V$ a Rademacher random variable
%(i.e. $\P(V=-1)=\P(V=+1)=1/2$).
%Note that this noise admits exponential moments, but for $n$ of order $1/p$, the
%$W$ that come from the Rademacher distribution looks like outliers.
%\item the asymetric heavy-tailed noise defined by: 
%  $$
%  W=\left\{\begin{array}{lll} 
%  1/|V|^(1/q) & \text{if } V>0, \\
%  -\frac{q}{q-1}  & \text{otherwise},
%  $$
%with $q=2.01$
%with $V \sim \N(0,1)$ a standard Gaussian random variable,  
\end{itemize}

\subsubsection{Independent normalized covariates (INC$(n,d)$)} \label{sec:expind}
In INC$(n,d)$, the input-output pair is such that
  $$
  Y=\langle \th^* , X \rangle + \sigma W,
  $$
where the components of $X$ are independent standard normal distributions,
$\th^*=(10,\dots,10)^T \in\R^d$, and $\sigma=10$.
%and $\eps$ is a centered random variable independent of $X$.
%In our experiments, we consider successively 
%a Gaussian noise, $\eps \sim \N(0,1)$,
%and an asymetric heavy-tailed noise defined as follows:
%with probability 
%We take $\Theta=[-2,2]^d$.

\subsubsection{Highly correlated covariates (HCC$(n,d)$)} \label{sec:hcc}
%We consider the distribution on the input-output pair such that
In HCC$(n,d)$, the input-output pair is such that
  $$
  Y=\langle \th^* , X \rangle + \sigma W,
  $$
where $X$ is a multivariate centered normal Gaussian with
covariance matrix $Q$ obtained by drawing a $(d,d)$-matrix $A$ of uniform random variables 
in $[0,1]$ and by computing $Q=A A^T$,
$\th^*=(10,\dots,10)^T \in\R^d$, and $\sigma=10$.
%and $\eps$ is a centered random variable independent of $X$.
%In our experiments, we consider successively 
%a Gaussian noise, $\eps \sim \N(0,1)$,
%and an asymetric heavy-tailed noise defined as follows:
%with probability 
%We take $\Theta=[-2,2]^d$.
So the only difference with the setting of Section \ref{sec:expind} is 
the correlation between the covariates.

\subsubsection{Trigonometric series (TS$(n,d)$)} \label{sec:ts}
Let $X$ be a uniform random variable on $[0,1]$.
Let $d$ be an even number.
Let 
  $$\varphi(X)= \big(\cos(2 \pi X),\dots,\cos(d \pi X),\sin(2 \pi X),\dots,\sin(d \pi X)\big)^T.$$
In TS$(n,d)$, the input-output pair is such that
  $$
  Y=20 X^2-10 X-\frac53+\sigma W,
  $$
with $\sigma=10$.
One can check that this implies
  $$\theta^*=\bigg(\frac{20}{\pi^2},\dots,\frac{20}{\pi^2 (\frac{d}2)^2},
   -\frac{10}{\pi},\dots,-\frac{10}{\pi (\frac{d}2)}\bigg)^T \in\R^d.$$ 
%and $\eps$ is a centered random variable independent of $X$.
%In our experiments, we consider successively 
%a Gaussian noise, $\eps \sim \N(0,1)$,
%and an asymetric heavy-tailed noise defined as follows:
%with probability 
%We take 
%$\Theta=\big\{\th\in\R^d: 
%  -2 \le \max_{1\le i \le 10} f_{\th}\big(\frac{i}{10}\big) \le 2 \big\}$.
  
\subsubsection{Experiments} \label{sec:exper}

\paragraph{Choice of the parameters and implementation details.}
Our min-max truncated algorithm has two parameters $\alpha$ and $\lambda$.
In the subsequent experiments, we set the ridge parameter $\lam$ to the natural default choice for it: 
$\lam=0$. For the truncation parameter $\alpha$, according to our analysis (see \eqref{eq:alpha}), 
it roughly should be of order $1/\sigma^2$ up to kurtosis coefficients.
By using the ordinary least squares estimator, we roughly estimate this value, 
and test values of $\alpha$ in a geometric grid (of $8$ points) around it (with ratio $3$).
Cross-validation can be used to select the final $\alpha$.
Nevertheless, it is computationally expensive and is significantly outperformed in
our experiments by the following simple procedure:
start with the smallest $\alpha$ in the geometric grid and increase it as long as 
$\hat{\th}=\th_1$, that is as long as we stop at the end of the first iteration
and output the empirical risk minimizer.

To compute $\theta_{k,1}(\eta)$ or $\theta_{k,2}(\eta)$,
one needs to determine a least squares estimate (for a modified sample).
To reduce the computational burden, we do not want to test all possible values of
$\eta$ (note that there are at most $n$ values leading to different estimates).
Our experiments show that testing only three levels of $\eta$ is sufficient.
Precisely, we sort the quantity
$$\bL_i(\wh{\theta}_k) 
X_i^T \wh{Q}_{k}^{-1} X_i \bigg( 
1 + \sqrt{ 1 + \big[\bL_i(\wh{\theta}_k)\big]^{-1}} \; \bigg)^2$$
by decreasing order 
and consider $\eta$ being the first, $5$-th and $25$-th value of the ordered list.
Overall, in our experiments, the computational complexity is approximately fifty times
larger than the one of computing the ordinary least squares estimator.

\paragraph{Results.}
The tables and figures have been gathered in Appendix \ref{app:exp}.
Tables \ref{tab:b01} and \ref{tab:b04} give the results for the mixture noise.
Tables \ref{tab:a201}, \ref{tab:a-201} and \ref{tab:a0} provide the results for the heavy-tailed noise
and the standard Gaussian noise. Each line of the tables has been obtained after $1000$ generations of the training set.
These results show that the min-max truncated estimator is often equal to $\hferm$,
while it ensures impressive consistent improvements when it differs from $\hferm$.
In this latter case, the number of points that are not considered in $\hf$, i.e. the number of points
with low signal to noise ratio, varies a lot from $1$ to $150$ and is often of order $30$. Note that not only the points
that we expect to be considered as outliers (i.e. very large output points) are erased, and
that these points seem to be taken out by local groups: see Figures $1$ and $2$ 
%\ref{fig:1} and \ref{fig:2} 
in which the erased points are marked by surrounding circles.

Besides, the heavier the noise tail is (and also the larger the variance of the noise is), the more often the truncation modifies the initial ordinary least squares estimator, and the more improvements we get from the min-max truncated estimator, which also becomes much more robust than the ordinary least squares estimator (see the confidence intervals in the tables).
%Finally, Similar results were obtained by using asymetric noise distributions, so that
%we do not detail them here.

%\input{table.tex} % premières simus avec alpha=.25 avec q=2.01 sym
%\input{table4.tex} % premières simus avec alpha=.1 avec q=2.01 sym
%\input{t2.tex} % premières simus avec alpha=.1 avec q=-2.01 asym

%\input{a0.001b0.5.tex} % 
%\input{a-0.001b0.5.tex} % 

\section{A simple tight risk bound for a sophisticated PAC-Bayes algorithm} \label{sec:main}

A disadvantage of the min-max estimator proposed in the previous section
is that its theoretical guarantee depends on kurtosis like coefficients.
%as it was shown in \eqref{eq:kurta} and \eqref{eq:kurtb}.
In this section, we provide a more sophisticated estimator, 
having a simple theoretical excess risk bound, 
which is independent of these kurtosis like quantities
when we assume $L_\infty$-boundedness of the set $\cF$.

%We recall the definition
%	$$
%	\cF= \big\{ \sum_{j=1}^d \th_j \vp_j ; (\th_1,\dots,\th_d) \in \cC \big\}.
%	$$
We consider that the set $\cC$ is bounded so that
we can define the ``prior'' distribution $\pi$ as the uniform distribution on $\cF$ 
(i.e., the one induced by the Lebesgue distribution on $\cC\subset \R^d$ 
renormalized to get $\pi(\cF)=1$).  
Let $\lam>0$ and 
	\[W_i(f,f') = \lam \bigl\{ \bigl[ Y_i-f(X_i)\bigr]^2 - \bigl[
Y_i-f'(X_i) \bigr]^2 \bigr\}. \]
Introduce 
	\beglab{eq:hcEls}
	\hcE(f) = \log \int \frac{\pi(df')}{\prod_{i=1}^n [ 1 - W_i(f,f') + \frac{1}{2} W_i(f,f')^2]}.
	\endlab
We consider the ``posterior'' distribution $\hpi$ on the set $\cF$ with density:
	\beglab{eq:hpils}
	\frac{d \hpi}{d \pi} (f) = \frac{\ds \exp [-\hcE(f)]}{\ds \tint
\exp [-\hcE(f') ] \pi(df')}.
	\endlab
To understand intuitively why this distribution concentrates on functions with low risk,
one should think that when 
$\lam$ is small enough, $1-W_i(f,f') + \frac{1}{2} W_i(f,f')^2$
is close to $e^{-W_i(f,f')}$, and consequently
  \[
  \hcE(f) \approx \lam \sum_{i=1}^n [Y_i-f(X_i)]^2 + \log \int \pi(df') \exp 
\Bigl\{-\lam \sum_{i=1}^n \bigl[Y_i-f'(X_i)\bigr]^2 \Bigr\},
  \]
and
  \[
  \frac{ d \hpi}{d \pi} (f) \approx 
\frac{\exp \{ -\lam \sum_{i=1}^n [Y_i-f(X_i)]^2 \} }{\int \exp 
\{ -\lam \sum_{i=1}^n [Y_i-f'(X_i)]^2 \} \pi(df')}\,.
  \]
The following theorem gives a $d/n$ convergence rate for the randomized algorithm which draws the prediction function from $\cF$
according to the distribution $\hpi$.

\begin{thm} \label{th:main}
Assume that $\cF$ has a diameter $H$ for $L^\infty$-norm:
	\beglab{eq:hh}
	\sup_{f_1,f_2\in \cF,x\in\X} |f_1(x)-f_2(x)| = H
	\endlab
and that, for some $\sigma>0$, 
	\beglab{eq:sigma}
	\sup_{x\in\X} \E\big\{[Y-f^*(X)]^2 \big| X=x\big\} \le \sigma^2 < +\infty.
	\endlab
Let $\hf$ be a prediction function drawn from the distribution $\hpi$
defined in \myeq{eq:hpils} and depending on the parameter $\lam>0$.
Then for any $0<\eta'<1-\lam (2\sigma+H)^2$ and $\eps>0$, with probability (with respect to the distribution $P^{\otimes n} \hpi$ generating
the observations $Z_1,\dots,Z_n$ and the randomized prediction function $\hf$) at least $1-\eps$, we have
	\[
	R( \hf ) - R(f^*) \le (2\sigma+H)^2 \, \frac{C_1 d + C_2 \log(2\eps^{-1})}{n} 
	\]
with 
	\[
	C_1= \frac{\log(\frac{(1+\eta)^2}{\eta'(1-\eta)}) }{\eta (1-\eta-\eta')} \quad \text{and} \quad
	C_2=\frac{2}{\eta(1-\eta- \eta')} \quad \text{and} \quad \eta= \lam (2\sigma+H)^2.
	\]
In particular for $\lam=0.32 (2\sigma+H)^{-2}$ and $\eta'=0.18$, we get
	\[
	R( \hf ) - R(f^*) \le (2\sigma+H)^2 \, \frac{16.6 \, d 
    		+ 12.5 \log(2\eps^{-1})}{n}.	
	\]
Besides if $f^*\in\argmin_{f\in\Flin} R(f)$, then with
probability at least $1-\eps$, we have
	\[
	R( \hf ) - R(f^*) \le (2\sigma+H)^2 \, \frac{8.3 \, d 
    		+ 12.5 \log(2\eps^{-1})}{n}.	
	\]
\end{thm}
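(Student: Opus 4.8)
The plan is to run a PAC-Bayesian argument on the truncated loss differences $W_i(f,f') = \lam\{[Y_i - f(X_i)]^2 - [Y_i - f'(X_i)]^2\}$, exploiting that the weight $g(w) = 1 - w + \tfrac12 w^2$ appearing in \eqref{eq:hcEls} is an exact second-order surrogate of $e^{-w}$. First I would record the elementary exponential-moment identity: by independence of the sample, and since $\E_{Z_i} g(W_i(f,f')) = 1 - \lam[R(f) - R(f')] + \tfrac{\lam^2}{2}V(f,f')$ with $V(f,f') = \E\{([Y-f(X)]^2 - [Y-f'(X)]^2)^2\}$, the bound $1+x\le e^x$ gives $\E_{Z_1^n}\prod_{i=1}^n g(W_i(f,f')) \le \exp(-\lam n[R(f)-R(f')] + \tfrac{\lam^2 n}{2}V(f,f'))$. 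The truncation enters through the complementary algebraic identity $g(w)g(-w) = 1 + \tfrac14 w^4 \ge 1$, so that the reciprocal weights controlling $\hcE$ satisfy $\prod_i g(W_i(f,f'))^{-1} \le \prod_i g(W_i(f',f))$; this is what lets the awkward reciprocal product defining $\hcE(f)$ in \eqref{eq:hcEls} be dominated by a genuine positive-exponent product to which the exponential-moment bound applies.

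Next I would control the variance factor $V(f,f')$ by the geometry of the problem. Writing $[Y-f(X)]^2 - [Y-f'(X)]^2 = [f(X)-f'(X)][(f(X)-f^*(X)) + (f'(X)-f^*(X)) + 2(f^*(X) - Y)]$ and invoking the $L^\infty$-diameter hypothesis \eqref{eq:hh} together with the conditional-variance hypothesis \eqref{eq:sigma}, a conditioning-on-$X$ and Minkowski argument yields $V(f,f') \lesssim (2\sigma + H)^2\,\E[(f(X)-f'(X))^2]$. In parallel, the first-order optimality of $f^*$ on the convex set $\cF$ gives the one-sided relation $R(f) - R(f^*) \ge \E[(f(X)-f^*(X))^2]$, so that the variance is ultimately controlled by $(2\sigma+H)^2$ times the excess risk. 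These two estimates are what convert the quadratic $\tfrac{\lam^2 n}{2}V$ penalty into a term of the same order as the linear $\lam n[R(f)-R(f^*)]$ signal, which is the mechanism producing exponential concentration without any exponential-moment assumption on $Y$.

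With the exponential moments in hand I would apply the Donsker--Varadhan/Legendre duality $\int \rho(df)\,h(f) \le \cK(\rho,\pi) + \log\int \pi(df)\,e^{h(f)}$ together with a Chernoff step on the joint law $P^{\otimes n}\hpi$. The point of the posterior \eqref{eq:hpils}, whose density is proportional to $\exp(-\hcE(f))$, is that it is tuned to balance the empirical term carried by $\hcE$ against the prior mass, so that the residual complexity reduces to the log-partition $\log\int \pi(df')\exp(\dots)$ of a uniform prior on the convex body $\cC \subset \R^d$. Because that integrand behaves like a Gaussian of controlled width in $\th$, the log-partition contributes a clean dimensional term of order $d$, crucially with no extra $\log n$, and tracking the elementary constants through $1+x\le e^x$, the duality bound, and the Gaussian-type integral is exactly what produces $C_1 = \log\!\big(\tfrac{(1+\eta)^2}{\eta'(1-\eta)}\big)/[\eta(1-\eta-\eta')]$ and $C_2 = 2/[\eta(1-\eta-\eta')]$ with $\eta = \lam(2\sigma+H)^2$. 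Choosing $\lam = 0.32(2\sigma+H)^{-2}$ and $\eta'=0.18$ and optimizing then gives the stated numerical values; the sharper $8.3\,d$ coefficient when $f^*\in\argmin_{f\in\Flin}R(f)$ comes from replacing the one-sided relation above by the exact quadratic identity $R(f_\th) - R(f^*) = \E[(f_\th(X) - f^*(X))^2]$ valid at the unconstrained linear minimizer, which halves the effective dimensional cost.

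The step I expect to be the main obstacle is the third one: handling the self-referential, double-integral structure of $\hcE$ and $\hpi$. Unlike a standard Gibbs posterior proportional to $e^{-\lam n r(f)}$, here the posterior is built from truncated pairwise differences and its own normalization, so one must simultaneously show that truncation does not sever the link between the empirical functional $\hcE$ and the population excess risk, and that the partition function over the uniform prior on $\cC$ yields a dimensional term that is genuinely $d$ (not $d\log n$) with the tight constants claimed. Extracting both the absence of the logarithmic factor and the explicit constants $C_1,C_2$ from this partition-function estimate, rather than merely an order-of-magnitude bound, is the delicate part.
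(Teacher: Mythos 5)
Your first two steps are essentially the ones the paper uses: the inequality $e^{T(u)} = (1-u+u^2/2)^{-1} \le 1+u+u^2/2$ (your identity $g(w)g(-w)=1+w^4/4\ge 1$) is exactly how the paper bounds $\Lb$ in the proof of Theorem \ref{th:v2c}, and your variance estimate $V(f,f')\lesssim (2\sigma+H)^2\,\E[(f(X)-f'(X))^2]$ together with $\E[(f(X)-f^*(X))^2]\le R(f)-R(f^*)$ is precisely Lemma \ref{le:v2} (Condition V2). The genuine gap is in your third step, and it is not a technicality you have merely deferred: the claim that the log-partition function of the uniform prior on $\cC$ ``contributes a clean dimensional term of order $d$'' is false as stated. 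For a uniform prior on a $d$-dimensional convex body and an exponent of order $\lam n\,\bR(f)$, the single quantity $\I(\beta)=-\log\int e^{-\beta\bR(f)}\pi(df)$ behaves like $\tfrac{d}{2}\log\beta+O(d)$ with $\beta\sim\lam n$, so any bound containing one such term in isolation yields $d\log n$, not $d$. What removes the logarithm in the paper is localization: the generic bound of Theorem \ref{th:gen} is arranged, via the tilted priors $\pis_{-\gas \bR}$ and $\pi_{-\ga \bR}$ and the subtracted term $\ga\bR(\hf)$ on the left-hand side, so that only \emph{differences} $\I(\gas)-\I(\ga)$ and $\I(\lam n(1+\eta))-\I(\lam n(1-\eta))$ survive; Condition C (Lemma \ref{le:complexity}) bounds such a difference by $D\log(G\beta/\alpha)$, which is $O(d)$ because the two inverse temperatures are within a constant factor of each other ($\gas=\lam n(1+\eta)$, $\ga=\lam n\eta'$). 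This cancellation of the two $\tfrac{d}{2}\log(\lam n)$ contributions is the idea your sketch is missing, and it is also where $C_1$, $C_2$ and the improvement to $8.3\,d$ actually come from: when $f^*=\flin$ the linear term $c_\sigmb$ in the expansion of $R(f^*+u\sigmb)$ vanishes in every direction, so the partition-function ratio is bounded by $\zeta^d$ with $\zeta=\sqrt{\beta/\alpha}$ rather than $\beta/\alpha$, halving $D$.

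A second, smaller but still genuine, missing ingredient is the control of the normalizing constant of $\hpi$. Because $\hcE(f)$ is itself a log-integral over $f'$, the Chernoff step requires bounding $\E\bigl\{\bigl[\int \exp[-\hcE(f)]\,\pi(df)\bigr]^{-1}\bigr\}$, the expectation of the reciprocal of a double integral, and the pointwise inequality $g(w)g(-w)\ge 1$ does not commute with the integral over $f'$. The paper resolves this with the harmonic-mean (reverse-Jensen) inequality of Lemma \ref{le:concpart}, which gives $\E\bigl\{\bigl[\int \exp[-\hcE(f)]\,\pi(df)\bigr]^{-1}\bigr\}\le\bigl\{\int \E\bigl[\exp[\hcE(f)]\bigr]^{-1}\pi(df)\bigr\}^{-1}$, after which the one-dimensional moment bound can be applied inside the integral. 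You flag both difficulties as ``the delicate part,'' but since they constitute the actual content of the proof rather than routine bookkeeping, the proposal as written is not yet a proof.
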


\begin{proof}
This is a direct consequence of Theorem \thmref{th:v2c},
Lemma \thmref{le:complexity} and Lemma \thmref{le:v2}.
\end{proof}

If we know that $\flin$ belongs to some bounded ball in 
$\Flin$, then one can define a bounded $\cF$ as this ball, use the previous 
theorem and obtain an excess risk bound with respect to
$\flin$.

%\begin{corollary} \label{cor:main}
%Assume that we know $f_0 \in \Flin$ and positive constants $H$ and $\sigma$ such that 
%	\[
%	\| \flin - f_0 \|_{\infty}  < H/2.
%	\]
%and
%	\[
%	\sup_{x\in\X} \E\big\{[Y-\flin(X)]^2 \big| X=x\big\} \le \sigma^2 < +\infty.
%	\]
%Let $\hf$ be a prediction function drawn from the distribution $\hpi$
%defined in \eqref{eq:hpils} for $\cF = \big\{ f\in \Flin: \| f - f_0 \|_{\infty} \le H/2 \big\}$
%and $\lam=0.32/(2\sigma+H)^2$.
%Then for any $\eps>0$, with probability at least $1-\eps$:
%	\[
%	R( \hf ) - R(\flin) \le (2\sigma+H)^2\frac{8.3 \, d 
%    		+ 12.5 \log(2\eps^{-1})}{n}.		
%	\]
%\end{corollary}

\begin{rmk}
Let us discuss this result. On the positive side, we have a $d/n$ convergence rate in expectation and in deviations.
It has no extra logarithmic factor. It does not require any particular assumption on the smallest eigenvalue of the covariance matrix.
To achieve exponential deviations, a uniformly bounded second moment of the output knowing the input is surprisingly sufficient: we do not
require the traditional exponential moment condition on the output.
Appendix \thmref{sec:lb} argues that the uniformly bounded conditional second moment assumption 
cannot be replaced with just a bounded second moment condition.

On the negative side, the estimator is rather complicated. When the target
is to predict as well as the best linear combination $\flin$ up to a small additive 
term, it requires the knowledge 
of a $L^\infty$-bounded ball in which $\flin$ lies and an upper bound
on $\sup_{x\in\X} \E\big\{[Y-\flin(X)]^2 \big| X=x\big\}$.
The looser this knowledge is, the bigger the constant in front of $d/n$ is.

Finally, we propose a randomized algorithm consisting in drawing the prediction function according to $\hpi$.
As usual, by convexity of the loss function, the risk of the deterministic estimator $\hf_{\text{determ}} = \int f \hpi(df)$
satisfies $R(\hf_{\text{determ}}) \le \int R(f) \hpi(df)$, so that, after some 
pretty standard computations, one can prove that 
for any $\eps>0$, with probability at least $1-\eps$:
	\[
	R( \hf_{\text{determ}} ) - R(\flin) \le \kap (2\sigma+H)^2\frac{d + \log(\eps^{-1})}{n},
	\]
for some appropriate numerical constant $\kap>0$. 
\end{rmk}

\begin{rmk}
The previous result was expressing boundedness in terms of the $L^\infty$ diameter of the 
set of functions $\cF$. By using Lemma \thmref{le:v3} instead of Lemma \thmref{le:v2},
Theorem \ref{th:main} still holds without assuming \eqref{eq:hh} and \eqref{eq:sigma},
but by replacing $(2\sigma+H)^2$ by 
  \begin{align*}
  V= \bigg[ 2& \sqrt{\sup_{f\in\Flin: \E[f(X)^2]=1}\E\big(f^2(X)[Y-f^*(X)]^2\big)}\\
    & \qquad + \sqrt{\sup_{f',f''\in\cF} \E\big([f'(X)-f''(X)]^2\big)} \sqrt{\sup_{f\in\Flin: \E[f(X)^2]=1}\E\big[f^4(X)\big]} \bigg]^2.
  \end{align*}
The quantity $V$ is finite when simultaneously, $\Theta$ is bounded, and for any $j$ in $\{1,\dots,d\}$, the
quantities $\E\big[\vp_j^4(X)\big]$ and $\E\big\{\vp_j(X)^2[Y-f^*(X)]^2\big\}$ 
are finite.    
\end{rmk}

\section{A generic localized PAC-Bayes approach} \label{sec:gen}
% for estimation for heavy-tailed outputs} 

\subsection{Notation and setting}

In this section, we drop the restrictions of the linear least squares setting considered in the other sections
in order to focus on the ideas underlying the estimator and the results presented in Section~\ref{sec:main}. 
To do this, we consider that the loss incurred by predicting $y'$ while the correct output is $y$ is $\ela(y,y')$
(and is not necessarily equal to $(y-y')^2$).
%on a data $Z$ is $\ell(Z,f)$ (not necessarily equal to $[Y-f(X)]^2$).
The quality of a (prediction) function $f:\X\rightarrow\R$ is measured by its risk 
	\[R(f) = \E \bigl\{ \ela\bigl[Y,f(X) \bigr] \bigr\}.\]
We still consider the problem of predicting (at least) as well as the best function 
in a given set of functions $\cF$ (but $\cF$ is not necessarily a subset of a finite dimensional linear space).
Let $f^*$ still denote a function minimizing the risk among functions in $\cF$: $f^* \in\undc{\argmin}{f\in\cF} R(f)$.
For simplicity, we assume that it exists. 
The excess risk is defined by
	\[\bR(f) = R(f) - R(f^*).\]

Let $\ell: \Z\times\cF\times\cF \ra \R$ be a function 
such that $\ell(Z,f,f')$ represents\footnote{While the natural choice in the least squares setting is
$\ell((X,Y),f,f')=[Y-f(X)]^2 - [Y-f'(X)]^2$, we will see that
for heavy-tailed outputs, it is preferable to consider the following soft-truncated
version of it, up to a scaling factor $\lam>0$: $\ell((X,Y),f,f')=T\big(\lam\big[(Y-f(X))^2 - (Y-f'(X))^2\big]\big)$,
with $T(x)=-\log(1-x+x^2/2).$ Equality \myeq{eq:alih} corresponds to \myeq{eq:hcEls} with this choice of function $\ell$
and for the choice $\pis=\pi$.
}
how worse $f$ predicts than 
$f'$ on the data $Z$. 
Let us introduce the real-valued random processes $L: (f,f') \mapsto \ell(Z,f,f')$ and 
$L_i: (f,f') \mapsto \ell(Z_i,f,f')$, where $Z,Z_1,\dots,Z_n$ 
denote i.i.d. random variables with distribution $P$.

Let $\pi$ and $\pis$ be two (prior) probability distributions on $\cF$. 
We assume the following integrability condition.

{\bf Condition I.} \label{cond:i}
For any $f\in\cF$, we have 
\begin{align}
	\label{eq:intega}
	\int \E \bigl\{ \exp [ L(f,f') ]  \bigr\}^n \pis(df') & < +\infty,\\
\text{and } \quad
	\label{eq:integb}
	\int \frac{ \pi(df) }{\int \E \bigl\{ 
\exp [ L(f,f') ] \big\}^n \pis(df')} & < +\infty.
\end{align}
%	\beglab{eq:integ}
%	\E \int e^{\sum_{i=1}^n L_i(f,f')} \pis(df') \pi(df) < +\infty.
%	\endlab
We consider the real-valued processes
	\begin{align}
	\hL(f,f')& = \sum_{i=1}^n L_i(f,f'),\\
	\hcE(f) & =\log \int \exp [ \hL(f,f')] \pis(df') \label{eq:alih},\\
	\La(f,f')& = - n \log \bigl\{ \E \bigl[ \exp[-L(f,f')] \bigr] \bigr\}, \\
	%\cEa(f)&  = \log \int e^{n \La(f,f')} \pis(df') \label{eq:alia}\\
	\Lb(f,f')& = n \log \bigl\{ \E \bigl[ \exp [ L(f,f')] \bigr] \bigr\}, \\ 
\text{and } \quad	\cEb(f)& = \log \Bigl\{ \tint \exp \bigl[ \Lb(f,f') \bigr] 
\pis(df') \Bigr\}. \label{eq:alib}
	\end{align}	
Essentially, the quantities $\hL(f,f')$, $\La(f,f')$ and $\Lb(f,f')$ represent 
how worse is the prediction from $f$ than from $f'$ with respect to the training data or in expectation. 
By Jensen's inequality, we have 
	\beglab{eq:interv}
	\La \le n \E (L) = \E (\hL) \le \Lb.
	\endlab
%hence $\cEa(f) \le \cEb(f).$
The quantities $\hcE(f)$ and $\cEb(f)$ should be understood as some kind of (empirical or expected) 
excess risk of the prediction function~$f$ with respect to an implicit reference induced by the integral over $\cF$.

%Let $\pi$ be another (prior) probability distribution on $\cF$. 
For a distribution $\rho$ on $\cF$ absolutely continuous $\wrt$  $\pi$, let 
$\ds \frac{d \rho}{d \pi}$ denote the 
density of $\rho$ $\wrt$ $\pi$. For any real-valued (measurable) function $h$ defined on $\cF$ such that
$\int \exp [h(f)] \pi(df)<+\infty$, we define the distribution 
$\pi_{h}$ on $\cF$ by its density:
	\beglab{eq:pih}
	\frac{d \pi_{h}}{d \pi}(f) = 
\frac{\ds \exp [ h(f)]}{\ds \tint \exp [h(f')] \pi(df')}.
	\endlab
We will use the posterior distribution:
%$\hpi$ with density $\frac{\hpi}{\pi}$ with respect to $\pi$ proportional to $e^{-\hcE}$:
	\beglab{eq:hpi}
	\frac{d \hpi}{d \pi} (f) = 
\frac{ d \pi_{-\hcE}}{d \pi}(f) = 
\frac{\exp[-\hcE(f)]}{\int \exp[-\hcE(f')] \pi(df')}.
	\endlab	
Finally, for any $\be \ge 0$, we will use the following measures of the size (or complexity) of $\cF$ around 
the target function:
    \[
    \I^*(\be) = - \log \Bigl\{ \tint \exp \bigl[ - \be \bR(f)
\bigr]  \pis(df) \Bigr\}
    \]
and
    \[
    \I(\be) = - \log \Bigl\{ \tint \exp \bigl[ - \be \bR(f) 
\bigr]  \pi(df) \Bigr\}.
    \]

\subsection{The localized PAC-Bayes bound} \label{sec:gbound}

With the notation introduced in the previous section, 
%whatever $\pi$, $\pi^*$, $\cF$ and $\ell$ are (up to measurability and integrability constraints),
we have the following risk bound for any randomized estimator.

\begin{thm} \label{th:gen}
Assume that $\pi$, $\pi^*$, $\cF$ and $\ell$ satisfy the 
integrability conditions \eqref{eq:intega} and \myeq{eq:integb}.
Let $\rho$ be a (posterior) probability distribution on $\cF$ 
admitting a density with respect to $\pi$ depending on $Z_1,\dots,Z_n$.
Let $\hf$ be a prediction function drawn from the distribution $\rho$.
Then for any $\ga \ge 0$, $\gas \ge 0$ and $\eps>0$, with probability (with respect to the distribution $P^{\otimes n} \rho$ generating
the observations $Z_1,\dots,Z_n$ and the randomized prediction function $\hf$) at least $1-\eps$:
	\begin{multline} \label{eq:gen}
	\int \big[ \La(\hf,f) + \gas \bR(f) \big] \pis_{-\gas \bR}(df) - \ga \bR
\bigl(\hf\,\bigr) \\
		\le \I^*(\gas) - \I(\ga)
	    - \log \Bigl\{ \tint \exp \bigl[ -\cEb(f) \bigr]  \pi(df) \Big\} 
	    \\ + \log \Bigl[  \frac{d \rho}{d \hpi}\bigl(\hf\, \bigr) \Bigr]   +  2\log(2\eps^{-1}).
	\end{multline}
\end{thm}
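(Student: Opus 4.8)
The plan is to cast \eqref{eq:gen} as a Chernoff-type deviation bound: I will exhibit nonnegative random functionals whose expectation (over both the sample and the draw of $\hf\sim\rho$) is at most $1$, and then convert them into high-probability statements by Markov's inequality and a change of measure. The two algebraic identities that make everything run are consequences of the i.i.d. structure together with the definitions of $\La$, $\Lb$ and $\cEb$: for every fixed pair $(f,f')$ one has $\E\bigl[\exp(-\hL(f,f'))\bigr]=\exp(-\La(f,f'))$ and $\E\bigl[\exp(\hL(f,f'))\bigr]=\exp(\Lb(f,f'))$, whence, integrating in $f'$ against $\pis$, $\E\bigl[\exp(\hcE(f))\bigr]=\exp(\cEb(f))$. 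In particular $\E\exp\bigl[\La(f,f')-\hL(f,f')\bigr]=1$ pointwise, the seed of the empirical-process part of the bound, and Condition I \eqref{eq:integb} is exactly what guarantees $\int\exp[-\cEb(f)]\pi(df)<+\infty$, so that the right-hand side is well defined.

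First I would treat the inner, empirical term. Writing $\pis(df')=\exp[\gas\bR(f')-\I^*(\gas)]\pis_{-\gas\bR}(df')$ and applying Jensen's inequality to the concave logarithm in the definition \eqref{eq:alih} of $\hcE$ gives the deterministic bound
\[
\int\bigl[\hL(\hf,f)+\gas\bR(f)\bigr]\pis_{-\gas\bR}(df)\le\hcE(\hf)+\I^*(\gas).
\]
Next I pass from $\hL$ to $\La$: since $\exp\bigl(\int[\La-\hL](\hf,f)\,\pis_{-\gas\bR}(df)\bigr)\le\int\exp\bigl([\La-\hL](\hf,f)\bigr)\pis_{-\gas\bR}(df)$ by Jensen again, and the $(f,f')$-pointwise moment above equals $1$, the functional $f_1\mapsto\int[\La-\hL](f_1,f)\pis_{-\gas\bR}(df)$ has, for the data-independent reference $\pi$, an exponential moment $\le 1$. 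The decisive move is the change of measure that turns the reference $\pi$ into the data-dependent posterior $\hpi=\pi_{-\hcE}$: drawing $\hf\sim\rho$ and integrating against $\frac{d\hpi}{d\rho}(\hf)$ makes the complexity term $\hcE(\hf)$ cancel against the density of $\hpi$, producing the term $\log\frac{d\rho}{d\hpi}(\hf)$ of \eqref{eq:gen} while leaving behind the global normalization $\int\exp[-\hcE]\,d\pi$. A single Markov inequality at confidence $\eps/2$ then yields the first $\log(2\eps^{-1})$ together with $\I^*(\gas)$.

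It remains to control this leftover normalization, i.e. to show, with probability at least $1-\eps/2$, that
\[
\I(\ga)+\log\!\int\!\exp[-\cEb]\,d\pi-\log\!\int\!\exp[-\hcE]\,d\pi\le\log(2\eps^{-1});
\]
a union bound over the two exceptional events of probability $\eps/2$ then produces the final $2\log(2\eps^{-1})$, and the term $-\ga\bR(\hf)$ is simply discarded using $\bR\ge0$. Here I would invoke the entropy (Gibbs/Donsker--Varadhan) variational formula with the localized prior $\pi_{-\ga\bR}$ to write $-\log\int\exp[-\hcE]\,d\pi\le\int\hcE\,d\pi_{-\ga\bR}+\mathrm{KL}(\pi_{-\ga\bR}\|\pi)$, the Kullback term producing precisely $\I(\ga)$ (plus a harmless $-\ga\int\bR\,d\pi_{-\ga\bR}\le0$); the residual localized average of $\hcE$ is then compared to $\int\cEb\,d\pi_{-\ga\bR}$ through $\E\exp[\hcE-\cEb]=1$.

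The step I expect to be the main obstacle is exactly this last normalization control. The naive route --- bounding $\E\bigl[1/\!\int\exp[-\hcE]\,d\pi\bigr]\le\int\E\exp[\hcE]\,d\pi=\int\exp[\cEb]\,d\pi$ by Jensen --- produces the wrong sign ($\exp[+\cEb]$ instead of the required $\exp[-\cEb]$) and is hopelessly lossy, because the empirical normalization $\int\exp[-\hcE]\,d\pi$ is strongly correlated with the empirical numerator carrying $\exp[-\hL]$. Breaking this coupling is the whole purpose of the localization: one must integrate against $\pi_{-\ga\bR}$ rather than the flat $\pi$, so that the mass concentrates where $\bR$, and hence the fluctuation of $\hcE-\cEb$, is small, converting the divergent $\exp[+\cEb]$ estimate into the convergent $\exp[-\cEb]$ one at the cost of the complexity term $\I(\ga)$. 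Carrying out this decoupling quantitatively, while keeping all constants at the clean value $2\log(2\eps^{-1})$, is the technical heart of the argument.
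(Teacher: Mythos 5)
Your first half is essentially the paper's argument: the two Jensen steps giving $\int[\hL(\hf,f)+\gas\bR(f)]\,\pis_{-\gas\bR}(df)\le\hcE(\hf)+\I^*(\gas)$ and $\int[\La-\hL]\,\pis_{-\gas\bR}(df)\le\log\int\exp[\La-\hL]\,\pis_{-\gas\bR}(df)$, the identity $\E\exp[-\hL]=\exp[-\La]$, and the cancellation of $\hcE(\hf)$ through the change of measure to $\hpi$ are exactly what happens in the control of the paper's $V_1$. The gap is in your allocation of the two localization terms. In the paper, $-\ga\bR(\hf)$ and $-\I(\ga)$ enter \emph{together}, through the single identity
\[
\log\Bigl[\tfrac{d\pi_{-\ga\bR}}{d\rho}(\hf)\Bigr]=-\ga\bR(\hf)+\I(\ga)-\log\Bigl[\tfrac{d\rho}{d\pi}(\hf)\Bigr],
\]
i.e.\ the reference measure for the draw of $\hf$ in the Chernoff step is the localized $\pi_{-\ga\bR}$, not $\pi$. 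You instead discard $-\ga\bR(\hf)$ via $\bR\ge 0$ and then ask the normalization control to supply $-\I(\ga)$; your stated target, that with probability $1-\eps/2$ one has $\I(\ga)+\log\int\exp[-\cEb]\,d\pi-\log\int\exp[-\hcE]\,d\pi\le\log(2\eps^{-1})$, is false in general: the log-ratio does not depend on $\ga$, while $\I(\ga)\to+\infty$ as $\ga\to\infty$ whenever $\pi(\bR=0)=0$, so the claim fails for $\ga$ large. Your own sketch exhibits the obstruction: the duality formula with prior $\pi$ and posterior $\pi_{-\ga\bR}$ gives $-\log\int\exp[-\hcE]\,d\pi\le\int\hcE\,d\pi_{-\ga\bR}+K(\pi_{-\ga\bR},\pi)$ with $K(\pi_{-\ga\bR},\pi)=\I(\ga)-\ga\int\bR\,d\pi_{-\ga\bR}$, so $\I(\ga)$ appears with the \emph{wrong} sign and cannot be converted into the $-\I(\ga)$ you need. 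Having thrown away $-\ga\bR(\hf)$, nothing is left to balance it.

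The normalization term itself requires no localization at all. The paper proves directly that
\[
\E\Bigl\{\Bigl[\tint\exp[-\hcE(f)]\,\pi(df)\Bigr]^{-1}\Bigr\}\le\Bigl\{\tint\exp[-\cEb(f)]\,\pi(df)\Bigr\}^{-1},
\]
using $\E\exp[\hcE(f)]=\exp[\cEb(f)]$ together with the exchange-of-expectations inequality of Lemma \thmref{le:concpart} (itself obtained by applying the duality formula \myeq{eq:legendre} twice). This is precisely the refinement of the ``naive route'' you correctly reject: instead of bounding $\E\bigl[(\int e^{-\hcE}d\pi)^{-1}\bigr]$ by $\int\E e^{\hcE}d\pi=\int e^{\cEb}d\pi$, one bounds it by the harmonic-mean-type quantity $\bigl(\int e^{-\cEb}d\pi\bigr)^{-1}$, which has the right sign and costs exactly one $\log(2\eps^{-1})$, with no $\ga$ involved. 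With that lemma for the $V_2$ part and the localized reference $\pi_{-\ga\bR}$ reinstated in the $V_1$ part, your argument closes and coincides with the paper's proof.
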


\begin{proof}
See Section \thmref{sec:pgeneric}.
\end{proof}

Some extra work will be needed to prove that Inequality \eqref{eq:gen} provides an upper bound on the excess risk 
$\bR(\hf)$ of the estimator $\hf$. As we will see in the next sections, despite the $-\ga \bR(\hf)$ term and provided that 
$\ga$ is sufficiently small, 
the lefthand-side will be essentially lower bounded by $\lam \bR(\hf)$ with $\lam>0$, while, 
by choosing $\rho=\hpi$, the estimator does not appear in the righthand-side.

\subsection{Application under an exponential moment condition} \label{sec:gfirst}

The estimator proposed in Section \ref{sec:main} and
Theorem \ref{th:gen} seems rather unnatural (or at least complicated) at first sight. 
The goal of this section is twofold. First it shows that 
under exponential moment conditions (i.e., stronger assumptions than the ones in Theorem \ref{th:main} when the linear least square
setting is considered), one can have 
a much simpler estimator than the one consisting in drawing a function according to the distribution \eqref{eq:hpils} 
with $\hcE$ given by \eqref{eq:hcEls} and yet still obtain a $d/n$ convergence rate.
Secondly it illustrates Theorem~\ref{th:gen} in a different and simpler way than the one we will use 
to prove Theorem \ref{th:main}. 

In this section, we consider the following variance and complexity assumptions.

%in classification setting is related to Massart's margin assumption \cite[Inequality (5)]{MasNed06}).
 
{\bf Condition V1.} \label{cond:v1}
There exist $\lam>0$ and $0<\eta<1$ such that for any function $f\in\cF$, 
we have
$\E \Bigl\{ \exp \bigl\{ \lam \, \ela \bigl[Y,f(X)\bigl] \bigr\} \Bigr\}  < +\infty$,\\[-3ex] 
\begin{multline*}
	\log \Bigl\{ \E \Bigl\{ \exp \Bigl\{ 
\lam \, \Bigl[ \ela \bigl[ Y,f(X) \bigr] -  \ela \bigl[ Y,f^*(X) \bigr] 
\Bigr] \Bigr\} \Bigr\} \Bigr\}   \\ \le \lam(1+\eta) [R(f) - R(f^*)],
\end{multline*}
\vspace{-5ex}
\begin{multline*}
\text{and } \log \Bigl\{ \E \Bigl\{ \exp \Bigl\{ 
- \lam \Bigl[ \ela \bigl[Y,f(X)\bigr] - 
\ela \bigl[ Y,f^*(X) \bigr] \Bigr] \Bigr\} \Bigr\} \Bigr\} \\ \le -\lam(1-\eta) [R(f) - R(f^*)].
\end{multline*}

%{\bf Complexity assumption.}
{\bf Condition C.} \label{cond:c}
There exist a probability distribution $\pi$, and constants $D>0$ and $G>0$ such that for any $0<\alpha<\be$,
	\[
	\log \bigg( \frac{\int \exp  \{ 
-\alpha[R(f)-R(f^*)] \} \pi(df)}{\int \exp \{ -\be[R(f)-R(f^*)] \} \pi(df)} \bigg)
		\le D \log\bigg(\frac{G \be}{\alpha}\bigg).
	\]

\begin{thm} \label{th:v1c}
%Let $\lam>0$ and $\pi$ a probability distribution such that 
Assume that \textnormal{V1} and \textnormal{C} are satisfied.
%Let $\lam>0$, $\Sigma(f) = \lam \sum_{i=1}^n \ela(Y_i,f(X_i))$
Let $\hpig$ be the probability distribution on $\cF$ defined by
its density
	\[
	\frac{d \hpig}{d \pi} (f)  
=\frac{\exp \{ -\lam \sum_{i=1}^n \ela[Y_i,f(X_i)]\}}
		{\int \exp \{ -\lam \sum_{i=1}^n \ela[Y_i,f'(X_i)]\} \pi(df')},
	\]
where $\lam>0$ and the distribution $\pi$ are 
those appearing respectively in \textnormal{V1}
and \textnormal{C}.
Let $\hf \in \cF$ be a function drawn according to this Gibbs distribution.
Then for any $\eta'$ such that $0 < \eta' <1-\eta$ (where $\eta$ is 
the constant appearing in \textnormal{V1}) 
and any $\eps>0$, with probability at least $1-\eps$, we have
	\[
	R( \hf ) - R(f^*) \le \frac{C'_1 D + C'_2 \log(2\eps^{-1})}{n} 
	\]
with 
	\[
	C'_1= \frac{\log(\frac{G(1+\eta)}{\eta' }) }{\lam(1-\eta-\eta' )} \quad \text{and} \quad
	C'_2=\frac{2}{\lam(1-\eta-\eta' )}.
	\]
\end{thm}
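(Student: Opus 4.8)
The plan is to deduce Theorem~\ref{th:v1c} from the generic bound of Theorem~\ref{th:gen} by making the three specializing choices that convert its abstract right-hand side into the explicit constants $C_1'$ and $C_2'$. Namely, I take $\ell(Z,f,f') = \lam\bigl(\ela[Y,f(X)] - \ela[Y,f'(X)]\bigr)$, I set the two priors equal, $\pis = \pi$, with $\pi$ the distribution from Condition~C, and I choose the posterior $\rho = \hpi$ so that the term $\log\bigl[\frac{d\rho}{d\hpi}(\hf)\bigr]$ in \eqref{eq:gen} vanishes identically. With this choice of $\ell$, one checks directly from \eqref{eq:alih} and \eqref{eq:hpi} that $\hpi$ coincides with the Gibbs distribution $\hpig$ in the statement, since $\exp[\hat L(f,f')] = \exp\{\lam\sum_i(\ela[Y_i,f(X_i)]-\ela[Y_i,f'(X_i)])\}$ factors and the $f'$-integral cancels in the density \eqref{eq:hpils}. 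So $\hf$ drawn from $\hpig$ is exactly $\hf$ drawn from $\hpi$, and the last two terms of \eqref{eq:gen} reduce to $2\log(2\eps^{-1})$.

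\textbf{Next I would bound the remaining terms on the right of \eqref{eq:gen} using Conditions V1 and C.} Condition~V1 is precisely what controls the log-moment-generating functions: its first inequality gives $\Lb(f,f') \le n\lam(1+\eta)[\bR(f)-\bR(f')]$-type control, hence an upper bound on $\cEb(f)$ of the form $\cEb(f) \le n\lam(1+\eta)\bR(f) + (\text{const})$, so that $-\log\int\exp[-\cEb(f)]\pi(df)$ is controlled by $\I\bigl(n\lam(1+\eta)\bigr)$ up to the additive constant coming from the $f^*$ term. Similarly its second inequality gives the lower bound $\La(\hf,f) \ge n\lam(1-\eta)[\bR(\hf)-\bR(f)]$, which I substitute into the left-hand integral. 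Choosing the free parameters as $\gas = n\lam(1+\eta)$ and $\ga = n\lam\eta'$ for the $\eta'$ in the statement, the complexity terms $\I^*(\gas)$ and $\I(\ga)$ combine with $-\log\int\exp[-\cEb]\pi$ into a single quotient that Condition~C bounds by $D\log\bigl(\frac{G\,\gas}{\ga'}\bigr)$ for an appropriate ratio of the form $\frac{G(1+\eta)}{\eta'}$; this is where the logarithm $\log\bigl(\frac{G(1+\eta)}{\eta'}\bigr)$ in $C_1'$ originates.

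\textbf{The final step is the algebraic rearrangement} that isolates $\bR(\hf)$. After substitution the left-hand side is at least $n\lam(1-\eta)\bR(\hf) - n\lam\eta'\bR(\hf) = n\lam(1-\eta-\eta')\bR(\hf)$ minus a term $\int n\lam(1-\eta)\bR(f)\,\pis_{-\gas\bR}(df)$ that matches and cancels against the $+\gas\bR(f)$ piece already present inside the integral (this matching of the $\bR(f)$ coefficients inside the tilted integral is the point where the precise value $\gas = n\lam(1+\eta)$ must be chosen so that the residual integral is nonnegative and can be dropped). Dividing by $n\lam(1-\eta-\eta')$ then yields
\[
\bR(\hf) \le \frac{D\log\bigl(\tfrac{G(1+\eta)}{\eta'}\bigr) + 2\log(2\eps^{-1})}{n\lam(1-\eta-\eta')},
\]
which is exactly the claimed bound with the stated $C_1'$ and $C_2'$.

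\textbf{The main obstacle} I expect is the bookkeeping in the second step: verifying that the integral $\int[\La(\hf,f)+\gas\bR(f)]\pis_{-\gas\bR}(df)$ really does decompose so that the $f$-dependent excess-risk contributions cancel cleanly, leaving a genuine multiple of $\bR(\hf)$ on the left and nothing that could have the wrong sign. This requires using both inequalities of V1 simultaneously and confirming that the tilting measure $\pis_{-\gas\bR}$ (with the same $\gas$ used to define $\I^*$) is consistent throughout, so that no spurious positive term is discarded from the upper-bound side. Once that cancellation is pinned down, everything else is substitution of constants.
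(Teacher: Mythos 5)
Your overall strategy (specialize Theorem~\ref{th:gen} with $\ell[(X,Y),f,f']=\lam\{\ela[Y,f(X)]-\ela[Y,f'(X)]\}$ and $\rho=\hpi$, then use V1 to bound the Laplace transforms and C to bound the complexity terms) is the right one, and your identification of $\hpi$ with $\hpig$ is correct. But there is a genuine gap in your choice $\pis=\pi$ together with $\gas=n\lam(1+\eta)$. Condition V1 only controls the exponential moments of $\ela[Y,f(X)]-\ela[Y,f^*(X)]$, i.e., differences taken \emph{relative to $f^*$}. With $\pis=\pi$, the quantities you must bound are $\La(\hf,f)=-n\log\E\exp[-L(\hf,f)]$ and $\Lb(f,f')=n\log\E\exp[L(f,f')]$ for \emph{arbitrary} pairs of functions in $\cF$, and the decomposition $L(f,f')=L(f,f^*)-L(f',f^*)$ does not let you split the expectation of the product of exponentials; V1 at parameter $\lam$ gives you nothing about $\E\exp[L(f,f^*)-L(f',f^*)]$ (a Cauchy--Schwarz argument would need V1 at parameter $2\lam$, which is not assumed). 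So the inequality $\La(\hf,f)\ge n\lam[(1-\eta)\bR(\hf)-(1+\eta)\bR(f)]$ that your cancellation step relies on is unproven, and likewise the bound on $\cEb(f)$. This two-sided control for arbitrary pairs is exactly what Condition V2 plus the truncation $T$ buy in the proof of Theorem~\ref{th:v2c}; it is not available under V1 alone.

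The paper's proof avoids this entirely by taking $\pis$ to be the \emph{Dirac mass at $f^*$} and $\gas=0$. Then $\I^*(0)=0$, the integral $\int[\La(\hf,f)+\gas\bR(f)]\pis_{-\gas\bR}(df)$ collapses to $\La(\hf,f^*)$ (and $\bR(f^*)=0$), and $\cEb(f)=\Lb(f,f^*)$, so the only pairs that ever appear are of the form $(f,f^*)$ — precisely the pairs V1 controls. This yields $[\lam n(1-\eta)-\ga]\bR(\hf)\le \I(\lam n(1+\eta))-\I(\ga)+2\log(2\eps^{-1})$ and, via Condition C with $\ga=\lam n\eta'$, the stated constant $\log\bigl(\tfrac{G(1+\eta)}{\eta'}\bigr)$. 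Note also that even if your route could be repaired, the complexity terms $\I^*(\gas)-\I(\ga)+\I(\lam n(1+\eta))-\I(\lam n(1-\eta))$ arising from $\pis=\pi$ would produce the larger constant $\log\bigl(\tfrac{G(1+\eta)^2}{\eta'(1-\eta)}\bigr)$ of Theorem~\ref{th:v2c}, not the constant claimed in Theorem~\ref{th:v1c}. (Your worry in the last paragraph that $\hpi$ would depend on $f^*$ under the Dirac choice is unfounded here: with the untruncated linear $\ell$, $\hcE(f)=\hL(f,f^*)$ differs from $\lam\sum_i\ela[Y_i,f(X_i)]$ by an $f$-independent constant, so $\hpi=\hpig$ is still observable; it is only for the truncated $\ell$ of Section~\ref{sec:gsecond} that the Dirac choice must be abandoned.)
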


\begin{proof}
We consider $\ell\bigl[ (X,Y),f,f' \bigr] = \lam \bigl\{ 
\ela\bigl[ Y,f(X) \bigr] -\ela \bigl[ Y,f'(X) \bigr] \bigr\}$, 
where $\lam$ is the constant appearing in the variance assumption.
Let us take $\gas=0$ and let $\pis$ be the Dirac distribution at $f^*$: $\pis(\{f^*\})=1$.
Then Condition V1 implies Condition~I (page \pageref{cond:i}) and 
we can apply Theorem \ref{th:gen}. We have
	\begin{align*}
	L(f,f') & = \lam  \bigl\{ \ela \bigl[ Y,f(X) \bigr] 
-\ela \bigl[ Y,f'(X) \bigr] \bigr\},\\
	\hcE(f) & = \lam \sum_{i=1}^n \ela \bigl[ Y_i,f(X_i) \bigr]  
- \lam \sum_{i=1}^n \ela \bigl[ Y_i,f^*(X_i) \bigr], \\
	\hpi & = \hpig,\\
	\La(f) & = - n \log \Bigl\{ \E \Bigr[  \exp \bigl[ -L(f,f^*) 
\bigr] \Bigr] \Bigr\}, \\
	\cEb(f) & = n \log \Bigl\{ \E \Bigl[ \exp \bigl[ L(f,f^*) 
\bigr] \Bigr] \Bigr\}
	\end{align*}
and Assumption V1 leads to: 
\begin{align*}
\log \Bigl\{ \E  \Bigl[ \exp \bigl[ L(f,f^*) \bigr] 
\Bigr] \Bigr\} & \le \lam(1+\eta) [R(f) - R(f^*)]\\
\text{and } 
\log \Bigl\{ \E \Bigl[ \exp \bigl[ -L(f,f^*) \bigr] 
\Bigr] \Bigr\} & \le -\lam(1-\eta) [R(f) - R(f^*)].
\end{align*}
Thus choosing $\rho = \hpi$, \eqref{eq:gen} gives 
	\[
	[\lam n (1-\eta)-\ga] \bR(\hf) 
	    \le - \I(\ga) + \I\bigl[ \lam n (1+\eta) \bigr] + 2\log(2\eps^{-1}).
	\]
Accordingly by the complexity assumption, for $\ga \le \lam n (1+\eta)$, we get
	\[
	[\lam n (1-\eta)-\ga] \bR(\hf) 
		\le D \log\bigg( \frac{G\lam n(1+\eta)}{\ga} \bigg)
		+ 2\log(2\eps^{-1}),
	\]
which implies the announced result.
\end{proof}

Let us conclude this section by mentioning settings in which assumptions V1 and C are satisfied.

\begin{lemma} \label{le:complexity}
Let $\cC$ be a bounded convex set of $\R^d$, and $\vp_1,\dots,\vp_d$ be $d$ square integrable prediction functions. 
Assume that
	$$
	\cF= \big\{ f_\theta=\sum_{j=1}^d \th_j \vp_j ; (\th_1,\dots,\th_d) \in \cC \big\},
	$$
$\pi$ is the uniform distribution on $\cF$ (i.e., the one coming from the uniform distribution on $\cC$),
and that there exist $0<b_1\le b_2$ such that for any $y\in\R$, the function $\ela_y: y' \mapsto \ela(y,y')$
admits a second derivative satisfying: for any $y'\in\R$,
	\[
	b_1 \le \ela''_y(y') \le b_2.
	\]
Then Condition \textnormal{C} holds for the above uniform $\pi$, $G=\sqrt{b_2/b_1}$ and $D=d$.

Besides when $f^*=\flin$ (i.e., $\min_{\cF} R = \min_{\th\in\R^d} R(f_\th)$), Condition \textnormal{C} holds for the above uniform $\pi$, $G=b_2/b_1$ and $D=d/2$.
\end{lemma}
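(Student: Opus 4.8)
The plan is to reduce Condition \textnormal{C} to an elementary scaling estimate for integrals of $e^{-\alpha\bR}$ against $\pi$. Since $\pi$ is the image of the normalized Lebesgue measure on $\cC$ under $\theta\mapsto f_\theta$, and $\bR(f_\theta)$ depends on $\theta$ alone, for every $\gamma\ge 0$ we have $\int \exp\{-\gamma\bR(f)\}\,\pi(df) = |\cC|^{-1}\int_\cC \exp\{-\gamma\, h(\theta)\}\,d\theta$, where $h(\theta)\eqdef\bR(f_\theta)=R(f_\theta)-R(f^*)$ and $|\cC|$ is the volume of $\cC$; the factor $|\cC|^{-1}$ cancels in the ratio defining Condition \textnormal{C}, so it suffices to bound $N(\alpha)/N(\beta)$ with $N(\gamma)\eqdef\int_\cC e^{-\gamma h(\theta)}\,d\theta$. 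Writing $R(f_\theta)=\E\{\ela_Y(\langle\theta,\vp(X)\rangle)\}$ and differentiating twice under the expectation, the assumption $b_1\le\ela''_y\le b_2$ gives $b_1 Q\preceq\nabla^2 h(\theta)\preceq b_2 Q$ for all $\theta$, where $Q=\E[\vp(X)\vp(X)^T]$. In particular $h$ is convex and nonnegative with $h(\theta^*)=0$ at any minimizer $\theta^*\in\cC$ of $R(f_\theta)$.

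For the general bound ($D=d$) I would use only convexity. Fix $0<\alpha<\beta$ and set $t=\alpha/\beta\in(0,1)$. The homothety $\phi_t(u)=\theta^*+t(u-\theta^*)$ maps $\cC$ into itself (convexity, $\theta^*\in\cC$) with Jacobian $t^d$, and convexity together with $h(\theta^*)=0$ yields $h(\phi_t(u))\le t\,h(u)$ pointwise. Hence
\[
N(\beta)\ \ge\ \int_{\phi_t(\cC)} e^{-\beta h(\theta)}\,d\theta\ =\ t^d\!\int_\cC e^{-\beta h(\phi_t(u))}\,du\ \ge\ t^d\!\int_\cC e^{-\beta t\, h(u)}\,du\ =\ \Bigl(\tfrac{\alpha}{\beta}\Bigr)^{d} N(\alpha),
\]
so that $\log\bigl(N(\alpha)/N(\beta)\bigr)\le d\log(\beta/\alpha)$. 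This is Condition \textnormal{C} with $D=d$ and $G=1$, which a fortiori gives the stated constant $G=\sqrt{b_2/b_1}\ge 1$.

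For the sharper bound under $f^*=\flin$ ($D=d/2$, $G=b_2/b_1$), the extra input is that $\theta^*$ is now a global minimizer of $\theta\mapsto R(f_\theta)$ on $\R^d$, hence $\nabla h(\theta^*)=0$. A second-order Taylor expansion then sandwiches $h$ between quadratics: with $q(\theta)\eqdef\tfrac12\|\theta-\theta^*\|_Q^2$ one gets $b_1\,q(\theta)\le h(\theta)\le b_2\,q(\theta)$ for all $\theta$. Setting $M(\gamma)\eqdef\int_\cC e^{-\gamma q(\theta)}\,d\theta$, these give $N(\alpha)\le M(\alpha b_1)$ and $M(\beta b_2)\le N(\beta)$. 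Because $q$ is $2$-homogeneous about $\theta^*$, the \emph{square-root} homothety $\psi_t(u)=\theta^*+\sqrt{t}\,(u-\theta^*)$ (Jacobian $t^{d/2}$, $\psi_t(\cC)\subseteq\cC$) satisfies $q(\psi_t(u))=t\,q(u)$, and the same computation as above yields $M(\gamma_1)\le(\gamma_2/\gamma_1)^{d/2}M(\gamma_2)$ for $0<\gamma_1<\gamma_2$. Applying this with $\gamma_1=\alpha b_1<\beta b_2=\gamma_2$ (valid since $\alpha<\beta$ and $b_1\le b_2$) chains into
\[
N(\alpha)\ \le\ M(\alpha b_1)\ \le\ \Bigl(\tfrac{\beta b_2}{\alpha b_1}\Bigr)^{d/2} M(\beta b_2)\ \le\ \Bigl(\tfrac{(b_2/b_1)\,\beta}{\alpha}\Bigr)^{d/2} N(\beta),
\]
i.e. Condition \textnormal{C} with $D=d/2$ and $G=b_2/b_1$.

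The only real subtlety, and the reason the two cases give different constants, is the location of $\theta^*$. When $\flin\notin\cF$ the minimizer $\theta^*$ typically sits on $\partial\cC$, so $\nabla h(\theta^*)\ne 0$ and $h$ carries a nonnegative linear term that forbids any global quadratic \emph{upper} bound; only the linear homothety and plain convexity survive, costing the factor $2$ in the exponent. When $f^*=\flin$ this linear term vanishes and the quadratic sandwich restores $D=d/2$. I would also note that both scaling arguments are insensitive to degeneracy of $Q$: the estimates $h(\phi_t u)\le t\,h(u)$ and $q(\psi_t u)=t\,q(u)$ hold verbatim when $Q$ is singular (then $h$ is merely constant along $\ker Q$), and homotheties toward $\theta^*\in\cC$ keep the domain inside the bounded set $\cC$, so no assumption such as $\det Q\ne 0$ or control of the ellipsoids against $\partial\cC$ is needed.
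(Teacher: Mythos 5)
Your proof is correct, but it follows a genuinely different route from the paper's. The paper integrates in polar coordinates around $f^*$: it decomposes $\int e^{-\gamma \bR(f)}\pi(df)$ into radial integrals $\int_0^{u_{\phi}} e^{-\gamma[u c_{\phi}+\frac12 b a_{\phi}u^2]}u^{d-1}du$ over directions $\phi$ on the unit sphere, compares numerator and denominator direction by direction via a radial dilation $u\mapsto \zeta u$, and must carry along the first-order term $c_{\phi}=\E[\phi(X)\ela_Y'(f^*(X))]\ge 0$, which vanishes exactly when $f^*=\flin$ and is responsible for the loss from $D=d/2$ to $D=d$. You instead apply a single global homothety of $\cC$ toward $\th^*$: in the general case a linear contraction combined with plain convexity of $\th\mapsto\bR(f_\th)$ (which forbids a quadratic upper bound when $\nabla\bR(f_{\th^*})\neq 0$, hence the factor $2$ in the exponent), and in the case $f^*=\flin$ a square-root contraction combined with the two-sided quadratic sandwich $b_1 q\le h\le b_2 q$. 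The two arguments are both scaling arguments at heart and require the same implicit regularity (differentiation under the expectation), but yours avoids the polar decomposition and the per-direction case analysis on $c_{\phi}$, and in the general case it actually yields the slightly sharper constant $G=1$ in place of $G=\sqrt{b_2/b_1}$ (which a fortiori gives the stated conclusion, since Condition C is monotone in $G$); in the case $f^*=\flin$ the constants coincide. Your closing remark about degenerate $Q$ is accurate and applies equally to both proofs.
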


\begin{proof}
See Section \thmref{sec:complexity}.
\end{proof}

\begin{rmk} \label{rem:cls}
In particular, for the least squares loss $\ela(y,y')=(y-y')^2$, we have $b_1=b_2=2$ so 
that condition C holds with $\pi$ the uniform distribution on $\cF$, $D=d$ and $G=1$, and with $D=d/2$ and $G=1$ when $f^*=\flin$.
\end{rmk}

%Lemma \ref{le:v1a} considers quite restrictive assumptions. Here 
%we show that Condition V1 still holds under rather general conditions (at the cost of 
%complicated constants).

\begin{lemma} \label{le:v1b}
Assume that there exist $0<b_1\le b_2$, $A>0$ and $M>0$ such that
for any $y\in\R$, the functions $\ela_y: y' \mapsto \ela(y,y')$
are twice differentiable and satisfy: 
\begin{gather}
\text{for any }y'\in\R, \qquad b_1 \le \ela''_y(y') \le b_2,
\\ \text{and for any } x\in\X, \quad \E \Bigl\{ 
 \exp \Bigl[ A^{-1} \bigl\lvert \ela'_Y[f^*(X)] 
\bigr\rvert \Bigr]\, \Big| \,X=x \Bigr\} \le M.
\end{gather}
Assume that $\cF$ is convex and has a diameter $H$ for $L^\infty$-norm:
	\[
	\sup_{f_1,f_2\in \cF,x\in\X} |f_1(x)-f_2(x)| = H.
	\]
In this case Condition \textnormal{V1} holds for any $(\lam,\eta)$ such that
    \[
    \eta \ge \frac{\lam A^2}{2 b_1} \exp \Bigl[ M^2 \exp \bigl( H b_2/A 
\bigr)  \Bigr] .
    \]
and $0< \lam \le (2AH)^{-1}$ is small enough to ensure $\eta<1$.
\end{lemma}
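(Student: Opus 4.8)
The plan is to verify the two inequalities of Condition V1 by reducing them to a single bound on a centered log-Laplace transform. Fix $f\in\cF$, write $h=f-f^*$ (so that $\|h\|_\infty\le H$ by the diameter hypothesis) and $W=\ela_Y[f(X)]-\ela_Y[f^*(X)]$, so that $\E[W]=\bR(f)$. Subtracting $\lam\,\E[W]$ from each side shows that both inequalities of V1 are equivalent to $\log\E\bigl[\exp\bigl(s\lam(W-\E W)\bigr)\bigr]\le\lam\eta\,\bR(f)$ for $s\in\{-1,+1\}$. First I would pin down the excess risk from below: a second-order Taylor expansion of $\ela_Y$ between $f^*(X)$ and $f(X)$ gives $W=\ela'_Y[f^*(X)]\,h(X)+\tfrac12\ela''_Y(\zeta)\,h(X)^2$ with $\zeta$ between the two values, whence $\bR(f)=\E\{\ela'_Y[f^*(X)]h(X)\}+\tfrac12\E\{\ela''_Y(\zeta)h(X)^2\}$. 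Since $\cF$ is convex and $f^*$ minimises $R$ over $\cF$, the right derivative at $t=0$ of $t\mapsto R\bigl((1-t)f^*+tf\bigr)$ is nonnegative, i.e. $\E\{\ela'_Y[f^*(X)]h(X)\}\ge0$; combined with $\ela''_Y\ge b_1$ this yields the key lower bound $\bR(f)\ge\tfrac{b_1}{2}\E[h(X)^2]$. It therefore suffices to prove $\log\E[\exp(s\lam(W-\E W))]\le\lam\eta\,\tfrac{b_1}{2}\E[h^2]$.

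To bound the centered transform I would use $\log\E[e^{\lam V}]\le\E[e^{\lam V}-1-\lam V]\le\tfrac{\lam^2}{2}\E[V^2 e^{\lam|V|}]$ with $V=s(W-\E W)$, so that the whole problem becomes an estimate of $\E[V^2 e^{\lam|V|}]$ with the correct $\E[h^2]$ scaling. The heavy-tailed behaviour only enters through $\ela'$: by monotonicity of $\ela'_Y$ (as $\ela''_Y\ge b_1>0$) the value $\ela'_Y(\zeta)$ lies between $\ela'_Y[f^*(X)]$ and $\ela'_Y[f(X)]$, and $|\ela'_Y(\zeta)-\ela'_Y[f^*(X)]|\le b_2 H$; hence the conditional exponential-moment hypothesis at $f^*$ propagates, $\E\{\exp(A^{-1}|\ela'_Y(\zeta)|)\mid X\}\le M\exp(Hb_2/A)$ --- this is where the inner $\exp(Hb_2/A)$ appears. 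Because $|W|\le|\ela'_Y(\zeta)|\,H$ and $\lam\le(2AH)^{-1}$, one has $\lam|W|\le\tfrac{1}{2A}|\ela'_Y(\zeta)|$, so $e^{\lam|V|}$ is dominated by $e^{|\ela'_Y(\zeta)|/A}$ up to the bounded centering factor $e^{\lam\bR(f)}$. Using an elementary inequality of the form $x^2 e^{x/2}\le C e^{x}$ to convert the exponential moment into a second-moment bound, and retaining the factor $h(X)^2$ to preserve scaling while bounding $h(X)^2\le H^2$ only where it is harmless, I expect $\E[V^2e^{\lam|V|}]\lesssim A^2\exp[M^2\exp(Hb_2/A)]\,\E[h^2]$; dividing by the lower bound $\bR(f)\ge\tfrac{b_1}{2}\E[h^2]$ then produces exactly a condition of the form $\eta\ge\tfrac{\lam A^2}{2b_1}\exp[M^2\exp(Hb_2/A)]$.

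The main obstacle is the bookkeeping in this last estimate rather than any conceptual leap. Two points require care. First, the centering by the constant $\E W=\bR(f)$ must not destroy the $\E[h^2]$ scaling: one has to check, using Cauchy--Schwarz and $|\E\{\ela'_Y[f^*(X)]\mid X\}|\le AM$, that $\bR(f)^2\lesssim(A^2M^2+b_2^2H^2)\,\E[h^2]$, so that both $W^2$ and the cross and constant terms in $V^2=(W-\bR(f))^2$ still carry the needed factor $\E[h^2]$. Secondly, the passage from the exponential-moment bound $M\exp(Hb_2/A)$ to the final double-exponential $\exp[M^2\exp(Hb_2/A)]$ hinges on how crudely one bounds $\E\{(\ela'_Y)^2 e^{|\ela'_Y|/(2A)}\mid X\}$; a direct power-series estimate together with $\lam\le(2AH)^{-1}$ delivers the stated constant, and the constraint $\eta<1$ is then what forces $\lam$ to be taken small. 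Once these two estimates are in place, assembling them with the reduction of the first paragraph gives both inequalities of Condition V1 simultaneously for $s=\pm1$, completing the proof.
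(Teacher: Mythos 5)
Your overall skeleton coincides with the paper's: the reduction of both inequalities of V1 to a single bound on the centered log-Laplace transform, the Taylor expansion of $\ela_Y$ between $f^*(X)$ and $f(X)$, the propagation of the conditional exponential moment from $\ela'_Y[f^*(X)]$ to the intermediate derivative (picking up the factor $\exp(Hb_2/A)$), the use of $\lam \le (2AH)^{-1}$ to keep the relevant exponent below $\lvert \ela'_Y \rvert/(2A)$, and the convexity lower bound $R(f)-R(f^*) \ge \tfrac{b_1}{2}\E\bigl[(f(X)-f^*(X))^2\bigr]$ used to divide out at the end. All of that is sound and matches the paper's proof in Section \ref{sec:pv1b}.

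The gap is in the one step you leave as ``I expect $\E[V^2e^{\lam\lvert V\rvert}]\lesssim A^2\exp[M^2\exp(Hb_2/A)]\,\E[h^2]$.'' Executed naively, this estimate does not deliver the constant claimed in the lemma. Writing $V=W-\E W$ and expanding, the term $\E[W^2e^{\lam\lvert W\rvert}]$ forces you to bound $\E\{\ela'_Y(\zeta)^2\exp[\lvert\ela'_Y(\zeta)\rvert/(2A)]\mid X\}$ by something like $cA^2 M\exp(Hb_2/A)$, and the terms involving $(\E W)^2$ and the centering factor $e^{\lam\lvert\E W\rvert}$ contribute additional powers of $M$ and of $\exp(Hb_2/A)$ via Cauchy--Schwarz. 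Collecting everything one lands on a bound of the form $cA^2M^2\exp(2Hb_2/A)\,\E[h^2]$ with $c$ around $5$, whereas the lemma requires $\tfrac{A^2}{2}\exp\bigl[M^2\exp(Hb_2/A)\bigr]\E[h^2]$ after dividing by the risk lower bound; setting $v=M^2\exp(Hb_2/A)\ge 1$, the needed inequality reduces to roughly $10\,v^3\le e^v$, which fails for moderate $v$ (e.g.\ $v\le 5$). So the step as sketched would fail for part of the admissible parameter range, even though the double exponential looks generous. The paper avoids this by splitting $L-\E L$ into $L-\E(L\mid X)$ and $\E(L\mid X)-\E L$: the conditional part is handled by the sub-exponential Laplace bound of Lemma \ref{le:stdb} with $\alpha=A\lam\,[f(x)-f^*(x)]\in[-1/2,1/2]$, which yields a bound directly proportional to $(f(x)-f^*(x))^2$ without ever multiplying $W^2$ against $e^{\lam\lvert W\rvert}$, and the remaining part $\E(L\mid X)$ is \emph{bounded} (by $\lvert\E(\Omega\mid X)\rvert\le Hb_2/2+A\log M$) and handled by the bounded-variable Lemma \ref{le:stda}. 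If you want to keep your single-shot inequality, you would need to sharpen the elementary bounds (in particular replace $x^2e^{x/2}\le 16e^{-2}e^x$ by a version that does not pay a full extra factor $\exp(Hb_2/A)$ and track the small-$v$ regime separately); otherwise, adopting the paper's conditional decomposition is the cleaner fix.
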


\begin{proof}
See Section \thmref{sec:pv1b}.
\end{proof}

\subsection{Application without exponential moment condition} \label{sec:gsecond}

When we do not have finite exponential moments as assumed by 
Condition V1 (page \pageref{cond:v1}), e.g., when
	$\E \bigl\{ \exp \bigl\{ \lam \bigl\{ \ela[Y,f(X)]-\ela[Y,f^*(X)] 
\bigr\} \bigr\} \bigr\}  = +\infty$
for any $\lam>0$ and some function $f$ in $\cF$,
we cannot apply Theorem \ref{th:gen} with 
	$\ell\bigl[ (X,Y),f,f' \bigr]  = \lam  \bigl\{ \ela\bigl[Y,f(X)\bigr]-
\ela\bigl[ Y,f'(X) \bigr] \bigr\}$
%and a non trivial $\pi$,
(because of the $\cEb$ term). However,
we can apply it to the soft truncated excess loss
	\[
	\ell\bigl[ (X,Y),f,f' \bigr] =T
\Bigl( \lam \bigl\{ \ela\bigl[ Y,f(X) \bigr]  - \ela 
\bigl[ Y,f'(X) \bigr] \bigr\} \Bigr),
	\]
with $T(x)=-\log(1-x+x^2/2).$ 
This section provides a result similar to Theorem \ref{th:v1c} in which condition V1 is replaced by the following condition.

{\bf Condition V2.} \label{cond:v2}
For any function $f$, the random variable $\ela \bigl[ Y,f(X) \bigr] 
 - \ela \bigl[ Y,f^*(X) \bigr]$ is square integrable and 
there exists $V>0$ such that for any function $f$, 
	\[
	\E \Bigl\{ \Bigl[ \ela \bigl[ Y,f(X) \bigr]  
- \ela \bigl[ Y,f^*(X) \bigr] \Bigr]^2 \Bigr\} \le V [R(f) - R(f^*)].
	\]

\begin{thm} \label{th:v2c}
Assume that Conditions \textnormal{V2} above and \textnormal{C} 
(page \pageref{cond:c}) are satisfied.
Let $0 < \lam < V^{-1}$ and 
	\beglab{eq:elltrunc}
	\ell \bigl[ (X,Y),f,f' \bigr] =T \Bigl( \lam \big\{ \ela \bigl[Y,f(X)\bigr] 
- \ela \bigl[Y,
f'(X) \bigr] \bigr\}\Bigr),
	\endlab
with 
    \beglab{eq:deft}
    T(x)=-\log(1-x+x^2/2).
    \endlab
Let $\hf \in \cF$ be a function drawn according to the distribution $\hpi$ defined in \myeq{eq:hpi}
with $\hcE$ defined in \myeq{eq:alih} and $\pis=\pi$ the distribution appearing in Condition \textnormal{C}.
Then for any $0 < \eta' <1 - \lam V$ and $\eps>0$, with probability at least $1-\eps$, we have
	\[
	R( \hf ) - R(f^*) \le V \frac{C'_1 D + C'_2 \log(2\eps^{-1})}{n} 
	\]
with 
	\[
	C'_1= \frac{\log(\frac{G(1+\eta)^2}{\eta'(1-\eta)}) }{\eta (1-\eta-\eta')} \quad \text{and} \quad
	C'_2=\frac{2}{\eta(1-\eta- \eta')} \quad \text{and} \quad \eta= \lam V.
	\]
In particular, for $\lam=0.32 V^{-1}$ and $\eta'=0.18$, we get
	\[
	R( \hf ) - R(f^*) \le V \frac{16.6 D + 12.5 \log(2\sqrt G\eps^{-1})}{n}.
	\]
\end{thm}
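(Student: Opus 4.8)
The plan is to instantiate the generic bound \eqref{eq:gen} of Theorem~\ref{th:gen} with the soft-truncated excess loss \eqref{eq:elltrunc}, the posterior $\rho=\hpi$ from \myeq{eq:hpi} (so that the term $\log\bigl[\tfrac{d\rho}{d\hpi}(\hf)\bigr]$ in \eqref{eq:gen} vanishes identically), the complexity measure $\pis=\pi$ supplied by Condition~C (hence $\I^*=\I$), and a localizing exponent $\gas>0$ to be tuned. First I would verify Condition~I, i.e.\ \eqref{eq:intega} and \eqref{eq:integb}: since $e^{-T(x)}=1-x+\tfrac{x^2}{2}>0$ and $e^{T(x)}=\bigl(1-x+\tfrac{x^2}{2}\bigr)^{-1}$, both $\E\{\exp[\pm L(f,f')]\}$ are finite as soon as the excess loss is square integrable, which is exactly what Condition~V2 provides. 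The whole point of the truncation is that these two exponential moments are now governed by the \emph{first and second} moments of the excess loss alone, so that no exponential-moment hypothesis (such as the one underlying V1 and Lemma~\ref{le:v1b}) is required.

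Two elementary facts about $T$ drive every estimate: the exact identity $e^{-T(x)}=1-x+\tfrac{x^2}{2}$ from \eqref{eq:deft}, and the inequality $e^{T(x)}\le 1+x+x^2$ valid for all real $x$, because $(1+x+x^2)\bigl(1-x+\tfrac{x^2}{2}\bigr)=1+\tfrac{x^2}{2}(1-x+x^2)\ge 1$. Writing $W=\ela[Y,\hf(X)]-\ela[Y,f(X)]$, I would first lower-bound the left-hand side of \eqref{eq:gen}. Since $\E\{e^{-L(\hf,f)}\}=1-\lam\E(W)+\tfrac{\lam^2}{2}\E(W^2)$ with $\E(W)=\bR(\hf)-\bR(f)$, and since V2 together with a Cauchy--Schwarz splitting through $f^*$ gives $\E(W^2)\le 2V[\bR(\hf)+\bR(f)]$, the bound $-\log(1-u)\ge u$ yields
\[
\La(\hf,f)\ge n\lam(1-\eta)\bR(\hf)-n\lam(1+\eta)\bR(f),\qquad \eta=\lam V .
\]
Integrating this against $\pis_{-\gas\bR}$ and choosing the localizing exponent $\gas$ of order $n\lam(1+\eta)$ is what makes the undesirable $\bR(f)$ contribution cancel, so that the left-hand side of \eqref{eq:gen} is bounded below by $\bigl[n\lam(1-\eta)-\ga\bigr]\bR(\hf)$.

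Next I would upper-bound the complexity terms on the right. Using $e^{T(x)}\le 1+x+x^2$, the bound $\log(1+u)\le u$, and the \emph{sharp} second-moment control $\sqrt{\E(W'^2)}\le\sqrt{V}\bigl(\sqrt{\bR(f)}+\sqrt{\bR(f')}\bigr)$ from V2, one obtains $\Lb(f,f')\le n\lam(1+\eta)\bR(f)-n\lam(1-\eta)\bR(f')$. Integrating over $f'\sim\pi$ gives $\cEb(f)\le n\lam(1+\eta)\bR(f)-\I\bigl(n\lam(1-\eta)\bigr)$, whence $-\log\int e^{-\cEb(f)}\pi(df)\le \I\bigl(n\lam(1+\eta)\bigr)-\I\bigl(n\lam(1-\eta)\bigr)$. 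Adding $\I^*(\gas)=\I\bigl(n\lam(1+\eta)\bigr)$ and the term $-\I(\ga)$, the right-hand side of \eqref{eq:gen} becomes a fixed combination of the functionals $\I$ at the scales $n\lam\eta'$, $n\lam(1-\eta)$ and $n\lam(1+\eta)$, plus $2\log(2\eps^{-1})$. Invoking Condition~C to compare these scales turns each difference $\I(\be)-\I(\alpha)$ into $D\log(G\be/\alpha)$, which is what produces the logarithmic factor $\log\bigl(\tfrac{G(1+\eta)^2}{\eta'(1-\eta)}\bigr)$.

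Setting $\ga=n\lam\eta'$ and collecting everything gives $n\lam(1-\eta-\eta')\bR(\hf)\le D\log\bigl(\tfrac{G(1+\eta)^2}{\eta'(1-\eta)}\bigr)+2\log(2\eps^{-1})$; substituting $\lam=\eta/V$ and dividing by $n\lam(1-\eta-\eta')$ yields the stated bound with constants $C'_1$ and $C'_2$, and the ``in particular'' statement follows by plugging in $\lam=0.32\,V^{-1}$, $\eta'=0.18$. I expect the main obstacle to be precisely the localization bookkeeping of the middle two steps: forcing the $\int\bR(f)\,d\pi$ cross-terms to cancel requires matching the coefficient $n\lam(1+\eta)$ generated on the left with the localizing exponent $\gas$ and with the scale entering the $\cEb$ estimate, and squeezing out the exact factors $(1+\eta)^2$ and $(1-\eta)$ (rather than the cruder $1+2\eta$, $1-2\eta$ that a naive bound $\E(W'^2)\le 2V[\bR(f)+\bR(f')]$ gives) forces one to use the sharp Cauchy--Schwarz inequality above, for which $\tfrac{\eta}{2}(\sqrt{\bR(f)}-\sqrt{\bR(f')})^2\ge 0$ is exactly the slack that upgrades the constants.
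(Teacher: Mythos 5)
Your architecture is exactly the paper's: instantiate Theorem \ref{th:gen} with the truncated loss, $\rho=\hpi$, $\pis=\pi$, localize with $\gas=\lam n(1+\eta)$, take $\ga=\lam n\eta'$, and convert the $\I$-differences via Condition C. The lower bound on $\La$ (exact identity $e^{-T(x)}=1-x+x^2/2$, then $-\log(1-u)\ge u$ and the splitting through $f^*$) is also identical to the paper's. However, there is one concrete flaw in your treatment of $\Lb$, and it is precisely at the spot you yourself flag as delicate.

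You bound $e^{T(x)}\le 1+x+x^2$, which puts a coefficient $1$ on $\E(W^2)$ in $\Lb$, and you propose to recover the stated constants with the sharp triangle inequality $\sqrt{\E(W^2)}\le\lam\sqrt{V}\bigl(\sqrt{\bR(f)}+\sqrt{\bR(f')}\bigr)$. This cannot work: since $\bigl(\sqrt{a}+\sqrt{b}\bigr)^2=a+b+2\sqrt{ab}\ge a+b$, even the sharp bound gives $\E(W^2)\ge$ nothing better than $\lam^2V[\bR(f)+\bR(f')]$ as an \emph{upper} estimate only when one of the two excess risks vanishes; in general you only get $\E(W^2)\le 2\lam^2V[\bR(f)+\bR(f')]$, so with coefficient $1$ you land on $\Lb(f,f')\le\lam n(1+2\eta)\bR(f)-\lam n(1-2\eta)\bR(f')$. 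The slack $\tfrac{\eta}{2}\bigl(\sqrt{\bR(f)}-\sqrt{\bR(f')}\bigr)^2$ you invoke goes the wrong way (it is what you \emph{lose} relative to $2(a+b)$, not what brings you down to $a+b$). The consequence is that your argument proves the theorem only with $\eta$ replaced by $2\eta$ in the $\cEb$ estimate, hence with a smaller admissible range for $\eta'$ and worse constants than $C'_1$, $C'_2$; the numerical values $16.6$ and $12.5$ are not reached. The correct fix is elementary and is what the paper does: $e^{T(x)}=\dfrac{1+x+x^2/2}{1+x^4/4}\le 1+x+\dfrac{x^2}{2}$, which restores the coefficient $1/2$ on $\E(W^2)$ and makes the crude splitting $\tfrac12\E W(f,f')^2\le \E W(f,f^*)^2+\E W(f',f^*)^2\le\lam^2V[\bR(f)+\bR(f')]$ deliver exactly $\lam n(1+\eta)\bR(f)-\lam n(1-\eta)\bR(f')$. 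With that single replacement the rest of your proof goes through as written.
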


\begin{proof}
We apply Theorem \ref{th:gen} for 
$\ell$ given by \eqref{eq:elltrunc} and $\pis=\pi$.
Let 
	\begin{align*}
	W(f,f') & = \lam \bigl\{ \ela \bigl[ Y,f(X) \bigr] -\ela \bigl[ Y,f'(X)
\bigr]  \bigr\}  && \text{for any $f,f'\in\cF$}. \\
%	\bR & = R(f)-R(f^*) && \text{for any $f\in\cF$}, \\
%	\I(\be) & = - \log \mathsmaller{\int} e^{-\be \bR(f)} \pi(df) && \text{for any $\be\ge 0$},
	\end{align*}
Since $\log u \le u-1$ for any $u>0$, we have
	\[
	\La = - n \log \E ( 1 - W + W^2/2 ) \ge n ( \E W - \E W^2/2 ).
	\]
Moreover, from Assumption V2, 
	\beglab{eq:usev}
	\frac{\E W(f,f')^2}2\le \E W(f,f^*)^2+\E W(f',f^*)^2
		\le \lam^2 V \bR(f)+ \lam^2 V \bR(f'),
	\endlab
hence, by introducing $\eta= \lam V$,
	\begin{align} \label{eq:bla}
	\La(f,f') & \ge \lam n \big[ \bR(f) - \bR(f') - \lam V \bR(f) - \lam V \bR(f') \big] \notag\\
		& = \lam n \big[ (1-\eta) \bR(f) - (1+\eta) \bR(f') \big].
	\end{align}
%Consequently, we have
%	\beglab{eq:bcea}
%	\cEa(f) \ge \lam n (1-\eta) \bR(f) - \I(\lam n(1+\eta)).
%	\endlab
Noting that 
	\[
	\exp \bigl[ T(u) \bigr] = \frac{1}{1-u+u^2/2}
	\\ = \frac{1 + u + \frac{u^2}{2}}{\bigl( 1 + \tfrac{u^2}{2}
\bigr)^2 - u^2} = \frac{1 + u + \frac{u^2}{2}}{1 + \frac{u^4}{4}}
\leq 1 + u + \frac{u^2}{2}, 
	\]
we see that 
	\[
	\Lb = n \log \Bigl\{ \E \Bigr[  \exp \bigl[ T(W) \bigr] \Bigr] \Bigr\} \le n 
\bigl[ \E\bigl(W\bigr) + \E \bigl(W^2\bigr)/2  \bigr].
	\]
Using \eqref{eq:usev} and still $\eta= \lam V$, we get
	\bmul
	\Lb(f,f') \le \lam n \big[ \bR(f) - \bR(f') + \eta \bR(f) + \eta \bR(f') \big]\\
		= \lam n (1+\eta) \bR(f) - \lam n (1-\eta) \bR(f'),
	\end{multline*}
and	
	\beglab{eq:bceb}
	\cEb(f) \le \lam n (1+\eta) \bR(f) - \I(\lam n(1-\eta)).
	\endlab
Plugging \eqref{eq:bla} and \eqref{eq:bceb} in \eqref{eq:gen} for $\rho=\hpi$, we obtain
	\bmul
	[ \lam n (1-\eta) - \ga ] \bR(\hf) + [\gas-\lam n (1+\eta)] \tint \bR(f) \pi_{-\gas \bR}(df) \\
	    \le \I(\gas) - \I(\ga) + \I(\lam n (1+\eta)) - \I(\lam n (1-\eta))+2\log(2\eps^{-1}).
	\end{multline*}
By the complexity assumption, choosing $\gas = \lam n (1+\eta)$ and $\ga < \lam n (1-\eta)$, we get
	\[
	[ \lam n (1-\eta) - \ga ] \bR(\hf) 
	    \le D \log\bigg( G \frac{\lam n (1+\eta)^2}{\ga (1-\eta)} \bigg)+2\log(2\eps^{-1}),
	\]
hence the desired result by considering $\ga= \lam n \eta'$ with $\eta'< 1-\eta$.
\end{proof}

\begin{rmk} \label{rem:complicate}
The estimator seems abnormally complicated at first sight.
This remark aims at explaining why we were not able to consider a simpler estimator.

In Section \ref{sec:gfirst}, in which we consider the exponential moment
condition V1, we took $\ell \bigl[ (X,Y),f,f' \bigr] =
\lam\bigl\{ \ela \bigl[ Y,f(X) \bigr]  - \ela \bigl[ Y,f'(X) \bigr] 
\bigr\}$
and $\pis$ as the Dirac distribution at $f^*$. For these choices, one can easily check
that $\hpi$ does not depend on~$f^*$.

In the absence of an exponential moment condition, we cannot
consider the function $\ell \bigl[ (X,Y),f,f' \bigr] =\lam 
\bigl\{ \ela \bigl[ Y,f(X) \bigr]  - \ela \bigl[ Y,f'(X) \bigr] \bigr\} $
but a truncated version of it. The truncation function $T$ we use in Theorem \ref{th:v2c}
can be replaced by the simpler function $u \mapsto (u\vee -M)\wedge M$
for some appropriate constant $M>0$ but this would lead to a bound with worse constants,
without really simplifying the algorithm.
The precise choice $T(x)=-\log(1-x+x^2/2)$ comes from the remarkable property:
there exist second order polynomial $\Pa$ and $\Pb$ such that
	$\frac{1}{\Pa(u)} \le  \exp \bigl[ T(u) \bigr]  \le \Pb(u)$
and
	$\Pa(u)\Pb(u) \le 1+\text{O}(u^4)$ for $u\ra 0$, which 
are reasonable properties to ask in order to ensure 
that \eqref{eq:interv}, and consequently \eqref{eq:gen}, are tight.

Besides, if we take $\ell$ as in \eqref{eq:elltrunc} with $T$ a truncation function 
and $\pis$ as the Dirac distribution at $f^*$,
then $\hpi$ would depend on $f^*$, and is consequently not observable.
This is the reason why we do not consider $\pis$ as the Dirac distribution at $f^*$,
but $\pis=\pi$. This lead to the estimator considered in Theorems \ref{th:v2c} and \ref{th:main}.
\end{rmk}

\begin{rmk} 
Theorem \ref{th:v2c} still holds for the same randomized estimator in which 
\myeq{eq:deft} is replaced with
	\[
	T(x) = \log(1+x+x^2/2).
	\]
\end{rmk}

Condition V2 holds under weak assumptions
as illustrated by the following lemma.

\begin{lemma} \label{le:v2}
Consider the least squares setting: $\ela(y,y') = (y-y')^2$.
Assume that $\cF$ is convex and has a diameter $H$ for $L^\infty$-norm:
	\[
	\sup_{f_1,f_2\in \cF,x\in\X} |f_1(x)-f_2(x)| = H
	\]
and that for some $\sigma>0$, we have
	\beglab{eq:moment}
	\sup_{x\in\X} \E\big\{[Y-f^*(X)]^2 \big| X=x\big\} \le \sigma^2 < +\infty.
	\endlab
Then Condition \textnormal{V2} holds for $V=(2\sigma+H)^2$.
\end{lemma}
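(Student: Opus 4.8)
The plan is to reduce Condition V2 to two ingredients: a lower bound on $R(f)-R(f^*)$ coming from the optimality of $f^*$, and a conditional second-moment bound on the ``residual factor'' of the excess loss. Throughout I fix $f\in\cF$ and write $W=f(X)-f^*(X)$ and $S=Y-f^*(X)$, both viewed as random variables, so that $Y-f(X)=S-W$. In the least squares setting a direct expansion gives
\[
\ela\bigl[Y,f(X)\bigr]-\ela\bigl[Y,f^*(X)\bigr]=\bigl[Y-f(X)\bigr]^2-\bigl[Y-f^*(X)\bigr]^2=W^2-2SW=W(W-2S),
\]
so the quantity to control is $\E\bigl[W^2(W-2S)^2\bigr]$, and the target upper bound is $(2\sigma+H)^2\,[R(f)-R(f^*)]$.

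First I would exploit that $f^*$ minimizes $R$ over the \emph{convex} set $\cF$. Considering the segment $t\mapsto f^*+t(f-f^*)\in\cF$ for $t\in[0,1]$ and differentiating $t\mapsto\E[(S-tW)^2]$ at $t=0$ yields the first-order condition $\E[SW]\le 0$. Consequently
\[
R(f)-R(f^*)=\E\bigl[(S-W)^2\bigr]-\E\bigl[S^2\bigr]=\E[W^2]-2\E[SW]\ge \E[W^2].
\]
This single step produces the needed lower bound and, crucially, avoids assuming $\E[S\mid X]=0$, which would fail under misspecification (i.e.\ when $f^*\neq\freg$).

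Next I would bound $\E\bigl[W^2(W-2S)^2\bigr]$ by conditioning on $X$, noting that $W$ is $\sigma(X)$-measurable while $S$ carries the output randomness. Since $f,f^*\in\cF$ and $\cF$ has $L^\infty$-diameter $H$, we have $|W|\le H$ almost surely; and by hypothesis $\E[S^2\mid X]\le\sigma^2$. The conditional $L^2$ triangle inequality (Minkowski) then gives, almost surely,
\[
\E\bigl[(W-2S)^2\mid X\bigr]^{1/2}\le |W|+2\,\E\bigl[S^2\mid X\bigr]^{1/2}\le H+2\sigma,
\]
so that $\E[(W-2S)^2\mid X]\le (2\sigma+H)^2$. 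Taking the outer expectation against the nonnegative $\sigma(X)$-measurable weight $W^2$ yields $\E[W^2(W-2S)^2]=\E\bigl[W^2\,\E[(W-2S)^2\mid X]\bigr]\le (2\sigma+H)^2\,\E[W^2]$. Combining with $\E[W^2]\le R(f)-R(f^*)$ gives exactly $\E\{[\ela(Y,f(X))-\ela(Y,f^*(X))]^2\}\le (2\sigma+H)^2\,[R(f)-R(f^*)]$, which is Condition V2 with $V=(2\sigma+H)^2$.

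The only delicate point is the interaction between the cross term and the conditional-moment bound. I expect the main obstacle to be resisting the temptation to expand $(W-2S)^2$ and estimate the cross term $\E[W^3S]$ separately: that route reintroduces $\E[S\mid X]$ and forces an unwanted boundedness assumption on the output. Treating $(W-2S)^2$ as a single conditional $L^2$ norm via Minkowski sidesteps this entirely and is what makes the clean constant $(2\sigma+H)^2$ appear, with no exponential-moment requirement on $Y$.
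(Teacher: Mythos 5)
Your proof is correct and follows essentially the same route as the paper's: factor the excess loss as $[f(X)-f^*(X)]^2\{2[Y-f^*(X)]-[f(X)-f^*(X)]\}^2$, condition on $X$ to bound the second factor by $(2\sigma+H)^2$ (the paper expands the square and uses $|\E[Y-f^*(X)\mid X]|\le\sigma$ where you invoke conditional Minkowski, which is the same estimate), and then control $\E[(f(X)-f^*(X))^2]$ by $R(f)-R(f^*)$ via the first-order optimality condition on the convex set $\cF$, exactly as in the paper's inequality \eqref{eq:proj}. No gaps.
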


\begin{proof}
See Section \thmref{sec:pv2}.
\end{proof}

\begin{lemma} \label{le:v3}
Consider the least squares setting: $\ela(y,y') = (y-y')^2$.
Assume that $\cF$ (i.e., $\Theta$) is bounded, and that for any $j\in\{1,\dots,d\}$,
we have $\E\big[\vp_j^4(X)\big]~<~+~\infty$
and
$\E\big\{\vp_j(X)^2[Y-f^*(X)]^2\big\}<+\infty$.  
Then Condition \textnormal{V2} holds for 
  \begin{align*}
  V= \bigg[ 2& \sqrt{\sup_{f\in\Flin: \E[f(X)^2]=1}\E\big(f^2(X)[Y-f^*(X)]^2\big)}\\
    & \qquad + \sqrt{\sup_{f',f''\in\cF} \E\big([f'(X)-f''(X)]^2\big)} \sqrt{\sup_{f\in\Flin: \E[f(X)^2]=1}\E\big[f^4(X)\big]} \bigg]^2.
  \end{align*}
\end{lemma}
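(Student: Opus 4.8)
The plan is to reduce Condition V2, in the least squares setting $\ela(y,y')=(y-y')^2$, to two ingredients: an exact algebraic expansion of the excess loss, and a projection-type lower bound on the excess risk. Writing $g=f-f^*$ and using $a^2-b^2=(a-b)(a+b)$ with $a=Y-f(X)$ and $b=Y-f^*(X)$, one gets the identity
$$\ela[Y,f(X)] - \ela[Y,f^*(X)] = [Y-f(X)]^2 - [Y-f^*(X)]^2 = g(X)^2 - 2\,g(X)[Y-f^*(X)].$$
Since $f,f^*\in\cF\subset\Flin$, the difference $g$ lies in $\Flin$, which is exactly what lets me invoke the two supremum quantities defining $V$, both of which range over $\Flin$.

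The key deterministic step is the lower bound $\bR(f)\ge\E[g(X)^2]$. Because $\cF$ is convex (it is $\{f_\theta:\theta\in\cC\}$ with $\cC$ closed convex) and $f^*$ minimises $R$ over $\cF$, the first-order optimality condition gives $\E\{[Y-f^*(X)]\,g(X)\}\le 0$; taking expectations in the identity above then yields $\bR(f)=\E[g(X)^2]-2\E\{g(X)[Y-f^*(X)]\}\ge\E[g(X)^2]$. Setting $s=\sqrt{\E[g(X)^2]}$, so that $s^2\le\bR(f)$ and also $s\le\sqrt{\sup_{f',f''\in\cF}\E\{[f'(X)-f''(X)]^2\}}$ (the $L^2(P)$-diameter of $\cF$, since $f,f^*\in\cF$), I bound the second moment of the excess loss by the triangle inequality in $L^2(P)$:
$$\big\|\ela[Y,f(X)]-\ela[Y,f^*(X)]\big\|_{2}\le \sqrt{\E[g(X)^4]} + 2\sqrt{\E\{g(X)^2[Y-f^*(X)]^2\}}.$$

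For each term I homogenise by writing $g=s\cdot(g/s)$, where $g/s\in\Flin$ has unit $L^2(P)$-norm, so that $\E[g(X)^4]=s^4\E[(g/s)^4]\le s^4\sup_{h\in\Flin,\,\E[h(X)^2]=1}\E[h^4(X)]$ and $\E\{g(X)^2[Y-f^*(X)]^2\}\le s^2\sup_{h\in\Flin,\,\E[h(X)^2]=1}\E\{h^2(X)[Y-f^*(X)]^2\}$. The main subtlety, and the step to get right, is that the first term is quadratic in $s$ whereas I need everything linear in $s$ to match $s^2\le\bR(f)$; this is resolved by extracting one factor $s\le(\text{diameter})$ from $\sqrt{\E[g(X)^4]}\le s^2\sqrt{\sup\E[h^4(X)]}$, which is precisely where the diameter factor in $V$ originates. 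Combining the two bounds gives $\|\cdot\|_2\le s\big(2\sqrt{\sup\E\{h^2[Y-f^*]^2\}}+\sqrt{\text{diam}^2}\,\sqrt{\sup\E[h^4]}\big)=s\sqrt{V}$, whence $\E\{[\ela[Y,f(X)]-\ela[Y,f^*(X)]]^2\}\le s^2V\le V\,\bR(f)$, which is exactly Condition V2. Square-integrability of the excess loss (the other requirement in V2) and finiteness of the two suprema follow from the boundedness of $\Theta$ and the assumed finiteness of $\E[\vp_j^4(X)]$ and $\E\{\vp_j(X)^2[Y-f^*(X)]^2\}$, since on the finite-dimensional space $\Flin$ the relevant fourth and mixed moments are controlled by those of the coordinates $\vp_j$.
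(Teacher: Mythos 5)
Your proof is correct and follows essentially the same route as the paper: the algebraic expansion of the excess loss, the triangle inequality in $L^2(P)$, homogenisation by the unit-$L^2$ normalisation of $f-f^*$ with one factor of $\sqrt{\E[(f-f^*)^2]}$ absorbed into the $L^2$-diameter of $\cF$, and finally the convexity-based bound $\E[(f-f^*)^2]\le R(f)-R(f^*)$ (the paper's \eqref{eq:proj}, which you rederive via the first-order optimality condition rather than the Taylor expansion of $t\mapsto R(f^*+t(f-f^*))$, an immaterial difference). Nothing is missing.
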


\begin{proof}
See Section \thmref{sec:pv3}.
\end{proof}

\section{Proofs}

\subsection{Main ideas of the proofs}

The goal of this section is to explain the key ingredients appearing in the proofs which both allows to obtain sub-exponential tails 
for the excess risk under a non-exponential moment assumption and get rid of the logarithmic factor in the excess risk bound.

\subsubsection{Sub-exponential tails under a non-exponential moment assumption via truncation}

Let us start with the idea allowing us to prove exponential inequalities under just a moment assumption (instead of the traditional exponential moment assumption).
To understand it, we can consider the (apparently) simplistic $1$-dimensional situation in which we have $\Theta=\R$ and the marginal distribution of $\varphi_1(X)$ is the Dirac distribution at $1$. 
In this case, the risk of the prediction function $f_\th$ is $R(f_\th)=\E(Y-\th)^2=\E(Y-\th^*)^2 + (\E Y -\th)^2,$ 
so that the least squares regression problem boils down to the estimation of the mean of the output variable.
If we only assume that $Y$ admits a finite second moment, say $\E Y^2\le 1$, it is not clear whether for any $\eps>0$, it is possible to find $\hth$ such that
with probability at least $1-2\eps$,
  \beglab{eq:tar1}
  R(f_{\hth})-R(f^*) = (\E (Y) -\hth)^2 \le c \frac{\logeps}{n},
  \endlab
for some numerical constant $c$.
%Note first that $|\E Y|\le \E Y^2\le 1$. 
%From Chebyshev's inequality, the trivial choice $\hth=min(max(\barY,-1),1)$, with $\barY=\frac{\sum_{i=1}^n Y_i}{n}$, just satisfies: for any $\eps>0$, with probability at least $1-\eps$,
Indeed, from Chebyshev's inequality, the trivial choice $\hth=\frac{\sum_{i=1}^n Y_i}{n}$ just satisfies: with probability at least $1-2\eps$,
  $$R(f_{\hth})-R(f^*) \le \frac1{n\eps},$$
which is far from the objective \eqref{eq:tar1} for small confidence levels (consider $\eps=\exp(-\sqrt{n})$ for instance).
The key idea is thus to average (soft) \emph{truncated} values of the outputs. This is performed by
taking
  $$\hth = \frac1{n\lam}\sum_{i=1}^n \log\bigg(1+\lam Y_i+\frac{\lam^2Y_i^2}{2}\bigg),$$
with $\lam=\sqrt{\frac{2 \logeps}n}$.
Since we have
  $$\log\E \exp(n\lam \hth) = n \log\bigg( 1+ \lam \E(Y) + \frac{\lam^2}2 \E( Y^2 ) \bigg) \le n\lam \E( Y )+ n \frac{\lam^2}2,$$
the exponential Chebyshev's inequality (see Lemma \ref{le:pac}) guarantees that with probability at least $1-\eps$, we have
  $
  n\lam (\hth-\E(Y)) \le n \frac{\lam^2}2 + \logeps
  $,
hence
  $$\hth-\E (Y)\le \sqrt\frac{2\logeps}{n}.$$
Replacing $Y$ by $-Y$ in the previous argument, we obtain that 
with probability at least $1-\eps$, we have
  $$
  n\lam \bigg\{ \E(Y)+\frac1{n\lam}\sum_{i=1}^n \log\bigg(1-\lam Y_i+\frac{\lam^2Y_i^2}{2}\bigg) \bigg\} \le n \frac{\lam^2}2 + \logeps.
  $$
Since $-\log(1+x+x^2/2) \le \log(1-x+x^2/2)$, this implies
  $\E (Y)-\hth\le \sqrt\frac{2\logeps}{n}.$
The two previous inequalities imply Inequality \eqref{eq:tar1} (for $c={2}$), showing that sub-exponential tails are achievable even when we only assume that the random variable admits a finite second moment (see \cite{Cat09} for more details 
on the robust estimation of the mean of a random variable).

\subsubsection{Localized PAC-Bayesian inequalities to eliminate a logarithm factor}

\paragraph{High level description of the PAC-Bayesian approach and the localization argument.}

The analysis of statistical inference generally relies on upper bounding the supremum of an empirical process $\chi$ indexed by the functions in a model $\cF$. One central tool to obtain these bounds is the concentration inequalities.
An alternative approach, called the PAC-Bayesian one, consists in using the entropic equality
  \beglab{eq:iexp}
  \E \exp\Bigg( \sup_{\rho\in\M} \bigg\{ \int \rho(df) \chi(f) - K(\rho,\pi') \bigg\} \Bigg)= \int\pi'(df) \E \exp\big( \chi(f) \big).
  \endlab
where $\M$ is the set of probability distributions on $\cF$ and $K(\rho,\pi')$ is the Kullback-Leibler divergence (whose definition is recalled in \eqref{eq:kl}) between $\rho$ and some fixed distribution $\pi'$.

Let $\chr:\cF \ra \R$ be an observable process such that for any $f\in\cF$, we have
  $$
  \E \exp\big(\chi(f)\big) \le 1
  $$
for $\chi(f) =\lam[ R(f) - \chr(f)]$ and some $\lam>0$. Then \eqref{eq:iexp} leads to: for any $\eps>0$, with probability at least $1-\eps$,
for any distribution $\rho$ on $\cF$, we have
  \beglab{eq:ipac}
  \int \rho(df) R(f) \le \int \rho(df) \chr(f) + \frac{K(\rho,\pi')+\logeps}{\lam}.
  \endlab
The lefthand-side quantity represents the expected risk with respect to the distribution $\rho$. To get the smallest upper bound on this quantity, 
a natural choice of the (posterior) distribution $\rho$ is obtained by minimizing the righthand-side, that is by taking $\rho=\pi'_{-\lam \chr}$
(with the notation introduced in \eqref{eq:pih}). This distribution concentrates on functions $f\in\cF$ for which $\chr(f)$ is small. 
Without prior knowledge, one may want to choose a prior distribution $\pi'=\pi$ which is rather ``flat'' (e.g., the one induced by the Lebesgue measure in the case of
a model $\cF$ defined by a bounded parameter set in some Euclidean space).
Consequently the Kullback-Leibler divergence $K(\rho,\pi')$, which should be seen as the complexity term, might be excessively large.

To overcome the lack of prior information and the resulting high complexity term, one can alternatively use a more ``localized'' prior distribution $\pi'=\pi_{-\be R}$ for some $\be>0$. 
Since the righthand-side of \eqref{eq:ipac} is then no longer observable, an empirical upper bound on 
$K(\rho,\pi_{-\be R})$ is required. It is obtained by writing
  $$
  K(\rho,\pi_{-\be R})=K(\rho,\pi)+\log\bigg( \int \pi(df) \exp[-\be R(f)] \bigg) + \beta \int \rho(df) R(f),
  $$ 
and by controlling the two non-observable terms by their empirical versions, calling for additional PAC-Bayesian inequalities.

\paragraph{Low level description of localization.}

To simplify a more detailed presentation of the PAC-Bayesian localization argument, we will consider a setting in which $\cF$, $\varphi_1$, \dots, $\varphi_d$ and the outputs are bounded almost surely, specifically assume
  $\P( \text{for any }f\in\cF, |Y-f(X)| \le 1 ) = 1.$
  
Introduce $\Psi(u)=[\exp(u)-1-u]/u^2$ for any $u>0$, $\bR(f)=R(f)-R(f^*)$ and $\br(f)=r(f)-r(f^*)$ for any $f\in\cF$.
Let $\pi$ be a distribution on $\cF$ and $\Delta(f,f')=\E \big\{[Y-f(X)]^2-[Y-f^*(X)]^2\big\}^2.$
The starting point is the following PAC-Bayesian inequality: for any $\eps>0$ and $\lam>0$, with probability at least $1-\eps$,
for any distribution $\rho$ on $\cF$, we have 
%  \begin{multline}\label{eq:id2a}
%  \int \rho(df) R(f) - R(f^*) \le \int \rho(df) r(f) - r (f^*) \\
%    + \frac{\lam}{n}\Psi\Big(\frac{2\lam}{n}\Big) \int \rho(df) \Delta(f,f^*) + \frac{K(\rho,\pi)+\logeps}{\lam}
%  \end{multline}
  \begin{multline}\label{eq:id2a}
  \int \rho(df) \bR(f) \le \int \rho(df) \br(f) 
    + \frac{\lam}{n}\Psi\Big(\frac{2\lam}{n}\Big) \int \rho(df) \Delta(f,f^*) \\
    + \frac{K(\rho,\pi)+\logeps}{\lam}.
  \end{multline}
This inequality derives from the duality formula given in \eqref{eq:legendre}, the inequality
$\E \exp\Big(\frac\lam{n}\big\{[Y-f^*(X)]^2-[Y-f(X)]^2+R(f)-R(f^*)\big\} - \frac{\lam^2}{n^2}\Psi\big(\frac{2\lam}{n} \big) \Delta(f,f^*) \Big)\le 1$, 
and Lemma \ref{le:pac} (see \cite[Theorem 8.1]{Aud03b}). 
Since 
  \begin{multline*}
  \Delta(f,f^*) = \E\big\{[f(X)-f^*(X)]^2[2Y-f(X)-f^*(X)]^2\big\} \\\le 4\E\big\{[f(X)-f^*(X)]^2\big\} \le 4 \bR(f),%[R(f)-R(f^*)],
  \end{multline*}
by taking $\lam=n/6$, Inequality \eqref{eq:id2a} implies
  \beglab{eq:unloc}
  \int \rho(df) \bR(f) \le 2\int \rho(df) \br(f) + 10\frac{K(\rho,\pi)+\logeps}{n}.
  \endlab
The distribution 
  $\hpi(df)=\frac{\exp[-n \br(f)/5]}{\tint
\exp[-n \br(f')/5]\pi(df')} \cdot \pi(df)$
minimizes the righthand-side, and we have
  $$
  \int \hpi(df) \bR(f) \le 10\frac{-\log \big( \tint \pi(df) \exp[-n \br(f)/5] \big)+\logeps}{n}.
  $$
Let $\pi_U$ be the uniform distribution on $\cF$ (i.e., the one coming from the uniform distribution on $\cC$).
For $\pi=\pi_U$, using similar arguments to the ones
developed in Section \ref{sec:complexity},
it can be shown that $-\log \big( \tint \pi(df) \exp[-n \br(f)/5]\le c d \log(n)$ for some constant $c$ depending only on $\sup_{f,f'\in\cF}\|f-f'\|_\infty$.
This implies a $\frac{d \log n}n$ convergence rate of the excess risk of the randomized algorithm associated with~$\hpi$.
  
The localization idea from \cite{Cat03b} allows to prove
  \beglab{eq:loc}
  \int \rho(df) \bR(f) \le 2\int \rho(df) \br(f) + 10\frac{K(\rho,\hpi')+\logeps}{n},
  \endlab
with $\hpi'(df)=\frac{\exp[-\zeta n \br(f)]}{\tint
\exp[-\zeta n \br(f')]\pi(df')} \cdot \pi(df)$ for some $0<\zeta<1/5$.
The key difference with \eqref{eq:unloc} is that the Kullback-Leibler term is now much smaller for the distributions $\rho$ which concentrates on low empirical risk functions, like $\hpi$.
Since $-\log \big( \tint \hpi'(df) \exp[-n \br(f)/5]\le c d$ for some constant $c$ depending only on $\zeta$ (see Lemma \ref{le:complexity}),
this allows to get rid of the $\log n$ factor and obtain a convergence rate of order $d/n$.

The proof of \eqref{eq:loc} is rather intricate but the central idea is to use 
\eqref{eq:unloc} for $\pi(df)=\frac{\exp[-n \bR(f)/5]}{\tint \exp[-n \bR(f')/5]\pi(df')} \cdot \pi_U(df)$, 
and control the non-observable Kullback-Leibler term by $c \int \rho(df) \bR(f)$ plus $K(\rho,\hpi')$ up to minor additive terms. 

Let us conclude this section by pointing out some difficulties and possibilities when considering unbounded $Y-f_{\th}(X)$.
The sketches of proof presented hereafter are far from being actual proofs as some technical problems are hidden. 
Full proofs will be given in the later sections. 
For unbounded $Y-f_{\th}(X)$, Inequality \eqref{eq:id2a} no longer holds, but by using the soft truncation argument of the previous section, one can prove a similar inequality in which $\int \rho(df) \br(f)$ is replaced with
  $\frac{1}\lam \int\rho(df) \sum_{i=1}^n \log\big(1+W_i(f,f^*)+W_i^2(f,f^*)/2\big)$
for $W_i(f,f^*)=\frac{\lam}n \big\{[Y-f(X_i)]^2-[Y-f^*(X_i)]^2\big\}$ for $\lam>0$ a parameter of the bound.
One significant difficulty is that the minimizer of this quantity is no longer observable (since $f^*$ is unknown).
Nevertheless the quantity can be upper bounded by the observable one:
  $$\max_{f'\in\cF} \frac{1}\lam \int\rho(df) \sum_{i=1}^n \log\bigg(1+W_i(f,f')+\frac{W_i^2(f,f')}2\bigg).$$
This explains why the procedures in Section \ref{sec:computable} make appear a min-max.

Another interesting idea is to use Gaussian distributions for $\pi$ and $\rho$,
which are respectively centered at $\th^*$ and $\hth$ and with covariance matrix proportional to the identity matrix.
The interest of these choices comes essentially from the coexistence of the two following properties:
the distribution $\pi$ concentrates on a neighbourhood of the best prediction function so the complexity term $K(\rho,\pi)$
can be much smaller than the one obtained for $\pi$ the uniform distribution on $\cF$ (this is again the localization idea), and
$K(\rho,\pi)$ and, when $\Theta=\R^d$, the integrals with respect to $\rho$ can be explicitly computed in terms of $\bR(\hth)$ and other rather simple quantities, 
which implies that the modified inequality \eqref{eq:id2a} gets a tractable form for further computations, provided nevertheless 
some assumptions on the eigenvalues of the matrix $Q$. The idea of using PAC-Bayesian inequalities with Gaussian prior and posterior distributions
has first been proposed by Langford and Shawe-Taylor \cite{LanSha02} in the context of linear classification.

\subsection{Proofs of Theorems \ref{th:hfrlam} and \ref{th:ermom}} \label{sec:permom}

%The proofs rely on PAC Bayesian inequalities with Gaussian prior and posterior distributions.
%This kind of methodology has been used 
To shorten the formulae, we will write $X$ for $\vp(X)$, which is equivalent to considering 
without loss of generality that the input space is $\R^d$ and that the functions $\vp_1,\dots,$$\vp_d$ are
the coordinate functions.
Therefore, the function $f_\th$ maps an input $x$ to $\langle \theta, x \rangle$.
With a slight abuse of notation, $R(\theta)$ will denote the risk of this prediction function.

Let us first assume that the matrix $Q_\lam=Q+\lam I$
is positive definite. This indeed does not restrict the generality of our study, 
even in the case when $\lam= 0$, as we will discuss later (Remark \ref{rmk:degen}).
%Let $Q_\lam^{-1/2}$ be the inverse of $Q_\lam^{1/2}$
%(meaning when $\lambda = 0$ the symmetric matrix with the same kernel, same
%eigenvectors and inverse non-negative eigenvalues). 
Consider the change of coordinates
$$
\V{X} = Q_\lam^{-1/2} X.
$$
Let us introduce
$$
\V{R}(\theta) = \B{E} \bigl[ (\langle \theta, \V{X} \rangle - Y)^2 \bigr],
$$
so that
$$
\V{R}(Q_\lam^{1/2} \theta) 
= R(\theta) = \B{E} \bigl[(\langle \theta, X \rangle - Y)^2 \bigr]. 
$$
Let 
$$
\V{\Theta} = \bigl\{ Q_\lam^{1/2} \theta ; \theta \in \Theta \bigr\}.
$$
Consider
\begin{align}
r(\theta) & = \frac{1}{n} \sum_{i=1}^n \bigl( 
\langle \theta, X_i \rangle - Y_i \bigr)^2, \\
\V{r}(\theta) & = \frac{1}{n} \sum_{i=1}^n \bigl(\langle \theta, \V{X}_i 
\rangle - Y_i \bigr)^2,\\
\theta_0 & = \arg \min_{\theta \in \V{\Theta}} \V{R}(\theta) + 
\lam \lVert Q_\lam^{-1/2} \theta \rVert^2,\\
\hat{\th} & \in \arg \min_{\theta \in \Theta} r(\theta) 
+ \lam \lVert \theta \rVert^2, \\
\theta_1 & = Q_\lam^{1/2} \hat{\th}  \in \arg \min_{\theta \in \V{\Theta}} 
\V{r}(\theta) + \lam \lVert Q_\lam^{-1/2} \theta \rVert^2. 
\end{align}

For $\alpha>0$, let us introduce the notation
\begin{align*}
W_i(\theta) & = \alpha\Big\{ \bigl( \langle \theta, \V{X}_i \rangle - Y_i \bigr)^2 -
\bigl( \langle \theta_0, \V{X}_i \rangle - Y_i \bigr)^2 \Big\},\\
W(\theta) & = \alpha\Big\{ \bigl( \langle \theta, \V{X} \rangle - Y \bigr)^2 -
\bigl( \langle \theta_0, \V{X} \rangle - Y \bigr)^2 \Big\}.
\end{align*}
For any $\theta_2 \in \B{R}^d$ and $\beta>0$, let us consider the Gaussian distribution
centered at~$\theta_2$ 
$$
\rho_{\theta_2}(d \theta) = \left( \frac{\beta}{2 \pi}\right)^{d/2} \exp 
\left( - \frac{\beta}{2} \lVert \theta  - \theta_2 \rVert^2 \right) d \theta.
$$
\begin{lemma}
\label{lemma7.1}
For any $\eta>0$ and $\alpha > 0$, 
with probability at least $1 - \exp( - \eta)$, 
for any $\theta_2 \in \B{R}^d$, 
\begin{multline*}
- n \tint \rho_{\theta_2} (d \theta) 
\log \Bigl\{ 1 - \B{E} \bigl[ W(\theta) \bigr] 
+ \B{E} \bigl[ W(\theta)^2 \bigr]/2 \Bigr\}
\\ \leq - \sum_{i=1}^n \left( \tint \rho_{\theta_2}(d \theta) 
\log \Bigl\{ 1 - W_i(\theta) + W_i(\theta)^2/2 \Bigr\} \right) + 
\C{K}(\rho_{\theta_2}, \rho_{\theta_0}) + \eta,
\end{multline*}
where $\C{K}(\rho_{\theta_2}, \rho_{\theta_0})$ is 
the Kullback-Leibler divergence function :
$$ 
\C{K}(\rho_{\theta_2}, \rho_{\theta_0}) = 
\int \rho_{\theta_2} (d \theta) \log \biggl[ \frac{ d 
\rho_{\theta_2}}{d \rho_{\theta_0}}( \theta) \biggr].
$$
\end{lemma}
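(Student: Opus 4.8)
The plan is to recognize this as a PAC-Bayesian deviation bound with reference distribution $\rho_{\theta_0}$, derived from the entropic equality \eqref{eq:iexp}. First I would introduce the centered log-process
\[
\chi(\theta) = \sum_{i=1}^n \log\Bigl\{ 1 - W_i(\theta) + W_i(\theta)^2/2 \Bigr\} - n \log\Bigl\{ 1 - \B{E}[W(\theta)] + \B{E}[W(\theta)^2]/2 \Bigr\},
\]
so that the asserted inequality is precisely the statement that, with probability at least $1 - \exp(-\eta)$, one has $\int \rho_{\theta_2}(d\theta)\,\chi(\theta) - \C{K}(\rho_{\theta_2},\rho_{\theta_0}) \le \eta$ simultaneously for every $\theta_2 \in \B{R}^d$.

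The crux is the pointwise exponential-moment identity $\B{E}\exp[\chi(\theta)] = 1$. For fixed $\theta$, the variables $W_1(\theta),\dots,W_n(\theta)$ are i.i.d.\ copies of $W(\theta)$, so the expectation of the product $\prod_{i=1}^n \bigl\{ 1 - W_i(\theta) + W_i(\theta)^2/2 \bigr\}$ factorizes as $\bigl\{ 1 - \B{E}[W(\theta)] + \B{E}[W(\theta)^2]/2 \bigr\}^n$, which cancels exactly against the deterministic normalizing factor in $\chi$. Here I would observe that $1 - x + x^2/2 > 0$ for all real $x$ (its discriminant is negative), so the logarithms are well defined and the expected normalizer is positive since $\B{E}[W(\theta)^2] \ge (\B{E}[W(\theta)])^2$; finiteness of $\B{E}[W(\theta)^2]$ is guaranteed by the moment hypotheses of the ambient theorem, which control the fourth moment of $\V{X}$ and its cross moments with $Y$.

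With $\B{E}\exp[\chi(\theta)]=1$ in hand, I would invoke \eqref{eq:iexp} with prior $\rho_{\theta_0}$ and process $\chi$: since $\sup_\rho \bigl\{ \int \rho(d\theta)\,\chi(\theta) - \C{K}(\rho,\rho_{\theta_0}) \bigr\} = \log\int \rho_{\theta_0}(d\theta)\,\exp[\chi(\theta)]$, taking expectations and applying Tonelli gives
\[
\B{E}\exp\Bigl( \sup_\rho \bigl\{ \tint \rho(d\theta)\,\chi(\theta) - \C{K}(\rho,\rho_{\theta_0}) \bigr\} \Bigr) = \int \rho_{\theta_0}(d\theta)\,\B{E}\exp[\chi(\theta)] = 1.
\]
An exponential Chebyshev (Markov) bound then yields $\sup_\rho \bigl\{ \int \rho(d\theta)\,\chi(\theta) - \C{K}(\rho,\rho_{\theta_0}) \bigr\} \le \eta$ with probability at least $1 - \exp(-\eta)$; restricting the supremum to the Gaussian posteriors $\rho = \rho_{\theta_2}$, $\theta_2 \in \B{R}^d$, gives the claimed inequality uniformly in $\theta_2$ on a single event.

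There is no genuine obstacle here, the argument being a direct instance of the localization template of Section \ref{sec:gen}; the only points deserving care are the integrability needed for $\B{E}\exp[\chi(\theta)] = 1$ to be meaningful, which is exactly where the standing moment assumptions enter, and the use of the duality formula to pass from a fixed posterior to the supremum over all $\rho$. The latter is what upgrades a pointwise-in-$\theta_2$ statement to one holding simultaneously for every $\theta_2$ on one probability-$(1-e^{-\eta})$ event.
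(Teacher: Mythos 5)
Your proposal is correct and follows essentially the same route as the paper: the pointwise identity $\B{E}\exp[\chi(\theta)]=1$ by independence, Fubini to pass the expectation inside the integral against the Gaussian prior $\rho_{\theta_0}$, an exponential Chebyshev bound, and the Donsker--Varadhan convex duality to obtain uniformity over all posteriors $\rho_{\theta_2}$. The paper merely orders these steps slightly differently (Fubini and Markov first, then the one-sided convex inequality), whereas you package Fubini and duality together via the entropic equality; the content is identical.
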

\begin{proof}
$$
\B{E} \left( \tint \rho_{\theta_0}( 
d \theta) \prod_{i=1}^n \frac{1  - W_i(\theta) 
+ W_i(\theta)^2/2}{ 1 - 
\B{E} \bigl[ W(\theta) \bigr] 
+ \B{E} \bigl[ 
W(\theta)^2 \bigr]/2 } \right) 
\leq 1,
$$
thus with probability at least $1 - \exp(- \eta)$ 
$$
\log \left( \tint \rho_{\theta_0}( 
d \theta) \prod_{i=1}^n \frac{1  - W_i(\theta) 
+ W_i(\theta)^2/2}{ 1 - 
\B{E} \bigl[ W(\theta) \bigr] 
+ \B{E} \bigl[ 
W(\theta)^2\bigr]/2 } \right) \leq \eta.
$$
We conclude from the convex inequality 
(see \cite[page 159]{Cat01})
$$
\log \left( \tint \rho_{\theta_0}(d \theta) \exp  \bigl[ 
h(\theta) \bigr]  \right) 
\geq \tint \rho_{\theta_2}(d \theta) h(\theta) 
- \C{K}(\rho_{\theta_2}, \rho_{\theta_0}).
$$
\end{proof}

Let us compute some useful quantities
\begin{align}
\C{K}(\rho_{\theta_2}, \rho_{\theta_0}) & = \frac{\beta}{2} 
\lVert \theta_2 - \theta_0 \rVert^2, \label{eq:kthth}\\
\tint \rho_{\theta_2}(d \theta) \bigl[ W(\theta) \bigr] 
& = \alpha \tint \rho_{\theta_2}(d \theta) \langle \theta - \theta_2, \V{X} \rangle^2
+ W(\theta_2) \notag\\
& = W(\theta_2) + \alpha \frac{\lVert \V{X} \rVert^2}{\beta},
\label{eq6.13}\\
\tint \rho_{\theta_2}(d \theta) \langle \theta - \theta_2, \V{X} \rangle^4 & = \frac{3 \lVert \V{X} \rVert^4}{\beta^2},
\end{align}
\begin{multline}
\tint \rho_{\theta_2}(d \theta) \bigl[ W(\theta)^2 \bigr] 
= \alpha^2 \tint \rho_{\theta_2}(d \theta) \langle \theta - \theta_0, 
\V{X} \rangle^2 \bigl( \langle \theta + \theta_0, \V{X} \rangle
- 2 Y \bigr)^2 \\ = \alpha^2
\tint \rho_{\theta_2}(d \theta) \Bigl[ \langle \theta - \theta_2 
+ \theta_2 - \theta_0, \V{X} \rangle \bigl( \langle
\theta - \theta_2 + \theta_2 + \theta_0, \V{X} \rangle 
- 2 Y \bigr) \Bigr]^2
\\ = \tint \rho_{\theta_2}(d \theta) \Bigl[ 
\alpha\langle \theta - \theta_2, \V{X} \rangle^2 + 
2 \alpha \langle \theta - \theta_2, \V{X} \rangle 
\bigl( \langle
\theta_2, \V{X} \rangle - Y \bigr) + W(\theta_2) 
\Bigr]^2 \\ 
= \tint \rho_{\theta_2}(d \theta) \Bigl[ 
\alpha^2 \langle \theta - \theta_2, \V{X} \rangle^4 
+ 4 \alpha^2 \langle \theta - \theta_2, \V{X} \rangle^2 
\bigl( \langle \theta_2, \V{X} \rangle - Y \bigr)^2 
+ W(\theta_2)^2 \\ 
+ 2 \alpha \langle \theta - \theta_2, \V{X} \rangle^2 
W(\theta_2) \Bigr] \\ 
= \frac{3 \alpha^2 \lVert \V{X} \rVert^4}{\beta^2}  
+ \frac{2 \alpha \lVert \V{X} \rVert^2}{\beta}
\Bigl[ 2 \alpha \bigl( \langle \theta_2, \V{X} \rangle - Y \bigr)^2 
+ W(\theta_2) \Bigr] + W(\theta_2)^2.
\label{eq6.15}
\end{multline}
Using the fact that 
$$
2 \alpha \bigl( \langle \theta_2 , \V{X} 
\rangle - Y \bigr)^2 + W(\theta_2) = 
2 \alpha \bigl( \langle \theta_0, \V{X} \rangle - Y \bigr)^2 
+ 3 W(\theta_2),
$$
and that for any real numbers $a$ and $b$, $6ab \le 9a^2+b^2$,
we get 
\begin{lemma} \label{le:7.2}
\begin{align}
\tint \rho_{\theta_2}(d \theta) 
\bigl[ W(\theta) \bigr] 
& = W(\theta_2) + \alpha \frac{\lVert \V{X} \rVert^2}{\beta},\\
\tint \rho_{\theta_2}(d \theta) \bigl[ 
W(\theta)^2 \bigr] & = W(\theta_2)^2 + \frac{2 \alpha \lVert \V{X} \rVert^2}{\beta}
\Bigl[ 2 \alpha \bigl( \langle \theta_0, \V{X} \rangle 
- Y \bigr)^2 + 3 W(\theta_2) \Bigr] \nonumber\\ 
& \qquad + \frac{3 \alpha^2 \lVert \V{X} \rVert^4}{\beta^2}
\\ & \leq 10 W(\theta_2)^2 + \frac{4 \alpha^2 \lVert \V{X} \rVert^2}{\beta}
\bigl( \langle \theta_0, \V{X} \rangle - Y \bigr)^2 
+ \frac{4 \alpha^2 \lVert \V{X} \rVert^4}{\beta^2},
\end{align}
and the same holds true when $W$ is replaced with $W_i$ 
and $(\V{X}, Y)$ with $(\V{X}_i, Y_i)$.
\end{lemma}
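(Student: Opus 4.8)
The two equalities are, in fact, already contained in the computations \eqref{eq6.13} and \eqref{eq6.15}; the plan is to assemble them cleanly and then derive the final bound. First I would record the first equality, which is nothing but \eqref{eq6.13}: writing $W(\theta) = W(\theta_2) + \alpha\langle\theta-\theta_2,\V{X}\rangle^2 + 2\alpha\langle\theta-\theta_2,\V{X}\rangle(\langle\theta_2,\V{X}\rangle - Y)$ and integrating against the Gaussian $\rho_{\theta_2}$, the linear term vanishes by symmetry and the quadratic term contributes $\alpha\|\V{X}\|^2/\beta$, using the moment identity $\int\rho_{\theta_2}(d\theta)\,\langle\theta-\theta_2,\V{X}\rangle^2 = \|\V{X}\|^2/\beta$ (and, for the fourth moment, $\int\rho_{\theta_2}(d\theta)\,\langle\theta-\theta_2,\V{X}\rangle^4 = 3\|\V{X}\|^4/\beta^2$, with all odd moments zero).

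For the second equality I would start from \eqref{eq6.15} and substitute the algebraic identity $2\alpha(\langle\theta_2,\V{X}\rangle - Y)^2 + W(\theta_2) = 2\alpha(\langle\theta_0,\V{X}\rangle - Y)^2 + 3W(\theta_2)$. This is checked directly: inserting $W(\theta_2) = \alpha[(\langle\theta_2,\V{X}\rangle - Y)^2 - (\langle\theta_0,\V{X}\rangle - Y)^2]$ shows that both sides equal $3\alpha(\langle\theta_2,\V{X}\rangle - Y)^2 - \alpha(\langle\theta_0,\V{X}\rangle - Y)^2$. The fourth-moment term $3\alpha^2\|\V{X}\|^4/\beta^2$ is carried over unchanged, which yields the stated expression.

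The inequality is the only genuinely new step. Expanding the bracket in the second equality gives $W(\theta_2)^2 + \frac{4\alpha^2\|\V{X}\|^2}{\beta}(\langle\theta_0,\V{X}\rangle - Y)^2 + \frac{6\alpha\|\V{X}\|^2}{\beta}W(\theta_2) + \frac{3\alpha^2\|\V{X}\|^4}{\beta^2}$. Matching the first two terms against the target bound, it remains to show $\frac{6\alpha\|\V{X}\|^2}{\beta}W(\theta_2) \le 9W(\theta_2)^2 + \frac{\alpha^2\|\V{X}\|^4}{\beta^2}$, which is exactly $6ab \le 9a^2 + b^2$ applied with $a = W(\theta_2)$ and $b = \alpha\|\V{X}\|^2/\beta$; equivalently it is $(3a - b)^2 \ge 0$, valid for every real $a$ regardless of the sign of $W(\theta_2)$.

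Finally, the empirical versions come for free: every integral above is evaluated pointwise in $(\V{X}, Y)$, so replacing $W$ by $W_i$ and $(\V{X}, Y)$ by $(\V{X}_i, Y_i)$ leaves each identity and the inequality valid verbatim. I do not expect a real obstacle here, since the heavy computation (the expansion of the fourth moment in \eqref{eq6.15}) has already been performed; the only points requiring care are tracking the substitution of the algebraic identity and observing that the concluding inequality needs no sign hypothesis on $W(\theta_2)$.
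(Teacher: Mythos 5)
Your proof is correct and follows the paper's own route exactly: the two equalities are the computations \eqref{eq6.13} and \eqref{eq6.15} combined with the identity $2\alpha(\langle\theta_2,\V{X}\rangle-Y)^2 + W(\theta_2) = 2\alpha(\langle\theta_0,\V{X}\rangle-Y)^2 + 3W(\theta_2)$, and the final bound is obtained from $6ab\le 9a^2+b^2$ with $a=W(\theta_2)$, $b=\alpha\lVert\V{X}\rVert^2/\beta$, just as in the paper. Your remark that the inequality requires no sign hypothesis on $W(\theta_2)$ is a useful point of care, and the pointwise nature of the identities indeed gives the empirical versions for free.
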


Another important thing to realize is that
\begin{align} \label{eq:normd}
\B{E} \bigl[ \lVert \V{X} \rVert^2 \bigr] 
& =  \B{E} \bigl[  \Tr \bigl( \V{X}\, \V{X}^T \bigr) \bigr] 
&& = \B{E} \bigl[ \Tr \bigl( Q_\lam^{- 1/2} X X^T Q_\lam^{-1/2} \bigr) 
\bigr] \notag
\\ & = \B{E} \bigl[ \Tr \bigl( Q_\lam^{-1} X X^T \bigr) \bigr]
&& = \Tr \bigl[ Q_\lam^{-1} \B{E} (XX^T ) \bigr] \notag\\
& = \Tr \bigl( Q_\lam^{-1}(Q_\lam- \lam I)\bigr) 
 && = d - \lam \Tr(Q_\lam^{-1}) \qquad =\ D \,. 
\end{align}

We can weaken Lemma \thmref{lemma7.1} noticing that
for any real number $x$, $x\le -\log(1-x)$ and
\begin{multline*}
- \log \biggl(1 - x + \frac{x^2}2
\biggr)
= \log \left( \frac{1 + x + x^2/2}{
1 + x^4/4} \right) \\ \leq 
\log \biggl( 1 + x + \frac{x^2}2 \biggr) 
\leq x + \frac{x^2}2. 
\end{multline*}
We obtain with probability at least $1 - \exp(- \eta)$
\begin{multline*}
n \B{E} \bigl[ W(\theta_2) \bigr] +  
\frac{n \alpha}{\beta} \B{E} \bigl[ \lVert \V{X} \rVert^2 \bigr]  
- 5 n \B{E} \bigl[ W(\theta_2)^2 \bigr]
\\ - \B{E} \Biggl\{ 
\frac{2 n \alpha^2 \lVert \V{X} \rVert^2}{\beta} 
\bigl( \langle \theta_0, \V{X} \rangle 
- Y \bigr)^2 +  
\frac{2 n \alpha^2 \lVert \V{X} \rVert^4}{\beta^2} \Biggr\} 
\\ \leq \sum_{i=1}^n \Biggl\{ W_i(\theta_2) 
+  5 W_i(\theta_2)^2 
\\ + \frac{\alpha \lVert \V{X}_i \rVert^2}{\beta} 
+ \frac{2 \alpha^2 \lVert \V{X}_i \rVert^2}{\beta} 
\bigl( \langle \theta_0 , \V{X}_i \rangle - Y \bigr)^2 
+ \frac{2 \alpha^2 \lVert \V{X}_i \rVert^4}{\beta^2} \Biggr\} 
\\ + \frac{\beta}{2} \lVert \theta_2 - \theta_0 \rVert^2 + \eta. 
\end{multline*}

Noticing that for any real numbers $a$ and $b$, $4ab \le a^2+4b^2$,
we can then bound
\begin{multline*}
\alpha^{-2}W(\theta_2)^2 = \langle \theta_2 - \theta_0, \V{X} \rangle^2 
\bigl( \langle \theta_2 + \theta_0, \V{X} \rangle - 2 Y \bigr)^2 \\ 
= \langle \theta_2 - \theta_0, \V{X} \rangle^2 
\Bigl[ \langle \theta_2 - \theta_0, \V{X} \rangle 
+ 2 \bigl( \langle \theta_0 , \V{X} \rangle - Y \bigr) 
\Bigr]^2 
\\ = \langle \theta_2 - \theta_0, \V{X} \rangle^4
+ 4 \langle \theta_2 - \theta_0, \V{X} \rangle^3 
\bigl( \langle \theta_0, \V{X} \rangle - Y \bigr) 
\\ + 4 \langle \theta_2 - \theta_0, \V{X} \rangle^2 
\bigl( \langle \theta_0, \V{X} \rangle - Y \bigr)^2 
\\ \leq 2 \langle \theta_2 - \theta_0, \V{X} \rangle^4 
+ 8\langle \theta_2 - \theta_0, \V{X} \rangle^2 
\bigl( \langle \theta_0, \V{X} \rangle - Y \bigr)^2.   
\end{multline*}

\begin{thm}
\label{thm7.3}
  Let us put
\begin{align*}
\wh{D} & = \frac{1}{n} \sum_{i=1}^n \lVert \V{X}_i \rVert^2\qquad \text{(let us remind that }D  = \B{E} \bigl[ \lVert \V{X} \rVert^2 \bigr] \text{ from }\eqref{eq:normd}),\\
B_1 & = 2 \B{E} 
\Bigl[ \lVert \V{X} \rVert^2 \bigl( \langle \theta_0, 
\V{X} \rangle - Y \bigr)^2 \Bigr],\\
\wh{B}_1 & = \frac{2}{n} \sum_{i=1}^n 
\Bigl[ \lVert \V{X}_i \rVert^2 \bigl(\langle \theta_0, 
\V{X}_i \rangle - Y_i \bigr)^2 \Bigr],\\
B_2 & = 2 \B{E} \Bigl[ \lVert \V{X} \rVert^4 \Bigr],\\
\wh{B}_2 & = \frac{2}{n} \sum_{i=1}^n \lVert \V{X}_i \rVert^4,\\
B_3 & = 40 \sup \Bigl\{ 
\B{E} \bigl[\langle u, \V{X} \rangle^2 \bigl( 
\langle \theta_0, \V{X} \rangle - Y \bigr)^2 \bigr] : 
u \in \B{R}^d, \lVert u \rVert = 1 \Bigr\},\\
\wh{B}_3 & = \sup \biggl\{ 
\frac{40}{n} \sum_{i=1}^n \langle u, \V{X}_i \rangle^2 
\bigl( \langle \theta_0, \V{X}_i \rangle - Y_i \bigr)^2 
: u \in \B{R}^d, \lVert u \rVert = 1 \Bigr\},\\
B_4 & = 10 \sup \Bigl\{ 
\B{E} \Bigl[ \langle u, \V{X} \rangle^4 \Bigr] : 
u \in \B{R}^d, \lVert u \rVert = 1 \Bigr\},\\
\wh{B}_4 & = \sup \biggl\{ 
\frac{10}{n} \sum_{i=1}^n \langle u , 
\V{X}_i \rangle^4 : u \in \B{R}^d, \lVert u \rVert = 1 \biggr\}.
\end{align*}
With probability at least $1 - \exp( - \eta)$, 
for any $\theta_2 \in \B{R}^d$, 
\begin{multline*}
n \B{E} \bigl[ W(\theta_2) \bigr] 
- \biggl[ n \alpha^2 (B_3 + \wh{B}_3) + \frac{\beta}{2} 
\biggr]  \lVert \theta_2 - \theta_0 \rVert^2 
\\ - n \alpha^2 (B_4 + \wh{B}_4) \lVert \theta_2 - \theta_0 \rVert^4 
\\ \leq \sum_{i=1}^n W_i(\theta_2) + \frac{ n \alpha}{\beta} 
(\wh{D} - D) + \frac{n \alpha^2}{\beta} (B_1 + \wh{B}_1) + 
\frac{n \alpha^2}{\beta^2} (B_2 + \wh{B}_2) + \eta.
\end{multline*}
\end{thm}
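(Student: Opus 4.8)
The plan is to derive the statement purely by combining the two displayed inequalities that precede it, with no new probabilistic input. The weakened form of Lemma \ref{lemma7.1} already gives, on a single event of probability at least $1-\exp(-\eta)$ and simultaneously for every $\theta_2\in\B{R}^d$, an inequality whose left-hand side involves $n\B{E}[W(\theta_2)]$, $\frac{n\alpha}{\beta}\B{E}\lVert\V{X}\rVert^2$, the quadratic term $-5n\B{E}[W(\theta_2)^2]$ and two expectation remainders, and whose right-hand side involves $\sum_i W_i(\theta_2)$, $5\sum_i W_i(\theta_2)^2$, the empirical analogues of those remainders, and the Kullback--Leibler term $\frac{\beta}{2}\lVert\theta_2-\theta_0\rVert^2$ from \eqref{eq:kthth}. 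The whole argument is then a rearrangement: I would \emph{lower} bound the left-hand side and \emph{upper} bound the right-hand side, taking care that this keeps the direction of the inequality, and finally collect terms.

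First I would dispose of the two quadratic-in-$W$ terms using the deterministic pointwise bound $\alpha^{-2}W(\theta_2)^2\le 2\langle\theta_2-\theta_0,\V{X}\rangle^4+8\langle\theta_2-\theta_0,\V{X}\rangle^2(\langle\theta_0,\V{X}\rangle-Y)^2$ together with its sample analogue. Taking expectations and exploiting the homogeneity of the suprema defining $B_3$ and $B_4$ (so that $\B{E}\langle u,\V{X}\rangle^4\le(B_4/10)\lVert u\rVert^4$ and $\B{E}[\langle u,\V{X}\rangle^2(\langle\theta_0,\V{X}\rangle-Y)^2]\le(B_3/40)\lVert u\rVert^2$ for all $u$), I obtain $\B{E}[W(\theta_2)^2]\le\alpha^2\bigl[(B_3/5)\lVert\theta_2-\theta_0\rVert^2+(B_4/5)\lVert\theta_2-\theta_0\rVert^4\bigr]$, hence $-5n\B{E}[W(\theta_2)^2]\ge-n\alpha^2\bigl[B_3\lVert\theta_2-\theta_0\rVert^2+B_4\lVert\theta_2-\theta_0\rVert^4\bigr]$; the normalizing constants $40$ and $10$ are chosen precisely so that the factors $8$, $2$ and the global $5$ cancel. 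The empirical version is identical with $\wh{B}_3,\wh{B}_4$ in place of $B_3,B_4$, giving $5\sum_{i=1}^n W_i(\theta_2)^2\le n\alpha^2\bigl[\wh{B}_3\lVert\theta_2-\theta_0\rVert^2+\wh{B}_4\lVert\theta_2-\theta_0\rVert^4\bigr]$.

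Next I would read off the terms that do not depend on $\theta_2$. On the expectation side, $\frac{n\alpha}{\beta}\B{E}\lVert\V{X}\rVert^2=\frac{n\alpha}{\beta}D$ by \eqref{eq:normd}, while the two remainders become $\frac{n\alpha^2}{\beta}B_1$ and $\frac{n\alpha^2}{\beta^2}B_2$ by the definitions of $B_1$ and $B_2$. On the empirical side the sums $\sum_i\lVert\V{X}_i\rVert^2$, $\sum_i\lVert\V{X}_i\rVert^2(\langle\theta_0,\V{X}_i\rangle-Y_i)^2$ and $\sum_i\lVert\V{X}_i\rVert^4$ produce $\frac{n\alpha}{\beta}\wh{D}$, $\frac{n\alpha^2}{\beta}\wh{B}_1$ and $\frac{n\alpha^2}{\beta^2}\wh{B}_2$ respectively. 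Substituting the bounds of the previous paragraph into the weakened inequality and using these identities, I would then move the $\lVert\theta_2-\theta_0\rVert^2$ and $\lVert\theta_2-\theta_0\rVert^4$ contributions from both sides — together with the $\frac{\beta}{2}\lVert\theta_2-\theta_0\rVert^2$ term — to the left, and move the constant expectation terms $\frac{n\alpha}{\beta}D$, $\frac{n\alpha^2}{\beta}B_1$, $\frac{n\alpha^2}{\beta^2}B_2$ to the right, where they pair with their empirical counterparts into $\frac{n\alpha}{\beta}(\wh{D}-D)$, $\frac{n\alpha^2}{\beta}(B_1+\wh{B}_1)$ and $\frac{n\alpha^2}{\beta^2}(B_2+\wh{B}_2)$. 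This produces exactly the asserted inequality, uniformly in $\theta_2$ on the same probability-$(1-\exp(-\eta))$ event.

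I do not expect any genuine obstacle: the result is a consequence of results already in hand and the proof is essentially bookkeeping. The only points demanding care are tracking the numerical constants through the homogeneity step (ensuring the $8$, $2$, $5$, $40$, $10$ combine into the clean coefficients stated) and making sure the signs are preserved when a lower bound on the left is combined with an upper bound on the right.
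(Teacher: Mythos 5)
Your proposal is correct and follows essentially the same route as the paper: the paper likewise obtains Theorem \ref{thm7.3} by taking the weakened form of Lemma \ref{lemma7.1} (already expressed via the Gaussian moment computations of Lemma \ref{le:7.2}), applying the pointwise bound $\alpha^{-2}W(\theta_2)^2 \leq 2\langle \theta_2-\theta_0,\V{X}\rangle^4 + 8\langle\theta_2-\theta_0,\V{X}\rangle^2(\langle\theta_0,\V{X}\rangle-Y)^2$ and its empirical analogue, invoking the homogeneity built into the definitions of $B_3$, $B_4$, $\wh{B}_3$, $\wh{B}_4$, and rearranging the constant terms into $\wh{D}-D$, $B_1+\wh{B}_1$, $B_2+\wh{B}_2$. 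Your constant tracking (the factors $2$, $8$, $5$ combining into $10$ and $40$) and the direction of the inequality are both handled correctly.
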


Let us now assume that $\theta_2 \in \V{\Theta}$ and let us use the fact 
that $\V{\Theta}$ is a convex set and that $\theta_0 = \arg \min_{
\theta \in \V{\Theta}} 
\V{R}(\theta) + \lam \lVert Q_\lam^{-1/2} \theta 
\rVert^2$. Introduce $\theta_{*} = \arg \min_{\theta \in \B{R}^d} \V{R}(\theta)
+ \lam \lVert Q_\lam^{-1/2} \theta \rVert^2$.
As we have 
$$
\V{R}(\th) + \lam \lVert Q_\lam^{-1/2} \th 
\rVert^2 = \|\th-\th_*\|^2 + \V{R}(\theta_*) + 
\lam \lVert Q_\lam^{-1/2} \theta_* \rVert^2,
$$
the vector $\theta_0$ is uniquely defined
as the projection of $\theta_{*}$ on $\V{\Theta}$ 
for the Euclidean distance, and for any $\theta_2 \in \V{\Theta}$ 
\begin{multline}
\alpha^{-1}\B{E} \bigl[ W(\theta_2) \bigr] + 
\lam \lVert Q_\lam^{-1/2} \theta_2 \rVert^2 
- \lam \lVert Q_\lam^{-1/2} \theta_0 \rVert^2 
\\ = \V{R}(\theta_2) 
- \V{R}( \theta_0) 
+ \lam \lVert Q_\lam^{-1/2} \theta_2 \rVert^2 
- \lam \lVert Q_\lam^{-1/2} \theta_0 \rVert^2 
\\ = \lVert \theta_2 - \theta_* \rVert^2 
- \lVert \theta_0 - \theta_* \rVert^2 
\\ = \lVert \theta_2 - \theta_0 \rVert^2 
+ 2 \langle \theta_2 - \theta_0, \theta_0 - \theta_* \rangle 
\geq \lVert \theta_2 - \theta_0 \rVert^2.  \label{eq:thconv}
\end{multline}

This and the inequality 
$$
\alpha^{-1}\sum_{i=1}^n W_i(\theta_1) + n \lam \lVert Q_\lam^{-1/2} \theta_1 \rVert^2 
- n \lam \lVert Q_\lam^{-1/2} \theta_0 \rVert^2 \leq 0
$$ 
leads to the following result.
\begin{thm}
\label{thm7.4}
With probability at least $1 - \exp( - \eta)$, 
\begin{multline*}
R(\hat{\th}) + \lambda \lVert \hat{\th} \rVert^2 
- \inf_{\theta \in \Theta} \bigl[ R(\theta) + \lambda \lVert \theta 
\rVert^2 \bigr]  
\\ = \alpha^{-1}\B{E} \bigl[ W(\theta_1) \bigr] + \lam 
\lVert Q_\lam^{-1/2} \theta_1 \rVert^2 - \lam \lVert Q_\lam^{-1/2}  
\theta_0 \rVert^2
\end{multline*}
is not greater than the smallest positive non degenerate root of the 
following polynomial equation as soon as it has one 
\begin{multline*}
\Bigl\{ 1 - \bigl[\alpha(B_3 + \wh{B}_3) 
+ \tfrac{\beta}{2n\alpha} 
\bigr] \Bigr\} x 
- \alpha(B_4+\wh{B}_4) x^2 
\\ = \frac{1}{\beta} 
\max(\wh{D} - D,0) + \frac{\alpha}{\beta}(B_1+\wh{B}_1) 
+ \frac{\alpha}{\beta^2}(B_2+\wh{B}_2) + \frac{\eta}{n \alpha}.
\end{multline*}
\end{thm}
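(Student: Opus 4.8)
The plan is to treat the displayed equality as a matter of definition and to concentrate all the work on the inequality. The change of variables $\theta\mapsto Q_\lam^{1/2}\theta$ maps $\Theta$ bijectively onto $\V{\Theta}$ and satisfies $\V{R}(Q_\lam^{1/2}\theta)=R(\theta)$ and $\lVert Q_\lam^{-1/2}Q_\lam^{1/2}\theta\rVert=\lVert\theta\rVert$; since $\B{E}[W(\theta)]=\alpha[\V{R}(\theta)-\V{R}(\theta_0)]$ by definition of $W$, and $\theta_1=Q_\lam^{1/2}\hat{\th}$, I would first record that
\[
x\eqdef\alpha^{-1}\B{E}\bigl[W(\theta_1)\bigr]+\lam\lVert Q_\lam^{-1/2}\theta_1\rVert^2-\lam\lVert Q_\lam^{-1/2}\theta_0\rVert^2
\]
equals $\bigl[R(\hat{\th})+\lam\lVert\hat{\th}\rVert^2\bigr]-\bigl[\V{R}(\theta_0)+\lam\lVert Q_\lam^{-1/2}\theta_0\rVert^2\bigr]$, and that the bracket at $\theta_0$ is $\inf_{\theta\in\Theta}[R(\theta)+\lam\lVert\theta\rVert^2]$ because $\theta_0$ minimises $\V{R}+\lam\lVert Q_\lam^{-1/2}\cdot\rVert^2$ over $\V{\Theta}$. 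This settles the first display and shows $x\ge0$.

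For the bound, the engine is Theorem \thmref{thm7.3}, which is valid simultaneously for all $\theta_2\in\B{R}^d$ on an event of probability at least $1-\exp(-\eta)$; I would instantiate it at the data-dependent point $\theta_2=\theta_1$, which is legitimate precisely thanks to this uniformity. Feeding in the optimality inequality $\alpha^{-1}\sum_{i=1}^nW_i(\theta_1)+n\lam\lVert Q_\lam^{-1/2}\theta_1\rVert^2-n\lam\lVert Q_\lam^{-1/2}\theta_0\rVert^2\le0$ to bound $\sum_iW_i(\theta_1)$ from above, and writing $n\B{E}[W(\theta_1)]=n\alpha x-n\alpha\lam[\lVert Q_\lam^{-1/2}\theta_1\rVert^2-\lVert Q_\lam^{-1/2}\theta_0\rVert^2]$, the two penalty contributions cancel between the two sides and leave
\begin{multline*}
n\alpha x-\Bigl[n\alpha^2(B_3+\wh{B}_3)+\tfrac{\beta}{2}\Bigr]\lVert\theta_1-\theta_0\rVert^2-n\alpha^2(B_4+\wh{B}_4)\lVert\theta_1-\theta_0\rVert^4\\
\le\tfrac{n\alpha}{\beta}(\wh{D}-D)+\tfrac{n\alpha^2}{\beta}(B_1+\wh{B}_1)+\tfrac{n\alpha^2}{\beta^2}(B_2+\wh{B}_2)+\eta.
\end{multline*}
Applying the projection identity \eqref{eq:thconv} at $\theta_2=\theta_1\in\V{\Theta}$ gives $\lVert\theta_1-\theta_0\rVert^2\le x$; substituting $\lVert\theta_1-\theta_0\rVert^2\le x$ and $\lVert\theta_1-\theta_0\rVert^4\le x^2$ into the two negatively-signed terms, dividing by $n\alpha$, and bounding $\wh{D}-D\le\max(\wh{D}-D,0)$ produces exactly
\[
\Bigl\{1-\bigl[\alpha(B_3+\wh{B}_3)+\tfrac{\beta}{2n\alpha}\bigr]\Bigr\}x-\alpha(B_4+\wh{B}_4)x^2\le\tfrac1\beta\max(\wh{D}-D,0)+\tfrac{\alpha}{\beta}(B_1+\wh{B}_1)+\tfrac{\alpha}{\beta^2}(B_2+\wh{B}_2)+\tfrac{\eta}{n\alpha},
\]
that is, $P(x)\le c$, where $P$ and $c$ denote the left- and right-hand sides of the stated polynomial equation.

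The genuinely delicate step, and the one I expect to be the \emph{main obstacle}, is to pass from $P(x)\le c$ to the assertion that $x$ is at most the smallest positive nondegenerate root. Since $P$ is concave with $P(0)=0$, in the informative regime where the linear coefficient $1-[\alpha(B_3+\wh{B}_3)+\beta/(2n\alpha)]$ is positive the equation $P=c$ has, when nondegenerate, two positive roots $x_-<x_+$, and $P(x)\le c$ only tells us $x\notin(x_-,x_+)$, i.e. $x\le x_-$ or $x\ge x_+$; the issue is to exclude the far branch $x\ge x_+$. I would resolve this by a connectedness argument along the segment $\theta_2(s)=\theta_0+s(\theta_1-\theta_0)\in\V{\Theta}$, $s\in[0,1]$: the penalised empirical risk is convex along this segment and minimal at $s=1$ (where $\theta_1$ is the penalised minimiser over $\V{\Theta}$), so the optimality inequality used above holds for \emph{every} $\theta_2(s)$, and the entire derivation goes through verbatim with the same $P$ and the same $c$. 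This yields $P(x(s))\le c$ for all $s$, where $x(s)$ is the quantity $x$ with $\theta_1$ replaced by $\theta_2(s)$. Since $x(s)$ is continuous with $x(0)=0<x_-$ and can never enter the forbidden gap $(x_-,x_+)$, it stays $\le x_-$ throughout, and in particular $x=x(1)\le x_-$. This homotopy, rather than the algebra, is where all the care is needed; everything preceding it is bookkeeping built on Theorem \thmref{thm7.3} and \eqref{eq:thconv}.
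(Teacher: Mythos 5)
Your proof is correct. The algebraic half --- instantiating Theorem \ref{thm7.3} at the data-dependent point $\theta_2=\theta_1$ (legitimate by the uniformity in $\theta_2$), feeding in the empirical optimality inequality, and converting $\lVert\theta_1-\theta_0\rVert^2\le x$ via \eqref{eq:thconv} into $P(x)\le c$ --- is exactly the computation the paper performs, and you have correctly identified that the only delicate point is ruling out the far branch $x\ge x_+$. Where you genuinely differ from the paper is in how that branch is excluded. You run a homotopy $\theta_2(s)=\theta_0+s(\theta_1-\theta_0)$ from the population minimiser to the empirical minimiser: writing $g(s)$ for the empirical penalised risk at $\theta_2(s)$, convexity of $g$ together with $g(1)\le g(0)$ gives $g(s)\le g(0)$ for every $s$, so the optimality inequality, hence $P(x(s))\le c$, propagates along the whole segment, and continuity of $x(s)$ with $x(0)=0$ forbids $x(s)$ from ever crossing the excluded interval $(x_-,x_+)$. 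The paper instead introduces the sublevel set $\wh{\Theta}=\bigl\{\theta:R(\theta)+\lam\lVert\theta\rVert^2\le\inf R_\lam+(x_1+x_2)/2\bigr\}$ of the \emph{true} penalised risk, shows via Theorem \ref{thm7.3} that any empirical minimiser over $\wh{\Theta}$ has excess risk strictly below $(x_1+x_2)/2$ and hence at most $x_1$, and then proves by a clopen-set argument on the segment joining a restricted and the unrestricted empirical minimiser that the latter must also lie in $\wh{\Theta}$. The two arguments implement the same idea --- a connected path cannot jump over the excluded interval --- but yours is arguably more streamlined: it uses a single explicit segment and the intermediate value theorem, whereas the paper must handle possibly non-unique argmin sets and the relative topology of $[\theta_3,\theta_4]\cap\wh{\Theta}$. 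Both rest on the same two external inputs, namely the uniformity in $\theta_2$ of Theorem \ref{thm7.3} and the projection inequality \eqref{eq:thconv}.
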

%Let us remark that if we call $\theta^* = \arg \min_{\Theta} R$
%and if
%$$
%R(\hat{\th}) + \lam \lVert \hat{\th} \rVert^2
%\leq \inf_{\theta \in \Theta} \bigl[ 
%R(\theta) + \lam \lVert \theta \rVert^2 \bigr]  
%+ x, 
%$$
%then 
%$$
%R(\hat{\th}) \leq R(\theta^*) + \lam \lVert \theta^* \rVert^2 
%+ x.
%$$

\begin{proof}
Let us remark first that when the polynomial appearing
in the theorem has two distinct roots, they are of 
the same sign, due to the sign of its constant coefficient.
Let $\wh{\Omega}$ be the event of probability at least
$1 - \exp(- \eta)$ described in Theorem \thmref{thm7.3}.
For any realization of this event for which 
the polynomial described in Theorem \ref{thm7.4} 
does not have two distinct positive roots, 
the statement of Theorem \ref{thm7.4} is void, 
and therefore fulfilled. Let us consider now 
the case when the polynomial in question 
has two distinct positive roots $x_1 < x_2$. 
Consider in this case the random (trivially nonempty) closed convex set
$$
\wh{\Theta} = \bigl\{ \theta \in \Theta : R(\theta) 
+ \lam \lVert \theta \rVert^2 
\leq \inf_{\theta' \in \Theta} \bigl[ R(\theta') + \lam 
\lVert \theta' \rVert^2 \bigr] + \tfrac{x_1 + x_2}{2} \bigr\}.
$$

Let $\theta_3 \in \arg \min_{\theta \in \wh{\Theta}} r(\theta) 
+ \lam \lVert \theta \rVert^2$ and $\theta_4 \in 
\arg \min_{\theta \in \Theta} r(\theta) + \lam \lVert \theta
\rVert^2$. We see from Theorem \ref{thm7.3} 
that
\begin{equation}
\label{eq7.22}
R(\theta_3) + \lam \lVert \theta_3 \rVert^2 < R(\theta_0) 
+ \lam \lVert \theta_0 \rVert^2 + \frac{x_1 + x_2}{2},
\end{equation}
because it cannot be larger from the construction of $\wh{\Theta}$. 
On the other hand, since $\wh{\Theta} \subset \Theta$, 
the line segment $[\theta_3, \theta_4]$ 
is such that $[\theta_3, \theta_4] \cap \wh{\Theta} 
\subset \arg \min_{\theta \in \wh{\Theta}} r(\theta) 
+ \lam \lVert \theta \rVert^2$. We can therefore
apply equation \eqref{eq7.22} to any point of $[\theta_3, \theta_4]
\cap \wh{\Theta}$, which proves that $[\theta_3, \theta_4] \cap \wh{\Theta}$ is an open subset 
of $[\theta_3, \theta_4]$. But it is also a closed subset 
by construction, and therefore, as it is non empty and
$[\theta_3, \theta_4]$ is connected, it proves that 
$[\theta_3, \theta_4] \cap \wh{\Theta} = [\theta_3, \theta_4]$,
and thus that $\theta_4 \in \wh{\Theta}$. This can be applied
to any choice of $\theta_3 \in \arg \min_{\theta \in \wh{\Theta}} r(
\theta) + \lam \lVert \theta \rVert^2$
and $\theta_4 \in \arg \min_{\theta \in \Theta} r(\theta) 
+ \lam \lVert \theta \rVert^2$, 
proving that $\arg \min_{\theta \in \Theta} r(\theta) 
+ \lam \lVert \theta \rVert^2
\subset \arg \min_{\theta \in \wh{\Theta}} r(\theta) 
+ \lam \lVert \theta \rVert^2$ and therefore 
that any $\theta_4 \in \arg \min_{\theta \in \Theta} r(\theta) 
+ \lam \lVert \theta \rVert^2$ 
is such that 
$$
R(\theta_4) + \lam \lVert \theta_4 \rVert^2  
\leq \inf_{\theta \in \Theta} \big[ R(\theta) + \lam \lVert \theta 
\rVert^2 \big] + x_1.
$$
because the values between $x_1$ and $x_2$ are excluded by Theorem 
\ref{thm7.3}.
\end{proof}

The actual convergence speed of the least squares estimator $\hat{\th}$
on $\Theta$ will depend on the speed of convergence of the ``empirical
bounds'' $\wh{B}_k$ towards their expectations. We can rephrase 
the previous theorem in the following more practical way:
\begin{thm}
\label{thm6.5}
Let $\eta_0,\eta_1,\dots,\eta_5$ be positive real numbers.
With probability at least 
$$
1 - \B{P} ( \wh{D} > D + \eta_0)
- \sum_{k=1}^4 
\B{P}(\wh{B}_k - B_k> \eta_k) 
- \exp( - \eta_5), 
$$
$R(\hat{\th}) + \lam \lVert \hat{\th} \rVert^2 
- \inf_{\theta \in \Theta} \bigl[ 
R(\theta) + \lam \lVert \theta \rVert^2 \bigr] $ is smaller than
the smallest non degenerate positive root of
\begin{multline} \label{eq:deg}
\Bigl\{ 1 - \bigl[ \alpha(2B_3+\eta_3) + 
\tfrac{\beta}{2n\alpha} \bigr]  \Bigr\} x 
- \alpha(2B_4 + \eta_4)x^2 
\\ = \frac{\eta_0}{\beta} + \frac{\alpha}{\beta}
(2 B_1 + \eta_1) + \frac{\alpha}{\beta^2} 
(2 B_2 + \eta_2) + 
\frac{\eta_5}{n \alpha},
\end{multline}
where we can optimize the values of $\alpha > 0$ and 
$\beta > 0$, since this equation has non random coefficients.
For example, taking for simplicity 
\begin{align*}
\alpha & = \frac{1}{8B_3+4\eta_3},\\
\beta & = \frac{n \alpha}{2},
\end{align*}
we obtain
\begin{multline*}
x - \frac{2 B_4 + \eta_4}{4 B_3 + 2 \eta_3} x^2 
= \frac{16 \eta_0 (2 B_3 + \eta_3)}{n} 
+ \frac{8 B_1 + 4 \eta_1}{n} \\ + \frac{32(2B_3 + \eta_3)(2B_2 
+ \eta_2)}{n^2} + 
\frac{8\eta_5(2B_3+\eta_3)}{n}.
\end{multline*}
\end{thm}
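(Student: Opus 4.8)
The plan is to deduce Theorem~\ref{thm6.5} from Theorem~\ref{thm7.4} by combining a union bound for the probability with a monotonicity comparison between the two polynomial equations. First I would apply Theorem~\ref{thm7.4} with $\eta=\eta_5$, which controls the excess ridge risk $R(\hat\th)+\lam\lVert\hat\th\rVert^2-\inf_{\theta\in\Theta}[R(\theta)+\lam\lVert\theta\rVert^2]$ by the smallest non-degenerate positive root of the \emph{random} equation
$$\bigl\{1-[\alpha(B_3+\wh B_3)+\tfrac{\beta}{2n\alpha}]\bigr\}x-\alpha(B_4+\wh B_4)x^2=\tfrac1\beta\max(\wh D-D,0)+\tfrac{\alpha}{\beta}(B_1+\wh B_1)+\tfrac{\alpha}{\beta^2}(B_2+\wh B_2)+\tfrac{\eta_5}{n\alpha},$$
on an event $\wh\Omega$ of probability at least $1-\exp(-\eta_5)$. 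Alongside $\wh\Omega$ I would introduce the deviation event $\Omega_0=\{\wh D\le D+\eta_0\}\cap\bigcap_{k=1}^4\{\wh B_k\le B_k+\eta_k\}$; a union bound then gives $\P(\wh\Omega\cap\Omega_0)\ge 1-\P(\wh D>D+\eta_0)-\sum_{k=1}^4\P(\wh B_k-B_k>\eta_k)-\exp(-\eta_5)$, which is exactly the probability in the statement.

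The heart of the argument is the comparison, valid on $\Omega_0$, between the random equation above and the deterministic equation~\eqref{eq:deg}. Writing each in the form $f(x)=(1-a)x-bx^2=c$, on $\Omega_0$ the bounds $\wh B_k\le B_k+\eta_k$ and $\max(\wh D-D,0)\le\eta_0$ yield $B_k+\wh B_k\le 2B_k+\eta_k$, so the random coefficients $(a_{\mathrm r},b_{\mathrm r},c_{\mathrm r})$ are dominated by the deterministic ones $(a_{\mathrm d},b_{\mathrm d},c_{\mathrm d})$, namely $a_{\mathrm r}\le a_{\mathrm d}$, $b_{\mathrm r}\le b_{\mathrm d}$ and $0<c_{\mathrm r}\le c_{\mathrm d}$. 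I would then argue as follows. Suppose \eqref{eq:deg} has a smallest non-degenerate positive root $x_{\mathrm d}$, so that $c_{\mathrm d}$ is strictly below the maximum value $(1-a_{\mathrm d})^2/(4b_{\mathrm d})$ of the concave parabola $f_{\mathrm d}$. Since $1-a_{\mathrm r}\ge 1-a_{\mathrm d}>0$ and $b_{\mathrm r}\le b_{\mathrm d}$, the maximum of $f_{\mathrm r}$ is at least that of $f_{\mathrm d}$, hence $c_{\mathrm r}\le c_{\mathrm d}<\max f_{\mathrm r}$, so the random equation also has two distinct positive roots and its smallest positive root is non-degenerate. Evaluating $f_{\mathrm r}$ at $x_{\mathrm d}$ and using coefficient domination,
$$f_{\mathrm r}(x_{\mathrm d})=(1-a_{\mathrm r})x_{\mathrm d}-b_{\mathrm r}x_{\mathrm d}^2\ge(1-a_{\mathrm d})x_{\mathrm d}-b_{\mathrm d}x_{\mathrm d}^2=c_{\mathrm d}\ge c_{\mathrm r},$$
while $f_{\mathrm r}(0)=0<c_{\mathrm r}$, so by the intermediate value theorem $f_{\mathrm r}$ meets $c_{\mathrm r}$ somewhere in $(0,x_{\mathrm d}]$; the smallest positive root $x_{\mathrm r}$ of the random equation therefore satisfies $x_{\mathrm r}\le x_{\mathrm d}$. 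Combining with Theorem~\ref{thm7.4}, on $\wh\Omega\cap\Omega_0$ the excess ridge risk is at most $x_{\mathrm r}\le x_{\mathrm d}$, which is the claimed bound.

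The final ``for example'' assertion is a routine substitution: with $\alpha=(8B_3+4\eta_3)^{-1}$ one has $\alpha(2B_3+\eta_3)=\tfrac14$, and with $\beta=n\alpha/2$ one has $\tfrac{\beta}{2n\alpha}=\tfrac14$, so the linear coefficient $1-[\alpha(2B_3+\eta_3)+\tfrac{\beta}{2n\alpha}]$ equals $\tfrac12$; multiplying \eqref{eq:deg} through by $2$ and using $\tfrac1\alpha=4(2B_3+\eta_3)$, $\tfrac{\alpha}{\beta}=\tfrac2n$ and $\tfrac{\alpha}{\beta^2}=\tfrac{4}{n^2\alpha}$ reproduces the displayed equation term by term. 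I expect the main obstacle to be the careful bookkeeping of non-degeneracy under coefficient domination — making sure that a non-degenerate deterministic root forces a non-degenerate and no larger random root, so that Theorem~\ref{thm7.4} genuinely applies; the probabilistic and algebraic parts are otherwise direct.
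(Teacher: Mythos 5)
Your proposal is correct and follows essentially the same route as the paper, which presents Theorem \ref{thm6.5} as an immediate rephrasing of Theorem \ref{thm7.4} obtained by a union bound together with the domination of the random coefficients $\wh{D}-D$ and $B_k+\wh{B}_k$ by $\eta_0$ and $2B_k+\eta_k$ on the good event. Your explicit root-comparison argument (and the check that the random equation inherits a non-degenerate smallest positive root no larger than $x_{\mathrm d}$) simply fills in details the paper leaves implicit, and the final substitution $\alpha=(8B_3+4\eta_3)^{-1}$, $\beta=n\alpha/2$ checks out term by term.
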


\subsubsection{Proof of Theorem \ref{th:hfrlam}}

Let us now deduce Theorem \thmref{th:hfrlam} from Theorem \ref{thm6.5}.
Let us first remark that with probability at least $1 - \eps/2$ 
$$
\wh{D} \leq D + \sqrt{\frac{B_2}{\eps n}},
$$
because the variance of $\wh{D}$ is less than $\frac{B_2}{2n}$. 
For a given $\eps > 0$, let us 
take $\eta_0 = \sqrt{\frac{B_2}{\eps n}}$, 
$\eta_1 = B_1$, $\eta_2 = B_2$, $\eta_3 = B_3$ and 
$\eta_4 = B_4$. We get that $R_{\lambda}(\hat{\th}) 
- \inf_{\theta \in \Theta} R_{\lambda}(\theta)$ 
is smaller than the smallest positive non degenerate root
of 
$$
x - \frac{B_4}{2B_3}x^2 = \frac{48 B_3}{n} \sqrt{\frac{B_2}{n\eps}}
+ \frac{12 B_1}{n} + \frac{288 B_2 B_3}{n^2} + 
\frac{24 \log(3/\eps) B_3}{n},
$$
with probability at least
$$
1 - \frac{5\,\eps}{6} - \sum_{k=1}^4 
\B{P}(\wh{B}_k > B_k + \eta_k).
$$
According to the weak law of large numbers, there is $n_{\eps}$
such that for any $n \geq n_{\eps}$, 
$$
\sum_{k=1}^4 \B{P}(\wh{B}_k > B_k + \eta_k) \leq \eps/6.
$$
Thus, increasing $n_\eps$ and the constants 
to absorb the second order terms, we see that for some
$n_{\eps}$ and any $n \geq n_\eps$, 
with probability at least $1 - \eps$, 
the excess risk is less than the smallest positive 
root of
$$
x - \frac{B_4}{2B_3} x^2 = 
\frac{13 B_1}{n} + \frac{24 \log(3/\eps)B_3}{n}.
$$
Now, as soon as $ac < 1/4$, the smallest positive root 
of $x - a x^2 = c$ is $\frac{2c}{1 + \sqrt{1 - 4 ac}}$.
This means that for $n$ large enough, with probability 
at least $1 - \eps$, 
$$
R_{\lambda}(\hat{\th}) - \inf_{\theta} R_{\lambda}(\theta) 
\leq \frac{15 B_1}{n} + \frac{25 \log(3/\eps) B_3}{n},
$$
which is precisely the statement of Theorem \thmref{th:hfrlam},
up to some change of notation.

\subsubsection{Proof of Theorem \ref{th:ermom}}

Let us now weaken Theorem \ref{thm7.4} in order to make a more 
explicit non asymptotic result and obtain Theorem \ref{th:ermom}. From now 
on, we will assume that $\lam = 0$.
We start by giving bounds on the quantity defined in Theorem \ref{thm7.3} in terms of 
\[\bcR= \sup_{f\in\Span \{\vp_1,\dots,\vp_d\}-\{0\}} \fracc{\|f\|_\infty^2}{\E[f(X)]^2}.\]
Since we have 
  \[ 
  \lVert \V{X} \rVert^2
    = \lVert Q_\lam^{-1/2} X \rVert^2 \le d B,
    \]
we get
  \begin{align*}
  \wh{d} & = \frac{1}{n} \sum_{i=1}^n \lVert \V{X}_i \rVert^2 \le d B,\\
  B_1 & = 2 \B{E} \Bigl[ \lVert \V{X} \rVert^2 \bigl( \langle \theta_0, 
\V{X} \rangle - Y \bigr)^2 \Bigr] \le 2 d B \,R(f^*),\\
\wh{B}_1 & = \frac{2}{n} \sum_{i=1}^n 
\Bigl[ \lVert \V{X}_i \rVert^2 \bigl(\langle \theta_0, 
\V{X}_i \rangle - Y_i \bigr)^2 \Bigr] \le 2 d B \,r(f^*),\\
B_2 & = 2 \B{E} \Bigl[ \lVert \V{X} \rVert^4 \Bigr] \le 2 d^2 B^2,\\
\wh{B}_2 & = \frac{2}{n} \sum_{i=1}^n \lVert \V{X}_i \rVert^4 \le 2 d^2 B^2,\\
B_3 & = 40 \sup \Bigl\{ 
\B{E} \bigl[\langle u, \V{X} \rangle^2 \bigl( 
\langle \theta_0, \V{X} \rangle - Y \bigr)^2 \bigr] : 
u \in \B{R}^d, \lVert u \rVert = 1 \Bigr\} \le 40 \bcR \,R(f^*),\\
\wh{B}_3 & = \sup \biggl\{ 
\frac{40}{n} \sum_{i=1}^n \langle u, \V{X}_i \rangle^2 
\bigl( \langle \theta_0, \V{X}_i \rangle - Y_i \bigr)^2 
: u \in \B{R}^d, \lVert u \rVert = 1 \Bigr\} \le 40 \bcR \,r(f^*),\\
B_4 & = 10 \sup \Bigl\{ 
\B{E} \Bigl[ \langle u, \V{X} \rangle^4 \Bigr] : 
u \in \B{R}^d, \lVert u \rVert = 1 \Bigr\} \le 10 \bcR^2,\\
\wh{B}_4 & = \sup \biggl\{ 
\frac{10}{n} \sum_{i=1}^n \langle u , 
\V{X}_i \rangle^4 : u \in \B{R}^d, \lVert u \rVert = 1 \biggr\} \le 10 \bcR^2.
  \end{align*}
Let us put
  \[
  a_0= \frac{2 d B + 4 dB\alpha[R(f^*)+r(f^*)] + \eta}{\alpha n} + \frac{16B^2d^2}{\alpha n^2},
  \]
  \[
  a_1=3/4-40 \alpha \bcR [R(f^*)+r(f^*)],
  \]
and 
  \[
  a_2=20 \alpha \bcR^2.
  \]
Theorem \ref{thm7.4} applied with $\beta=n\alpha/2$ implies that
with probability at least $1-\eta$
the excess risk $R(\hferm) - R(f^*)$ is upper bounded by the smallest positive root
of $a_1 x - a_2 x^2 = a_0$
as soon as $a_1^2 > 4 a_0 a_2$.
In particular, setting $\eps=\exp(-\eta)$ when \eqref{eq:condr} holds, we have
  \[R(\hferm) - R(f^*) \le \frac{2a_0}{a_1+\sqrt{a_1^2 - 4 a_0 a_2}} \le \frac{2a_0}{a_1}.\] 
We conclude that
\begin{thm} \label{th:ermg}
For any $\alpha>0$ and $\eps>0$,
with probability at least $1-\eps$, if the inequality 
  	\begin{multline} \label{eq:condr}
   80 \Bigg( \frac{(2+4\alpha [R(f^*)+r(f^*)]) B d + \logeps}{n} + \bigg(\frac{4Bd}n\bigg)^2 
    \Bigg) \\< \bigg(\frac3{4\bcR} - 40 \alpha [R(f^*)+r(f^*)]\bigg)^2
	\end{multline}
holds, then we have
  \beglab{eq:ermg}
  R(\hferm) - R(f^*) \le \J \Bigg( \frac{(2+4\alpha [R(f^*)+r(f^*)]) B d + \logeps}{n} + \bigg(\frac{4Bd}n\bigg)^2 \Bigg),
  \endlab
where $\J = \fracr{8}{3 \alpha - 160 \alpha ^2\bcR [R(f^*)+r(f^*)]}$
\end{thm}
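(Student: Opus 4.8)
The plan is to obtain Theorem~\ref{th:ermg} as an explicit specialization of the implicit quadratic bound of Theorem~\ref{thm7.4}. Throughout I would take $\lam=0$ (assuming, as at the start of this section, that $Q$ is positive definite so that $\V{X}=Q^{-1/2}X$ is well defined, the degenerate case being handled by restriction to the range of $Q$), fix the free scale parameter to $\beta=n\alpha/2$, and identify the deviation parameter via $\eta=\logeps$, so that the event of probability $1-\exp(-\eta)$ supplied by Theorem~\ref{thm7.4} becomes an event of probability at least $1-\eps$. With $\beta=n\alpha/2$ one has $\tfrac{\beta}{2n\alpha}=\tfrac14$, so on this event $R(\hferm)-R(f^*)$ is at most the smallest positive non-degenerate root of the (random) quadratic $c_1 x - c_2 x^2 = c_0$, where $c_1 = \tfrac34 - \alpha(B_3+\wh{B}_3)$, $c_2=\alpha(B_4+\wh{B}_4)$ and $c_0 = \tfrac{1}{\beta}\max(\wh{D}-D,0)+\tfrac{\alpha}{\beta}(B_1+\wh{B}_1)+\tfrac{\alpha}{\beta^2}(B_2+\wh{B}_2)+\tfrac{\eta}{n\alpha}$.

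The second step is to majorize these random coefficients by the deterministic bounds already recorded, all of which rest on the uniform estimate $\lVert\V{X}\rVert^2\le d\bcR$ (this is where the geometric quantity $\bcR$ enters: each coordinate $\langle e_j,\V{X}\rangle=f_{Q^{-1/2}e_j}(X)$ has unit $L^2$-norm, hence squared sup-norm at most $\bcR$, and one sums over the $d$ coordinates). Using $B_3+\wh{B}_3\le 40\bcR[R(f^*)+r(f^*)]$, $B_4+\wh{B}_4\le 20\bcR^2$, $\max(\wh{D}-D,0)\le\wh{D}\le d\bcR$, $B_1+\wh{B}_1\le 2d\bcR[R(f^*)+r(f^*)]$ and $B_2+\wh{B}_2\le 4d^2\bcR^2$, I would check that $c_1\ge a_1$, $c_2\le a_2$ and $c_0\le a_0$ for the $a_0,a_1,a_2$ defined just before the statement. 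The smallest root of $c_2x^2-c_1x+c_0$ equals $2c_0/\bigl(c_1+\sqrt{c_1^2-4c_0c_2}\bigr)\le 2c_0/c_1\le 2a_0/a_1$, and since $c_1^2\ge a_1^2>4a_0a_2\ge 4c_0c_2$ guarantees that this root exists, the excess risk is bounded by $2a_0/a_1$ whenever $a_1^2>4a_0a_2$.

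The remaining step is purely algebraic matching of constants. Writing $E$ for the bracketed factor appearing in \eqref{eq:ermg}, a direct expansion with $\beta=n\alpha/2$ gives the identity $\alpha a_0 = E$, while $\alpha a_1=\tfrac14\bigl(3\alpha-160\alpha^2\bcR[R(f^*)+r(f^*)]\bigr)$; combining these yields $2a_0/a_1 = 2E/(\alpha a_1)=\J E$, which is exactly \eqref{eq:ermg}. For the hypothesis, substituting $a_0=E/\alpha$ and $a_2=20\alpha\bcR^2$ turns $a_1^2>4a_0a_2$ into $80E\bcR^2<a_1^2$, and factoring $a_1=\bcR\bigl(\tfrac{3}{4\bcR}-40\alpha[R(f^*)+r(f^*)]\bigr)$ and dividing by $\bcR^2$ recovers Condition~\eqref{eq:condr}.

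I do not expect a genuinely hard step: all the probabilistic content is already packaged in Theorem~\ref{thm7.4}, so this is a reduction-and-bookkeeping exercise rather than a new idea. The only points needing care are (i) the monotonicity claims $c_1\ge a_1$, $c_0\le a_0$, $c_2\le a_2$ that legitimize the root comparison $2c_0/c_1\le 2a_0/a_1$, together with the observation that the discriminant condition for the majorizing quadratic implies the one for the true quadratic (so the bounded root exists); and (ii) the implicit positivity $a_1>0$, equivalently $\tfrac{3}{4\bcR}>40\alpha[R(f^*)+r(f^*)]$, which is needed both for the roots to be positive and for $\J>0$ to make the bound meaningful, and which holds in the regime (small $\alpha$) in which the theorem is later applied.
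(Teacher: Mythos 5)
Your proposal is correct and follows essentially the same route as the paper: specialize Theorem~\ref{thm7.4} with $\lambda=0$, $\beta=n\alpha/2$ and $\eta=\logeps$, bound $\wh{D}$ and the $B_k,\wh{B}_k$ deterministically via $\lVert\V{X}\rVert^2\le d\bcR$, and then read off $2a_0/a_1=\J E$ and the discriminant condition $a_1^2>4a_0a_2$ as \eqref{eq:condr}. The only difference is that you spell out the root-comparison monotonicity step that the paper leaves implicit, which is a harmless (and welcome) addition.
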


Now,  the Bienaym\'e-Chebyshev inequality implies
  \[
  \P\big( r(f^*) -R(f^*) \ge t \big) \le \frac{\E\big(r(f^*) -R(f^*)\big)^2}{t^2} 
  \le \E[Y-f^*(X)]^4/nt^2.
  \]
Under the finite moment assumption of Theorem \ref{th:ermom}, we obtain
that for any $\eps\ge 1/n$, with probability at least $1-\eps$,
  \[
  r(f^*) < R(f^*) + \sqrt{\E[Y-f^*(X)]^4}.
  \]
From Theorem \ref{th:ermg} and a union bound,
by taking 
  \[
  \alpha=\Big(80\bcR [2R(f^*)+\sqrt{\E[Y-f^*(X)]^4}\Big)^{-1},
  \]
we get that with probability $1-2\eps$,
  \beglabc{eq:ermom}
  R(\hferm) - R(f^*) \le \J_1 \bcR \Bigg( \frac{3 B d' + \log(\eps^{-1})}{n} + \bigg(\frac{4 B d'}n\bigg)^2 \Bigg),
  \endlabc
with $\J_1 = 640 \left( 2 R(f^*) + \sqrt{\E\bigl\{ 
[Y-f^*(X)]^4 \bigr\}}\, \right)$.
This concludes the proof of Theorem \ref{th:ermom}.

\begin{rmk} \label{rmk:degen}
Let us indicate now how to handle the case when $Q$ is degenerate. 
Let us consider the linear subspace $S$ of $\B{R}^d$ spanned by 
the eigenvectors of $Q$ corresponding to positive eigenvalues.
Then almost surely $\text{Span} \{ X_i, i=1, \dots, n \} 
\subset S$. Indeed for any $\theta$ in the kernel of $Q$,
$\B{E}\bigl( \langle \theta, X \rangle^2\bigr) = 0$ 
implies that $\langle \theta, X \rangle = 0$ almost
surely, and considering a basis of the kernel, we see
that $X \in S$ almost surely, $S$ being orthogonal 
to the kernel of $Q$. Thus we can restrict the problem
to $S$, as soon as we choose
$$
\hat{\th} \in \Span \bigl\{ X_1, \dots, X_n \bigr\}
\cap \arg \min_{\theta} \sum_{i=1}^n \bigl( \langle \theta, X_i \rangle - Y_i \bigr)^2,
$$
or equivalently with the notation $\Phi=(\vp_j(X_i))_{ 1\le i \le n, 1\le j \le d}$
and $\YY=[Y_j]_{j=1}^n$,
$$
\hat{\th} \in \text{im} \Phi^T \cap \arg \min_{\theta} \| \Phi \th - Y \|^2
%\cap \arg \min_{\theta} \sum_{i=1}^n \bigl( \langle \theta, X_i \rangle - Y_i \bigr)^2,
$$
This proves that the results of this section apply to this special 
choice of the empirical least squares estimator. Since we have 
$\R^d=\text{ker}\, \Phi \oplus \text{im}\,\Phi^T$, this choice is unique.
\end{rmk}

\subsection{Proof of Theorem \ref{th:3.1}} \label{sec:p3.1}
    
We use a similar notation as in Section \ref{sec:permom}:
we write $X$ for $\vp(X)$.
Therefore, the function $f_\th$ maps an input $x$ to $\langle \theta, x \rangle$.
%With a slight abuse of notation, $R(\theta)$ will denote the risk of this prediction function.
We consider the change of coordinates
$$
\V{X} = Q_\lam^{-1/2} X.
$$
Thus, from \eqref{eq:normd}, we have
  $\B{E} \bigl[ \lVert \V{X} \rVert^2 \bigr] =D.$
We will use
$$
\V{R}(\theta) = \B{E} \bigl[ (\langle \theta, \V{X} \rangle - Y)^2 \bigr],
$$
so that $
\V{R}(Q^{1/2} \theta) 
= \B{E} \bigl[(\langle \theta, X \rangle - Y)^2 \bigr]= R(f_\theta) . 
$
Let 
$$
\V{\Theta} = \bigl\{ Q_\lam^{1/2} \theta ; \theta \in \Theta \bigr\}.
$$

Consider
\begin{align*}
\theta_0 & = \arg \min_{\theta \in \V{\Theta}} \Big\{ \V{R}(\theta) 
  + \lam \|Q_\lam^{-1/2} \theta\|^2\Big\}.  
\end{align*}
We thus have $\thrid = Q_\lam^{-1/2} \theta_0$, and 
%Note that with this change of variable, the quantities defined in 
%\eqref{eq:cs}-\eqref{eq:kapp} can be written 
\begin{align*}
\sigma & = \sqrt{\B{E} \bigl[ \big(\langle \th_0, \V{X} \rangle - Y \big)^2\bigr]}, \\ 
\chi & = %\sup_{u \in \B{R}^d} \frac{ \B{E} \bigl( \langle u, X \rangle^4 \bigr)^{1/2}}{\lVert u \rVert^2} 
\sup_{u \in \B{R}^d} \frac{ \B{E} \bigl( \langle u, \V{X} \rangle^4 \bigr)^{1/2}}{ 
\B{E} \bigl( \langle u, \V{X} \rangle^2 \bigr)},\\ 
\kappa & = \frac{\B{E} \bigl( \lVert \V{X} \rVert^4 \bigr)^{1/2}}{\B{E} 
\bigl( \lVert \V{X} \rVert^2 \bigr)} 
= \frac{ \B{E} \bigl( \lVert \V{X} \rVert^4 \bigr)^{1/2}}{ 
D} ,\\ 
\kappa' & = \frac{\B{E} \bigl[ \big(\langle \th_0, \V{X} \rangle - Y \big)^4 \bigr]^{1/2}}{\sigma^2},\\
T & = \|\V{\Theta}\| = \max_{\th,\th'\in\V{\Theta}} \|\th-\th'\|.
\end{align*}

For $\alpha>0$, we introduce
\begin{align*}
J_i(\theta) & = \langle \theta , \V{X}_i \rangle - Y_i , 
& J(\theta) & = \langle \theta, \V{X} \rangle - Y \\
\bL_i(\theta) & = \alpha \bigl( \langle \theta , \V{X}_i \rangle - Y_i \bigr)^2, 
& \bL(\theta) & = \alpha \bigl( \langle \theta, \V{X} \rangle - Y \bigr)^2\\
W_i(\theta) & = \bL_i(\theta) - \bL_i(\theta_0), & 
W(\theta) & = \bL(\theta) - \bL(\theta_0),
\end{align*}
and
\begin{align*}
r'(\th,\th') = \lam (\|Q_\lam^{-1/2} \th\|^2-\|Q_\lam^{-1/2} \th'\|^2) 
  + \frac1{n\alpha} \sum_{i=1}^n \psi\big( \bL(\th)-\bL(\th') \big).
\end{align*}
Let $\hbth=Q_\lam^{1/2}\hth \in \V{\Theta}$. We have 
  \begarlab{eq:core}
  -r'(\th_0,\hbth)=r'(\hbth,\th_0) & \le 
  \und{\max}{\th_1\in\V{\Theta}} r'(\hbth,\th_1)
%  - \und{\inf}{\th\in\V{\Theta}} \und{\max}{\th_1\in\V{\Theta}} r'(\th,\th_1)
%  + \und{\inf}{\th\in\V{\Theta}} \und{\max}{\th_1\in\V{\Theta}} r'(\th,\th_1)\\
  & \le \gamma + \und{\max}{\th_1\in\V{\Theta}} r'(\th_0,\th_1),
  \endarlab
where $\gamma=\und{\max}{\th_1\in\V{\Theta}} r'(\hbth,\th_1)
    - \und{\inf}{\th\in\V{\Theta}} \und{\max}{\th_1\in\V{\Theta}} r'(\th,\th_1)$
is a quantity which can be made arbitrary small by choice of the estimator.
By using an upper bound $r'(\th_0,\th_1)$ that holds uniformly 
in $\th_1$, we will control both left and right hand sides of \eqref{eq:core}.

To achieve this, we will 
upper bound 
  \beglab{eq:rp}
  r'(\th_0,\th_1) = \lam (\|Q_\lam^{-1/2} \th_0\|^2-\|Q_\lam^{-1/2} \th_1\|^2) + \frac1{n\alpha} 
    \sum_{i=1}^n \psi\big[ - W_i(\th_1) \big]
  \endlab
by the expectation of
a distribution depending on $\th_1$ of a \emph{quantity that does not depend on $\th_1$},
and then use the PAC-Bayesian argument to control this expectation uniformly in $\th_1$.
The distribution depending on $\th_1$ should therefore be taken such that
for any $\th_1\in\V{\Theta}$, its Kullback-Leibler divergence with
respect to some fixed distribution is small (at least when $\th_1$ is close to $\th_0$).

Let us start with the following result.
\begin{lemma} \label{le:robust}
Let $f, g : \B{R} \rightarrow \B{R}$ be two Lebesgue  measurable functions 
such that $f(x) \leq g(x)$, $x \in \B{R}$. Let us 
assume that there exists $h\in\R$ such that $x\mapsto g(x)+h \frac{x^2}2$ is convex.
%g''$ is measurable and that $- \infty < - h = \inf_{x \in \B{R}} g''(x)$. 
Then for any probability distribution $\mu$ on the real line, 
$$
f \biggl( \int x \mu(dx)  \biggr) 
\leq \int g(x) \mu(dx) + \min \Bigl\{ \sup f - \inf f, 
\frac{h}{2} \Var(\mu) \Bigr\}.
$$ 
\end{lemma}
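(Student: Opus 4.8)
The plan is to observe that the right-hand side is the minimum of two quantities, so it suffices to establish the two separate inequalities
$$f\Bigl( \int x\,\mu(dx) \Bigr) \le \int g(x)\,\mu(dx) + (\sup f - \inf f)$$
and
$$f\Bigl( \int x\,\mu(dx) \Bigr) \le \int g(x)\,\mu(dx) + \tfrac{h}{2}\Var(\mu),$$
and then retain the smaller of the two bounds. Throughout I write $m = \int x\,\mu(dx)$ and $V = \Var(\mu) = \int (x-m)^2\,\mu(dx)$, and I may assume $\int g\,d\mu<+\infty$ and $V<+\infty$, since otherwise the asserted inequality is trivial.

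For the first inequality I would use only the pointwise domination $f\le g$ together with the definitions of $\sup$ and $\inf$. Since $\inf f$ is a constant lower bound for $g$, integrating against the probability measure $\mu$ gives $\inf f \le \int g\,d\mu$. Combining this with the trivial bound $f(m)\le \sup f = \inf f + (\sup f - \inf f)$ yields the first inequality at once.

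For the second inequality I would apply Jensen's inequality to the function $\phi(x) = g(x) + h\,x^2/2$, which is convex by hypothesis. This gives $\phi(m)\le \int \phi\,d\mu$, that is
$$g(m) + \tfrac{h}{2}m^2 \le \int g\,d\mu + \tfrac{h}{2}\int x^2\,\mu(dx).$$
Using the identity $\int x^2\,\mu(dx) = V + m^2$ and cancelling $\tfrac{h}{2}m^2$ on both sides leaves $g(m)\le \int g\,d\mu + \tfrac{h}{2}V$; the domination $f(m)\le g(m)$ then gives the second inequality. I note that this step is insensitive to the sign of $h$, as Jensen requires only the convexity of $\phi$.

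Putting the two bounds together gives $f(m) - \int g\,d\mu \le \min\{\sup f - \inf f,\ \tfrac{h}{2}V\}$, which is the claim. The argument is essentially elementary; the only places that need a little care, and where the sole minor obstacle could hide, are the degenerate cases in which $\int g\,d\mu$, $\sup f - \inf f$, or $V$ is infinite (where one checks that the inequality becomes trivial), and the verification that Jensen's inequality genuinely applies, i.e.\ that the relevant integrals are well defined and finite.
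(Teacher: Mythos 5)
Your proof is correct and follows essentially the same route as the paper: the same split into two bounds, the same elementary $\sup f$/$\inf f$ argument for the first, and Jensen's inequality applied to the convexified function for the second (the paper recenters the quadratic at $x_0=\int x\,\mu(dx)$ rather than using the identity $\int x^2\,\mu(dx)=V+m^2$, but this is the same computation).
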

\begin{proof}
Let us put $x_0 = \int x \mu(dx)$ The function 
$$
x \mapsto g(x) + \frac{h}{2}(x - x_0)^2
$$ is convex. Thus, by Jensen's inequality 
$$
f(x_0) \leq g(x_0) \leq \int \mu(dx) \biggl[ 
g(x) + \frac{h}{2} (x - x_0)^2 \biggr]  
= \int g(x) \mu(dx) + \frac{h}{2} \Var(\mu).
$$ 
On the other hand
\begin{multline*}
f(x_0) \leq \sup f \leq \sup f + \int \bigl[ g(x) - \inf f \bigr] 
\mu(dx) \\ = \int g(x) \mu(d x) + \sup f - \inf f. 
\end{multline*} 
The lemma is a combination of these two inequalities.
\end{proof}

The above lemma will be used with $f=g=\psi$, where $\psi$
is the increasing influence function 
$$
\psi(x) = \begin{cases}
- \log(2), & x \leq -1,\\
\log(1 + x + x^2/2), & -1 \leq x \leq 0,\\
- \log(1 - x + x^2/2), & 0 \leq x \leq 1,\\
\log(2), & x \geq 1.
\end{cases}
$$
Since we have for any $x \in \B{R}$ 
$$
- \log \biggl( 1 - x + \frac{x^2}{2} \biggr) = 
\log \biggl( \frac{1 + x + \frac{x^2}{2}}{1 + \frac{x^4}{4}} \biggr) 
< \log \biggl( 1 + x + \frac{x^2}{2} \biggr), 
$$
the function $\psi$ satisfies for any $x \in \B{R}$  
$$
- \log \biggl( 1 - x + \frac{x^2}{2} \biggr) < \psi(x) 
< \log \biggl(1 + x + \frac{x^2}{2} \biggr).
$$
Moreover
$$
\psi'(x) = \frac{1 - x}{1 - x + \frac{x^2}{2}}, \quad 
\psi''(x) = 
\frac{x(x-2)}{2 \bigl( 1 - x + \frac{x^2}{2} \bigr)^2} 
\geq - 2, \quad 0 \leq x \leq 1,
$$
showing (by symmetry) that 
the function $x\mapsto \psi(x) + 2 x^2$ is convex on the real line.

For any $\theta' \in \B{R}^d$ and $\beta>0$, we consider the Gaussian distribution with mena~$\theta'$ and covariance $\be^{-1} I$:
$$
\rho_{\theta'}(d \theta) = \left( \frac{\beta}{2 \pi}\right)^{d/2} \exp 
\left( - \frac{\beta}{2} \lVert \th  - \th' \rVert^2 \right) d \theta.
$$

From Lemmas \ref{le:7.2} and \ref{le:robust} (with $\mu$ the distribution of
 $- W_i(\theta) + \frac{\alpha \lVert \V{X}_i \rVert^2}{\beta}$ when $\th$ is drawn from $\rho_{\th_1}$ and for a fixed pair $(X_i,Y_i)$), we can see that 
\begin{align*}
\psi \bigl[ - W_i(\theta_1) 
\bigr] & = \psi \biggl\{ \tint \rho_{\theta_1}(d \theta) 
\biggl[ - W_i(\theta) + \frac{\alpha 
\lVert \V{X}_i \rVert^2}{\beta} \biggr] \biggr\}  \\ 
& \leq \int \rho_{\theta_1}(d \theta) 
\psi \biggl[ - W_i(\theta) + \frac{\alpha \lVert \V{X}_i \rVert^2}{\beta} 
\biggr]  \\ & \qquad + \min \Bigl\{ \log(4), 
\Var_{\rho_{\theta_1}} \bigl[\bL_i(\theta) \bigr] \Bigr\}.
\end{align*}
Let us compute
\begin{align}
\frac1{\alpha^2}\Var_{\rho_{\theta_1}} \bigl[ \bL_i(\theta) \bigr] & = \Var_{\rho_{\theta_1}} \bigl[ J^2_i(\theta) - J^2_i(\theta_1) \bigr] \notag \\
& = \int \rho_{\theta_1}(d \theta) \bigl[ J^2_i(\theta) - J^2_i(\theta_1) \bigr]^2  
- \frac{\lVert \V{X}_i \rVert^4}{\beta^2} \notag \\ 
& = \int \rho_{\theta_1}(d \theta) 
\Bigl[ \langle \theta - \theta_1, \V{X}_i \rangle^2 + 2 \langle \theta- 
\theta_1, \V{X}_i \rangle J_i (\theta_1) \Bigr]^2 - \frac{\lVert \V{X}_i \rVert^4}{
\beta^2} \notag \\ 
& = \frac{2 \lVert \V{X}_i \rVert^4}{\beta^2} + \frac{4 \bL_i(\theta_1) 
\lVert \V{X}_i \rVert^2}{\alpha \beta}.   \label{eq:varL}
\end{align}
Let $\xi\in(0,1)$. Now we can remark that
$$
\bL_i(\theta_1) \leq \frac{\bL_i(\theta)}{\xi} + \frac{\alpha \langle \theta - \theta_1, 
\V{X}_i \rangle^2}{1 - \xi}.
$$
We get 
\begin{multline*}
\min \Bigl\{ \log(4), \Var_{\rho_{\theta_1}} \bigl[ 
\bL_i(\theta) \bigr] \Bigr\} 
\\ = \min \Bigl\{ \log(4), \frac{4 \alpha \lVert \V{X}_i \rVert^2 \bL_i(\theta_1)}{\beta} 
+ \frac{2 \alpha^2 \lVert \V{X}_i \rVert^4}{\beta^2} \Bigr\}
\\ \shoveleft{\leq \int \rho_{\theta_1}(d \theta) \min 
\Bigl\{ \log(4),} \\ \shoveright{\frac{4 \alpha \lVert \V{X}_i \rVert^2 \bL_i(\theta)}{\beta
\xi} + \frac{2 \alpha^2 \lVert \V{X}_i \rVert^4}{\beta^2} + 
\frac{4 \alpha^2 \lVert \V{X}_i \rVert^2 \langle \theta - \theta_1, \V{X}_i  \rangle^2}{
\beta (1 - \xi)}  \Bigr\}} \\ 
\leq \int \rho_{\theta_1}(d \theta) \min \Bigl\{ 
\log(4), \frac{4 \alpha \lVert \V{X}_i \rVert^2 \bL_i(\theta)}{\beta \xi} 
+ \frac{2 \alpha^2 \lVert \V{X}_i \rVert^4}{\beta^2} 
\Bigr\} \\ + \min \Bigl\{ \log(4), \frac{4 \alpha^2 \lVert \V{X}_i \rVert^4}{
\beta^2(1 - \xi)} \Bigr\}.
\end{multline*}
Let us now put $a = \frac{3}{\log(4)} < 2.17$, 
$b = a + a^2 \log(4) < 8.7$ and let us remark 
that
\begin{multline*}
\min \bigl\{ \log(4), x \bigr\} + \min \bigl\{ \log(4), y \bigr\}
\\ \leq \log \bigl[  1 + a  \min \{\log(4), x \} \bigr]   + 
\log (1 + a y) \\ \leq \log \bigl( 1 + a x + 
b y \bigr), \qquad x, y \in \B{R}_+.
\end{multline*}
Thus 
\begin{multline*}
\min \Bigl\{ \log(4), \Var_{\rho_{\theta_1}} \bigl[ \bL_i(\theta) 
\bigr] \Bigr\} 
\\ \leq \int \rho_{\theta_1}(d \theta) 
\log \biggl[ 1 + \frac{4 a \alpha \lVert \V{X}_i \rVert^2 \bL_i(\theta)}{\beta 
\xi} + \frac{2  \alpha^2 \lVert \V{X}_i \rVert^4}{\beta^2} 
\biggl( a + \frac{2 b}{1-\xi} \biggr) \biggr].
\end{multline*}
We can then remark that 
\begin{multline*}
\psi(x) + \log(1 + y) = \log \bigl[ \exp[\psi(x)] + y \exp[\psi(x)] \bigr] \\ 
\leq \log \bigl[ \exp[\psi(x)]  + 2y \bigr] \leq \log \biggl( 
1 + x + \frac{x^2}{2} + 2 y \biggr), \qquad x \in \B{R}, y \in \B{R}_+.
\end{multline*}
Thus, putting $\ds c_0 = a + \frac{2b}{1-\xi}$, we get  
\beglab{eq:psiA}
\psi \bigl[ - W_i( \theta_1) \bigr] 
\leq \int \rho_{\theta_1}(d \theta) \log[ A_i(\th)],
\endlab
with 
\begin{multline*}
A_i(\th)=
1 - W_i(\theta) + \frac{\alpha \lVert \V{X}_i \rVert^2}{
\beta}  + \frac{1}{2} \biggl(-W_i(\theta) + 
\frac{\alpha \lVert \V{X}_i \rVert^2}{\beta} \biggr)^2 \\
+ \frac{8 a \alpha \lVert \V{X}_i \rVert^2 \bL_i(\theta)}{\beta \xi} 
+ \frac{4 c_0 \alpha^2 \lVert \V{X}_i \rVert^4}{\beta^2}.
\end{multline*}
Similarly, we define $A(\th)$ by replacing $(X_i,Y_i)$ by $(X,Y)$.
Since we have
  $$
  \E \exp\bigg(\sum_{i=1}^n\log[ A_i(\th) ] - n\log[ \E A(\th) ] \bigg)=1,
  $$
from the usual PAC-Bayesian argument, we have with probability at least
$1-\eps$, for any $\theta_1 \in \B{R}^d$,
  \begin{align*}
  \int \rho_{\theta_1}(d \theta) \bigg(\sum_{i=1}^n\log[ A_i(\th) ]\bigg)
   -n \int \rho_{\theta_1}(d \theta) \log[ A(\th) ] 
    & \le K(\rho_{\th_1}, \rho_{\th_0}) + \logeps\\
    & \le \frac{\beta \lVert \theta_1 - \theta_0 \rVert^2}{2}  + \logeps
  \end{align*}
From \eqref{eq:rp} and \eqref{eq:psiA}, with probability at least
$1-\eps$, for any $\theta_1 \in \B{R}^d$, we get
\begin{multline*}
r'(\theta_0, \theta_1) \leq \frac{1}{\alpha} 
\log \biggl\{ 
1  + \B{E} \biggl[ \int \rho_{\theta_1}(d \theta) 
\biggl( - W(\theta) + \frac{\alpha \lVert \V{X} \rVert^2}{\beta} 
\\ + \frac{1}{2} 
\biggl( - W(\theta) + \frac{\alpha \lVert \V{X} \rVert^2}{\beta} 
\biggr)^2 
+ \frac{ 8 a \alpha \lVert \V{X} \rVert^2 \bL(\theta)}{\beta \xi} 
+ \frac{4c_0\alpha^2\lVert \V{X} \rVert^4}{\beta^2} \biggr) \biggr] 
\biggr\} \\ 
+ \frac{\beta \lVert \theta_1 - \theta_0 \rVert^2}{2 n \alpha} 
+ \frac{\log(\epsilon^{-1})}{n \alpha} + \lam (\|Q_\lam^{-1/2} \th_0\|^2-\|Q_\lam^{-1/2} \th_1\|^2) .
\end{multline*}
Now from \eqref{eq:varL} and 
  $\frac{\alpha \|\V{X}\|^2}\beta = - \bL(\th_1) + \int\rho_{\theta_1}(d \theta) \bL(\th)$, we have 
\begin{multline*}
\int \rho_{\theta_1}(d \theta) 
\biggl( - W(\theta) + \frac{\alpha \lVert \V{X} \rVert^2}{\beta} \biggr)^2 
= \Var_{\rho_{\theta_1}} \bigl[ \bL(\theta) \bigr] + W(\theta_1)^2 
\\ = W(\theta_1)^2 + 
\frac{4 \alpha \bL(\theta_1) \lVert \V{X} \rVert^2}{\beta} 
+ \frac{2 \alpha^2 \lVert \V{X} \rVert^4}{\beta^2}.
\end{multline*}
\begin{prop}
With probability at least $1 - \epsilon$, 
for any $\theta_1 \in \B{R}^d$, 
\begin{align*}
r'(\theta_0, \theta_1) & \le \frac{1}{\alpha} \log \biggl\{ 1 +  
\B{E} \biggl[ - W(\theta_1)  
+ \frac{W(\theta_1)^2}{2} 
+ \frac{\bigl(2+8a/\xi\bigr) \alpha \lVert \V{X} \rVert^2 \bL(\theta_1)}{\beta} \\ 
& \qquad\qquad + \frac{ \bigl( 1 + 8 a/\xi + 4 c_0 \bigr) \alpha^2 \lVert \V{X} \rVert^4}{\beta^2} 
\biggr] \biggr\} + 
\frac{\beta \lVert \theta_1 - \theta_0 \rVert^2}{2 n \alpha} 
+ \frac{\log(\epsilon^{-1})}{n \alpha} \\
& \qquad\qquad\qquad\qquad\qquad + \lam (\|Q_\lam^{-1/2} \th_0\|^2-\|Q_\lam^{-1/2} \th_1\|^2) \\ 
& \le \B{E} \biggl[ J(\theta_0)^2-J(\theta_1)^2 
+ \frac{1}{2\alpha} W(\theta_1)^2 
+ \frac{(2 + 8a/\xi) \lVert \V{X} \rVert^2 \bL(\theta_1)}{\beta} \\ 
& \qquad\qquad 
  + \frac{(1 + 8a/\xi + 4 c_0) \alpha \lVert \V{X} \rVert^4}{\beta^2} \biggr]   
  + \frac{\beta \lVert \theta_1 - \theta_0 \rVert^2}{2 n \alpha} 
+ \frac{\log(\epsilon^{-1})}{n \alpha}\\
& \qquad\qquad\qquad\qquad\qquad  + \lam (\|Q_\lam^{-1/2} \th_0\|^2-\|Q_\lam^{-1/2} \th_1\|^2) .
\end{align*}
\end{prop}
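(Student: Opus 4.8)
The plan is to take the high-probability bound on $r'(\th_0,\th_1)$ just obtained from \eqref{eq:rp} and \eqref{eq:psiA}—valid simultaneously for all $\th_1\in\B{R}^d$ with probability at least $1-\eps$—and to turn it into the stated form in two moves: first carry out the Gaussian integration against $\rho_{\th_1}$ inside the logarithm, then linearize the logarithm via $\log(1+u)\le u$.

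For the first move I would integrate term by term, using that $\lVert\V{X}\rVert$ is independent of $\th$ and hence factors out of every $\rho_{\th_1}$-average. By Lemma \ref{le:7.2}, $\int\rho_{\th_1}(d\th)\,W(\th)=W(\th_1)+\alpha\lVert\V{X}\rVert^2/\beta$, so the linear part collapses to $\int\rho_{\th_1}(d\th)\bigl[-W(\th)+\alpha\lVert\V{X}\rVert^2/\beta\bigr]=-W(\th_1)$. The square is handled by the variance identity displayed immediately before the statement (itself a consequence of \eqref{eq:varL}), namely $\int\rho_{\th_1}(d\th)\bigl(-W(\th)+\alpha\lVert\V{X}\rVert^2/\beta\bigr)^2=W(\th_1)^2+4\alpha\bL(\th_1)\lVert\V{X}\rVert^2/\beta+2\alpha^2\lVert\V{X}\rVert^4/\beta^2$. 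Finally, again by Lemma \ref{le:7.2} together with $\bL(\th)=W(\th)+\bL(\th_0)$, one gets $\int\rho_{\th_1}(d\th)\,\bL(\th)=\bL(\th_1)+\alpha\lVert\V{X}\rVert^2/\beta$, which supplies the $8a/\xi$ term.

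The step needing care is the collection of coefficients. Adding the $\lVert\V{X}\rVert^2\bL(\th_1)$ contributions from the $\tfrac12(\cdot)^2$ term (giving $2$) and from the $8a/\xi$ term (giving $8a/\xi$) produces the factor $(2+8a/\xi)$; adding the $\lVert\V{X}\rVert^4$ contributions from the $\tfrac12(\cdot)^2$ term ($1$), from the $8a/\xi$ term acting on the residual $\alpha\lVert\V{X}\rVert^2/\beta$ left inside $\int\rho_{\th_1}\bL$ ($8a/\xi$), and from the standalone quartic term ($4c_0$) produces $(1+8a/\xi+4c_0)$. Reassembling gives exactly the first displayed inequality.

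For the second inequality I would apply $\log(1+u)\le u$ with $u=\B{E}[\,\cdot\,]$, turning $\frac1\alpha\log\{1+\B{E}[\,\cdot\,]\}$ into $\frac1\alpha\B{E}[\,\cdot\,]$; distributing $1/\alpha$ cancels one power of $\alpha$ in each term, and rewriting $-W(\th_1)/\alpha=J(\th_0)^2-J(\th_1)^2$ (from $\bL=\alpha J^2$, so $W(\th_1)=\alpha[J(\th_1)^2-J(\th_0)^2]$) yields the claimed bound. Given the derivation preceding the statement, nothing here is deep; the only obstacle is the bookkeeping of the quadratic and quartic coefficients, in particular remembering to route the leftover $\alpha\lVert\V{X}\rVert^2/\beta$ from $\int\rho_{\th_1}\bL(\th)$ into the $\lVert\V{X}\rVert^4$ term.
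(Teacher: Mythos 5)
Your proposal is correct and follows the paper's own route exactly: starting from the PAC-Bayesian bound obtained from \eqref{eq:rp} and \eqref{eq:psiA}, you carry out the Gaussian integration term by term via Lemma \thmref{le:7.2} and the variance identity \eqref{eq:varL}, and your coefficient bookkeeping ($2+8a/\xi$ from the half-square plus the $8a/\xi$ term, and $1+8a/\xi+4c_0$ including the residual $\alpha\lVert\V{X}\rVert^2/\beta$ routed into the quartic term) matches the paper's computation. The second inequality via $\log(1+u)\le u$ and the identification $-W(\th_1)/\alpha = J(\th_0)^2-J(\th_1)^2$ is likewise exactly what the paper does.
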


By using the triangular inequality and Cauchy-Scwarz's inequality, 
we get
\begin{align*}
\frac1{\alpha^2}\B{E} \bigl[ W(\theta_1)^2 \bigr] 
& = \B{E} \Bigl\{ \bigl[ \langle \theta_1 - \theta_0, \V{X} \rangle^2 + 2 
\langle \theta_1 - \theta_0, \V{X} \rangle J(\theta_0) \bigr]^2  \Bigr\} \\ 
& \leq \Bigl\{ \B{E} \bigl[ \langle \theta_1 - \theta_0, \V{X} \rangle^4 
\bigr]^{1/2} + 2 \B{E} \bigl[ \langle \theta_1 - \theta_0, \V{X} \rangle^4 
\bigr]^{1/4} \B{E} \bigl[ J(\theta_0)^4 \bigr]^{1/4} \Bigr\}^2 \\
& \leq \Biggl\{ \chi \lVert \theta_1 - \theta_0 \rVert^2 \B{E} \biggl[ 
  \biggl\langle \frac{\theta_1 - \theta_0}{\lVert \theta_1 - \theta_0 \rVert}, \V{X} \biggr\rangle^2 
\biggr] \\
& \qquad\qquad\qquad + 2 \lVert \theta_1 - \theta_0 \rVert \sigma \sqrt{\kappa' \chi}
\sqrt{\B{E} \biggl[ 
  \biggl\langle \frac{\theta_1 - \theta_0}{\lVert \theta_1 - \theta_0 \rVert}, \V{X} \biggr\rangle^2 
\biggr]} \Biggr\}^2 \\
& \le \frac{\chi q_{\max}}{q_{\max}+\lam} \lVert \theta_1 - \theta_0 \rVert^2 \biggl\{ 
  \lVert \theta_1 - \theta_0 \rVert \sqrt{\frac{\chi q_{\max}}{q_{\max}+\lam}}
  + 2 \sigma \sqrt{\kappa'} \biggr\}^2,
\end{align*}
and
\begin{align*}
\frac1\alpha\B{E} \bigl[ \lVert \V{X} \rVert^2 \bL(\theta_1) \bigr] 
& = \B{E} \Bigl\{ \bigl[ \lVert \V{X} \rVert \langle \theta_1 - \theta_0, \V{X} \rangle 
+ \lVert \V{X} \rVert J(\theta_0) \bigr]^2 \Bigr\} \\ 
& \le \B{E} \bigl[ \lVert \V{X} \rVert^4 \bigr]^{1/2} \Bigr\{ 
\B{E} \bigl[ \langle \theta_1 - \theta_0, \V{X} \rangle^4 \bigr]^{1/4} 
+ \B{E} \bigl[ J(\theta_0)^4 \bigr]^{1/4} \Bigr\}^2\\
& \le \kappa D \biggl\{ 
  \lVert \theta_1 - \theta_0 \rVert \sqrt{\frac{\chi q_{\max}}{q_{\max}+\lam}}
  + 2 \sigma \sqrt{\kappa'} \biggr\}^2,
\end{align*}

Let us put 
\begin{align*}
\wt{R}(\th) & =\V{R}(\th) + \lam \| Q_\lam^{-1/2} \th \|^2,\\
c_1 & = 4(2+8a/\xi), \\
c_2 & = 4(1 + 8a/\xi+4c_0), \\
\delta & = \frac{c_1 \kappa \kappa' D \sigma^2}{n}  
+ \frac{2 \chi \bigl( \frac{\log(\epsilon^{-1})}{n} + 
\frac{c_2 \kappa^2 D^2}{n^2} \bigr) \bigl[ 2 \sqrt{\kappa'}\sigma 
+ \lVert \V{\Theta} \rVert \sqrt{\chi} \bigr]^2}{ 1 - \frac{4 c_1\kappa \chi D}{n} }.
\end{align*}

We have proved 
the following result.
\begin{prop}
With probability at least $1 - \epsilon$, for any $\theta_1 \in \B{R}^d$, 
\begin{multline*}
r'(\theta_0, \theta_1) \leq 
\wt{R}(\theta_0) - \wt{R}(\theta_1) + \frac{\alpha}{2} \chi \lVert \theta_1 - \theta_0 
\rVert^2 \bigl[ 2 \sqrt{\kappa'} \sigma + \lVert \theta_1 - \theta_0 \rVert \sqrt{\chi}
 \bigr]^2  \\
+ \frac{c_1 \alpha}{4\beta} \kappa D \bigl[ 
\sqrt{\kappa'} \sigma + \lVert \theta_1 - \theta_0 \rVert \sqrt{\chi} \bigr]^2 
+ \frac{c_2 \alpha \kappa^2 D^2}{4\beta^2} \\ 
+ \frac{\beta \lVert \theta_1 - \theta_0 \rVert^2}{2 n \alpha} 
+ \frac{\log(\epsilon^{-1})}{n \alpha}.
\end{multline*}
\end{prop}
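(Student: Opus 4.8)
The plan is to obtain this final Proposition as a pure substitution-and-relabelling step from the second inequality of the preceding Proposition: all the genuine probabilistic content (the uniform-in-$\th_1$ PAC-Bayesian control of $r'(\th_0,\th_1)$ via Lemma~\ref{le:robust}, the Gaussian smoothing $\rho_{\th_1}$, and the divergence bound $K(\rho_{\th_1},\rho_{\th_0})\le\tfrac{\be}{2}\|\th_1-\th_0\|^2$) is already done there. First I would take as given the bound
\[
r'(\th_0,\th_1)\le \E\Bigl[J(\th_0)^2-J(\th_1)^2+\tfrac{1}{2\alpha}W(\th_1)^2+\tfrac{(2+8a/\xi)\|\V{X}\|^2\bL(\th_1)}{\be}+\tfrac{(1+8a/\xi+4c_0)\alpha\|\V{X}\|^4}{\be^2}\Bigr]+\tfrac{\be\|\th_1-\th_0\|^2}{2n\alpha}+\tfrac{\log(\eps^{-1})}{n\alpha}+\lam\bigl(\|Q_\lam^{-1/2}\th_0\|^2-\|Q_\lam^{-1/2}\th_1\|^2\bigr),
\]
and bound the bracketed expectation term by term, leaving the last three summands untouched.

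For the leading pair, since $J(\th)=\langle\th,\V{X}\rangle-Y$ gives $\E[J(\th_0)^2]=\V{R}(\th_0)$ and $\E[J(\th_1)^2]=\V{R}(\th_1)$, the two expectations together with the two ridge terms collapse, by the definition $\wt{R}(\th)=\V{R}(\th)+\lam\|Q_\lam^{-1/2}\th\|^2$, exactly to $\wt{R}(\th_0)-\wt{R}(\th_1)$. For the $W(\th_1)^2$ contribution I would insert the displayed estimate $\tfrac1{\alpha^2}\E[W(\th_1)^2]\le\tfrac{\chi q_{\max}}{q_{\max}+\lam}\|\th_1-\th_0\|^2\bigl(\|\th_1-\th_0\|\sqrt{\tfrac{\chi q_{\max}}{q_{\max}+\lam}}+2\sigma\sqrt{\kappa'}\bigr)^2$, multiply by $\tfrac{\alpha}{2}$, and then discard the harmless factor $\tfrac{q_{\max}}{q_{\max}+\lam}\le1$, replacing every $\tfrac{\chi q_{\max}}{q_{\max}+\lam}$ by $\chi$; this is precisely $\tfrac{\alpha}{2}\chi\|\th_1-\th_0\|^2[2\sqrt{\kappa'}\sigma+\|\th_1-\th_0\|\sqrt{\chi}]^2$. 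For the $\|\V{X}\|^2\bL(\th_1)$ contribution I would use the companion bound $\tfrac1\alpha\E[\|\V{X}\|^2\bL(\th_1)]\le\kappa D[\sqrt{\kappa'}\sigma+\|\th_1-\th_0\|\sqrt{\chi}]^2$ (here the cross term carries coefficient $1$, not $2$, because $\|\V{X}\|J(\th_1)$ splits as $\|\V{X}\|\langle\th_1-\th_0,\V{X}\rangle+\|\V{X}\|J(\th_0)$ with no doubling), multiply by $\tfrac{(2+8a/\xi)\alpha}{\be}$, and substitute $2+8a/\xi=c_1/4$ to get $\tfrac{c_1\alpha}{4\be}\kappa D[\sqrt{\kappa'}\sigma+\|\th_1-\th_0\|\sqrt{\chi}]^2$. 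Finally $\E[\|\V{X}\|^4]=(\kappa D)^2$ by the definition of $\kappa$, so the last expectation term becomes $\tfrac{(1+8a/\xi+4c_0)\alpha\kappa^2D^2}{\be^2}=\tfrac{c_2\alpha\kappa^2D^2}{4\be^2}$; copying $\tfrac{\be\|\th_1-\th_0\|^2}{2n\alpha}$ and $\tfrac{\log(\eps^{-1})}{n\alpha}$ verbatim assembles the entire right-hand side.

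I do not expect a real obstacle at this final stage: it is bookkeeping once the preceding Proposition and the two Cauchy--Schwarz moment bounds are available. The only points requiring care are (i) the normalizations in the Cauchy--Schwarz steps, namely using $\E[\langle u,\V{X}\rangle^2]\le1$ together with the definitions of $\chi$, $\kappa$, $\kappa'$ so that $\E[\langle u,\V{X}\rangle^4]^{1/4}\le\sqrt{\chi}$ and $\E[J(\th_0)^4]^{1/4}=\sqrt{\kappa'}\sigma$, and (ii) matching the constants $c_1=4(2+8a/\xi)$ and $c_2=4(1+8a/\xi+4c_0)$ so the factors $4$ cancel. The genuine difficulty of the argument sits upstream in the uniform-in-$\th_1$ bound of the previous Proposition; by contrast this Proposition is merely its clean, constant-explicit restatement (with $\tfrac{q_{\max}}{q_{\max}+\lam}\le1$ used to remove the ridge-dependent factors), which is later fed into \eqref{eq:core} to control the excess ridge risk.
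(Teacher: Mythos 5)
Your proposal is correct and follows essentially the same route as the paper: the paper likewise derives this Proposition by substituting the two Cauchy--Schwarz moment bounds for $\E[W(\theta_1)^2]$ and $\E[\lVert \V{X}\rVert^2 \bL(\theta_1)]$ into the preceding Proposition, collapsing $\E[J(\theta_0)^2-J(\theta_1)^2]$ plus the ridge terms into $\wt{R}(\theta_0)-\wt{R}(\theta_1)$, using $q_{\max}/(q_{\max}+\lam)\le 1$ and $\E[\lVert\V{X}\rVert^4]=\kappa^2D^2$, and matching $c_1/4=2+8a/\xi$, $c_2/4=1+8a/\xi+4c_0$. Your observation that the cross term in the $\lVert\V{X}\rVert^2\bL(\theta_1)$ bound carries coefficient $1$ (so the third bracket reads $\sqrt{\kappa'}\sigma$ rather than $2\sqrt{\kappa'}\sigma$) is exactly the bookkeeping the paper performs.
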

Let us assume from now on that $\theta_1 \in \V{\Theta}$, our convex bounded parameter set.
In this case, as seen in \eqref{eq:thconv}, we have
$\lVert \theta_0 - \theta_1 \rVert^2 \leq \wt{R}(\theta_1) - \wt{R}(\theta_0)$.
%because $\Theta$ is assumed to be convex. 
We can also use the fact that 
$$
\bigl[ \sqrt{\kappa'} \sigma + \lVert \theta_1 - \theta_0 \rVert \sqrt{\chi} \bigr]^2
\leq 2 \kappa' \sigma^2 + 2 \chi \lVert \theta_1 - \theta_0 \rVert^2.
$$ 
We deduce from these remarks that with 
probability at least $1 - \epsilon$, 
\begin{multline*}
r'(\theta_0, \theta_1) \leq 
\biggl\{ -1 + \frac{\alpha \chi}{2} \bigl[ 2 \sqrt{\kappa'} \sigma + \lVert \V{\Theta} \rVert \sqrt{\chi} 
\bigr]^2  + \frac{\beta}{2 n \alpha} + \frac{c_1 \alpha \kappa D \chi}{2\beta} 
\biggr\} \bigl[ \wt{R}(\theta_1) - \wt{R}(\theta_0) \bigr] 
\\ + \frac{c_1 \alpha \kappa D \kappa' \sigma^2}{2\beta} + 
\frac{c_2 \alpha \kappa^2 D^2}{4\beta^2} + \frac{\log(\epsilon^{-1})}{n \alpha}. 
\end{multline*}
Let us assume that $n > 4c_1 \kappa \chi D$
and let us choose
\begin{align*}
\beta & = \frac{n \alpha}{2}, \\ 
\alpha & = \frac{1}{2 \chi \bigl[ 2 \sqrt{\kappa'} \sigma
+ \lVert \V{\Theta} \rVert \sqrt{\chi} \bigr]^2} \biggl(1  - \frac{4c_1\kappa \chi D}{n} 
\biggr), 
\end{align*}
to get
$$
r'(\theta_0, \theta_1) \leq - \frac{\wt{R}(\theta_1) - \wt{R}(\theta_0)}{2} + \delta.
$$
Plugging this into \eqref{eq:core}, we get
\begin{align*}
\frac{\wt{R}(\hbth) - \wt{R}(\theta_0)}{2} - \delta \le r'(\hbth, \theta_0) 
\le \und{\max}{\th_1\in\V{\Theta}} \bigg( \frac{\wt{R}(\th_0) - \wt{R}(\th_1)}{2} \bigg)
+ \gamma + \delta =  \gamma + \delta,
\end{align*}
hence
\begin{align*}
\wt{R}(\hbth) - \wt{R}(\theta_0) \le 2 \gamma + 4 \delta.
\end{align*}
Computing the numerical values of the constants when $\xi = 0.8$ gives 
$c_1 < 95$ and $c_2 < 1511$.

\subsection{Proof of Theorem \ref{th:gen}} \label{sec:pgeneric}

We use the standard way of obtaining PAC bounds through 
upper bounds on Laplace transform of appropriate random variables.
This argument is synthetized in the following result.

\begin{lemma} \label{le:pac}
For any $\eps>0$ and any real-valued random variable $V$ such that $\E \bigl[ 
\exp(V) \bigr] \le 1$,
with probability at least $1-\eps$, we have
    \[
    V \le \logeps.
    \]
\end{lemma}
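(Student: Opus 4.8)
The plan is to apply the exponential Markov (Chebyshev) inequality directly, since the hypothesis $\E[\exp(V)] \le 1$ is tailored exactly for this purpose. The statement to establish is equivalent to the tail bound $\P\bigl(V > \logeps\bigr) \le \eps$, so I would work with the complementary event and bound its probability.

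First I would observe that $\exp(V)$ is a nonnegative random variable, which is what allows Markov's inequality to be invoked. For any threshold $t\in\R$, the event $\{V > t\}$ coincides with $\{\exp(V) > \exp(t)\}$ by monotonicity of the exponential, so Markov's inequality yields
	\[
	\P(V > t) = \P\bigl(\exp(V) > \exp(t)\bigr) \le \frac{\E[\exp(V)]}{\exp(t)} \le \exp(-t),
	\]
where the last inequality uses the assumption $\E[\exp(V)] \le 1$. The final step is simply to calibrate the threshold: choosing $t = \logeps = \log(\eps^{-1})$ gives $\exp(-t) = \eps$, hence $\P(V > \logeps) \le \eps$. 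Passing to the complement shows that with probability at least $1-\eps$ we have $V \le \logeps$, which is the claim.

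There is essentially no obstacle here; the only point requiring care is that no integrability or moment assumption beyond $\E[\exp(V)] \le 1$ is needed, and in particular $V$ itself need not be integrable, since the argument passes entirely through the nonnegative random variable $\exp(V)$. This lemma is the elementary device underlying all the PAC-Bayesian deviation bounds in the paper: every such bound is obtained by exhibiting a random variable $V$ (typically of the form $\int\rho(df)\,\chi(f) - K(\rho,\pi')$ after the entropic equality, or a sum of truncated log-terms minus its expected counterpart) for which the Laplace-transform condition $\E[\exp(V)]\le 1$ can be verified, and then reading off the high-probability bound from this lemma.
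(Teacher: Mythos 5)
Your proof is correct and is exactly the standard exponential Markov/Chebyshev argument that the paper relies on (it states the lemma without proof, calling it the standard way of obtaining PAC bounds through Laplace transforms, and elsewhere refers to it as the exponential Chebyshev inequality). The calibration $t=\logeps$ and the remark that no integrability of $V$ itself is needed are both accurate; nothing further is required.
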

\begin{multline*}
\text{Let }  V_1(\hf) = \int \big[ \La(\hf,f) + \gas \bR(f) \big] \pis_{-\gas \bR}(df) - \ga \bR(\hf) \\
		- \I^*(\gas) + \I(\ga)
	    + \log \bigg(\int \exp \bigl[ -\hcE(f) \bigr] \pi(df)\bigg) 
	    - \log \biggl[  \frac{d \rho}{d \hpi}\bigl(\hf\bigr) \biggr],
    \end{multline*}
    \[
\text{and }    V_2 = - \log \bigg(\int \exp\bigl[ -\hcE(f)
\bigr]  \pi(df)\bigg) + \log \bigg(\int \exp \bigl[ -\cEb(f)\bigr] \pi(df)\bigg)
    \]
To prove the theorem, according to Lemma \ref{le:pac}, it suffices to prove that
$$
\E \Bigl\{ \tint \exp \bigl[ V_1(\hf) \bigr]  \rho(d \hf) 
\Bigr\}  \le 1
\quad \text{and} \quad \E \Bigl[ \tint \exp (V_2 ) \rho(d \hf) \Bigr]  \le 1.
$$
These two inequalities are proved in the following two sections.

\subsubsection{Proof of $\E \Bigl\{ \int \exp \bigl[ V_1(\hf) 
\bigr]  \rho(d \hf) \Bigr\} \le 1$}

From Jensen's inequality, we have
    \begin{align*}
    \int & \big[ \La(\hf,f) + \gas \bR(f) \big] \pis_{-\gas \bR}(df)\\
        & = \int \big[ \hL(\hf,f) + \gas \bR(f) \big]  \pis_{-\gas \bR}(df) + \int \big[ \La(\hf,f) - \hL(\hf,f) \big] \pis_{-\gas \bR}(df)\\
        & \le \int \big[ \hL(\hf,f) + \gas \bR(f) \big] \pis_{-\gas \bR}(df) + \log \int \exp \bigl[ \La(\hf,f) - \hL(\hf,f) \bigr] \pis_{-\gas \bR}(df).
    %& = \int \big[ \hL(\hf,f) + \gas \bR(f) \big] \pis_{-\gas \bR}(df) + \log \int e^{\La(\hf,f) - \hL(\hf,f) } \pis(df) + \I^*(\gas). %- \log \int e^{-\gas \bR(f)} \pis(df).
    \end{align*}

From Jensen's inequality again, 
    \begin{align*}
    - \hcE(\hf) & = - \log \int \exp \bigl[ \hL(\hf,f) \bigr] \pis(df)\\
    & = - \log \int \exp \bigl[ \hL(\hf,f)+\gas \bR(f) \bigr] \pis_{-\gas \bR}(df)
        - \log \int \exp \bigl[ -\gas \bR(f)\bigr]  \pis(df)\\
    & \le - \int [ \hL(\hf,f)+\gas \bR(f) ] \pis_{-\gas \bR}(df) + \I^*(\gas).
    \end{align*}
    
From the two previous inequalities, we get
    \begin{align*}
    V_1(\hf) & \le \int \big[ \hL(\hf,f) + \gas \bR(f) \big]  \pis_{-\gas \bR}(df) \\ 
& \qquad + \log \int \exp \bigl[ \La(\hf,f)- \hL(\hf,f) \bigr]  
\pis(df) - \ga \bR(\hf) \\
	& \qquad - \I^*(\gas) + \I(\ga)
	    + \log \bigg(\int \exp \bigl[ -\hcE(f) \bigr] \pi(df)\bigg) 
	    - \log \biggl[  \frac{d \rho}{d \hpi}(\hf) \biggr],\\
    & = \int \big[ \hL(\hf,f) + \gas \bR(f) \big] \pis_{-\gas \bR}(df) 
\\ & \qquad + \log \int \exp \bigl[ \La(\hf,f)- \hL(\hf,f)  \bigr]  \pis(df) - \ga \bR(\hf) \\
	& \qquad - \I^*(\gas) + \I(\ga) - \hcE(\hf) - \log\biggl[ 
\frac{d \rho}{d \pi}(\hf) \biggr],\\
    & \le \log \int \exp \bigl[ \La(\hf,f)- \hL(\hf,f) \bigr]  
\pis_{-\gas \bR}(df)(df) \\ & \qquad - \ga \bR(\hf) + \I(\ga) - \log\biggl[ 
\frac{d \rho}{d \pi}(\hf) \biggr]\\
    & = \log \int \exp \bigl[ \La(\hf,f)- \hL(\hf,f) \bigr]  \pis_{-\gas \bR}(df) + \log \biggl[ \frac{d \pi_{-\ga \bR}}{d \rho}(\hf) \biggr],
	\end{align*}
hence, by using Fubini's inequality and the equality 
\begin{multline*}
{} \hfill \E \Bigl\{ \exp \bigl[ - \hL(\hf,f) \bigr] \Bigr\}  = \exp \bigl[ -\La(\hf,f)
\bigr], \hfill {} \\ 
\shoveleft{\hspace{-2ex}\text{we obtain } \E \int \exp \bigl[ V_1(\hf) \bigr]  
\rho(\hf)}  \\ \le \E \int \bigg(\int \exp \bigl[ \La(\hf,f) - \hL(\hf,f)\bigr]  
\pis_{-\gas \bR}(df) \bigg) \pi_{-\ga \bR}(d\hf)\\
     =  \int \bigg(\int \E \exp \bigl[ \La(\hf,f) - \hL(\hf,f) \bigr] 
\pis_{-\gas \bR}(df) \bigg) \pi_{-\ga \bR}(d\hf)
     = 1. 
	\end{multline*}
	
\subsubsection{Proof of $\,\E \Bigl[ \int \exp (V_2) \rho(d \hf) \Bigr]  \le 1$}

It relies on the following result.

\begin{lemma} \label{le:concpart}
Let $\W$ be a real-valued measurable function defined on a product space $\A_1\times\A_2$ and
let $\mu_1$ and $\mu_2$ be probability distributions on respectively $\A_1$ and $\A_2$.
\bi
\item if $\expec{a_1}{\mu_1} \Bigl\{ 
 \log \Bigl[ \expec{a_2}{\mu_2} \bigl\{ \exp \bigl[ -\W(a_1,a_2) \bigr] 
\bigr\} \Bigr] \Bigr\}  < +\infty$, then we have
\begin{multline*}
        - \expec{a_1}{\mu_1} \Bigl\{ 
\log \Bigl[ \expec{a_2}{\mu_2} \bigl\{ \exp \bigl[ -\W(a_1,a_2)\bigr] \bigr\}
\Bigr] \Bigr\}  \\ 
\le - \log \Bigl\{ \expec{a_2}{\mu_2} \Bigl[ \exp \bigl[ 
-\expec{a_1}{\mu_1} \W(a_1,a_2) \bigr] \Bigr] \Bigr\}.
\end{multline*}
\item if $\W>0$ on $\A_1\times\A_2$ and $\expec{a_2}{\mu_2} 
\Bigl\{ \expec{a_1}{\mu_1} \bigl[ \W(a_1,a_2) \bigr]^{-1} 
\Bigr\}^{-1} < +\infty$, then 
$$
\expec{a_1}{\mu_1} \Bigl\{ \expec{a_2}{\mu_2} \Bigl[ \W(a_1,a_2)^{-1} 
\Bigr]^{-1} \Bigr\}  \le \expec{a_2}{\mu_2} \Bigl\{ \expec{a_1}{\mu_1} 
\bigl[ \W(a_1,a_2) \bigr]^{-1} \Bigr\}^{-1}.
$$

\ei
\end{lemma}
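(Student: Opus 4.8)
The plan is to read both inequalities as a single Jensen-type phenomenon applied to a functional of the inner variable $a_2$: the first inequality is Jensen for the \emph{convex} log-Laplace functional, while the second is Jensen for the \emph{concave} harmonic-mean functional. Rather than invoke an infinite-dimensional Jensen inequality directly (which would force me to worry about measurability and convexity in a function space), I would in each case replace the functional by its variational representation as an extremum of \emph{affine} functionals. The entire analytic content then collapses to Fubini's theorem plus the elementary bounds $\sup\E\le\E\sup$ and, dually, $\E\inf\le\inf\E$, both of which are valid under the stated integrability hypotheses.

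For the first assertion I would set $\Lambda(\phi)=\log\E_{a_2\sim\mu_2}\bigl[\exp\phi(a_2)\bigr]$ and use the Legendre duality formula for the Kullback--Leibler divergence,
\[
\Lambda(\phi)=\sup_{\rho}\Bigl\{\int \phi\, d\rho - K(\rho,\mu_2)\Bigr\},
\]
the supremum running over probability measures $\rho\ll\mu_2$ on $\A_2$; being a supremum of affine functions of $\phi$, $\Lambda$ is convex. Applying this with $\phi=-\W(a_1,\cdot)$ and integrating over $a_1\sim\mu_1$, I would then compute
\[
\Lambda\bigl(-\E_{a_1}\W\bigr)=\sup_{\rho}\Bigl\{\int(-\E_{a_1}\W)\,d\rho - K(\rho,\mu_2)\Bigr\}=\sup_{\rho}\E_{a_1}\Bigl[\int(-\W)\,d\rho - K(\rho,\mu_2)\Bigr]\le \E_{a_1}\Lambda\bigl(-\W(a_1,\cdot)\bigr),
\]
where the middle equality is Fubini and the last step is $\sup\E\le\E\sup$. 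Unfolding the definition of $\Lambda$ and multiplying by $-1$ is exactly the claimed inequality, and the hypothesis $\E_{a_1}\log\E_{a_2}\exp[-\W]<+\infty$ guarantees every quantity is finite.

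For the second assertion the relevant functional is the harmonic mean $\mathrm{HM}(\phi)=\bigl(\E_{a_2\sim\mu_2}[\phi^{-1}]\bigr)^{-1}$ of a positive function $\phi$. Here Cauchy--Schwarz in the form $1=(\E_{a_2}g)^2\le \E_{a_2}[\phi g^2]\,\E_{a_2}[\phi^{-1}]$ gives the variational identity
\[
\mathrm{HM}(\phi)=\inf_{g\ge 0,\ \E_{a_2}g=1}\ \E_{a_2\sim\mu_2}\bigl[\phi\, g^2\bigr],
\]
with the infimum attained at $g\propto\phi^{-1}$; as an infimum of linear functions of $\phi$, $\mathrm{HM}$ is concave. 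Writing $\bar\W(a_2)=\E_{a_1}\W(a_1,a_2)$, I would fix an admissible $g$, use $\mathrm{HM}(\W(a_1,\cdot))\le\E_{a_2}[\W(a_1,\cdot)g^2]$ pointwise in $a_1$, integrate over $a_1$, and apply Fubini to obtain
\[
\E_{a_1\sim\mu_1}\,\mathrm{HM}\bigl(\W(a_1,\cdot)\bigr)\le \E_{a_1}\E_{a_2}\bigl[\W\, g^2\bigr]=\E_{a_2}\bigl[\bar\W\, g^2\bigr];
\]
taking the infimum over $g$ on the right yields $\E_{a_1}\mathrm{HM}(\W(a_1,\cdot))\le \mathrm{HM}(\bar\W)$, which is precisely the stated inequality once one unfolds the two harmonic means. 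Positivity of $\W$ and the finiteness condition keep all integrals well defined.

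The step I expect to be the real crux—and the one I would write out most carefully—is the interchange of the extremum (over $\rho$, respectively over $g$) with the expectation over $a_1$, since this is exactly the point where a direct appeal to Jensen in a function space would need justification. The whole purpose of passing to the affine/linear variational representations is to reduce this interchange to the trivial inequalities $\sup\E\le\E\sup$ and $\E\inf\le\inf\E$ combined with Fubini. It is also worth emphasizing that the two parts are perfectly dual: part one rests on the convexity of the log-Laplace transform and its supremum representation, whereas part two rests on the concavity of the harmonic mean and its infimum representation.
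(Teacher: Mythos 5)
Your proof is correct. For the first assertion you follow essentially the same route as the paper: the Donsker--Varadhan/Legendre duality for the Kullback--Leibler divergence (the paper's equation for $-\log\E_{\mu}e^{-h}$ as an infimum over posteriors, which you state in the equivalent supremum form), followed by Fubini and the interchange $\sup\E\le\E\sup$; the only difference is cosmetic. For the second assertion, however, you take a genuinely different path. The paper rewrites the inner harmonic mean as $\exp\{-\log\E_{a_2\sim\mu_2}\exp[-\log\W]\}$, applies the KL duality formula twice more, and crucially invokes the \emph{first} assertion (with the roles of $a_1$ and $a_2$ swapped and $\W$ replaced by $-\log\W$) to push $\E_{a_1}$ inside; your argument instead uses the Cauchy--Schwarz variational representation $\bigl(\E_{a_2}[\phi^{-1}]\bigr)^{-1}=\inf_{g\ge0,\,\E_{a_2}g=1}\E_{a_2}[\phi g^2]$ and the same $\E\inf\le\inf\E$ interchange, making part two self-contained, free of the exp/log gymnastics, and arguably more transparent about why the harmonic-mean functional is concave. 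What the paper's route buys is uniformity (a single duality tool drives both halves and the surrounding PAC-Bayesian machinery); what yours buys is independence of the two parts and a more elementary second half. One small point to write out if you formalize this: the infimum in your variational identity is attained at $g\propto\bar\W^{-1}$ only when $\E_{a_2}[\bar\W^{-1}]<\infty$; in the degenerate case $\E_{a_2}[\bar\W^{-1}]=\infty$ you should take $g\propto\min(\bar\W^{-1},M)$ and let $M\to\infty$ to see that the infimum still equals the harmonic mean, so the conclusion is unaffected.
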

%\begin{proof}
\noindent{\sc Proof.}
\bi
\item 
Let $\A$ be a measurable space and $\M$ denote the set of probability distributions on $\A$.
The Kullback-Leibler divergence between a distribution $\rho$ and a distribution~$\mu$ is
\beglab{eq:kl}
K(\rho,\mu) \eqdef \begin{cases}\ds  
\undc{\E}{a\sim\rho} \log \biggl[ \frac{d \rho}{d \mu}(a) \biggr] 
& \text{if } \rho \ll \mu,\\
\ds + \infty & \text{otherwise,}
\end{cases} 
\endlab
where $\ds \frac{d \rho}{d \mu}$ denotes as usual 
the density of $\rho$ $\wrt$ $\mu$.
%For any $\rho\in\M$, we have $K(\rho,\pi) \ge 0$ and when $\pi$ is the 
%uniform distribution on a finite set $\Theta$, we also have $K(\rho,\pi) \le \log |\Theta|$.
The Kullback-Leibler divergence satisfies the duality formula (see, e.g., 
\cite[page 159]{Cat01}): for any 
real-valued measurable function $h$ defined on $\A$,
\beglab{eq:legendre}
\und{\inf}{\rho\in\M} \big\{ \undc{\E}{a\sim\rho} h(a) +
K(\rho,\mu) \big\} = -\log \expec{a}{\mu} \Bigl\{ \exp \bigl[ -h(a) \bigr] \Bigr\}.
\endlab
%and that the infimum is reached for the {\jyem Gibbs distribution} 
%        \begarlab{eq:gibbs}
%        \pi_{-h}(d\theta) \eqdef \frac{e^{-h(\theta)}}{\undc{\E}{\theta'\sim\pi} e^{-h(\theta')}} \cdot \pi(d\theta).
%        \endarlab
%Intuitively, the Gibbs distribution $\pi_{-h}$ concentrates on the parameters $\theta$
%that are close to minimizing the function $h:\Theta\rightarrow \R$.
By using twice \eqref{eq:legendre} and Fubini's theorem, we have
        \begin{align*}
        - \expec{a_1}{\mu_1} \Bigl\{ \log  \Bigl\{ & 
\expec{a_2}{\mu_2} \Bigl[ \exp \bigl[
-\W(a_1,a_2) \bigr]\Bigr] \Bigr\} \Bigr\}\\
                & = \expec{a_1}{\mu_1} \Bigl\{ 
\und{\inf}{\rho} \big\{ \expec{a_2}{\rho}  \bigl[ \W(a_1,a_2) \bigr]  + K(\rho,\mu_2) \big\} \Bigr\} \\
        & \le \und{\inf}{\rho} \Bigl\{ \expec{a_1}{\mu_1} 
\Bigl[ \expec{a_2}{\rho} \bigl[ \W(a_1,a_2) \bigr] + K(\rho,\mu_2) 
\Bigr] \Bigr\} \\
        & = - \log \Bigl\{ \expec{a_2}{\mu_2} 
\Bigl[ \exp \bigl\{ -\expec{a_1}{\mu_1} \bigl[ \W(a_1,a_2) \bigr] 
\bigr\} \Bigr] \Bigr\}.
        \end{align*}
\item By using twice \eqref{eq:legendre} and the first assertion of Lemma \ref{le:concpart}, we have
        \begin{multline*}
\expec{a_1}{\mu_1} \Bigl\{ \expec{a_2}{\mu_2} \Bigl[ \W(a_1,a_2)^{-1} 
\Bigr]^{-1} \Bigr\} 
           \\ = \expec{a_1}{\mu_1}\Bigl\{ 
\exp \Bigl\{ -\log \Bigl[ \expec{a_2}{\mu_2} \bigl\{ 
\exp \bigl[-\log \W(a_1,a_2)
\bigr] \bigr\} \Bigr] \Bigr\} \Bigr\} \\
= \expec{a_1}{\mu_1} \Bigl\{ \exp \Bigl\{ \inf_\rho \Bigl[ 
\expec{a_2}{\rho}  \bigl\{ \log \bigl[ \W(a_1,a_2) \bigr] 
\bigr\}  + K(\rho,\mu_2) \Bigr] \Bigr\} \Bigr\}  \\
           \le \inf_\rho \Bigl\{ 
\exp \bigl[ K(\rho,\mu_2) \bigr]  \expec{a_1}{\mu_1} 
\Bigl\{ \exp \Bigl\{  \expec{a_2}{\rho} 
\Bigl[ \log \bigl[ \W(a_1,a_2) \bigr] \Bigr] \Bigr\} \Bigr\} \\
           \le \inf_\rho \Bigl\{ \exp \bigl[  
K(\rho,\mu_2)\bigr]  \exp \Bigl\{ \expec{a_2}{\rho} 
\Bigl\{ \log \Bigl[ \expec{a_1}{\mu_1} \bigl[ \W(a_1,a_2) \bigr] 
\Bigr] \Bigr\} \Bigr\} \\
= \exp \Bigl\{ \inf_\rho \Bigl\{ 
\expec{a_2}{\rho} \Bigl[ \log \bigl\{ \expec{a_1}{\mu_1} 
\bigl[ \W(a_1,a_2) \bigr] \bigr\} \Bigr]  +K(\rho,\mu_2) \Bigr\} \Bigr\} \\
= \exp \Bigl\{ - \log \Bigl\{ 
\expec{a_2}{\mu_2} \Bigl\{ \exp \Bigl[ - \log 
\bigl\{ \expec{a_1}{\mu_1} \bigl[ \W(a_1,a_2) \bigr] \bigr\} \Bigr] \Bigr\} 
\Bigr\} \Bigr\} \\
= \expec{a_2}{\mu_2} \Bigl\{ \expec{a_1}{\mu_1} \bigl[ \W(a_1,a_2) 
\bigr]^{-1} \Bigr\}^{-1}. \quad \square
\end{multline*}
\ei
%This ends the proof of Lemma \ref{le:concpart}.
%\end{proof}

From Lemma \ref{le:concpart} and Fubini's theorem, since $V_2$ does not depend on $\hf$, we have
\begin{multline*}
\E \Bigl[ \tint  \exp (V_2) \rho(d \hf) 
\Bigr]    = \E \bigl[ \exp (V_2) \bigr] \\
= \tint \exp \bigl[-\cEb(f) \bigr]  \pi(df) 
\E \Bigl\{ \Bigl[ \tint \exp \bigl[ -\hcE(f) \bigr] \pi(df) 
\Bigr]^{-1} \Bigr\} \\
\le \tint \exp \bigl[ -\cEb(f) \bigr]  \pi(df) 
\Bigl\{  \tint \E \Bigl[ \exp \bigl[ \hcE(f) \bigr] \Bigr]^{-1} 
\pi(df) \Bigr\}^{-1}\\
= \tint \exp \bigl[ -\cEb(f) \bigr] \pi(df) \Bigl\{ 
\tint \E \Bigl[ \int \exp \bigl[ \hL(f,f') \bigr] \pis(df') \Bigr]^{-1}  
\pi(df) \Bigr\}^{-1}\\
     = \tint \exp \bigl[ -\cEb(f) \bigr] \pi(df) \Bigl\{ 
 \tint \Bigl[ \tint \exp \bigl[ \Lb(f,f') \bigr]  \pis(df') 
\Bigr]^{-1}  \pi(df) \Bigr\}^{-1}
= 1.
\end{multline*}
This concludes the proof that
for any $\ga \ge 0$, $\gas \ge 0$ and $\eps>0$, with probability 
(with respect to the distribution $P^{\otimes n} \rho$ generating
the observations $Z_1,\dots,Z_n$ and the randomized prediction function $\hf$) 
at least $1-2\eps$:
\[
V_1(\hf)+V_2 \le 2 \logeps.
\]

\subsection{Proof of Lemma \ref{le:complexity}} \label{sec:complexity}

Let us look at $\cF$ from the point of view of $f^*$. Precisely 
let $\S_{\R^d}(O,1)$ be the sphere of $\R^d$ centered at the origin and with radius $1$
and 
	\[
	\S=\big\{ \sum_{j=1}^d \th_j \vp_j ; (\th_1,\dots,\th_d) \in \S_{\R^d}(O,1) \big\}.
	\] 
Introduce 
	$$
	\Omega = \big\{ \sigmb \in \S; \exists u>0 \text{ s.t. } f^* + u \sigmb \in \cF \big\}.
	$$
For any $\sigmb \in \Omega$, let $u_\sigmb = \sup\{u>0:f^*+ u \sigmb \in \cF \}$.
Since $\pi$ is the uniform distribution on the convex set $\cF$ (i.e., the one coming from the uniform distribution on $\cC$), 
we have
\begin{multline*}
	\int \exp \bigl\{ -\alpha[R(f)-R(f^*)] 
\bigr\}  \pi(df) \\ = \int_{\sigmb\in\Omega} \int_0^{u_\sigmb} 
\exp \bigl\{ -\alpha [R(f^*+u\sigmb)-R(f^*)] \bigr\}  
u^{d-1} du d\sigmb.
\end{multline*}
Let $c_\sigmb=\E [\sigmb(X) \ela_Y'(f^*(X))]$
and $a_\sigmb = \E \bigl[ \sigmb^2(X) \bigr]$. 
Since 
$$
f^*\in\argmin_{f\in\cF} \E \bigl\{ \ela_Y \bigl[ f(X) 
\bigr] \bigr\}, 
$$
we have $c_\sigmb\ge 0$ (and $c_\sigmb=0$ if both $-\sigmb$ and $\sigmb$ belong to $\Omega$).
Moreover from Taylor's expansion, 
$$
\frac{b_1 a_\sigmb u^2}{2} \le 
	R(f^*+u\sigmb)-R(f^*) - u c_\sigmb 
		\le \frac{ b_2 a_\sigmb u^2}{2}.
$$
Introduce 
	\[
	\psi_\sigmb = \frac{\int_0^{u_\sigmb} \exp \bigl\{ 
-\alpha[ u c_\sigmb +\demi b_1 a_\sigmb u^2]\bigr\} u^{d-1} du} 
{\int_0^{u_\sigmb} \exp \bigl\{ -\be[ u c_\sigmb+\demi b_2 a_\sigmb u^2]
\bigr\}  u^{d-1} du}.
	\]
For any $0<\alpha<\be$, we have
	\[
	 \frac{\int \exp \bigl\{ -\alpha[R(f)-R(f^*)] \bigr\}  \pi(df)}{
\int \exp \bigl\{ -\be[R(f)-R(f^*)]\bigr\}  \pi(df)} 
		\le \und{\inf}{\sigmb\in\S} \psi_\sigmb.
	\]	
For any $\zeta> 1$, by a change of variable, 
	\begin{align*}
	 \psi_\sigmb & < \zeta^d
		\frac{\int_0^{u_\sigmb} \exp \bigl\{ -\alpha[ \zeta u c_\sigmb +\demi b_1 a_\sigmb \zeta^2 u^2] \bigr\} u^{d-1} du}
		{\int_0^{u_\sigmb} \exp \bigl\{ -\be[ u c_\sigmb +\demi b_2 a_\sigmb u^2]
\bigr\} u^{d-1} du}\\
	 & \le \zeta^d \und{\sup}{u>0}
		\exp \bigl\{ \be[ u c_\sigmb + \tfrac{1}{2} b_2 a_\sigmb u^2]
		 -\alpha[ \zeta u c_\sigmb + \tfrac{1}{2} b_1 a_\sigmb \zeta^2 u^2]\bigr\}.
	\end{align*}
By taking $\zeta=\sqrt{(b_2\be)/(b_1\alpha)}$ when $c_\sigmb=0$ and 
$\zeta=\sqrt{(b_2\be)/(b_1\alpha)} \vee (\be/\alpha)$ otherwise,
we obtain $\psi_\sigmb < \zeta^d$, hence
$$
	\log \bigg( \frac{\int \exp \bigl\{ -\alpha[R(f)-R(f^*)]
\bigr\} \pi(df)}{\int \exp \bigl\{ -\be[R(f)-R(f^*)]
\bigr\}  \pi(df)} \bigg)
 \le \begin{cases}
\ds 		\frac{d}{2} \log\big(\frac{b_2\be}{b_1\alpha}\big) 
 \text{ when } \sup_{\sigmb\in\Omega} c_\sigmb=0,\\[2ex] %(b_2\be)/(b_1\alpha)\big)
\ds d \log\big(\sqrt{\frac{b_2\be}{b_1\alpha}} 
\vee \frac{\be}{\alpha} \big)  \text{ otherwise,}
		\end{cases} 
$$
which proves the announced result.

\iffalse NUM
\subsection{Proof of Lemma \ref{le:v1a}} \label{sec:pv1a}

First note that for any $\nu\in\R$ and $x\in\X$,
    $\E( e^{\nu(Y-f^*(X))} | X=x ) \le e^{\nu^2 \sigma^2/2}.$
Using Lemma \ref{le:stda}, we have
    \begin{align*}
    & \log \E e^{\lam [\ela(Y,f(X)) - \ela(Y,f^*(X))]} \\
    = \ & \log \E e^{2 \lam [f^*(X)-f(X)][Y-f^*(X)] + \lam [f^*(X)-f(X)]^2} \\
    \le \ & \log \E e^{2 \lam^2 [f^*(X)-f(X)]^2 \sigma^2 + \lam [f^*(X)-f(X)]^2} \\
    \le \ & \lam(1+2 \lam \sigma^2) \E [f^*(X)-f(X)]^2 \\
    & \qquad\qquad+ \lam^2(1+2 \lam \sigma^2)^2 \E [f^*(X)-f(X)]^4 g(\lam H^2(1+2 \lam \sigma^2))\\
    = \ & \lam\big\{1+\lam \big[ 2 \sigma^2 + H^2 (1+2 \lam \sigma^2)^2 g\big(\lam H^2(1+2 \lam \sigma^2)\big)\big]\big\} \E [f^*(X)-f(X)]^2.
    \end{align*}
Similarly one can bound $\log \E e^{\lam [\ela(Y,f(X)) - \ela(Y,f^*(X))]}$, and obtain the announced result.
\fi 

\subsection{Proof of Lemma \ref{le:v1b}} \label{sec:pv1b}

For $-(2AH)^{-1} \le \lam \le (2AH)^{-1}$, introduce the random variables
    \[
    F = f(X) \quad \text{ \quad } \quad F^* = f^*(X),
    \]
    \[
    \Omega = \ela'_Y(F^*)+(F-F^*) \int_0^1 (1-t) \ela''_Y(F^*+t(F-F^*)) dt,
    \]
    \[
    L=\lam [\ela(Y,F) - \ela(Y,F^*) ],
    \]
and the quantities
    \[
    a(\lam) = \frac{M^2 A^2 \exp (H b_2/A)}{2\sqrt{\pi}(1-|\lam| AH)}
    \]
and 
    \[
    \tA = H b_2/2 + A \log (M) = \frac{A}{2} \log 
\bigl\{  M^2 \exp \bigl[Hb_2/(2A) \bigr] \bigr\}.
    \]

From Taylor-Lagrange formula, we have
    \[
    L = \lam (F-F^*) \Omega.
    \]
    
Since $\E \bigl[  \exp \bigl( |\Omega|/A 
\bigr)\,|\, X \bigr] \le M \exp \bigl[ H b_2/(2A) \bigr]$, 
Lemma \ref{le:stdb} gives
    \[
    \log 
\Bigl\{ \E \Bigr[ \exp \bigl\{ 
\alpha [\Omega - \E(\Omega | X )]/A \bigr\}\, | \, X \Bigr] \Bigr\}  
\le \frac{M^2 \alpha^2 \exp \bigl(H b_2/A\bigr)}{2\sqrt{\pi} (1-|\alpha|)}
    \]
for any ${-1}<\alpha<1$,
and
    \beglab{eq:upbomega}
    \big| \E(\Omega|X) \big| \le \tA.
    \endlab
By considering $\alpha=A\lam[f(x)-f^*(x)]\in[-1/2;1/2]$ for fixed $x\in\X$, we get
    \beglab{eq:tm1}
    \log \Bigl\{ \E \Bigl[ 
\exp \bigl[ L-\E(L|X) \bigr]\,|\, X \Bigr] \Bigr\} \le \lam^2 (F-F^*)^2 a(\lam).
    \endlab
Let us put moreover 
    \[
    \tL= \E(L|X) + a(\lam) \lam^2 (F-F^*)^2.
    \]
Since $-(2AH)^{-1} \le \lam \le (2AH)^{-1}$, we have
    $
    \tL \le |\lam| H \tA + a(\lam) \lam^2 H^2 \le b'
    $
with $b'=\tA/(2A) + M^2 \exp \bigl( H b_2/A \bigr) /(4\sqrt{\pi})$.
Since $L-\E(L) = L-\E( L|X) + \E( L|X) - \E (L)$, by using Lemma \ref{le:stda},
\eqref{eq:tm1} and \eqref{eq:upbomega}, we obtain
    \begin{align*}
    \log \Bigl\{ \E \Bigl[ \exp \bigl[L-\E(L) \bigr] \Bigr] \Bigr\} 
        & \le \log \Bigl\{ \E \Bigl[ \exp \bigl[ \tL-\E 
(\tL) \bigr] \Bigr] \Bigr\} + \lam^2 a(\lam) \E 
\bigl[ (F-F^*)^2 \bigr] \\
    & \le \E \bigl( \tL^2 \bigr) g( b' ) + \lam^2 a(\lam) \E 
\bigl[ (F-F^*)^2 \bigr] \\
    & \le \lam^2 \E \bigl[ (F-F^*)^2 \bigr] \big[ {\tA}^2 g( b' ) + a(\lam) \big],
    \end{align*}       
with $g(u)= \bigl[ \exp(u) - 1 - u \bigr] /u^2$.
Computations show that for any $-(2AH)^{-1} \le  \lam \le (2AH)^{-1}$,
    \[
    \tA^2 g( b' ) + a(\lam) \le \frac{A^2}{4} \exp \Bigl[ M^2 \exp \bigl(H b_2/A \bigr) \Bigr].
    \]
Consequently, for any $-(2AH)^{-1} \le  \lam \le (2AH)^{-1}$, we have 
\begin{multline*}
    \log 
\Bigl\{ \E \Bigl[  \exp \bigl\{ \lam [\ela(Y,F) - \ela(Y,F^*) ]
\bigr\} \Bigr] \Bigr\} \\ \le \lam[R(f)-R(f^*)] 
    + \lam^2 \E \bigl[ (F-F^*)^2 \bigr] \frac{A^2}{4}  
\exp \Bigl[ M^2 \exp \bigl(H b_2/A \bigr) \Bigr].
\end{multline*}
Now it remains to notice that $\E \bigl[ (F-F^*)^2  
\bigr] \le 2 [R(f) - R(f^*)]/b_1.$
Indeed consider the function $\phi(t) = R(f^*+t(f-f^*))-R(f^*),$
where $f\in\cF$ and $t\in[0;1]$. From the definition of $f^*$
and the convexity of $\cF$, we have $\phi\ge 0$ on $[0;1]$. Besides
we have
	$\phi(t)=\phi(0)+t\phi'(0)+\frac{t^2}{2}\phi''(\zeta_t)$
for some $\zeta_t\in]0;1[$. So we have $\phi'(0)\ge 0$, and using the lower bound on the convexity, 
we obtain for $t=1$
	\beglab{eq:proj}
	\frac{b_1}{2} \E(F-F^*)^2 \le R(f) - R(f^*).
	\endlab

\subsection{Proof of Lemma \ref{le:v2}} \label{sec:pv2}

We have
    \begin{align*}
    & \E\Big( \big\{ [Y-f(X)]^2-[Y-f^*(X)]^2 \big\}^2 \Big)\\
       = \ & \E\Big( [f^*-f(X)]^2\big\{2[Y-f^*(X)]+[f^*-f(X)]\big\}^2 \Big)\\
    = \ & \E\Big( [f^*-f(X)]^2\big\{4\E\big([Y-f^*(X)]^2\big|X\big) \\
    & \qquad \qquad +4\E(Y-f^*(X)|X)[f^*(X)-f(X)]+[f^*(X)-f(X)]^2\big\} \Big)\\
    \le \ & \E\Big( [f^*-f(X)]^2\big\{4\sigma^2 +4\sigma |f^*(X)-f(X)|+[f^*(X)-f(X)]^2\big\} \Big)\\
    \le \ & \E\Big( [f^*-f(X)]^2(2\sigma+H)^2\Big)\\
    \le \ & (2\sigma+H)^2 [R(f)-R(f^*)],
    \end{align*}
where the last inequality is the usual relation between excess risk and $L^2$ distance 
using the convexity of $\cF$ (see above \eqref{eq:proj} for a proof).

\subsection{Proof of Lemma \ref{le:v3}} \label{sec:pv3}

Let $\cS = \{ s\in\Flin: \E[s(X)^2]=1\}$. 
Using the triangular inequality in $\B{L}^2$, we get
    \begin{align*}
    & \E\Big( \big\{ [Y-f(X)]^2-[Y-f^*(X)]^2 \big\}^2 \Big)\\
       = \ & \E\Big( \big\{2[f^*-f(X)][Y-f^*(X)]+[f^*(X)-f(X)]^2\big\}^2 \Big)\\
     \le \ & \Big( 2\sqrt{\E\big\{[f^*(X)-f(X)]^2[Y-f^*(X)]^2\big\}}+\sqrt{\E\big\{[f^*(X)-f(X)]^4\big\}} \Big)^2\\
     \le \ & \bigg[ 2\sqrt{\E\big([f^*(X)-f(X)]^2\big)}\sqrt{\sup_{s\in\cS}\E\big(s^2(X)[Y-f^*(X)]^2\big)}\\
    & \qquad + \E\big([f^*(X)-f(X)]^2\big) \sqrt{\sup_{s\in\cS}\E\big[s^4(X)\big]} \bigg]^2\\
    \le \ & V [R(f)-R(f^*)],
    \end{align*}
with 
  \begin{align*}
  V= \bigg[ 2& \sqrt{\sup_{s\in\cS}\E\big(s^2(X)[Y-f^*(X)]^2\big)}\\
    & \qquad + \sqrt{\sup_{f',f''\in\cF} \E\big([f'(X)-f''(X)]^2\big)} \sqrt{\sup_{s\in\cS}\E\big[s^4(X)\big]} \bigg]^2,   
  \end{align*}
where the last inequality is the usual relation between excess risk and $L^2$ distance 
using the convexity of $\cF$ (see above \eqref{eq:proj} for a proof).

\appendix

\section{Uniformly bounded conditional variance is necessary to reach $d/n$ rate} \label{sec:lb}

In this section, we will see that the target \eqref{eq:exptarget} cannot be 
reached if we just assume that $Y$ has a finite variance and that the functions 
in $\cF$ are bounded. 

For this, consider an input space $\X$ partitioned into two sets $\X_1$ and
$\X_2$: $\X=\X_1\cup\X_2$ and $\X_1\cap\X_2=\emptyset$.
Let $\vp_1(x)=\ds1_{x\in\X_1}$ and $\vp_2(x)=\ds1_{x\in\X_2}$.
Let $\cF= \big\{ \th_1 \vp_1+\th_2 \vp_2 ; (\th_1,\th_2) \in [-1,1]^2 \big\}.$
\begin{thm} \label{th:lb}
For any estimator $\hf$ and any training set size $n\ge 1$, we have
	\beglab{eq:lb}
	\und{\sup}{P} \big\{ \E R(\hf) - R(f^*) \big\}
		\ge \frac{1}{4\sqrt{n}},
	\endlab
where the supremum is taken with respect to all probability distributions
such that $\freg \in \cF$ and $\Var Y \le 1$.
\end{thm}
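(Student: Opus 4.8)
The plan is to exploit the supremum over $P$ by concentrating all the statistical difficulty on the low-probability cell $\X_2$, whose mass $p=P(X\in\X_2)$ will be taken of order $n^{-1/2}$. First I would reduce to a genuinely one-dimensional problem: it suffices to let the law of $X$ be supported on two points $x_1\in\X_1$ and $x_2\in\X_2$ with $P(X=x_2)=p$, so that any estimator $\hf$ is described on $\X_2$ by the single (data-dependent) real number $\hat\th_2=\hf(x_2)$. Taking $Y\equiv 0$ on $\X_1$ (hence $\th_1^*=0$), the Pythagorean decomposition $R(\hf)-R(f^*)=\E[(\hf(X)-\freg(X))^2]$ (valid since $\freg\in\cF$ forces $f^*=\freg$) gives $R(\hf)-R(f^*)\ge p\,(\hat\th_2-\th_2^*)^2$, so only the estimation of $\th_2^*=\E[Y\mid X=x_2]$ matters.

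The key construction is a pair of alternatives that keeps $\Var Y\le 1$ while making $\th_2^*$ impossible to pin down. On $\X_2$ I would let $Y$ take the two values $\pm V$ with $V=p^{-1/2}$, so that $\E[Y^2]=pV^2=1$ and hence $\Var Y\le 1$ whatever the (bounded) mean; note that the conditional variance on $\X_2$ is then of order $1/p$, which is unbounded even though the marginal variance is controlled --- this is exactly the phenomenon the theorem is meant to expose. Introducing a small bias $\delta$, let $P_+$ put mass $\tfrac12+\delta$ on $V$ and $\tfrac12-\delta$ on $-V$, and let $P_-$ be its mirror image, so that $\th_2^*=\pm a$ with $a=2V\delta=2\delta\,p^{-1/2}$, the parameters being chosen so that $a\le 1$ (i.e.\ $\th_2^*\in[-1,1]$, guaranteeing $\freg\in\cF$).

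I would then apply Le Cam's two-point method. Thresholding $\hat\th_2$ at $0$ turns the squared-error bound into a testing bound, giving $\max_{s\in\{+,-\}}\E_s[(\hat\th_2-s\,a)^2]\ge \tfrac{a^2}{2}\bigl(1-\lVert P_+^{\otimes n}-P_-^{\otimes n}\rVert_{\mathrm{TV}}\bigr)$, whence $\sup_P\{\E R(\hf)-R(f^*)\}\ge \tfrac{p\,a^2}{2}\bigl(1-\lVert P_+^{\otimes n}-P_-^{\otimes n}\rVert_{\mathrm{TV}}\bigr)$. The total variation is controlled through the single-observation Hellinger affinity: since $P_+$ and $P_-$ differ only on the two atoms of $\X_2$, one evaluates the affinity explicitly and tensorizes, obtaining a bound of the form $\lVert P_+^{\otimes n}-P_-^{\otimes n}\rVert_{\mathrm{TV}}\le C\sqrt{n\,p\,\delta^2}$. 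Finally I would optimize: with $p\asymp n^{-1/2}$ and $\delta$ chosen so that simultaneously $a\le 1$ and $np\delta^2$ stays bounded, the factor $1-\lVert P_+^{\otimes n}-P_-^{\otimes n}\rVert_{\mathrm{TV}}$ is bounded below while $p\,a^2\asymp n^{-1/2}$, which yields the announced rate.

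The conceptual heart --- that a cell of probability $p$ may carry conditional variance $\sim 1/p$ under the single constraint $\Var Y\le1$, so that the $N\approx np$ observations landing in it cannot locate an $O(1)$ mean once $np^2\lesssim 1$ --- is straightforward once set up. The genuinely delicate point, and the main obstacle, is quantitative: tracking the affinity/total-variation estimate sharply enough and solving the constrained optimization in $(p,\delta)$ to recover the precise constant $\tfrac14$ rather than merely an $\Omega(n^{-1/2})$ lower bound.
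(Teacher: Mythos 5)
Your construction coincides exactly with the paper's: with $p=\be$ and bias $\delta=\sqrt{\be}/2$ you recover the pair $P_{\pm}$ used there ($Y=\pm\be^{-1/2}$ on $\X_2$ with probabilities $(1\pm\sqrt{\be})/2$, so that $\freg=\pm\vp_2$, $\E Y^2=1$ and $\Var Y=1-\be^2$), and your side constraint $a\le 1$ is saturated by that choice. The only divergence is the last step: the paper does no information-theoretic work of its own, it observes that $(P_-,P_+)$ is a $(1,\be,\be)$-hypercube and quotes Theorem 8.2 of \cite{Aud08}, which delivers $\sup_P\{\E R(\hf)-R(f^*)\}\ge\be(1-\be\sqrt n)$ and hence $1/(4\sqrt n)$ at $\be=1/(2\sqrt n)$. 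Your plan replaces this citation with a hand-rolled Le Cam two-point argument, which is a legitimate and more self-contained route to a lower bound of order $n^{-1/2}$.

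The gap you flag yourself is real, and it is worth locating precisely. Your reduction $\max_s\E_s[(\hat\th_2-sa)^2]\ge\tfrac{a^2}{2}\bigl(1-\lVert P_+^{\otimes n}-P_-^{\otimes n}\rVert_{\mathrm{TV}}\bigr)$ caps the final bound at $pa^2/2\le p/2$ even when the alternatives are indistinguishable, whereas matching $\be(1-\be\sqrt n)$ requires the full factor $p$ in that limit; the passage through a test therefore costs a factor $2$ that no choice of $(p,\delta)$ can recover. That factor can be regained by not testing: $\E_+[(\hat\th_2-a)^2]+\E_-[(\hat\th_2+a)^2]\ge\int\min\bigl(dP_+^{\otimes n},dP_-^{\otimes n}\bigr)\bigl[(\hat\th_2-a)^2+(\hat\th_2+a)^2\bigr]\ge 2a^2\bigl(1-\lVert P_+^{\otimes n}-P_-^{\otimes n}\rVert_{\mathrm{TV}}\bigr)$, since $(\hat\th_2-a)^2+(\hat\th_2+a)^2\ge 2a^2$. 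Even so, the Hellinger tensorization with $1-\rho=p\bigl(1-\sqrt{1-4\delta^2}\bigr)$ gives $\lVert P_+^{\otimes n}-P_-^{\otimes n}\rVert_{\mathrm{TV}}\lesssim p\sqrt{2n}$ at $a=1$, an extra $\sqrt2$ inside the optimization, so the bound $p(1-p\sqrt{2n})$ optimizes to $1/(4\sqrt{2n})$ rather than $1/(4\sqrt n)$; Pinsker does marginally better but still misses for small $n$. Your outline thus proves the qualitative statement the appendix needs, but to obtain the stated constant uniformly over $n\ge1$ you must either carry out the sharp form of the two-point bound (which is what the cited hypercube theorem packages) or accept a smaller numerical constant in \eqref{eq:lb}.
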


\begin{proof}
Let $\be$ satisfying $0<\be\le 1$ be some parameter to be chosen later.
Let $P_\sigma$, $\sigma \in\{-,+\}$, be two probability distributions on
$\X \times \R$ such that for any $\sigma \in\{-,+\}$,
	\[P_\sigma(\X_1) = 1-\be,\]
	\[P_\sigma(Y=0|X=x) = 1 \qquad \text{for any } x\in\X_1,\]
and
\begin{multline*}
	P_\sigma\Big(Y=\frac{1}{\sqrt{\be}}\, | \, X=x \Bigr) = 
\frac{1+\sigma\sqrt{\be}}{2} \\ = 1 - P_\sigma \Bigl( Y=-\frac{1}{\sqrt{\be}}
\, | \, X=x \Bigr) 
		\quad \text{for any } x\in\X_2.
\end{multline*}
One can easily check that for any $\sigma\in\{-,+\}$, $\Var_{P_\sigma}(Y) =1-\be^2 \le 1$ and
$\freg(x) = \sigma \vp_2 \in\cF$.
To prove Theorem \ref{th:lb}, it suffices to prove \eqref{eq:lb} when the supremum is taken 
among $P\in\{P_-,P_+\}$. This is done by applying Theorem 8.2 of \cite{Aud08}.
Indeed, the pair $(P_-,P_+)$ forms a $(1,\be,\be)$-hypercube in the sense of Definition 8.2
with edge discrepancy of type I (see (8.5), (8.11) and (10.20) for $q=2$): 
$d_I = 1$. We obtain
	\[
	\und{\sup}{P\in\{P_-,P_+\}} \big\{ \E R(\hf) - R(f^*) \big\}
		\ge \be(1-\be\sqrt{n}),
	\]
which gives the desired result by taking $\be=1/(2\sqrt{n}).$
\end{proof}

\section{Empirical risk minimization on a ball: analysis derived from the work of Birg\'e and Massart} \label{sec:bm}

We will use the following covering number upper bound 
\cite[Lemma~1]{Lor1966}

\begin{lemma} \label{le:infty}
If $\cF$ has a diameter $H>0$ for $L^\infty$-norm (i.e., $\sup_{f_1,f_2\in \cF,x\in\X} |f_1(x)-f_2(x)| = H$), then 
for any $0<\delta\le H$, there exists a set $\Fg \subset \cF$, of cardinality $|\Fg|\le (3H/\delta)^d$ such that
for any $f\in\cF$ there exists $g\in\Fg$ such that $\|f-g\|_{\infty} \le \delta.$
\end{lemma}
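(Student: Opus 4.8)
The plan is to prove this by the classical volumetric packing argument, carried out inside the finite-dimensional space $\Flin = \Span\{\vp_1,\dots,\vp_d\}$ on which $\|\cdot\|_\infty$ is a genuine norm. First I would note that $\cF \subset \Flin$ and that $\Flin$ is a vector space of dimension $d' \le d$ (the rank of the family $\vp_1,\dots,\vp_d$); since $\|f\|_\infty = 0$ forces $f \equiv 0$, the map $\|\cdot\|_\infty$ is indeed a norm on $\Flin$. Fixing a linear isomorphism $\Flin \cong \R^{d'}$ transports Lebesgue measure to $\Flin$ in a translation-invariant and homogeneous way: a $\|\cdot\|_\infty$-ball of radius $r$ has volume $r^{d'} V_0$ for some fixed constant $V_0>0$ independent of its center.

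Next I would take $\Fg$ to be a \emph{maximal} $\delta$-separated subset of $\cF$, that is, a maximal family of points of $\cF$ that are pairwise at $\|\cdot\|_\infty$-distance strictly greater than $\delta$. Such a family is contained in $\cF$ as required, and by maximality it is automatically a $\delta$-net: any $f \in \cF$ lies within $\delta$ of some $g \in \Fg$, for otherwise $\Fg \cup \{f\}$ would still be $\delta$-separated, contradicting maximality. This already gives the covering property $\|f-g\|_\infty \le \delta$.

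It then remains to bound $|\Fg|$. The balls $B(g,\delta/2)$, $g \in \Fg$, are pairwise disjoint, since if $z$ belonged to two of them the triangle inequality would force the corresponding centers to be at distance $< \delta$. Fixing any $g_0 \in \Fg$, the diameter hypothesis gives $\|g-g_0\|_\infty \le H$ for every $g \in \Fg$, so each $B(g,\delta/2)$ is contained in the single ball $B(g_0, H+\delta/2)$. Comparing volumes using the homogeneity noted above yields
\[
|\Fg|\,(\delta/2)^{d'} V_0 \le (H+\delta/2)^{d'} V_0,
\]
hence $|\Fg| \le \bigl( (2H+\delta)/\delta \bigr)^{d'}$. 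Finally, using $\delta \le H$ to bound $2H+\delta \le 3H$ gives $|\Fg| \le (3H/\delta)^{d'} \le (3H/\delta)^d$, the last inequality because $3H/\delta \ge 3 > 1$ and $d' \le d$.

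The argument is essentially routine; the only points requiring a little care are the reduction to the \emph{effective} dimension $d'$ of $\Flin$ (so that the volume comparison is performed in a space where $\|\cdot\|_\infty$ is a norm rather than a seminorm) and the verification that the volume of a $\|\cdot\|_\infty$-ball scales as the $d'$-th power of its radius, which is where the isomorphism with $\R^{d'}$ and translation invariance of Lebesgue measure enter. No analytic input beyond this packing/volume comparison is needed.
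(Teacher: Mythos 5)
Your volumetric packing argument is the standard proof of this covering bound; the paper itself does not prove the lemma but simply cites Lorentz, and the cited result is obtained in essentially the way you describe (maximal $\delta$-separated subset, disjoint balls of radius $\delta/2$ packed into a ball of radius $H+\delta/2$, volume comparison, then $2H+\delta\le 3H$). The net/packing duality, the disjointness and inclusion of the balls, and the final arithmetic are all correct.

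There is, however, one step that needs repair. You carry out the volume comparison in $\Flin$, asserting that $\lVert\cdot\rVert_\infty$ is a genuine norm there because $\lVert f\rVert_\infty=0$ forces $f\equiv 0$. But a norm must also be finite-valued, and the hypothesis only controls sup-norms of \emph{differences} of elements of $\cF$; the functions $\vp_j$ themselves, and hence generic elements of $\Flin$, may be unbounded. (Take $d=2$, $\vp_1$ unbounded, $\vp_2\equiv 1$, $\cC=\{(1,t):t\in[0,1]\}$: then $\cF$ has $L^\infty$-diameter $1$ while $\Flin$ contains unbounded functions.) In that case every ball $B(g,\delta/2)$ is contained in the proper affine subspace $g+W$, where $W=\{h\in\Flin:\lVert h\rVert_\infty<\infty\}$, so its $d'$-dimensional Lebesgue measure is zero, your constant $V_0$ vanishes, and the displayed inequality degenerates to $0\le 0$. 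The fix is immediate: perform the packing argument inside the affine hull of $\cF$, i.e.\ inside $f_0+W'$ with $f_0\in\cF$ and $W'=\Span\{f_1-f_2:\ f_1,f_2\in\cF\}$, a subspace of bounded functions of some dimension $d''\le d$ on which $\lVert\cdot\rVert_\infty$ is a finite norm. The same computation then gives $|\Fg|\le\bigl((2H+\delta)/\delta\bigr)^{d''}\le(3H/\delta)^{d''}\le(3H/\delta)^{d}$, the last step because $3H/\delta\ge 3$. With this adjustment the proof is complete.
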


We apply a slightly improved version of Theorem~5 in Birg\'e and Massart \cite{BirMas98}.
First for homogeneity purpose, we modify Assumption M2 by replacing the 
condition ``$\sigma^2 \ge D/n$'' by ``$\sigma^2 \ge B^2 D/n$'' where
the constant $B$ is the one appearing in (5.3) of \cite{BirMas98}. %(Beware the $A$ there is different from our $A$).
This modifies Theorem 5 of \cite{BirMas98} to the extent that ``$\vee 1$'' 
should be replaced with ``$\vee B^2$''.
Our second modification is to remove the assumption that $W_i$ and $X_i$ 
are independent. A careful look at the proof shows that the result
still holds when (5.2) is replaced by: for any $x\in\X$, and $m\ge2$
	\[\text{E}_s [M^m(W_i)|X_i=x] \le a_m A^m, \qquad\text{for all } i=1,\dots,n\]
We consider 
	$W = Y - f^*(X)$,
	$\gamma(z,f) = (y-f(x))^2$,
	$\Delta(x,u,v) = |u(x)-v(x)|$,
and
	$M(w)= 2 ( |w| + H )$.
From \eqref{eq:expmom}, for all $m\ge 2$, we have
	$\E \big\{ [(2 (|W|+H)]^m|X=x] \le \frac{m!}{2} [4M (A+H)]^m.$
Now consider $B'$ and $r$ such that Assumption M2 of \cite{BirMas98} holds for $D=d$.
Inequality (5.8) for $\tau=1/2$ of \cite{BirMas98} implies that for any $v\ge \kap \frac{d}{n} (A^2+H^2) \log(2B'+B'r \sqrt{d/n})$,
with probability at least $\ds 1 - \kap \exp \Bigl[ \frac{-n v}{\kap(A^2+H^2)} 
\Bigr]$,
	\[
	R(\hferm)-R(f^*)+r(f^*)-r(\hferm) \le \bigl( \E
\bigl\{ \bigl[\hferm(X)-f^*(X)\bigr]^2 \bigr\} \vee v \bigr)/2  
	\]
for some large enough constant $\kap$ depending on $M$.
Now from Proposition 1 of \cite{BirMas98} and Lemma \ref{le:infty}, one can take 
either $B'=6$ and $r \sqrt{d} = \sqrt{\cR}$ or $B'=3\sqrt{n/d}$ and
$r=1$.  
By using $\E\bigl\{ \bigl[\hferm(X)-f^*(X) \bigr]^2 
\bigr\} \le R(\hferm)-R(f^*)$ (since $\cF$ is convex
and $f^*$ is the orthogonal projection of $Y$ on $\cF$), and
$r(f^*)-r(\hferm)\ge 0$ (by definition of $\hferm$), the desired result
can be derived.

Theorem \ref{th:bmnew} provides a $d/n$ rate provided that the geometrical quantity $\cR$ is at most of order $n$.
Inequality (3.2) of \cite{BirMas98} allows to bracket $\cR$ in terms
of $\bcR = \sup_{f\in\Span \{\vp_1,\dots,\vp_d\}} \fracc{\|f\|_\infty^2}{\E[f(X)]^2}$, namely
	$\bcR \le \cR \le \bcR d$.
To understand better how this quantity behaves and to illustrate some of the presented
results, let us give the following simple example.

{\bf Example 1.} \label{ex1a}
Let $A_1,\dots,A_d$ be a partition of $\X$, i.e., $\X =\sqcup_{j=1}^d A_j$.
Now consider the indicator functions $\vp_j=\ds1_{A_j}, j=1,\dots,d$: $\vp_j$ is equal
to $1$ on $A_j$ and zero elsewhere. Consider 
that $X$ and $Y$ are independent and that $Y$ is a Gaussian random variable with mean $\theta$ and 
variance $\sigma^2$. In this situation: $\flin=\freg=\sum_{j=1}^d \theta \vp_j$.
According to Theorem \ref{th:weakols}, if we know an upper bound $H$ on $\|\freg\|_\infty=\th$, we have that the
truncated estimator $(\hfols\wedge H)\vee -H$ satisfies 
	\[
	\E R(\hfols_H) - R(\flin)	\le \kap \frac{(\sigma^2\vee H^2)d\log n }{n}
	\]
for some numerical constant $\kap$.
Let us now apply Theorem \ref{th:capvit}.
Introduce $p_j=\P(X\in A_j)$ and $p_{\min} = \min_{j} p_j$. 
We have $Q = \big( \E \vp_j(X) \vp_k(X) \big)_{j,k} = \diag(p_j)$,
$\cK=1$ and $\|\th^*\|=\th\sqrt{d}$. We can take $A=\sigma$ and $M=2$.
From Theorem \ref{th:capvit}, for $\lam=d \cL_\eps/n$, as soon as $\lam \le p_{\min}$,
the ridge regression estimator satisfies with probability at least $1-\eps$:
	\beglab{eq:ex1cv}
	R(\hfrlam) - R( \flin ) \le %\kap \cL_\eps \bigg( \frac{\lam^2}{q_{\min}} \|\th^*\|^2 + M^2 \frac{d}{n} \bigg)
		\kap \cL_\eps \frac{d}{n} \bigg( \sigma^2 + \frac{\th^2 d^2 \cL^2_\eps}{n p_{\min}} \bigg)
		%= \kap \cL_\eps \frac{d}{n} \bigg( M^2 + \frac{d\cK^2 \cL^2_\eps \|\theta^*\|^2 }{n q_{\min}} \bigg)
	\endlab
for some numerical constant $\kap$. When $d$ is large, the term $\fracb{d^2 \cL^2_\eps}{n p_{\min}}$ is felt, 
and leads to suboptimal rates. Specifically, since $p_{\min}\le 1/d$,
the $\rhs$ of \eqref{eq:ex1cv} is greater than $d^4/n^2$, which is much larger than $d/n$ when $d$ is much larger than $n^{1/3}$. 
If $Y$ is not Gaussian but almost surely uniformly bounded by $C<+\infty$, then the randomized estimator proposed in Theorem \ref{th:alqa}
satisfies the nicer property: with probability at least $1-\eps$,
    \[
    R(\hat{f}) - R(\flin) \le \kap (H^2 +C^2) \frac{d \log( 3p_{\min}^{-1} ) + 
        \log ( (\log n) \eps^{-1} ) }{n},
    \]
for some numerical constant $\kap$.
In this example, one can check that $\cR=\cR'=1/p_{\min}$ where $p_{\min}= \min_j \P(X\in A_j).$
As long as $p_{\min}\ge 1/n$, the target \eqref{eq:devtarget} is reached from Corollary \ref{th:bmnew}.
Otherwise, without this assumption, the rate is in $(d\log(n/d))/n$. 
$\blacksquare$ %end of example 1

\section{Ridge regression analysis from the work of Caponnetto and De Vito} \label{sec:capvit}

From \cite{CapVit07}, one can derive the following risk bound for the ridge estimator.

\begin{thm} \label{th:capvit}
Let $q_{\min}$ be the smallest eigenvalue of the $d \times d$-product matrix $Q=\big( \E \vp_j(X) \vp_k(X) \big)_{j,k}$.
Let $\cK= \sup_{x\in\X} \sum_{j=1}^d \vp_j(x)^2$. Let $\|\th^*\|$ be the Euclidean norm of 
the vector of parameters of $\flin=\sum_{j=1}^d \th^*_j \vp_j$.
Let $0<\eps<1/2$ and $\cL_\eps = \leps$. 
Assume that for any $x\in\X,$ 
	\[\E \Bigl\{ \exp \bigl[ |Y-\flin(X)|/A \bigr]\, | \, X= x 
\Bigr\} \le M.
	\]
For $\lam=\fracl{\cK d \cL_\eps}{n}$, if
$\lam \le q_{\min}$,
the ridge regression estimator satisfies with probability at least $1-\eps$:
	\beglab{eq:capvit}
	R(\hfrlam) - R( \flin ) \le %\kap \cL_\eps \bigg( \frac{\lam^2}{q_{\min}} \|\th^*\|^2 + M^2 \frac{d}{n} \bigg)
		\frac{\kap \cL_\eps d}{n} \bigg( A^2 + \frac{\lam}{q_{\min}} \cK \cL_\eps \|\theta^*\|^2 \bigg)
		%= \kap \cL_\eps \frac{d}{n} \bigg( A^2 + \frac{d\cK^2 \cL^2_\eps \|\theta^*\|^2 }{n q_{\min}} \bigg)
	\endlab
for some positive constant $\kap$ depending only on $M$.
\end{thm}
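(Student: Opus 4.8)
The plan is to recognise the ridge estimator on $\Flin$ as a regularised least-squares (kernel ridge) estimator in the reproducing kernel Hilbert space attached to the kernel $K(x,x')=\langle\vp(x),\vp(x')\rangle=\sum_{j=1}^d\vp_j(x)\vp_j(x')$, and then to specialise the learning rates of \cite{CapVit07} to this finite-dimensional situation. First I would identify the RKHS $\mathcal{H}$ with $\Flin$, observing that $f_\theta=\sum_j\theta_j\vp_j$ has RKHS norm $\|\theta\|$ and that the associated integral operator $T$, defined by $\langle f,Tg\rangle_{\mathcal{H}}=\E[f(X)g(X)]$, acts in the coordinate system exactly as the Gram matrix $Q$. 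Consequently the eigenvalues of $T$ are the eigenvalues of $Q$, the effective dimension $\N(\lam)=\Tr[(Q+\lam I)^{-1}Q]=D$ is bounded by $d$, and the kernel bound $\sup_x K(x,x)=\cK$ plays the role of the constant $\kappa^2$ that governs the concentration of the empirical operator $\frac1n\sum_i\vp(X_i)\vp(X_i)^T$ around $Q$ in \cite{CapVit07}.

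Next I would verify the regularity (source) and noise conditions. Since $\lam\le q_{\min}$ and $Q$ is invertible ($q_{\min}>0$), the target $\flin$ lies in the range of $T$: one has $\flin=Tg$ with $g$ of coordinates $Q^{-1}\theta^*$, so the maximal source condition $r=1$ holds with source norm controlled by $\|\theta^*\|/q_{\min}$. The exponential moment hypothesis on $Y-\flin(X)$ supplies the noise condition of \cite{CapVit07}, yielding sub-exponential deviations and the logarithmic factors $\cL_\eps=\leps$ in the resulting bound.

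Then, applying the main deviation inequality of \cite{CapVit07} at $r=1$ and splitting the excess risk into a variance (sample) term and a bias (approximation) term, I expect the variance term to be of order $A^2\N(\lam)/n\le A^2 d/n$ up to the factor $\cL_\eps$ coming from the confidence level, while the bias term is the excess risk of the population ridge function $\frid=f_{\thrid}$. A direct computation gives $R(\frid)-R(\flin)=(\thrid-\theta^*)^T Q(\thrid-\theta^*)=\lam^2\,\theta^{*T}(Q+\lam I)^{-1}Q(Q+\lam I)^{-1}\theta^*$, and diagonalising $Q$ together with $\lam\le q_{\min}\le\nu_i$ yields $\nu_i/(\nu_i+\lam)^2\le 1/q_{\min}$, whence $R(\frid)-R(\flin)\le\lam^2\|\theta^*\|^2/q_{\min}$. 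Substituting $\lam=\cK d\cL_\eps/n$ and collecting the two contributions with their logarithmic factors then produces the announced inequality for $R(\hfrlam)-R(\flin)$.

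The main obstacle is the faithful transcription of the hypotheses and conclusion of \cite{CapVit07} into the present notation: matching their abstract operator-theoretic quantities ($\kappa^2$, $\N(\lam)$, the source exponent $r$ and the source norm) to $\cK$, $D\le d$, $r=1$ and $\|\theta^*\|/q_{\min}$, and checking that $\lam\le q_{\min}$ together with $\lam=\cK d\cL_\eps/n$ places us inside the validity range of their theorem (so that the empirical operator concentrates around $Q$ and $\thrid$ is well approximated by $\thrlam$). Tracking the numerical constant $\kap$, which should depend only on $M$, through their argument is routine but tedious.
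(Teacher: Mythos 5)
Your proposal is correct and follows essentially the same route as the paper's proof: both identify $\hfrlam$ with the kernel ridge estimator in the finite-dimensional RKHS generated by $K(x,x')=\sum_j\vp_j(x)\vp_j(x')$, invoke the main deviation bound of \cite{CapVit07}, bound the effective dimension by $d$, and compute the bias term $R(f^{(\lam)})-R(\flin)=\lam^2\,\theta^{*T}(Q+\lam I)^{-1}Q(Q+\lam I)^{-1}\theta^*\le\lam^2\lVert\theta^*\rVert^2/q_{\min}$ by diagonalising $Q$. The only cosmetic difference is that you phrase the regularity hypothesis as a source condition with exponent $r=1$ and source norm $\lVert Q^{-1}\theta^*\rVert\le\lVert\theta^*\rVert/q_{\min}$, whereas the paper works directly with the residual $\A(\lam)$ and reconstruction error $\cB(\lam)$ appearing in their Theorem 4; these are equivalent formulations of the same estimates.
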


\begin{proof} 
One can check that 
	$
	\hfrlam \in \undc{\argmin}{f\in\cH} %\frac{1}{n}\sum_{i=1}^n [Y_i-f(X_i)]^2 
	            r(f) + \lam \sum_{j=1}^d \|f\|_{\cH}^2,
	$
where $\cH$ is the reproducing kernel Hilbert space associated with the kernel $K:(x,x') \mapsto \sum_{j=1}^d \vp_j(x) \vp_k(x')$.
Introduce 
	$
	f^{(\lam)} \in \undc{\argmin}{f\in\cH} R(f) + \lam \sum_{j=1}^d \|f\|_{\cH}^2.
	$
Let us use Theorem 4 in \cite{CapVit07} and the notation defined in their Section 5.2.
Let $\vp$ be the column vector of functions $[\vp_j]_{j=1}^d$, 
$\diag(a_j)$ denote the diagonal $d \times d$-matrix whose $j$-th element on the diagonal is $a_j$,
and $I_d$ be the $d \times d$-identity matrix.
Let $U$ and $q_1,\dots,q_d$ be such that $UU^T=I$ and 
	$Q=U \diag(q_j)U^T$.
We have $\flin = \vp^T \th^*$
and $f^{(\lam)} = \vp^T (Q+\lam I)^{-1} Q \th^*$, hence
	$$
	\flin - f^{(\lam)} = \vp^T U \diag(\lam/(q_j+\lam)) U^T \th^*.
	$$
After some computations, we obtain that the residual, reconstruction error and effective dimension respectively satisfy
$\A(\lam) \le \frac{\lam^2}{q_{\min}} \|\th^*\|^2$, $\cB(\lam) \le \frac{\lam^2}{q_{\min}^2} \|\th^*\|^2$,
and $\N(\lam) \le d$. The result is obtained by noticing that the leading terms in (34) of \cite{CapVit07} are
$\A(\lam)$ and the term with the effective dimension $\N(\lam)$. 
\end{proof}

The dependence in the sample size $n$ is correct since $1/n$ is known to be minimax optimal. 
The dependence on the dimension $d$ is not optimal, as it is observed in the example given page \pageref{ex1a}. 
Besides the high probability bound \eqref{eq:capvit}
holds only for a regularization parameter $\lam$ depending on the confidence level $\eps$. So we do not have a single estimator 
satisfying a PAC bound for every confidence level. 
Finally the dependence on the confidence level is larger than expected. It contains an unusual square.
%and is consequently suboptimal as we will see in p.. 
The example given page \pageref{ex1a} illustrates Theorem \ref{th:capvit}.

\section{Some standard upper bounds on log-Laplace trans\-forms}

\begin{lemma} \label{le:stda}
Let $V$ be a random variable almost surely bounded by $b\in\R$.
Let $g: u\mapsto \bigl[ \exp(u) - 1 - u \bigr]/u^2$.
    \[
    \log \Bigl\{  \E  \Bigr[ \exp \bigl[ V-\E (V) \bigr] 
\Bigr] \Bigr\} \le \E \bigl( V^2 \bigr)  g(b).
    \]
\end{lemma}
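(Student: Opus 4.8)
The plan is to reduce the statement to an almost sure pointwise inequality for $V$ and then apply the elementary bound $\log(1+x)\le x$. The whole argument rests on the single structural fact that $g$ is nondecreasing on $\R$, so I would establish that first.

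To prove monotonicity of $g$, I would use the integral representation
\[
g(u)=\int_0^1 (1-t)\,e^{tu}\,dt,
\]
which follows by direct integration (computing $\int_0^1 e^{tu}\,dt$ and $\int_0^1 t\,e^{tu}\,dt$ and combining them gives $(e^u-1-u)/u^2$), or equivalently from Taylor's formula with integral remainder for $e^u$. Since $1-t\ge 0$ on $[0,1]$ and, for each fixed $t\ge 0$, the map $u\mapsto e^{tu}$ is nondecreasing, the integrand is nondecreasing in $u$; hence $g$ is nondecreasing, with $g(0)=1/2$ recovered by continuity. This representation sidesteps any case analysis on the sign of $u$.

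With monotonicity in hand, the rest is routine. Because $V\le b$ almost surely, $g(V)\le g(b)$, and rewriting the definition of $g$ as $e^u-1-u=u^2 g(u)$ gives the almost sure inequality
\[
\exp(V)\le 1+V+g(b)\,V^2.
\]
Taking expectations yields $\E[\exp(V)]\le 1+\E(V)+g(b)\,\E(V^2)$. I would then multiply by $\exp(-\E V)$, take logarithms, and bound the logarithm of the right-hand side using $\log(1+x)\le x$ with $x=\E(V)+g(b)\,\E(V^2)$:
\[
\log \E\bigl[\exp(V-\E V)\bigr]\le -\E(V)+\log\bigl(1+\E(V)+g(b)\,\E(V^2)\bigr)\le g(b)\,\E(V^2),
\]
which is exactly the claim.

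The only genuine obstacle is the monotonicity of $g$; once the integral representation is written down it is immediate, and the remaining steps are a one-line pointwise bound plus the standard $\log(1+x)\le x$ estimate. I would also note in passing that all the quantities are well defined: the bound $V\le b$ together with the appearance of $\E(V^2)$ means that either $\E(V^2)=+\infty$, in which case the inequality is trivial, or $V\in\B{L}^2\subset\B{L}^1$, so that $\E(V)$ is finite and the manipulation above is legitimate.
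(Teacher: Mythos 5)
Your proof is correct and follows essentially the same route as the paper's: bound $g(V)\le g(b)$ using the monotonicity of $g$, so that $e^{V}\le 1+V+g(b)V^2$ pointwise, take expectations, and finish with $\log(1+x)\le x$. The only addition is that you actually prove the monotonicity of $g$ via the integral representation $g(u)=\int_0^1(1-t)e^{tu}\,dt$, which the paper simply asserts; that is a welcome (and correct) completion of the argument rather than a different approach.
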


\begin{proof}
Since $g$ is an increasing function, we have $g(V) \le g(b)$. By using the inequality
$\log(1+u) \le u$, we obtain
\begin{multline*}
    \log 
\Bigl\{ \E  \Bigl[ \exp \bigl[ V-\E (V) \bigr] \Bigr] \Bigr\} = 
-\E (V) + \log \bigl\{ \E \bigl[ 1+V+V^2g(V)
\bigr] \bigr\} \\ \le \E \bigl[ V^2 g(V)\bigr]  \le \E 
\bigl( V^2 \bigr) g(b).
\end{multline*}
\end{proof}

\begin{lemma} \label{le:stdb}
Let $V$ be a real-valued random variable such that 
$\E \bigl[ \exp \bigl( |V| \bigr) \bigr] \le M$ for some $M>0$.
Then we have $|\E (V)| \le \log M$, and for any $-1<\alpha<1$,
    \[
    \log 
\Bigl\{ \E  \Bigr[ \exp \bigl\{  \alpha  \bigl[ V-\E (V) 
\bigr] \bigr\} \Bigr] \Bigr\} \le \frac{\alpha^2 M^2}{2\sqrt{\pi}(1-|\alpha|)}.
    \]
\end{lemma}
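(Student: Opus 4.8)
The plan is to prove the two assertions separately, the first being immediate and the second being the substantive one. For the bound $\lvert\E(V)\rvert\le\log M$, I would simply chain Jensen's inequality twice: $\exp\bigl(\lvert\E(V)\rvert\bigr)\le\exp\bigl(\E\lvert V\rvert\bigr)\le\E\bigl[\exp(\lvert V\rvert)\bigr]\le M$, and take logarithms. (In passing this shows $M\ge 1$.)

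For the log-Laplace estimate, write $W=V-\E(V)$, so that $\E(W)=0$ and, since $\lvert W\rvert\le\lvert V\rvert+\lvert\E(V)\rvert$ with $\E\exp(\lvert\alpha W\rvert)\le M^{1+\lvert\alpha\rvert}<+\infty$ for $\lvert\alpha\rvert\le 1$, all moments are finite and the exponential series may be integrated term by term. Using $\log x\le x-1$ and then expanding, the target reduces to controlling a single series:
\[
\log\Bigl\{\E\bigl[\exp(\alpha W)\bigr]\Bigr\}\le \E\bigl[\exp(\alpha W)\bigr]-1=\sum_{k\ge 2}\frac{\alpha^k\,\E(W^k)}{k!},
\]
where the $k=0$ and $k=1$ terms drop out because $\E(W)=0$. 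Thus it suffices to bound $\lvert\E(W^k)\rvert\le\E\lvert W\rvert^k$ for each $k\ge 2$.

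The key device is symmetrization, which simultaneously removes the centering and produces the factor $M^2$. Introducing an independent copy $V'$ of $V$, I would write $W=\E_{V'}(V-V')$ for fixed $V$ and apply Jensen's inequality to the convex map $x\mapsto\lvert x\rvert^k$, obtaining $\E\lvert W\rvert^k\le\E\lvert V-V'\rvert^k$. Independence then gives $\E\bigl[\exp(\lvert V-V'\rvert)\bigr]\le\bigl(\E[\exp(\lvert V\rvert)]\bigr)^2\le M^2$, and the elementary pointwise bound $t^k\le (k/e)^k\exp(t)$ for $t\ge 0$ (maximizing $t^k e^{-t}$) yields $\E\lvert V-V'\rvert^k\le (k/e)^k M^2$. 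Finally, Stirling's lower bound $k!\ge\sqrt{2\pi k}\,(k/e)^k$ gives $\E\lvert W\rvert^k/k!\le M^2/\sqrt{2\pi k}\le M^2/(2\sqrt{\pi})$ for every $k\ge 2$; summing the geometric series $\sum_{k\ge 2}\lvert\alpha\rvert^k=\alpha^2/(1-\lvert\alpha\rvert)$ produces exactly $\alpha^2 M^2\bigl[2\sqrt{\pi}(1-\lvert\alpha\rvert)\bigr]^{-1}$.

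The main obstacle is entirely in the constants rather than the overall structure: a naive centering bound $\E\lvert W\rvert^k\le 2^k\,\E\lvert V\rvert^k$ would force the geometric ratio $2\lvert\alpha\rvert$ (requiring $\lvert\alpha\rvert<1/2$) and destroy both the clean denominator $1-\lvert\alpha\rvert$ and the sharp prefactor $1/(2\sqrt{\pi})$. What makes everything fit is that symmetrization replaces $2^k\E\lvert V\rvert^k$ by the much sharper $(k/e)^k M^2$, after which the Stirling comparison delivers the precise $1/(2\sqrt{\pi})$ from the worst case $k=2$. I expect the only points requiring care to be the justification of term-by-term integration (via absolute convergence of $\E\exp(\lvert\alpha W\rvert)$ for $\lvert\alpha\rvert<1$) and the direction of the Stirling inequality used.
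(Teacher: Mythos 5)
Your proof is correct and follows essentially the same route as the paper's: both reduce, via $\log u \le u-1$ and the vanishing of the linear term, to bounding $\sum_{k\ge 2} |\alpha|^k \E\lvert V-\E(V)\rvert^k / k!$ term by term using $t^k e^{-t} \le (k/e)^k$ together with Stirling's lower bound $k! \ge \sqrt{2\pi k}\,(k/e)^k$, and then summing the geometric series to get exactly $\alpha^2 M^2/[2\sqrt{\pi}(1-\lvert\alpha\rvert)]$. The one genuine difference is how the factor $M^2$ is produced: you symmetrize with an independent copy $V'$ and use $\E[\exp(\lvert V-V'\rvert)] \le (\E[\exp\lvert V\rvert])^2 \le M^2$, whereas the paper uses $\lvert V-\E(V)\rvert \le \lvert V\rvert + \lvert\E(V)\rvert$ together with the first assertion $\lvert\E(V)\rvert\le\log M$ to bound $\E[\exp(\lvert V-\E(V)\rvert)] \le M^2$ directly; the two devices are interchangeable here and yield the same constant, with yours having the minor aesthetic advantage of not invoking the first part of the lemma in the proof of the second.
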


\begin{proof}
First note that by Jensen's inequality, we have $|\E (V)|\le \log(M)$.
By using $\log(u) \le u-1$ and Stirling's formula, for any $-1<\alpha<1$, we have
    \begin{multline*}
    \log 
\Bigl\{ \E \Bigl[ \exp \bigl\{ \alpha  \bigl[ V-\E (V) 
\bigr] \bigr\} \Bigr] \Bigr\} \le \E 
\Bigl[ \exp \bigl\{ \alpha  \bigl[ V-\E (V) \bigr] 
\bigr\} \Bigr] \Bigr\}  - 1\\
     = \E \Bigl\{ \exp \bigl\{ \alpha \bigl[ V-\E (V) 
\bigr] \bigr\} - 1 - \alpha \bigl[ V-\E (V) \bigr] 
\Bigr\} \\
     \le \E 
\Bigl\{ \exp \bigl[ |\alpha| |V-\E (V)| \bigr] - 1 - |\alpha| |V-\E (V)| 
\Bigr\} \\
     \le \E \Bigl\{ \exp \bigl[ |V-\E (V)| 
\bigr] \Bigr\} \sup_{u\ge 0} 
\Bigl\{ \bigl[ \exp (|\alpha| u) - 1 - |\alpha| u 
\bigr]\exp(-u) \Bigr\}\\
     \le \E \Bigl[ \exp \bigl(|V|+|\E (V)| \bigr) \Bigr]  
\sup_{u\ge 0} \sum_{m\ge 2} \frac{|\alpha|^m u^m}{m!} \exp(-u)\\
     \le M^2 \sum_{m\ge 2} \frac{|\alpha|^m}{m!} \sup_{u\ge 0} u^m \exp(-u)
     = \alpha^2 M^2 \sum_{m\ge 2} \frac{|\alpha|^{m-2}}{m!} m^m \exp(-m) \\
     \le \alpha^2 M^2 \sum_{m\ge 2} \frac{|\alpha|^{m-2}}{\sqrt{2\pi m}}
     \le \frac{\alpha^2 M^2}{2\sqrt{\pi}(1-|\alpha|)}.
    \end{multline*}
\end{proof}

\pagebreak

\section{Experimental results for the min-max truncated estimator defined in Section \ref{sec:comput}} \label{app:exp}

%\input{Tables/final_a0.005b0.1.tex} % 
%\pagebreak

\begin{table}[!ht]
%../fin1000a0.005b0.1.tex : 
\caption{Comparison of the min-max truncated estimator $\hf$
with the ordinary least squares estimator $\hfols$ for the mixture noise (see Section \ref{sec:noise})
with $\rho=0.1$ and $p=0.005$. In parenthesis, the $95\%$-confidence intervals for the estimated quantities.}
\label{tab:b01} 
\begin{center}
\scalebox{0.75}{\begin{tabular}{l|c|c|c|c|c|c|c}
\hline %\abovespace\belowspace
& \rotatebox{90}{nb of iterations } & \rotatebox{90}{nb of iter. with $R(\hf)\neq R(\hfols)$} & 
\rotatebox{90}{nb of iter. with $R(\hf)< R(\hfols)$} & \rotatebox{90}{$\E R(\hfols)-R(f^*)$}
& \rotatebox{90}{$\E R(\hf)-R(f^*)$} & \rotatebox{90}{$\E R[(\hfols)|\hf\neq\hfols]-R(f^*)$} 
& \rotatebox{90}{$\E [R(\hf)|\hf\neq\hfols]-R(f^*)$}\\ 
\hline 
INC(n=200,d=1)& $1000$& $419$& $405$& $ 0.567 (\pm  0.083)$& $ 0.178 (\pm  0.025)$& $ 1.191 (\pm  0.178)$& $ 0.262 (\pm  0.052)$\\
INC(n=200,d=2)& $1000$& $506$& $498$& $ 1.055 (\pm  0.112)$& $ 0.271 (\pm  0.030)$& $ 1.884 (\pm  0.193)$& $ 0.334 (\pm  0.050)$\\
HCC(n=200,d=2)& $1000$& $502$& $494$& $ 1.045 (\pm  0.103)$& $ 0.267 (\pm  0.024)$& $ 1.866 (\pm  0.174)$& $ 0.316 (\pm  0.032)$\\
TS(n=200,d=2)& $1000$& $561$& $554$& $ 1.069 (\pm  0.089)$& $ 0.310 (\pm  0.027)$& $ 1.720 (\pm  0.132)$& $ 0.367 (\pm  0.036)$\\
INC(n=1000,d=2)& $1000$& $402$& $392$& $ 0.204 (\pm  0.015)$& $ 0.109 (\pm  0.008)$& $ 0.316 (\pm  0.029)$& $ 0.081 (\pm  0.011)$\\
INC(n=1000,d=10)& $1000$& $950$& $946$& $ 1.030 (\pm  0.041)$& $ 0.228 (\pm  0.016)$& $ 1.051 (\pm  0.042)$& $ 0.207 (\pm  0.014)$\\
HCC(n=1000,d=10)& $1000$& $942$& $942$& $ 0.980 (\pm  0.038)$& $ 0.222 (\pm  0.015)$& $ 1.008 (\pm  0.039)$& $ 0.203 (\pm  0.015)$\\
TS(n=1000,d=10)& $1000$& $976$& $973$& $ 1.009 (\pm  0.037)$& $ 0.228 (\pm  0.017)$& $ 1.018 (\pm  0.038)$& $ 0.217 (\pm  0.016)$\\
INC(n=2000,d=2)& $1000$& $209$& $207$& $ 0.104 (\pm  0.007)$& $ 0.078 (\pm  0.005)$& $ 0.206 (\pm  0.021)$& $ 0.082 (\pm  0.012)$\\
HCC(n=2000,d=2)& $1000$& $184$& $183$& $ 0.099 (\pm  0.007)$& $ 0.076 (\pm  0.005)$& $ 0.196 (\pm  0.023)$& $ 0.070 (\pm  0.010)$\\
TS(n=2000,d=2)& $1000$& $172$& $171$& $ 0.101 (\pm  0.007)$& $ 0.080 (\pm  0.005)$& $ 0.206 (\pm  0.020)$& $ 0.083 (\pm  0.012)$\\
INC(n=2000,d=10)& $1000$& $669$& $669$& $ 0.510 (\pm  0.018)$& $ 0.206 (\pm  0.012)$& $ 0.572 (\pm  0.023)$& $ 0.117 (\pm  0.009)$\\
HCC(n=2000,d=10)& $1000$& $669$& $669$& $ 0.499 (\pm  0.018)$& $ 0.207 (\pm  0.013)$& $ 0.561 (\pm  0.023)$& $ 0.125 (\pm  0.011)$\\
TS(n=2000,d=10)& $1000$& $754$& $753$& $ 0.516 (\pm  0.018)$& $ 0.195 (\pm  0.013)$& $ 0.558 (\pm  0.022)$& $ 0.131 (\pm  0.011)$\\

\hline 
\end{tabular}}
\end{center}
\end{table}
\pagebreak

%\input{Tables/final_a0.005b0.4.tex} % 
%\pagebreak

\begin{table}[!ht]
%./fin1000a0.005b0.4.tex :
\caption{Comparison of the min-max truncated estimator $\hf$
with the ordinary least squares estimator $\hfols$ for the mixture noise (see Section \ref{sec:noise})
with $\rho=0.4$ and $p=0.005$. In parenthesis, the $95\%$-confidence intervals for the estimated quantities.} \label{tab:b04}
{
\begin{center}\scalebox{0.75}{\begin{tabular}{l|c|c|c|c|c|c|c}
\hline %\abovespace\belowspace
& \rotatebox{90}{nb of iterations } & \rotatebox{90}{nb of iter. with $R(\hf)\neq R(\hfols)$} & 
\rotatebox{90}{nb of iter. with $R(\hf)< R(\hfols)$} & \rotatebox{90}{$\E R(\hfols)-R(f^*)$}
& \rotatebox{90}{$\E R(\hf)-R(f^*)$} & \rotatebox{90}{$\E R[(\hfols)|\hf\neq\hfols]-R(f^*)$} 
& \rotatebox{90}{$\E [R(\hf)|\hf\neq\hfols]-R(f^*)$}\\
\hline 
INC(n=200,d=1)& $1000$& $234$& $211$& $ 0.551 (\pm  0.063)$& $ 0.409 (\pm  0.042)$& $ 1.211 (\pm  0.210)$& $ 0.606 (\pm  0.110)$\\
INC(n=200,d=2)& $1000$& $195$& $186$& $ 1.046 (\pm  0.088)$& $ 0.788 (\pm  0.061)$& $ 2.174 (\pm  0.293)$& $ 0.848 (\pm  0.118)$\\
HCC(n=200,d=2)& $1000$& $222$& $215$& $ 1.028 (\pm  0.079)$& $ 0.748 (\pm  0.051)$& $ 2.157 (\pm  0.243)$& $ 0.897 (\pm  0.112)$\\
TS(n=200,d=2)& $1000$& $291$& $268$& $ 1.053 (\pm  0.079)$& $ 0.805 (\pm  0.058)$& $ 1.701 (\pm  0.186)$& $ 0.851 (\pm  0.093)$\\
INC(n=1000,d=2)& $1000$& $127$& $117$& $ 0.201 (\pm  0.013)$& $ 0.181 (\pm  0.012)$& $ 0.366 (\pm  0.053)$& $ 0.207 (\pm  0.035)$\\
INC(n=1000,d=10)& $1000$& $262$& $249$& $ 1.023 (\pm  0.035)$& $ 0.902 (\pm  0.030)$& $ 1.238 (\pm  0.081)$& $ 0.777 (\pm  0.054)$\\
HCC(n=1000,d=10)& $1000$& $201$& $192$& $ 0.991 (\pm  0.033)$& $ 0.902 (\pm  0.031)$& $ 1.235 (\pm  0.088)$& $ 0.790 (\pm  0.067)$\\
TS(n=1000,d=10)& $1000$& $171$& $162$& $ 1.009 (\pm  0.033)$& $ 0.951 (\pm  0.031)$& $ 1.166 (\pm  0.098)$& $ 0.825 (\pm  0.071)$\\
INC(n=2000,d=2)& $1000$& $80$& $77$& $ 0.105 (\pm  0.007)$& $ 0.099 (\pm  0.006)$& $ 0.214 (\pm  0.042)$& $ 0.135 (\pm  0.029)$\\
HCC(n=2000,d=2)& $1000$& $44$& $42$& $ 0.102 (\pm  0.007)$& $ 0.099 (\pm  0.007)$& $ 0.187 (\pm  0.050)$& $ 0.120 (\pm  0.034)$\\
TS(n=2000,d=2)& $1000$& $47$& $47$& $ 0.101 (\pm  0.007)$& $ 0.099 (\pm  0.007)$& $ 0.147 (\pm  0.032)$& $ 0.103 (\pm  0.026)$\\
INC(n=2000,d=10)& $1000$& $116$& $113$& $ 0.511 (\pm  0.016)$& $ 0.491 (\pm  0.016)$& $ 0.611 (\pm  0.052)$& $ 0.437 (\pm  0.042)$\\
HCC(n=2000,d=10)& $1000$& $110$& $105$& $ 0.500 (\pm  0.016)$& $ 0.481 (\pm  0.015)$& $ 0.602 (\pm  0.056)$& $ 0.430 (\pm  0.044)$\\
TS(n=2000,d=10)& $1000$& $101$& $98$& $ 0.511 (\pm  0.016)$& $ 0.499 (\pm  0.016)$& $ 0.601 (\pm  0.054)$& $ 0.486 (\pm  0.051)$\\

\hline 
\end{tabular}}
\end{center}}
\end{table}
\pagebreak

%\input{Tables/final_a2.01.tex} % 
%\pagebreak

\begin{table}[!ht]
%../fin1000a2.01b0.1.tex : 
\caption{Comparison of the min-max truncated estimator $\hf$
with the ordinary least squares estimator $\hfols$ with the heavy-tailed noise (see Section \ref{sec:noise}).}
\label{tab:a201} {
\begin{center}\scalebox{0.75}{\begin{tabular}{l|c|c|c|c|c|c|c}
\hline %\abovespace\belowspace
& \rotatebox{90}{nb of iterations } & \rotatebox{90}{nb of iter. with $R(\hf)\neq R(\hfols)$} & 
\rotatebox{90}{nb of iter. with $R(\hf)< R(\hfols)$} & \rotatebox{90}{$\E R(\hfols)-R(f^*)$}
& \rotatebox{90}{$\E R(\hf)-R(f^*)$} & \rotatebox{90}{$\E R[(\hfols)|\hf\neq\hfols]-R(f^*)$} 
& \rotatebox{90}{$\E [R(\hf)|\hf\neq\hfols]-R(f^*)$}\\
\hline 
INC(n=200,d=1)& $1000$& $163$& $145$&  $ 7.72   (\pm  3.46)$& $ 3.92 (\pm  0.409)$& $30.52 (\pm 20.8)$& $ 7.20 (\pm  1.61)$\\
INC(n=200,d=2)& $1000$& $104$& $98$&   $22.69    (\pm 23.14)$& $19.18 (\pm 23.09)$&  $45.36 (\pm 14.1)$& $11.63 (\pm  2.19)$\\
HCC(n=200,d=2)& $1000$& $120$& $117$&  $18.16   (\pm 12.68)$& $ 8.07 (\pm  0.718)$& $99.39 (\pm 105)$& $ 15.34 (\pm  4.41)$\\
TS(n=200,d=2)& $1000$& $110$& $105$&   $43.89    (\pm 63.79)$& $39.71 (\pm 63.76)$&  $48.55 (\pm 18.4)$& $10.59 (\pm  2.01)$\\
INC(n=1000,d=2)& $1000$& $104$& $100$& $ 3.98  (\pm  2.25)$& $ 1.78 (\pm  0.128)$& $23.18 (\pm 21.3)$& $ 2.03 (\pm  0.56)$\\
INC(n=1000,d=10)& $1000$& $253$& $242$&$16.36 (\pm  5.10)$& $ 7.90 (\pm  0.278)$& $41.25 (\pm 19.8)$& $ 7.81 (\pm  0.69)$\\
HCC(n=1000,d=10)& $1000$& $220$& $211$&$13.57 (\pm  1.93)$& $ 7.88 (\pm  0.255)$& $33.13 (\pm  8.2)$& $ 7.28 (\pm  0.59)$\\
TS(n=1000,d=10)& $1000$& $214$& $211$& $18.67  (\pm 11.62)$& $13.79 (\pm 11.52)$&  $30.34 (\pm  7.2)$& $ 7.53 (\pm  0.58)$\\
INC(n=2000,d=2)& $1000$& $113$& $103$& $ 1.56  (\pm  0.41)$& $ 0.89 (\pm  0.059)$& $ 6.74 (\pm  3.4)$& $ 0.86 (\pm  0.18)$\\
HCC(n=2000,d=2)& $1000$& $105$& $97$&  $ 1.66   (\pm  0.43)$& $ 0.95 (\pm  0.062)$& $ 7.87 (\pm  3.8)$& $ 1.13 (\pm  0.23)$\\
TS(n=2000,d=2)& $1000$& $101$& $95$&   $ 1.59    (\pm  0.64)$& $ 0.88 (\pm  0.058)$& $ 8.03 (\pm  6.2)$& $ 1.04 (\pm  0.22)$\\
INC(n=2000,d=10)& $1000$& $259$& $255$&$ 8.77 (\pm  4.02)$& $ 4.23 (\pm  0.154)$& $21.54 (\pm 15.4)$& $ 4.03 (\pm  0.39)$\\
HCC(n=2000,d=10)& $1000$& $250$& $242$&$ 6.98 (\pm  1.17)$& $ 4.13 (\pm  0.127)$& $15.35 (\pm  4.5)$& $ 3.94 (\pm  0.25)$\\
TS(n=2000,d=10)& $1000$& $238$& $233$& $ 8.49  (\pm  3.61)$& $ 5.95 (\pm  3.486)$& $14.82 (\pm  3.8)$& $ 4.17 (\pm  0.30)$\\

\hline 
\end{tabular}}
\end{center}}
\end{table}
\pagebreak

\begin{table}[!ht]
\caption{Comparison of the min-max truncated estimator $\hf$
with the ordinary least squares estimator $\hfols$ with an asymetric variant of the heavy-tailed noise.}
\label{tab:a-201} {
\begin{center}\scalebox{0.75}{\begin{tabular}{l|c|c|c|c|c|c|c}
\hline %\abovespace\belowspace
& \rotatebox{90}{nb of iterations } & \rotatebox{90}{nb of iter. with $R(\hf)\neq R(\hfols)$} & 
\rotatebox{90}{nb of iter. with $R(\hf)< R(\hfols)$} & \rotatebox{90}{$\E R(\hfols)-R(f^*)$}
& \rotatebox{90}{$\E R(\hf)-R(f^*)$} & \rotatebox{90}{$\E R[(\hfols)|\hf\neq\hfols]-R(f^*)$} 
& \rotatebox{90}{$\E [R(\hf)|\hf\neq\hfols]-R(f^*)$}\\
\hline 
INC(n=200,d=1)& $1000$& $87$& $77$&    $ 5.49 (\pm  3.07)$& $ 3.00 (\pm  0.330)$& $35.44 (\pm 34.7)$&  $ 6.85 (\pm  2.48)$\\
INC(n=200,d=2)& $1000$& $70$& $66$&    $19.25 (\pm 23.23)$& $17.4  (\pm 23.2)$&   $37.95 (\pm 13.1)$&  $11.05 (\pm  2.87)$\\
HCC(n=200,d=2)& $1000$& $67$& $66$&    $ 7.19 (\pm  0.88)$& $ 5.81 (\pm  0.397)$& $31.52 (\pm 10.5)$&  $10.87 (\pm  2.64)$\\
TS(n=200,d=2)& $1000$& $76$& $68$&     $39.80 (\pm 64.09)$& $37.9  (\pm 64.1)$&   $34.28 (\pm 14.8)$&  $ 9.21 (\pm  2.05)$\\
INC(n=1000,d=2)& $1000$& $101$& $92$&  $ 2.81 (\pm  2.21)$& $ 1.31 (\pm  0.106)$& $16.76 (\pm 21.8)$&  $ 1.88 (\pm  0.69)$\\
INC(n=1000,d=10)& $1000$& $211$& $195$&$10.71 (\pm  4.53)$& $ 5.86 (\pm  0.222)$& $29.00 (\pm 21.3)$&  $ 6.03 (\pm  0.71)$\\
HCC(n=1000,d=10)& $1000$& $197$& $185$&$ 8.67 (\pm  1.16)$& $ 5.81 (\pm  0.177)$& $20.31 (\pm  5.59)$& $ 5.79 (\pm  0.43)$\\
TS(n=1000,d=10)& $1000$& $258$& $233$& $13.62 (\pm 11.27)$& $11.3  (\pm 11.2)$&   $14.68 (\pm  2.45)$& $ 5.60 (\pm  0.36)$\\
INC(n=2000,d=2)& $1000$& $106$& $92$&  $ 1.04 (\pm  0.37)$& $ 0.64 (\pm  0.042)$& $ 4.54 (\pm  3.45)$& $ 0.79 (\pm  0.16)$\\
HCC(n=2000,d=2)& $1000$& $99$& $90$&   $ 0.90 (\pm  0.11)$& $ 0.66 (\pm  0.042)$& $ 3.23 (\pm  0.93)$& $ 0.82 (\pm  0.16)$\\
TS(n=2000,d=2)& $1000$& $84$& $81$&    $ 1.11 (\pm  0.66)$& $ 0.60 (\pm  0.042)$& $ 6.80 (\pm  7.79)$& $ 0.69 (\pm  0.17)$\\
INC(n=2000,d=10)& $1000$& $238$& $222$&$ 6.32 (\pm  4.18)$& $ 3.07 (\pm  0.147)$& $16.84 (\pm 17.5)$&  $ 3.18 (\pm  0.51)$\\
HCC(n=2000,d=10)& $1000$& $221$& $203$&$ 4.49 (\pm  0.98)$& $ 2.98 (\pm  0.091)$& $ 9.76 (\pm  4.39)$& $ 2.93 (\pm  0.22)$\\
TS(n=2000,d=10)& $1000$& $412$& $350$& $ 5.93 (\pm  3.51)$& $ 4.59 (\pm  3.44)$&  $ 6.07 (\pm  1.76)$& $ 2.84 (\pm  0.16)$\\

\hline 
\end{tabular}}
\end{center}}
\end{table}
\pagebreak

%\input{Tables/final_a0.tex} % 
%\pagebreak

\begin{table}[!ht]
%../fin1000a0b0.4.tex : 
\caption{Comparison of the min-max truncated estimator $\hf$
with the ordinary least squares estimator $\hfols$ for standard Gaussian noise.}
\label{tab:a0} {
\begin{center}\scalebox{0.75}{\begin{tabular}{l|c|c|c|c|c|c|c}
\hline %\abovespace\belowspace
& \rotatebox{90}{nb of iter. } & \rotatebox{90}{nb of iter. with $R(\hf)\neq R(\hfols)$} & 
\rotatebox{90}{nb of iter. with $R(\hf)< R(\hfols)$} & \rotatebox{90}{$\E R(\hfols)-R(f^*)$}
& \rotatebox{90}{$\E R(\hf)-R(f^*)$} & \rotatebox{90}{$\E R[(\hfols)|\hf\neq\hfols]-R(f^*)$} 
& \rotatebox{90}{$\E [R(\hf)|\hf\neq\hfols]-R(f^*)$}\\
\hline 
INC(n=200,d=1)& $1000$& $20$& $8$& $ 0.541 (\pm  0.048)$& $ 0.541 (\pm  0.048)$& $ 0.401 (\pm  0.168)$& $ 0.397 (\pm  0.167)$\\
INC(n=200,d=2)& $1000$& $1$& $0$& $ 1.051 (\pm  0.067)$& $ 1.051 (\pm  0.067)$& $ 2.566 $& $ 2.757 $\\
HCC(n=200,d=2)& $1000$& $1$& $0$& $ 1.051 (\pm  0.067)$& $ 1.051 (\pm  0.067)$& $ 2.566 $& $ 2.757 $\\
TS(n=200,d=2)& $1000$& $0$& $0$& $ 1.068 (\pm  0.067)$& $ 1.068 (\pm  0.067)$& --& --\\
INC(n=1000,d=2)& $1000$& $0$& $0$& $ 0.203 (\pm  0.013)$& $ 0.203 (\pm  0.013)$& --& --\\
INC(n=1000,d=10)& $1000$& $0$& $0$& $ 1.023 (\pm  0.029)$& $ 1.023 (\pm  0.029)$& --& --\\
HCC(n=1000,d=10)& $1000$& $0$& $0$& $ 1.023 (\pm  0.029)$& $ 1.023 (\pm  0.029)$& --& --\\
TS(n=1000,d=10)& $1000$& $0$& $0$& $ 0.997 (\pm  0.028)$& $ 0.997 (\pm  0.028)$& --& --\\
INC(n=2000,d=2)& $1000$& $0$& $0$& $ 0.112 (\pm  0.007)$& $ 0.112 (\pm  0.007)$& --& --\\
HCC(n=2000,d=2)& $1000$& $0$& $0$& $ 0.112 (\pm  0.007)$& $ 0.112 (\pm  0.007)$& --& --\\
TS(n=2000,d=2)& $1000$& $0$& $0$& $ 0.098 (\pm  0.006)$& $ 0.098 (\pm  0.006)$& --& --\\
INC(n=2000,d=10)& $1000$& $0$& $0$& $ 0.517 (\pm  0.015)$& $ 0.517 (\pm  0.015)$& --& --\\
HCC(n=2000,d=10)& $1000$& $0$& $0$& $ 0.517 (\pm  0.015)$& $ 0.517 (\pm  0.015)$& --& --\\
TS(n=2000,d=10)& $1000$& $0$& $0$& $ 0.501 (\pm  0.015)$& $ 0.501 (\pm  0.015)$& --& --\\

\hline 
\end{tabular}}
\end{center}}
\end{table}\pagebreak

\begin{figure}[!ht] 
\vspace*{-1.5cm}
\caption{Surrounding points are the points of the training set generated several times from $TS(1000,10)$ (with the mixture noise with $p=0.005$ and $\rho=0.4$) that are not taken into account in the min-max truncated estimator (to the extent that the estimator would not change by removing simultaneously all these points). 
The min-max truncated estimator $x\mapsto\hf(x)$ appears in dash-dot line, while $x\mapsto\E(Y|X=x)$ is in solid line. In these six simulations, 
it outperforms the ordinary least squares estimator.} \label{fig:1}
\centering
   \vspace*{-.0cm}\begin{minipage}{.49\linewidth}
      \hspace*{-0.8cm}\includegraphics[width=1.15\linewidth]{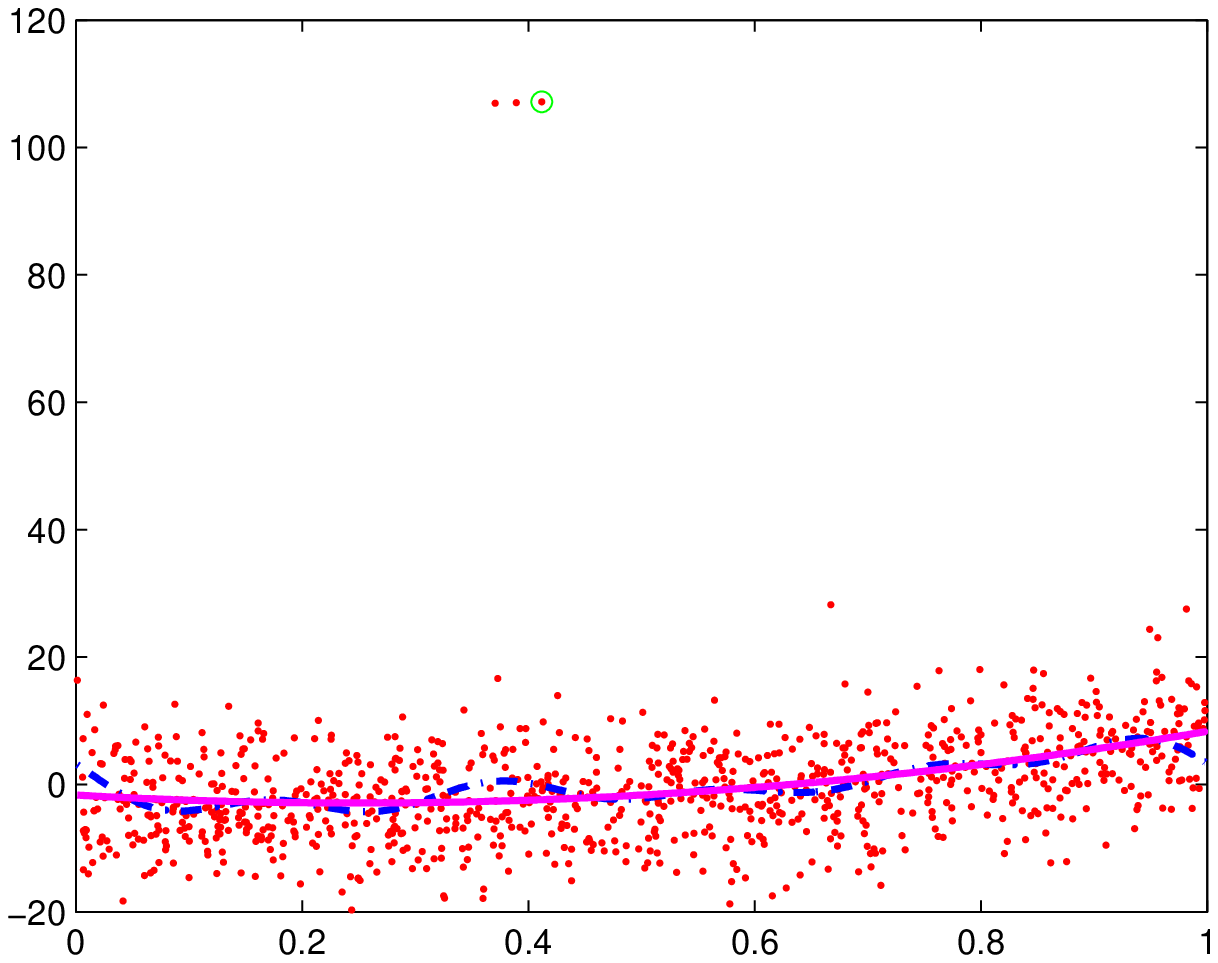}
   \end{minipage} 
   \hfill
   \begin{minipage}{0.49\linewidth}
      %\vs{1.3}\mbox{}\\
      \hspace*{-0.4cm}\includegraphics[width=1.15\linewidth]{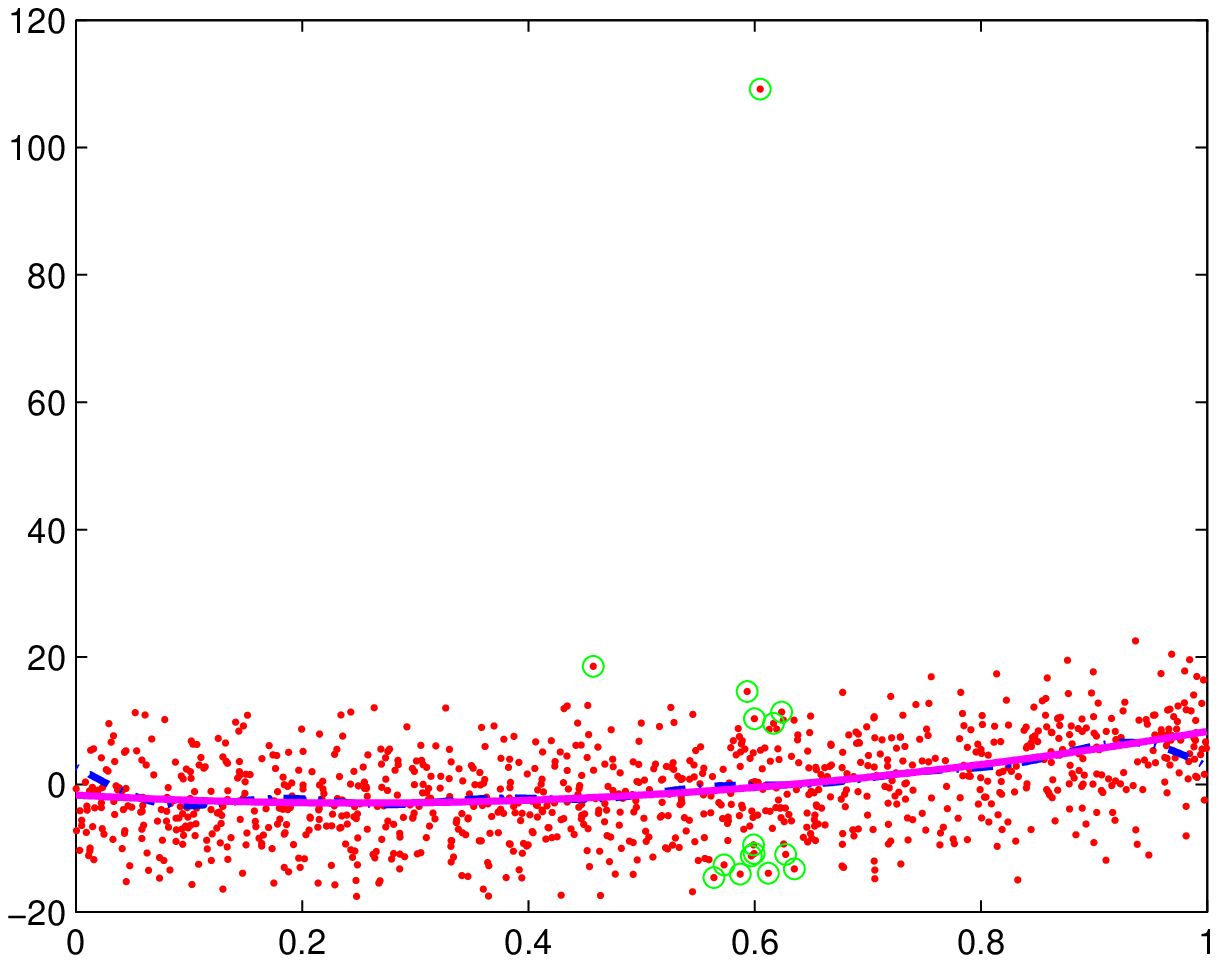}
   \end{minipage}
   
   \vspace*{-.4cm}\begin{minipage}{.49\linewidth}
      \hspace*{-0.8cm}\includegraphics[width=1.15\linewidth]{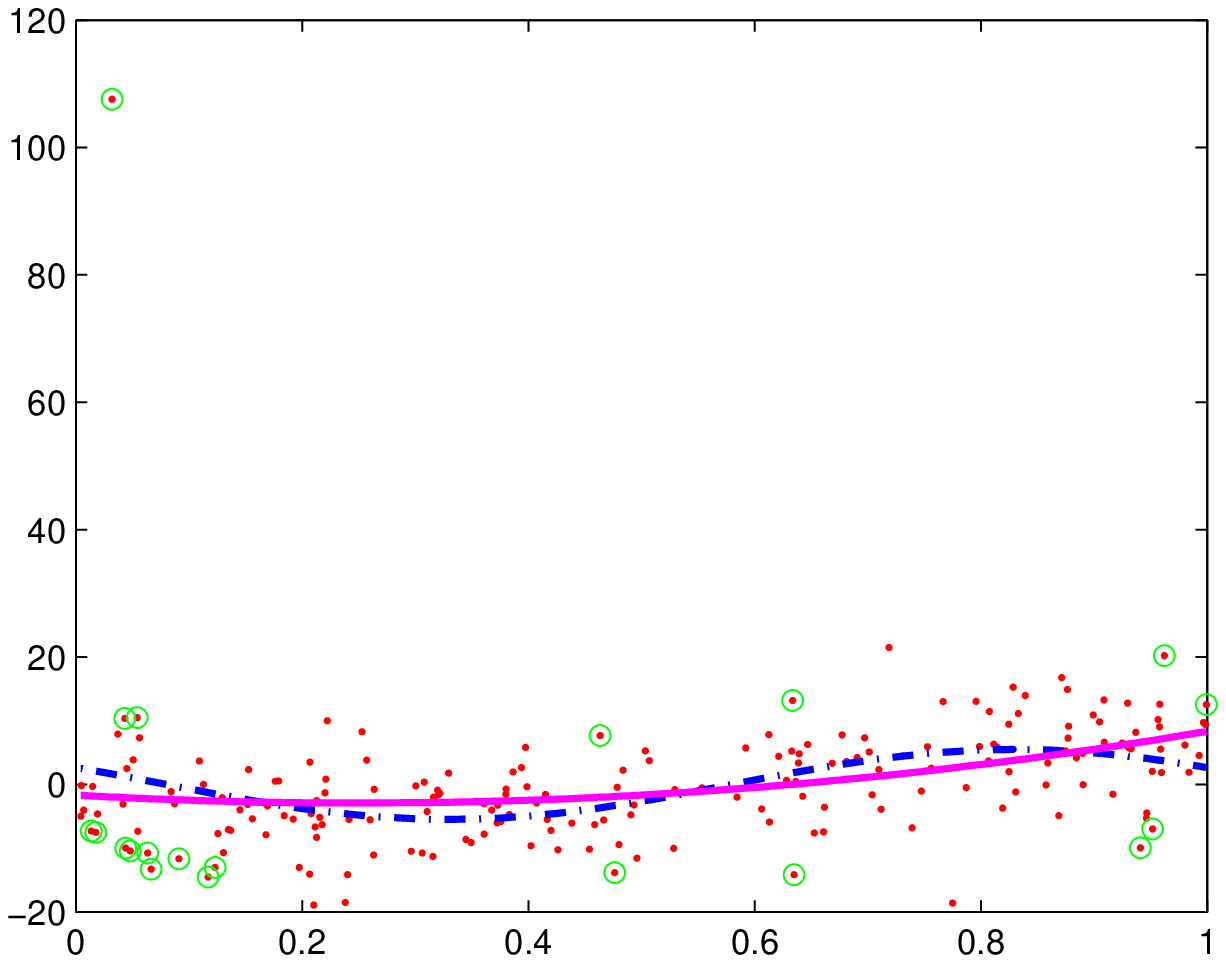}
   \end{minipage} 
   \hfill
   \begin{minipage}{0.49\linewidth}
      %\vs{1.3}\mbox{}\\
      \hspace*{-0.4cm}\includegraphics[width=1.15\linewidth]{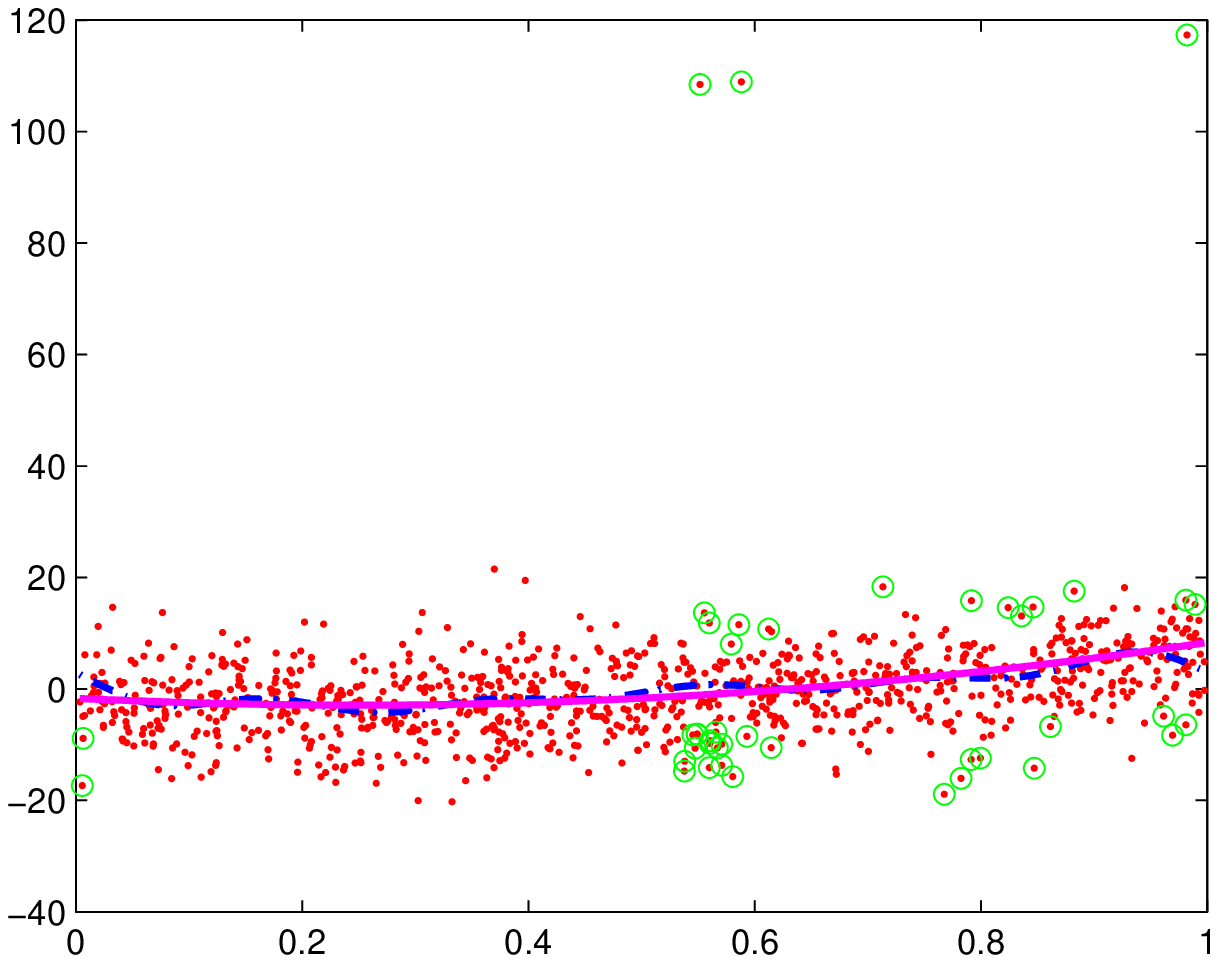}
   \end{minipage}
   
   \vspace*{-.4cm}\begin{minipage}{.49\linewidth}
      \hspace*{-0.8cm}\includegraphics[width=1.15\linewidth]{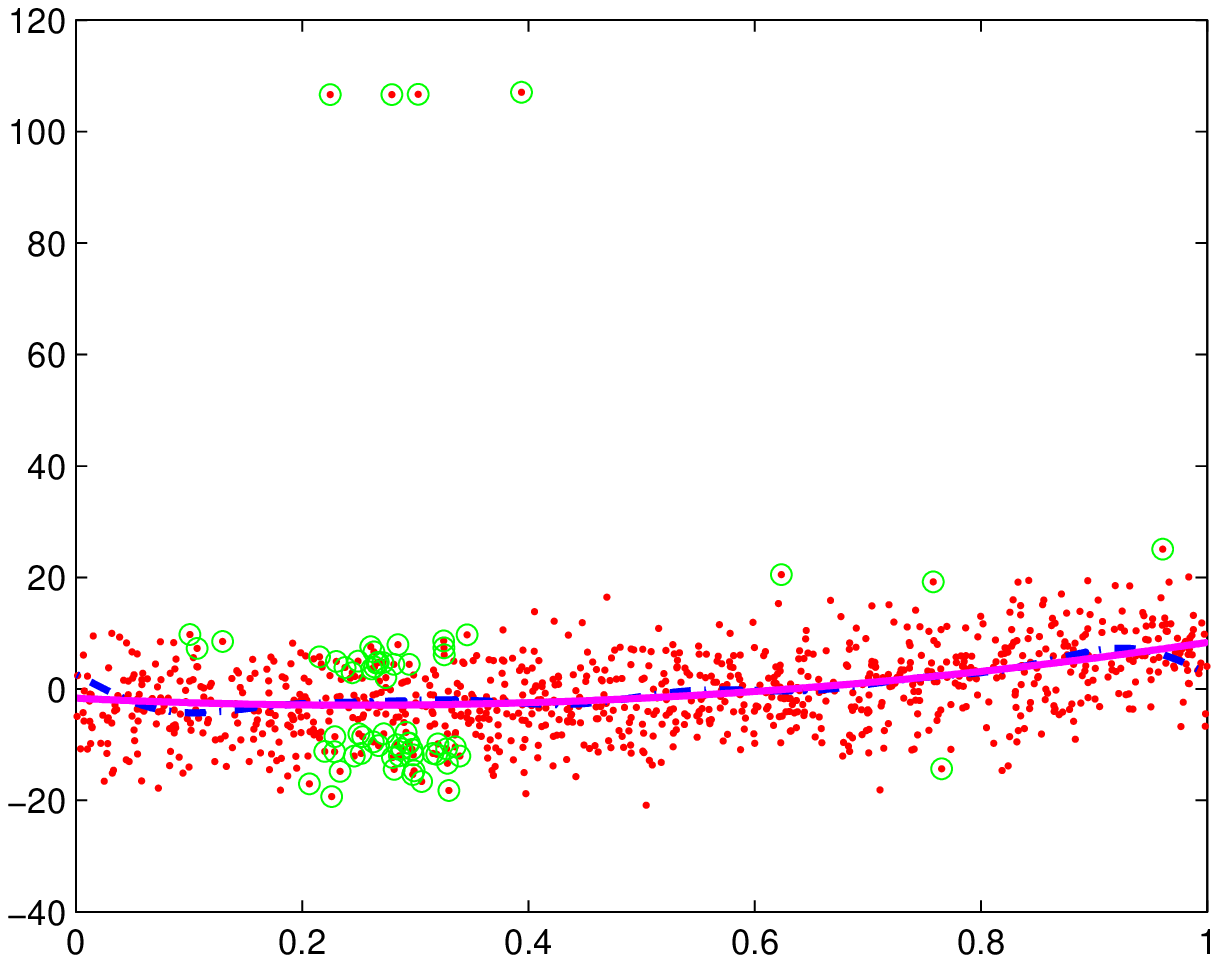}
   \end{minipage} 
   \hfill
   \begin{minipage}{0.49\linewidth}
      %\vs{1.3}\mbox{}\\
      \hspace*{-0.4cm}\includegraphics[width=1.15\linewidth]{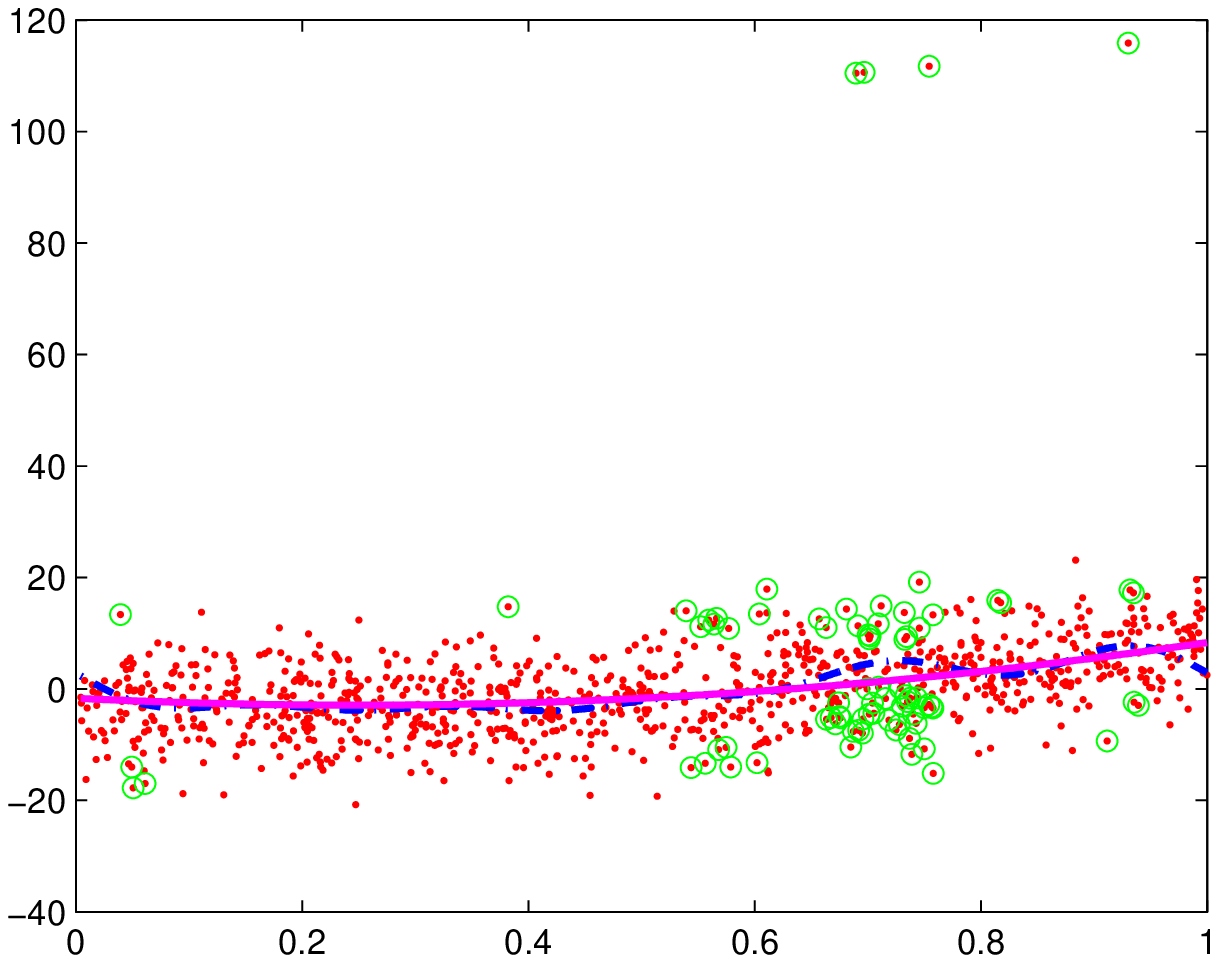}
   \end{minipage}
%\scalebox{.5}{\hspace*{-2.3cm}
%\includegraphics{o45}
\end{figure}

\pagebreak

\begin{figure}[!ht] 
\vspace*{-1.5cm}
\caption{Surrounding points are the points of the training set generated several times from $TS(200,2)$ (with the heavy-tailed noise) that are not taken into account in the min-max truncated estimator (to the extent that the estimator would not change by removing these points). 
The min-max truncated estimator $x\mapsto\hf(x)$ appears in dash-dot line, while $x\mapsto\E(Y|X=x)$ is in solid line. In these six simulations, it outperforms the ordinary least squares estimator. Note that in the last figure, it does not consider $64$ points among the $200$ training points.} \label{fig:2}
\centering
   \vspace*{-0cm}\begin{minipage}{.49\linewidth}
      \hspace*{-0.8cm}\includegraphics[width=1.15\linewidth]{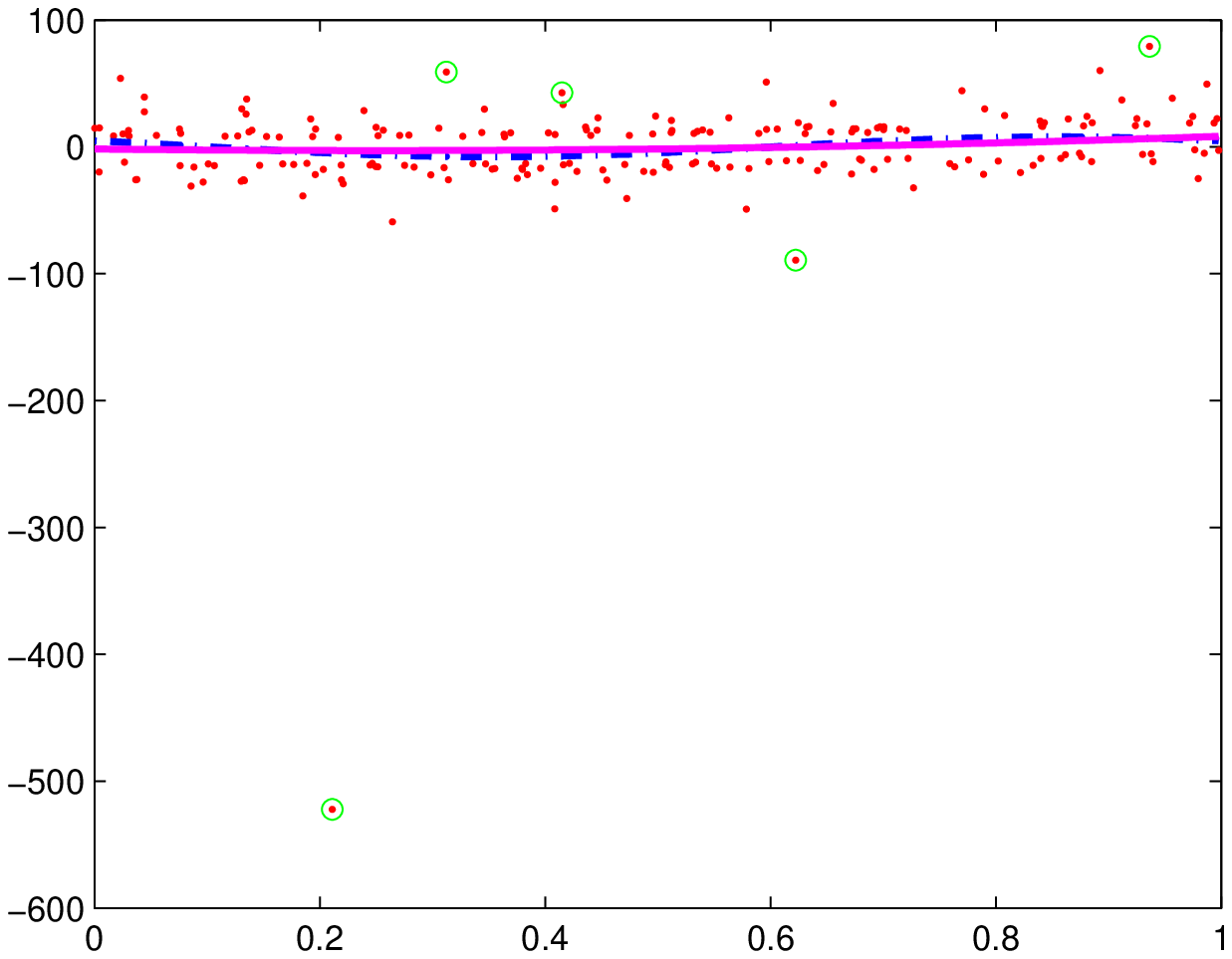}
   \end{minipage} 
   \hfill
   \begin{minipage}{0.49\linewidth}
      %\vs{1.3}\mbox{}\\
      \hspace*{-0.4cm}\includegraphics[width=1.15\linewidth]{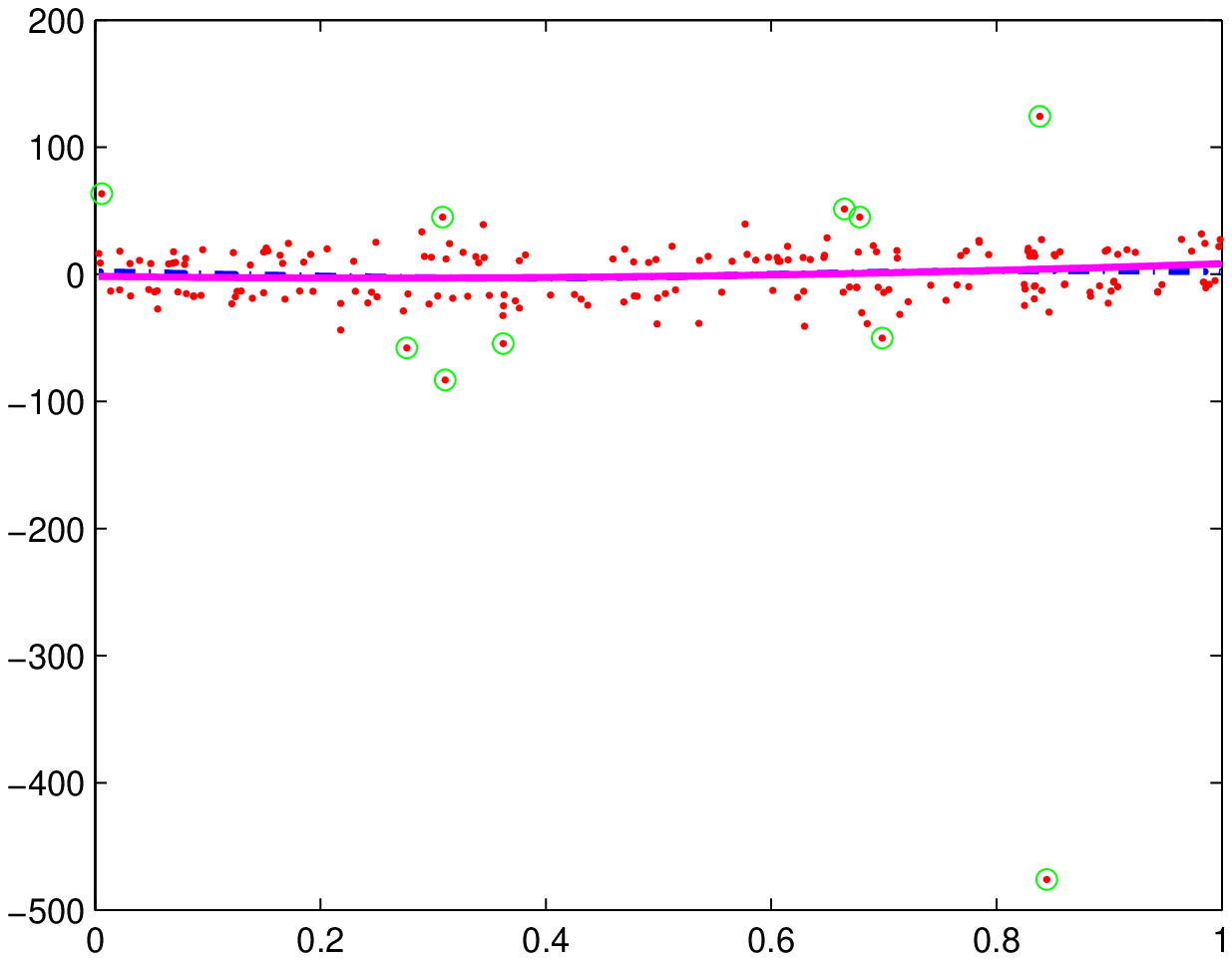}
   \end{minipage}
   
   \vspace*{-.4cm}\begin{minipage}{.49\linewidth}
      \hspace*{-0.8cm}\includegraphics[width=1.15\linewidth]{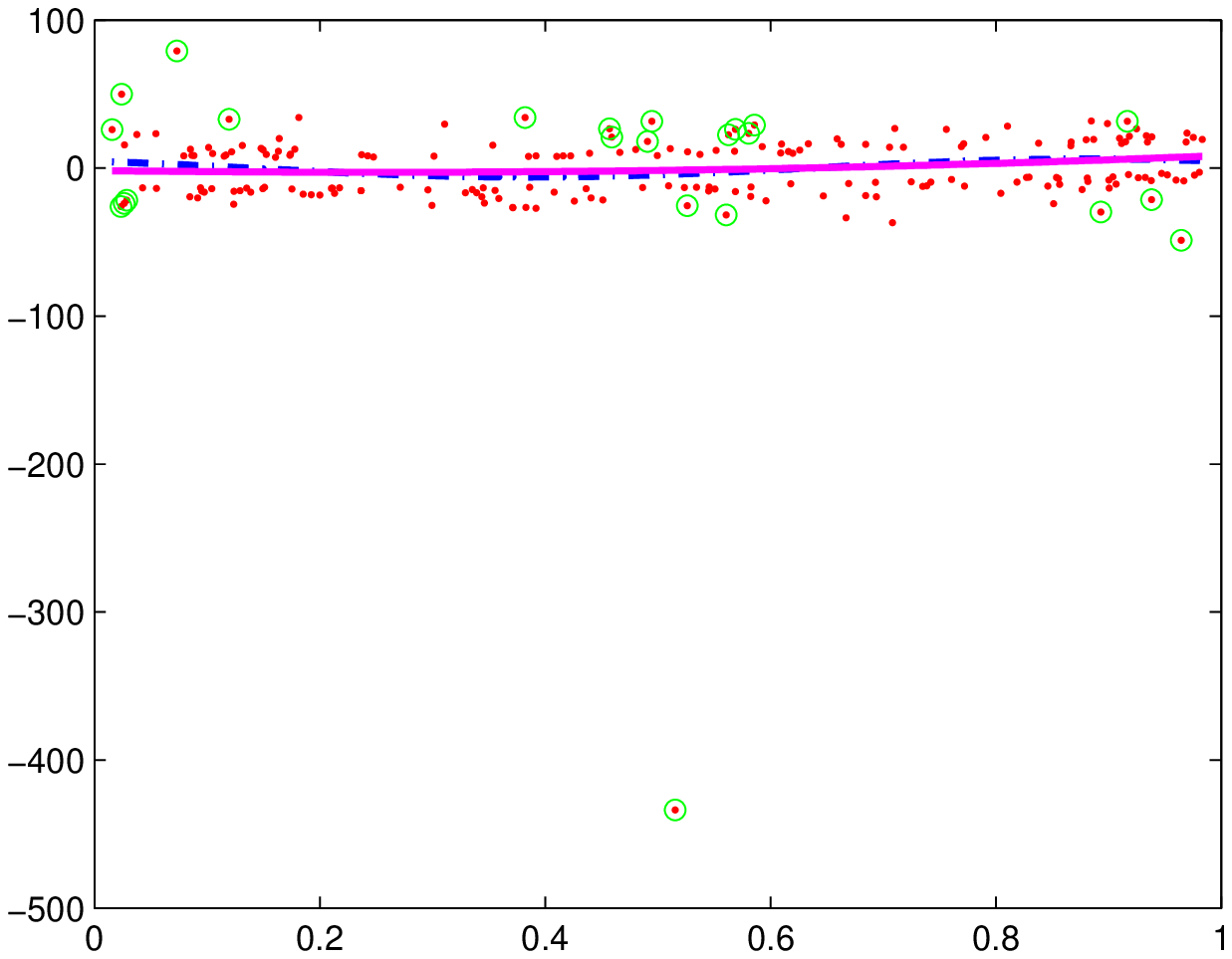}
   \end{minipage} 
   \hfill
   \begin{minipage}{0.49\linewidth}
      %\vs{1.3}\mbox{}\\
      \hspace*{-0.4cm}\includegraphics[width=1.15\linewidth]{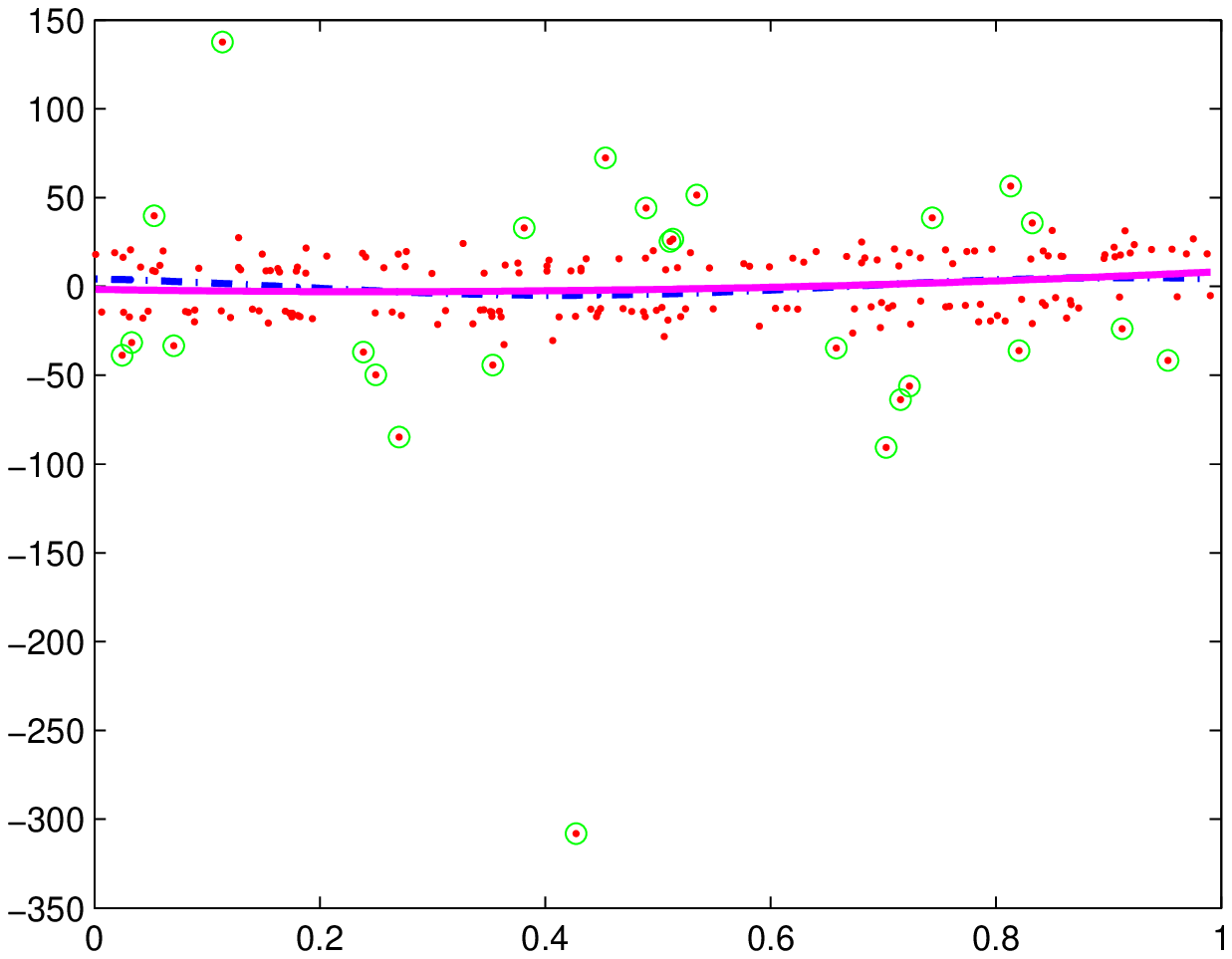}
   \end{minipage}
   
   \vspace*{-.4cm}\begin{minipage}{.49\linewidth}
      \hspace*{-0.8cm}\includegraphics[width=1.15\linewidth]{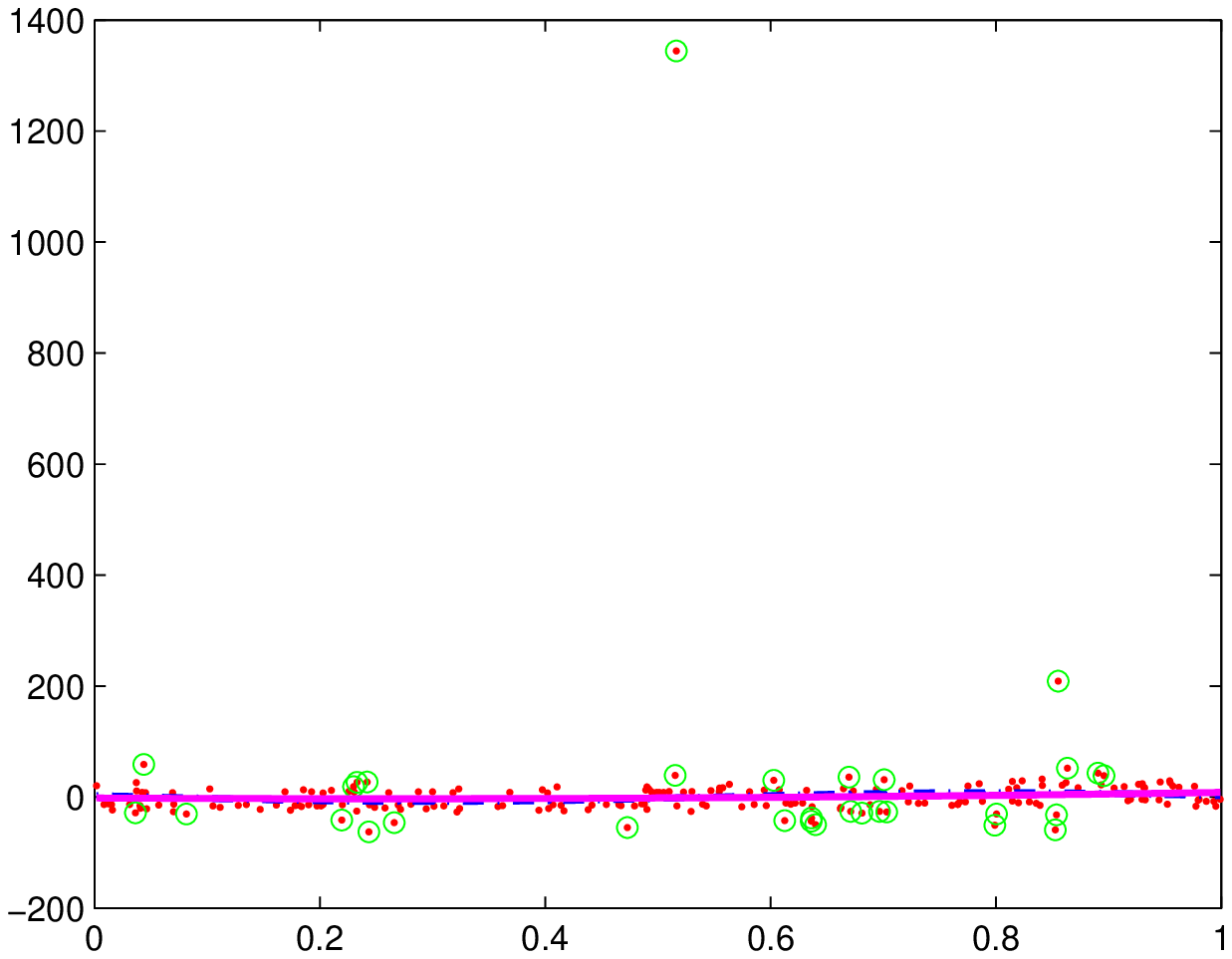}
   \end{minipage} 
   \hfill
   \begin{minipage}{0.49\linewidth}
      %\vs{1.3}\mbox{}\\
      \hspace*{-0.4cm}\includegraphics[width=1.15\linewidth]{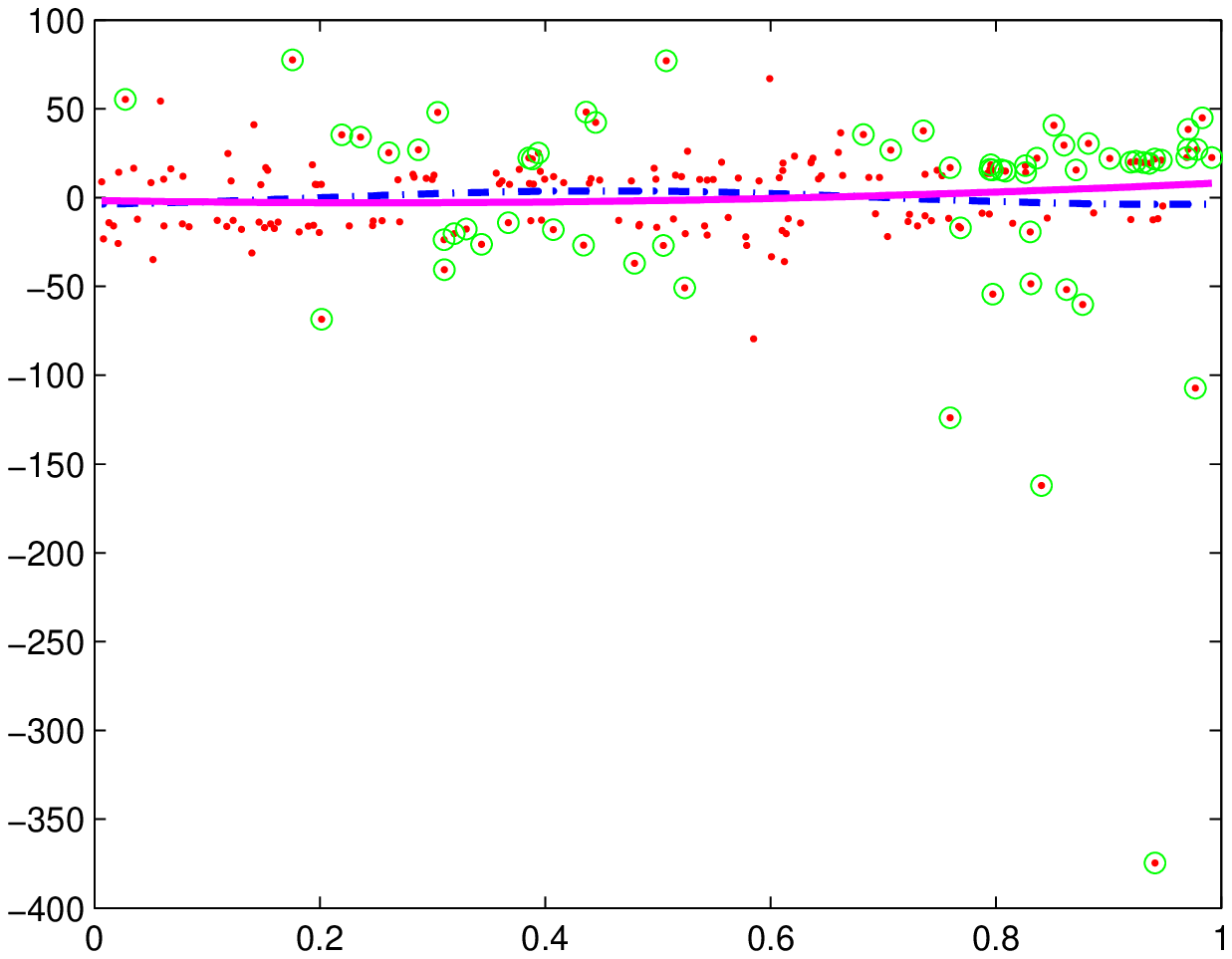}
   \end{minipage}
%\scalebox{.5}{\hspace*{-2.3cm}
%\includegraphics{o45}
\end{figure}

\pagebreak

\bibliographystyle{plain}
\bibliography{ref}

\end{document}